\pgfplotsset{compat=newest}
\newcommand\loaddata[1]{\CatchFileDef\loadeddata{#1}{\endlinechar=-1}} 
\def\addlegendimage{\csname pgfplots@addlegendimage\endcsname}
\definecolor{mycolor}{rgb}{0.5961,0.3059,0.6392}%
\definecolor{mypurple}{rgb}{0.5961,0.3059,0.6392}%
\definecolor{mygreen}{rgb}{0.7020,0.8706,0.4118}%
\definecolor{myorange}{rgb}{0.9843,0.5020,0.4471}%
\definecolor{myblue}{rgb}{0.00000,0.44700,0.74100}%
\newenvironment{hiddenmusicsec}[2]{
  \begingroup
  \color{gray!10} 
  \par\noindent
  \raggedleft
  \raisebox{2.5\baselineskip}[0pt][0pt]{\href{#2}{#1}} 
  \par\vspace{-2.5\baselineskip}
}{
  \endgroup
}
\newenvironment{hiddenmusic}[2]{
  \begingroup
  \color{gray!10} 
  \par\noindent
  \raggedleft
  \raisebox{2\baselineskip}[0pt][0pt]{\href{#2}{#1}} 
  \par\vspace{-\baselineskip}
}{
  \endgroup
  \noindent
}
\newtheorem{lemma}{Lemma}
\newtheorem{theorem}{Theorem}
\newtheorem{corollary}{Corollary}
\newtheorem{definition}{Definition}
\newtheorem{example}{Example}
\newtheorem{assumption}{Assumption}
\newtheorem{remark}{Remark}
\newtheorem{proof}{Proof}
\newcommand{\overbar}[1]{\mkern 1.5mu\overline{\mkern-1.5mu#1\mkern-1.5mu}\mkern 1.5mu}
\numberwithin{conj}{chapter}
\numberwithin{lemma}{chapter}
\numberwithin{cor}{chapter}
\numberwithin{theorem}{chapter}
\numberwithin{corollary}{chapter}
\numberwithin{definition}{chapter}
\numberwithin{example}{chapter}
\numberwithin{question}{chapter}
\numberwithin{assumption}{chapter}
\numberwithin{remark}{chapter}
\numberwithin{problem}{chapter}
\numberwithin{proof}{chapter}
\DeclareMathOperator{\conv}{conv}
\DeclareMathOperator*{\argmax}{argmax}
\DeclareMathOperator*{\argmin}{argmin}
\DeclareMathOperator{\cone}{cone}
\DeclareMathOperator{\rank}{rank}
\DeclareMathOperator{\col}{col}
\DeclareMathOperator{\logdet}{logdet}
\DeclareMathOperator{\tr}{tr}
\DeclareMathOperator{\diag}{Diag}
\DeclareMathOperator*{\bd}{bd}
\DeclareMathOperator*{\ext}{ext}
\DeclareMathOperator*{\vol}{vol}
\DeclareMathOperator*{\dom}{dom}
\newcommand{\R}{\mathbb{R}}
\renewcommand{\P}{\mathcal{P}}
\renewcommand{\C}{\mathcal{C}}
\newcommand{\E}{\mathcal{E}}
\newcommand{\Wt}{{W^\top}}
\newcommand{\Wold}{W_o}
\newcommand{\Hold}{H_o}
\newcommand{\Wextra}{\overbar{W}}
\newcommand{\Hextra}{\overbar{H}}
\newcommand{\Wextraold}{\overbar{W}_o}
\newcommand{\Hextraold}{\overbar{H}_o}
\newcommand{\La}{\mathcal{L}}
\newcommand{\lzero}{$\ell_0$}
\newcommand{\lzeronorm}{\lzero{}-``norm''}
\newcommand{\lone}{$\ell_1$}
\newcommand{\lonenorm}{\lone{}-norm}
\newcommand{\ltwo}{$\ell_2$}
\newcommand{\ltwonorm}{\ltwo{}-norm}
\definecolor{DarkGreen}{rgb}{0.13,0.55,0.13}%
\def\MYTITLE{Low-Rank Matrix Factorizations with Volume-based Constraints and Regularizations}
\title{\MYTITLE{}}
\definecolor{brightpink}{rgb}{1.0, 0.0, 0.5}
\def\MYAUTHOR{Olivier Vu Thanh}
\author{\MYAUTHOR{}}
\date{\today}
\Crefname{line}{line}{lines}
\Crefname{assumption}{Assumption}{Assumptions}
\begin{document}

\begin{titlepage}
  \begin{center}

    \large
    Université de Mons\\
    Faculté Polytechnique\\
    Mathématique et Recherche Opérationnelle

    \vspace{3cm}

    \Huge
    \hrule
    \vspace{0.4cm}
    \MYTITLE{}
    \vspace{0.4cm}
    \hrule

    \vspace{1.5cm}

    \LARGE
    \MYAUTHOR{}

    \vfill

    \normalsize
    A thesis presented in partial fulfillment of the requirements for the degree of Docteur en Sciences de l'Ingénieur et Technologies

    \vspace{0.8cm}

    Dissertation committee:
    \begin{table}[h]
      \begin{tabular}{lll}
        Prof.\ Nicolas Gillis       & Université de Mons & Supervisor    \\
        Prof.\ Fabian Lecron        & Université de Mons & Co-supervisor \\
        Prof.\ Arnaud Vandaele      & Université de Mons & Chair \\
        Prof.\ Kejun Huang          & University of Florida & \\
        Prof.\ Clémence Prévost     & University of Lille & \\
        Prof.\ Matthieu Puigt       & Universit\'e du Littoral C\^ote d'Opale & \\
      \end{tabular}
    \end{table}

  \end{center}
\end{titlepage}

\chapter*{Abstract}
\addcontentsline{toc}{chapter}{Abstract}

Low-rank matrix factorizations (LRMFs) are a class of linear models widely used in various fields such as machine learning, signal processing, and data analysis. These models approximate a matrix as the product of two smaller matrices, where the left matrix captures latent features—the most important components of the data—while the right matrix linearly decomposes the data based on these features. There are many ways to define what makes a component "important." Standard LRMFs, such as the truncated singular value decomposition, focus on minimizing the distance between the original matrix and its low-rank approximation. In this thesis, the notion of "importance" is closely linked to interpretability and uniqueness, which are key to obtaining reliable and meaningful results.\\

This thesis thus focuses on volume-based constraints and regularizations designed to enhance interpretability and uniqueness. We first introduce two new volume-constrained LRMFs designed to enhance these properties. The first assumes that data points are naturally bounded (e.g., movie ratings between 1 and 5 stars) and can be explained by convex combinations of features within the same bounds, allowing them to be interpreted in the same way as the data. The second model is more general, constraining the factors to belong to convex polytopes. 
Then, two variants of volume-regularized LRMFs are proposed. The first minimizes the volume of the latent features, encouraging them to cluster closely together, while the second maximizes the volume of the decompositions, promoting sparse representations. Across all these models, uniqueness is achieved under the core principle that the factors must be "sufficiently scattered" within their respective feasible sets.\\

Motivated by applications such as blind source separation (e.g., hyperspectral unmixing) and missing data imputation (e.g., in recommender systems), this thesis also proposes efficient algorithms that make these models scalable and practical for real-world applications.

\chapter*{Résumé}

Les factorisations matricielles de faible rang (LRMFs) sont des modèles linéaires largement utilisés dans des domaines tels que l'apprentissage automatique, le traitement du signal et l'analyse de données. Ces modèles approchent une matrice en la décomposant en produit de deux matrices plus petites~: la première capture les caractéristiques latentes, c'est-à-dire les composantes les plus importantes des données, tandis que la seconde décompose linéairement les données à partir de ces caractéristiques. Il existe cependant de nombreuses manières de définir ce qui rend une composante "importante". Les LRMFs classiques, comme la décomposition en valeurs singulières tronquée, se concentrent sur la minimisation de la distance entre la matrice originale et son approximation de faible rang. Dans cette thèse, l'importance d'une composante est étroitement déterminée par l’interprétabilité et l’unicité, des notions clés pour obtenir des résultats fiables et pertinents.\\

Cette thèse explore donc des contraintes et régularisations volumiques visant à renforcer l’interprétabilité et l’unicité. Dans un premier temps, nous introduisons deux nouvelles variantes de LRMFs à contraintes volumiques. La première suppose que les points du jeu de données sont naturellement bornés (ex: des films notés entre 1 et 5 étoiles) et peuvent être expliqués par des combinaisons convexes de caractéristiques bornées de la même manière, permettant ainsi de les interpréter comme les données. Le second modèle est plus général et contraint les facteurs à appartenir à des polytopes convexes. Par ailleurs, nous proposons deux variantes de LRMF avec régularisation volumique~: la première minimise le volume des caractéristiques latentes, favorisant ainsi un rapprochement entre elles, tandis que la seconde maximise le volume des décompositions, encourageant des représentations parcimonieuses. Dans l’ensemble de ces modèles, l’unicité est assurée par le principe clé selon lequel les facteurs doivent être "suffisamment dispersés" dans leur ensemble de solutions possibles.\\

Motivée par des applications telles que la séparation de sources aveugles (par exemple, le démélange hyperspectral) et l’imputation de données manquantes (par exemple, dans les systèmes de recommandation), cette thèse propose également des algorithmes efficaces permettant à ces modèles d’être adaptés à des applications réelles.

\chapter*{Acknowledgements}
I would like first to thank my PhD supervisor Nicolas Gillis, from whom I learned a lot, humanly and scientifically.\\

\noindent I am thankful to the colleagues I met, chronologically: Andersen, Hien for welcoming me, Fabian my co-supervisor, Arnaud for our discussions, Pierre, Nadisic мой товарищ, Christos my hearty laughing neighbor, Atharva the flawless, Seraghiti the early bird coffee lover\footnote{addict}, Subhayan the connoisseur\footnote{in debauchery}, Barbarino the maidenless tarnished who should put these foolish ambitions to rest, Timothy the original Belgian, the goofy Florian and Amjad for his kindness.\\

\noindent Special thanks to Jule and Junior, the guardians of the Houdain park.\\

\noindent I also would like to thank the teachers from Grenoble INP who indirectly gave me the will to pursue a PhD.\\

\noindent I thank the jury members for agreeing to evaluate this thesis and for their useful comments.\\

Finally, thanks to Hélène for sailing with me.\\

\vfill \hfill I dedicate this thesis to Wallace.

\newpage
\phantom{a}

\newpage
\vspace{2em}
I acknowledge the support by the European Research Council (ERC Starting Grant, COLORAMAP, no 679515, and ERC consolidator Grant, eLinoR, no 101085607), by the Fonds de la Recherche Scientifique (F.R.S.) - FNRS and the Fonds Wetenschappelijk Onderzoek - Vlanderen (FWO) under EOS Project no O005318F-RG47, by the Francqui Foundation, by the F.R.S.-FNRS under the Research Project T.0097.2 and under a FRIA PhD grant.

\newpage
\thispagestyle{empty}

\phantom{a}
\vspace{0.3\textheight}

\tableofcontents

\newpage
\section*{Notation}
\addcontentsline{toc}{chapter}{Notation}

\begin{center}
\begin{tabular}{p{0.13\textwidth} p{0.8\textwidth}}
  $\mathbb{R}$ & set of real numbers \\
  $\mathbb{R}_*$ & set of real nonzero numbers \\
  $\mathbb{R}_{+}$ & set of real nonnegative numbers \\
  $\mathbb{R}^m$ & set of real column vectors of dimension $m$ \\
  $\mathbb{R}^{m \times n}$ & set of real matrices of dimension $m \times n$ \\
  $x_i$ or $x(i)$ & $i$-th entry of the vector $x$ \\
  $x(K)$ & subvector $x$ with indices in the set $K$ \\
  $A(i,:)$ & $i$-th row of the matrix $A$ \\
  $A(:,j)$ & $j$-th column of the matrix $A$ \\
  $A(i,j)$ & entry of the matrix $A$ indexed by $(i,j)$ \\
  $A(:,J)$ & submatrix of $A$ with column indices in the set $J$ \\
  $A^\top$ & transpose of the matrix $A$  \\
  $A^{-\top}$ & inverse of the transpose of the square matrix $A$\\
  $A\circ B$ & Hadamard product, that is $(A\circ B)(i,j)=A(i,j)B(i,j)$\\
  $e$ & vector of all ones of appropriate dimension\\
  $e_i$ & $i$-th canonical vector of appropriate dimension\\
  $J$ & matrix of all ones of appropriate dimension\\
  $E_{i,j}$ & matrix whose $(i,j)$-th element is equal to one and zero elsewhere, that is $e_i e_j^\top$ of appropriate dimension\\
  $x \geq 0$ & the vector $x$ is entry-wise nonnegative\\
  $A \geq 0$ & the matrix $A$ is entry-wise nonnegative\\
  $\Delta^r$ & probability simplex, ${\{x\in\R^r \ | \ x\geq0,e^\top x=1\}}$\\ 
  $\Delta^{r \times n}$ & set of matrices whose columns lies in $\Delta^r$\\
  $\mathbb{S}^n$ & set of symmetric $n\times n$ matrices, $\{X\in\R^{n\times n} \ | \ X=X^\top\}$\\
  $\mathbb{S}^n_+$ & set of symmetric positive semidefinite matrices, $\{X\in\mathbb{S}^n \ | \ X \succeq 0 \}$\\
  $\mathbb{S}^n_{++}$ & set of symmetric positive definite matrices, $\{X\in\mathbb{S}^n \ | \ X \succ 0 \}$\\
  $\cone(A)$ & conical hull of the columns of $A$, $\{ y \ | \ y = Az, z \geq 0\}$\\
  $\conv(A)$ & convex hull of the columns of matrix $A\in\R^{m\times n}$, ${\{ y \ | \ y = Az, z\in\Delta^n \}}$\\
  $\ext(\mathcal{X})$ & set of extreme points of the set $\mathcal{X}$\\
  $\bd(\mathcal{X})$ & boundary of the set $\mathcal{X}$ \\
  $\mathcal{X}^{*,g}$ & polar of the set $\mathcal{X}\subset\R^r$ with respect to $g$, that is, ${\{x\in\R^r \ | \ \langle x,y-g\rangle \geq 0, \text{ for all } y\in\mathcal{X}\}}$\\
  $\kappa(A)$ & condition number of the matrix $A$ \\
  $|S|$ & Cardinality of the set $S$, that is the number of elements in $S$\\
  $\|x\|_0$ & \lzeronorm{} of vector $x$, $ |\{i \ | \ x_i \neq 0 \}|$ \\
  $\|x\|_1$ & \lonenorm{} of vector $x \in \mathbb{R}^r$, $\sum_{i=1}^{r} | x_i |$ \\
  $\|x\|_2$ & \ltwonorm{} of vector $x \in \mathbb{R}^r$, $ \sqrt{\sum_{i=1}^r |x_i|^2}$ \\
  $\|A\|_F$ & Frobenius norm of matrix $A \in \mathbb{R}^{m \times r}$, $\sqrt{\sum_{i=1}^{m} \sum_{j=1}^{r} A{(i,j)}^2} $ \\
  $\|A\|$ & Spectral norm of matrix $A$, that is, its largest singular value\\
\end{tabular}
\end{center}

\section*{Acronyms}

\begin{tabular}{p{0.18\textwidth} p{0.62\textwidth} p{0.19\textwidth}}
  BSSMF & bounded simplex-structured matrix factorization & p.~\pageref{acro:BSSMF}\\
  CLRMF & constrained low-rank matrix factorization & p.~\pageref{acro:CLRMF}\\
  HU & hyperspectral unmixing & p.~\pageref{acro:HU}\\
  MinVol & minimum-volume & p.~\pageref{acro:MinVol}\\
  MinVol NMF & minimum volume matrix factorization & p.~\pageref{acro:MinVolNMF}\\
  MaxVol NMF & maximum volume matrix factorization & p.~\pageref{acro:MaxVolNMF}\\
  MVIE & maximum-volume inscribed ellipsoid & p.~\pageref{acro:MVIE}\\
  NMF & nonnegative matrix factorization & p.~\pageref{acro:NMF}\\
  ONMF & orthogonal matrix factorization & p.~\pageref{acro:ONMF}\\
  PMF & polytopic matrix factorization & p.~\pageref{acro:PMF}\\
  RandSPA & randomized successive matrix factorization & p.~\pageref{acro:RandSPA}\\
  SPA & successive projection algorithm & p.~\pageref{acro:SPA}\\
  SSC & sufficiently scattered conditions & p.~\pageref{preli:def:ssc}\\ 
  SSMF & simplex-structured matrix factorization & p.~\pageref{acro:SSMF}\\
  TITAN & inerTial block majorIzation minimization framework for non-smooth non-convex opTimizAtioN & p.~\pageref{acro:titan}\\
  VCA & vertex component analysis & p.~\pageref{acro:VCA}\\
\end{tabular}

\listoffigures

\listoftables

\chapter{Introduction}\label{chap:intro}
\begin{hiddenmusicsec}{ZEAL \& ARDOR - Sacrilegium III}{https://zealandardor.bandcamp.com/track/sacrilegium-iii-3}
\end{hiddenmusicsec}

\section*{Motivations}\addcontentsline{toc}{section}{Motivations}

The objective of machine learning is mainly to predict, 
classify or 
analyze data. This is usually done by using an algorithm that recognizes common and useful features in the data, according to a model. Compared to data-driven approaches, model-based approaches require more understanding of the data, but less amount of data during the learning. Particularly, linear models are interesting for their simplicity and interpretability. Consider some data stored in a matrix $X\in\R^{m\times n}$ where $m$ represents the dimension of a sample and $n$ the number the samples. A general linear model assumes that $X$ can be written as $X=WH+N$, where $W\in\R^{m\times r}$ can be interpreted as a basis matrix with each column of $W$ representing a feature, $H\in\R^{r\times n}$ can be interpreted as a decomposition of $X$ into the $W$ basis, and $N\in\R^{m\times n}$ is noise and model misfit. Take the $j$-th sample $X(:,j)$, it can be approximated by $$X(:,j)\approx WH(:,j)=\sum_{k=1}^{r}H(k,j)W(:,k).$$ In other words, each sample can be approximated by a weighted sum of features. The features are stored in $W$ and the weights are stored in $H$. This simple, yet powerful, data representation technique is applied in many domains, e.g., facial feature extraction~\cite{lee1999learning}, document clustering~\cite{fu2016robust}, blind source separation~\cite{ma2014signal, ozerov2009multichannel}, data fusion~\cite{prevost2023data}, demosaicing~\cite{abbas2024locally}, community detection~\cite{sorensen2022overlapping}, gene expression analysis~\cite{zhang2010binary}, in situ calibration of sensors~\cite{vuthanh2021insitu}, and recommender systems~\cite{rendle2022revisiting}. When $r<\rank(X)$, we refer to such models as low-rank matrix approximations. 
Low-rank matrix approximations/factorizations are linear dimension reduction techniques, that have recently emerged as very efficient models for unsupervised learning; see, e.g.,~\cite{Vaswani2018PCA, udell2019big} and the references therein. The most notable example is principal component analysis (PCA), which can be solved efficiently via the Singular Value Decomposition (SVD). In the last 20 years, many new more sophisticated models have been proposed, such as sparse PCA that requires one of the factors to be sparse to improve interpretability~\cite{dAspremont2007spca}, robust PCA to handle gross corruption and outliers~\cite{chandrasekaran2011rank, candes2011robust}, and low-rank matrix completion, also known as PCA with missing data, to handle missing entries in the input matrix~\cite{koren2009matrix}. 
The low-rank assumption supposes that there is redundancy in the data that can be explained linearly. Typically, the factors $W$ and $H$ are learned by minimizing an objective function. Different objective functions will promote different behaviors. The main objective function used in this thesis is the Frobenius norm, that is, $\|X-WH\|_F^2=\sum_{(i,j)}(X(i,j)-W(i,:)H(:,j))^2$. For the Frobenius norm, the best rank $r$ matrix approximation is given by the truncated SVD. This result is also known as the Eckart-Young theorem~\cite{EckartYoung36}. Depending on the application, the data and the goal at hand (e.g., clustering, denoising, feature extraction), additional structures/constraints on the factors $W$ and/or $H$, such as sparsity, nonnegativity and statistical independence to name a few, are more or less relevant in order to favor specific structures. We then talk of a Constrained Low-Rank Matrix Factorization (CLRMF)\label{acro:CLRMF}. In this thesis, we particularly focus on CLRMFs that encourage uniqueness, that is, a unique retrieval of $W$ and $H$. Uniqueness is also called identifiability. More details on identifiability are given in \Cref{preli:sec:identif}. Identifiability is useful in applications where the true underlying features and decomposition are desired, like in hyperspectral unmixing for instance where we aim at recovering the true materials present in the image along with their abundances in each pixel; see below for more details.\\



\section*{Applications}\addcontentsline{toc}{section}{Applications}

CLRMF is a very generic model and can be used in many applications. It can be used for, but it is not limited to, data imputation, noise reduction, data visualization and cluster analysis. Here, we mention two applications that will be often used in this thesis.

\begin{itemize}
    \item \textbf{Hyperspectral Unmixing (HU)}\label{acro:HU} 
    Light can be made of several electromagnetic waves that include radio waves, microwaves, infrared, visible light, ultraviolet, X-rays, and gamma rays. When light hits a material, this material absorbs an amount of the light, depending on the wavelengths of the electromagnetic spectrum. Some of the light is also reflected. When a white light hits a banana, we see the banana as being yellow because it absorbed the colors in the visible light spectrum except at the wavelengths corresponding to yellow. Even if we cannot see it, this phenomenon also happens outside of the visible light spectrum, providing very rich information.
    Each material has a unique spectral signature, which refers to how much light the material reflects at different wavelengths of the electromagnetic spectrum. In other words, a spectral signature is a pattern that shows how the reflectance of a material changes across various wavelengths, making it possible to identify different materials based on their reflectance behavior.
    A hyperspectral data cube of size $a \times b \times m $ contains the measured spectral reflectance in $m$ bandwidths of a $a\times b$ sized pixelated area. The spatial information can be vectorized by horizontally concatenating each pixel. Thus, a matrix $X\in\R_+^{m\times n}$ is obtained where $n=a \times b$ is the number of pixels. Due to physical constraints, satellites measuring the reflectance with a high spectral resolution have to compromise with the spatial resolution. Hence, a pixel can correspond to an area of several square meters. It is then possible that several materials are present in one pixel. HU consists in identifying the spectral signature of the materials present in the area, also called endmembers, as well as their abundance in each pixel. If we assume that the mixing process in a pixel is linear, HU can be performed with CLRMF. Properly doing so, the $k$-th column of $W$ should contain the spectral signature of the $k$-th endmember, and $H(k,j)$ should contain the abundance of the $k$-th endmember in the $j$-th pixel. As previously said, identifiability is then a key feature in HU. A practitioner wants to retrieve the true materials present in the area, as well as their true abundances. HU is discussed in \Cref{chap:randspa,chap:minvolnmf,chap:maxvolnmf}.

    \item \textbf{Matrix Completion for Recommender Systems} In some applications, either due to data corruption or simply due to missing measurements, it is possible that the data matrix $X$ is incomplete. This is the case in recommender systems for instance. Consider a movie-user rating data matrix $X\in\R_+^{m\times n}$, where the entry $X(i,j)$ is the rating that the $j$-th user gave to the $i$-th movie. Obviously, $X$ has some missing entries because all the users have not watched and rated all the movies. Let us assume that we have a way to estimate the missing values. It is then possible to recommend a movie to a user if, according to the estimation, this user should give a good rating to this movie. One of the most standard way to impute the missing entries is to assume that the hypothetical full matrix can be approximated by a low-rank matrix. Let us assume that we fix\footnote{Some CLRMFs for missing data completion do not need to fix the rank~\cite{cai2010singular}. We just make this assumption here for the sake of simplicity.} the rank of the estimation to $r$. Call $M\in\{0,1\}^{m\times n}$ the binary matrix\footnote{This is a particular case of $M\in[0,1]^{m\times n}$ being a weight matrix, where the weight $M(i,j)$ between $0$ and $1$ indicates how much the $(i,j)$-th entry can be trusted. $M(i,j)=0$ means that you do not trust the value $X(i,j)$ and $M(i,j)=1$ means that you trust the value $X(i,j)$.} of observed entries, where $M(i,j)=1$ if $X(i,j)$ is known, and $M(i,j)=0$ otherwise. Let us minimize the fitting error $$\|M\circ(X-WH)\|_F^2 = \sum_{(i,j),M(i,j)=1}(X(i,j)-W(i,:)H(:,j))^2$$ with respect to $W\in\R^{m\times r}$ and $H\in\R^{r\times n}$, where $\circ$ is the Hadamard product. If $X(i,j)$ is unknown, it can be estimated just by computing $W(i,:)H(:,j)$. Typically, other constraints are imposed on the factors $W$ and $H$ in order to avoid over-fitting and improve the imputation. Matrix completion is discussed in \Cref{chap:bssmf,chap:minvolnmf}.
\end{itemize}

\pagebreak
\section*{Thesis outline and related publications}\addcontentsline{toc}{section}{Thesis outline and related publications}

The aim of this thesis is fourfold:
\begin{enumerate}
    \item create new interpretable and identifiable matrix factorization models,
    \item improve existing matrix factorization models for some specific applications,
    \item develop fast algorithms for these models, and 
    \item apply these models and algorithms on data sets and compare to the state of the art.
\end{enumerate}

This thesis is structured as follows:\\

\vspace{1cm}\noindent\textbf{\Cref{chap:preli}. \nameref{chap:preli}}\\

In this chapter, we introduce some background on Nonnegative Matrix Factorization, Simplex-Structured Matrix Factorization and identifiability, often needed through the thesis.\\

\vspace{1cm}\noindent\textbf{\Cref{chap:bssmf}. \nameref{chap:bssmf}}\\

In this chapter, we propose a new low-rank matrix factorization model dubbed bounded simplex-structured matrix factorization (BSSMF). Given an input matrix $X$ and a factorization rank $r$, BSSMF looks for a matrix $W$ with $r$ columns and a matrix $H$ with $r$ rows such that $X \approx WH$ where the entries in each column of $W$ are bounded, that is, they belong to given intervals, and the columns of $H$ belong to the probability simplex, that is, $H$ is column stochastic. BSSMF generalizes nonnegative matrix factorization (NMF), and simplex-structured matrix factorization (SSMF). BSSMF is particularly well suited when the entries of the input matrix $X$ belong to a given interval; for example when the rows of $X$ represent images, or $X$ is a rating matrix such as in the Netflix and MovieLens datasets where the entries of $X$ belong to the interval $[1,5]$. The simplex-structured matrix $H$ not only leads to an easily understandable decomposition providing a soft clustering of the columns of $X$, but implies that the entries of each column of $WH$ belong to the same intervals as the columns of $W$. In this chapter, we first propose a fast algorithm for BSSMF, even in the presence of missing data in $X$. Then we provide identifiability conditions for BSSMF, that is, we provide conditions under which BSSMF admits a unique decomposition, up to trivial ambiguities. Finally, we illustrate the effectiveness of BSSMF on two applications: extraction of features in a set of images, and the matrix completion problem for recommender systems. \\

The content of this chapter is mainly extracted from

\noindent\cite{vuthanh2022bounded}~\fullcite{vuthanh2022bounded}\\
\noindent\cite{vuthanh2023bounded}~\fullcite{vuthanh2023bounded}.\\

\noindent\textbf{\Cref{chap:polytopicmf}. \nameref{chap:polytopicmf}}\\

Polytopic matrix factorization (PMF) decomposes a given matrix as the product of two factors where the rows of the first factor belong to a given convex polytope 
and the columns of the second factor belong to another given convex polytope. In this chapter we show that if the polytopes have certain invariant properties, and that if the rows of the first factor and the columns of the second factor are sufficiently scattered within their corresponding polytope, then this PMF is identifiable, that is, the factors are unique up to a signed permutation. The PMF framework is quite general, as it recovers other known structured matrix factorization models, and is highly customizable depending on the application. Hence, our result provides sufficient conditions that guarantee the identifiability of a large class of structured matrix factorization models.\\

The content of this chapter is mainly extracted from

\noindent\cite{vuthanh2023identifiability}~\fullcite{vuthanh2023identifiability}.\\

\vspace{1cm}\noindent\textbf{\Cref{chap:randspa}. \nameref{chap:randspa}}\\

The successive projection algorithm (SPA) is a widely used algorithm for nonnegative matrix factorization (NMF) under the separability assumption. Separability assumes that the cone of $W$ should be equal to the cone of the data $X$. In hyperspectral unmixing, that is, the extraction of materials in a hyperspectral image, separability is equivalent to the pure-pixel assumption and states that for each material present in the image there exists at least one pixel composed of only this material. SPA is fast and provably robust to noise, but is not robust to outliers. Also, it is deterministic, so for a given setting it always produces the same solution. Yet, it has been shown empirically that the non-deterministic algorithm vertex component analysis (VCA), when run sufficiently many times, often produces at least one solution that is better than the solution of SPA. In this chapter, we combine the best of both worlds and introduce a randomized version of SPA dubbed RandSPA, that produces potentially different results at each run. It can be run several times to keep the best solution, and it is still provably robust to noise. Experiments on the unmixing of hyperspectral images show that the best solution among several runs of RandSPA is generally better that the solution of vanilla SPA.\\

The content of this chapter is mainly extracted from

\noindent\cite{vuthanh2022randomized}~\fullcite{vuthanh2022randomized}.\\

\vspace{1cm}\noindent\textbf{\Cref{chap:minvolnmf}. \nameref{chap:minvolnmf}}\\

Nonnegative matrix factorization with the minimum volume criterion (MinVol NMF) guarantees that, under some mild and realistic conditions, the factorization has an  essentially unique solution. This result has been successfully leveraged in many applications, including topic modeling, hyperspectral image unmixing, and audio source separation. In this chapter, we propose a fast algorithm to solve MinVol NMF which is based on a recently introduced block majorization-minimization framework with extrapolation steps. We illustrate the effectiveness of our new algorithm compared to the state of the art on several real hyperspectral images and document datasets. We also focus on the use of the minimum volume criterion on the task of nonnegative data imputation, which, up to our knowledge, has never been explored before. The particular choice of the MinVol regularization is justified by its interesting identifiability property and by its link with the nuclear norm. We show experimentally that MinVol NMF is a relevant model for nonnegative data recovery, especially when the recovery of a unique embedding is desired. Additionally, we introduce a new version of MinVol NMF that outperforms vanilla MinVol for data recovery.\\

The content of this chapter is mainly extracted from

\noindent\cite{vuthanh2021inertial}~\fullcite{vuthanh2021inertial}\\
\noindent\cite{vuthanh2024minimum}~\fullcite{vuthanh2024minimum}.\\

\vspace{1cm}\noindent\textbf{\Cref{chap:maxvolnmf}. \nameref{chap:maxvolnmf}}\\

Nonnegative matrix factorization with a maximum volume criterion (MaxVol NMF) is an identifiable regularized low-rank model that has not been studied as much as its counterpart minimum-volume NMF (MinVol NMF). Given a matrix dataset $X$, MaxVol NMF consists in finding two nonnegative low-rank factors, $W$ and $H$, such that their product approximates $X$ while  the volume spanned by the origin and the rows of $H$ is as large as possible. This MaxVol criterion, combined with nonnegativity, will incite $H$ to be sparser. In MinVol NMF, the volume criterion is on $W$ and should be minimized. In the exact case, that is, $X = WH$, we show that MinVol NMF is equivalent to MaxVol NMF. Moreover, we show that MaxVol NMF behaves rather differently than MinVol NMF in the presence of noise, especially when the penalty on the volume criterion is increased. We also show how MaxVol NMF creates a continuum between NMF and orthogonal NMF with even clusters. We propose several algorithms to solve MaxVol NMF, which we apply on real datasets. Finally, we introduce the ``normalized'' variant of MaxVol NMF which exhibits better results than MinVol NMF and MaxVol NMF on Hyperspectral Unmixing (HU).

\section*{Open-source codes}\addcontentsline{toc}{section}{Open-source codes}
All the algorithms developed in this thesis are available online along with the data and (most of) the test scripts necessary to reproduce our experiments: \url{https://gitlab.com/vuthanho}


\chapter{Preliminaries}\label{chap:preli}
\begin{hiddenmusicsec}{Radiohead - Karma Police}{https://www.youtube.com/watch?v=4IJI6soiQhI}
\end{hiddenmusicsec}

\section{Nonnegative Matrix Factorization (NMF)}\label{preli:sec:nmf}

We say that a matrix is nonnegative if all its elements are larger or equal to zero. In the remaining of this thesis, $X\geq0$ means that  $X(i,j)\geq0$ for all $(i,j)$. Nonnegative matrix factorization (NMF)\label{acro:NMF}, popularized by Lee and Seung~\cite{lee1999learning}, is a linear dimensionality reduction technique that has become a standard tool to extract latent structures in nonnegative data. Given an input matrix $X \in \mathbb{R}^{m \times n}$ and a factorization rank $r < \min(m,n)$, NMF consists in finding two factors $W \in \mathbb{R}_+^{m \times r}$ and $H \in \mathbb{R}_+^{r \times n}$ such that $X \approx W H$. Columns of $X$ are called data points, and if $H$ is column-stochastic then the columns of $W$ can be seen as the vertices of a convex hull containing the data points; see \Cref{preli:sec:ssmf}. Applications of NMF include feature extraction in images, topic modeling, audio source separation, chemometrics, or blind hyperspectral unmixing (HU), see for example~\cite{cichocki2009nonnegative, xiao2019uniq, gillis2020,fu2019nonnegative} and the references therein. 

 Let us define the Exact NMF and NMF problems. 
\begin{definition}[Exact NMF]
  Given a nonnegative matrix $X\in\R_+^{m\times n}$, an exact NMF of size $r$ consists in finding two matrices $W\in\R_+^{m\times r}$ and $H\in\R_+^{r\times n}$ such that $X=WH$. The smallest $r$ such that you can find an exact NMF of $X$ is called the nonnegative rank of $X$ and is noted $\rank_+(X)$.
\end{definition}

\begin{definition}[NMF]
  Given a matrix $X\in\R^{m\times n}$, finding its NMF of size $r$ consists in solving
  \begin{mini}
    {W,H}{\|X-WH\|_F^2}{}{}
    \addConstraint{W\in\R_+^{m\times r}}
    \addConstraint{H\in\R_+^{r\times n}.}
  \end{mini}
  Note that we fixed our definition with the Frobenius norm because it is the only cost function used as a reconstruction error in this thesis. Nonetheless, other cost functions could be used, such as the beta-divergences \cite{fevotte2011algorithms}.
\end{definition}

When a data matrix is nonnegative, it makes sense to use NMF in order to decompose it with features that are also nonnegative, and in an additive way. The main advantage of NMF is that the nonnegativity constraints on the factors $W$ and $H$ lead to an easily interpretable part-based decomposition~\cite{lee1999learning}. 

\paragraph{Geometric interpretation of NMF} In the exact case, an NMF of size $r$ is equivalent to finding a cone with $r$ rays in the nonnegative orthant that contains all the data points $X(:,j)$. The columns of the matrix $W$ of the corresponding NMF are the rays that generated this polyhedral cone. Consider an NMF $X=WH$. For every $j$,  $X(:,j)=WH(:,j)$. Since $H(:,j)\geq0$ for all $j$, we have $\cone(X)\subseteq\cone(W)$ by definition of a cone\footnote{Equality is equivalent under the so-called \textit{separability assumption}; see \Cref{chap:randspa}}; see \Cref{preli:fig:geointerpNMF} for a 3D example.

\begin{figure}[hbtp!]
  \centering
  \begin{tikzpicture}
    \begin{axis}[view={45}{20},
      axis lines=center,
      ytick=\empty,xtick=\empty,ztick=\empty,
      xmin=-0.1, xmax=1.4,
      ymin=-0.1, ymax=1.4, 
      zmin=-0.1, zmax=1.4,
      legend style={at={(1.05,0.5)},anchor=center}
      ]
  
      \addplot3 [color=black,mark=*,mark options={solid},only marks,opacity=0.5]
      coordinates{
        (0.2,   0.36,  0.84)
        (0.2,   0.76,  0.44)
        (0.82,  0.18,  0.26)
        (0.15,  0.11,  0.51)
        (0.6,   0.6,   0.2)
        (0.208044,  0.47042,   0.691368)
        (0.356276,  0.371377,  0.489954)
        (0.409677,  0.571279,  0.387622)
        (0.548255,  0.440392,  0.331366)
        (0.335524,  0.255621,  0.46898)
        (0.254364,  0.167102,  0.533224)
        (0.267997,  0.375913,  0.722183)
        (0.493787,  0.533898,  0.338444)
        (0.575162,  0.441782,  0.325683)
        (0.550958,  0.384803,  0.310077)
      };
      \addlegendentry{Columns of $X$}
      \addplot3 [forget plot,color=green,fill=green!70,line join=round,line width=1pt,opacity=0.7]
      coordinates{
        (0,0,0) 
        (0.272727,  0.2,   0.927273) 
        (0.2,       0.36,  0.84) 
        (0,0,0) 
        (0.2,       0.36,  0.84) 
        (0.2,       0.76,  0.44) 
        (0,0,0) 
        (0.2,       0.76,  0.44) 
        (0.6,       0.6,   0.2) 
        (0,0,0) 
        (0.6,       0.6,   0.2) 
        (0.911111,  0.2,   0.288889) 
        (0,0,0) 
        };
      \addlegendimage{area legend, fill=green!70,opacity=0.3}
      \addlegendentry{$\cone(X)$}
      
      \addplot3 [forget plot,color=green,fill=green!70,line join=round,line width=1pt,opacity=0.7]
      coordinates{
        (0,0,0) 
        (0.911111,  0.2,   0.288889) 
        (0.272727,  0.2,   0.927273) 
        (0,0,0) 
        };
      \addplot3 [forget plot,color=green,fill=green!70,line join=round,line width=1pt,opacity=0.1]
        coordinates{
          (0,0,0) 
          (0.409091,  0.3,   1.39091) 
          (0.3,       0.54,  1.26) 
          (0,0,0) 
          (0.3,       0.54,  1.26) 
          (0.3,       1.14,  0.66) 
          (0,0,0) 
          (0.3,       1.14,  0.66) 
          (0.9,       0.9,   0.3) 
          (0,0,0) 
          (0.9,       0.9,   0.3) 
          (1.36667,   0.3,   0.433334) 
          (0,0,0)
          };
      \addplot3 [forget plot,color=green,fill=green!70,line join=round,line width=1pt,opacity=0.1]
      coordinates{
        (0,0,0) 
        (1.36667,   0.3,   0.433334) 
        (0.409091,  0.3,   1.39091) 
        (0,0,0) 
        };
      \addplot3 [color=red,mark=triangle*,mark options={solid},only marks,opacity=0.5]
      coordinates{
        (0.2,0.2,1)
        (0.2,1,0.2)
        (1,0.2,0.2)
      };
      \addlegendentry{Columns of $W$}
      
      \addplot3 [forget plot,color=blue,fill=blue!70,line join=round,line width=1pt,opacity=0.35]
      coordinates{
        (0,0,0)
        (0.2,0.2,1)
        (0.2,1,0.2)
        (0,0,0)
        (0.2,1,0.2)
        (1,0.2,0.2)
        (0,0,0)};
        \addlegendimage{area legend, fill=blue!70,opacity=0.3}
        \addlegendentry{$\cone(W)$}
      \addplot3 [forget plot,color=blue,fill=blue!70,line join=round,line width=1pt,opacity=0.35]
      coordinates{
        (0,0,0)
        (0.2,0.2,1)
        (1,0.2,0.2)
        (0,0,0)};
      \addplot3 [forget plot,color=blue,fill=blue!70,line join=round,line width=0pt,opacity=0.1]
      coordinates{
        (0,0,0)
        (0.3,0.3,1.5)
        (0.3,1.5,0.3)
        (0,0,0)
        (0.3,1.5,0.3)
        (1.5,0.3,0.3)
        (0,0,0)};
      \addplot3 [forget plot,color=blue,fill=blue!70,line join=round,line width=0pt,opacity=0.1]
      coordinates{
        (0,0,0)
        (0.3,0.3,1.5)
        (1.5,0.3,0.3)
        (0,0,0)};
    \end{axis}
  \end{tikzpicture}
  \caption[Geometric interpretation of Exact NMF]{Geometric interpretation of Exact NMF with $r=3$}
  \label{preli:fig:geointerpNMF}
\end{figure}
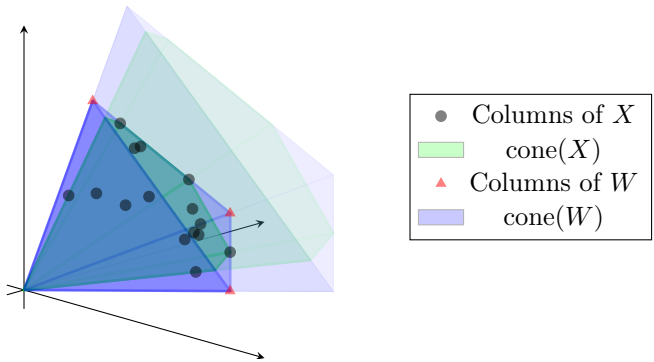

\section{Simplex-structured matrix factorization (SSMF)}\label{preli:sec:ssmf}

A key set that will be used through the thesis is the probability simplex, that will allow us to define the SSMF\label{acro:SSMF} problem. 

\begin{definition}[Probability simplex]
  We denote $\Delta^r$ the probability simplex, that is, the set
  $${\{x\in\R^r \ | \ x\geq0,e^\top x=1\}},$$
where $e$ is the vector of all ones of appropriate dimension.
\end{definition}
We then also denote $\Delta^{r\times n}$ the set of matrices of size $r\times n$ such that all their columns lie in $\Delta^r$, that is,
$${\{X\in\R^{r\times n} \ | \ X(:,j)\in\Delta^r \text{ for all }j\}}.$$

\begin{definition}[Exact SSMF]
  Given a matrix $X\in\R^{m\times n}$, an exact SSMF of size $r$ consists in finding two matrices $W\in\R^{m\times r}$ and $H\in\Delta^{r\times n}$ such that $X=WH$.
\end{definition}

\begin{definition}[SSMF]
  Given a matrix $X\in\R^{m\times n}$, finding its SSMF of size $r$ consists in solving
  \begin{mini}
    {W,H}{\|X-WH\|_F^2}{}{}
    \addConstraint{W\in\R^{m\times r}}
    \addConstraint{H\in\Delta^{r\times n}.}
  \end{mini}
\end{definition}
With SSMF, each data point $X(:,j)$ has to be explained through a convex combination of some features. Due to that, SSMF is quite useful for providing a soft clustering decomposition of the data. In recommender systems for instance, SSMF could provide this kind of interpretation: ``This user is behaving 80\% like this typical user and 20\% like this other typical user''.

\paragraph{Geometric interpretation of SSMF} In the exact case, an SSMF of size $r$ is equivalent to finding a convex hull with $r$ vertices that contains all the data points $X(:,j)$. The columns of the matrix $W$ of the corresponding SSMF are then the vertices of this convex hull. Consider an SSMF $X=WH$. For every $j$,  $X(:,j)=WH(:,j)$. Since $H(:,j)\in\Delta^r$ for all $j$, we have $\conv(X)\subseteq\conv(W)$ by definition of $\conv(W)$; see \Cref{preli:fig:geointerpSSMF} for a 3D example.

\begin{figure}[hbtp!]
  \centering
  \begin{tikzpicture}
    \begin{axis}[view={60}{30},
      axis lines=center,
      ytick=\empty,xtick=\empty,ztick=\empty,
      xmin=0, xmax=1.4,
      ymin=0, ymax=1.4, 
      zmin=0, zmax=1.4,
      legend style={at={(0.9,0.5)},anchor=center},
      axis line style={draw=none},
      ]
      \addplot3 [forget plot,color=black,dashed,line join=round,line width=0.5pt,opacity=0.2]
      coordinates{
        (0,0,0)
        (0,1,0)};
      \addplot3 [color=black,mark=*,mark options={solid},only marks,opacity=0.5]
      coordinates{
        (0.8, 0.2,  0.0) 
        (0.3, 0.3,  0.0) 
        (0.0, 0.8,  0.2) 
        (0.0, 0.3,  0.3) 
        (0.2, 0.0,  0.8) 
        (0.3, 0.0,  0.3) 
      };
      \addlegendentry{Columns of $X$}
      \addplot3 [forget plot,color=black,dashed,line join=round,line width=0.5pt,opacity=0.3]
      coordinates{
        (0.0, 0.3,  0.3) 
        (0.2, 0.0,  0.8) 
        (0.0, 0.3,  0.3) 
        (0.0, 0.8,  0.2) 
        (0.3, 0.3,  0.0) 
        (0.8, 0.2,  0.0) 
        (0.3, 0.3,  0.0) 
        (0.3, 0.0,  0.3) 
        (0.0, 0.3,  0.3) 
        (0.3, 0.3,  0.0) 
        };
      \addplot3 [forget plot,color=black,fill=green!70,line join=round,line width=0.5pt,opacity=0.3]
      coordinates{
        (0.8, 0.2,  0.0) 
        (0.0, 0.8,  0.2) 
        (0.2, 0.0,  0.8) 
        (0.3, 0.0,  0.3) 
        (0.8, 0.2,  0.0) 
        };
      \addplot3 [forget plot,color=black,line join=round,line width=0.5pt,opacity=0.5]
      coordinates{
        (0.8, 0.2,  0.0) 
        (0.0, 0.8,  0.2) 
        (0.2, 0.0,  0.8) 
        (0.8, 0.2,  0.0) 
        (0.3, 0.0,  0.3) 
        (0.2, 0.0,  0.8) 
        };
      \addlegendimage{area legend, fill=green!70,opacity=0.3}
      \addlegendentry{$\conv(X)$}
      
      \addplot3 [color=red,mark=triangle*,mark options={solid},only marks,opacity=0.5]
      coordinates{
        (0,0,0)
        (0,0,1)
        (0,1,0)
        (1,0,0)
      };
      \addlegendentry{Columns of $W$}
      
      \addplot3 [forget plot,color=blue!70,fill=blue!70,line join=round,line width=0pt,opacity=0.1]
      coordinates{
        (0,0,0)
        (0,0,1)
        (0,1,0)
        (1,0,0)
        (0,0,0)};
      \addplot3 [forget plot,color=black,line join=round,line width=0.5pt,opacity=0.7]
      coordinates{
        (0,0,0)
        (0,0,1)
        (0,1,0)
        (1,0,0)
        (0,0,0)
        (0,0,1)
        (1,0,0)};
      \addlegendimage{area legend, fill=blue!70,opacity=0.3}
      \addlegendentry{$\conv(W)$}
    \end{axis}
  \end{tikzpicture}
  \caption[Geometric interpretation of Exact SSMF]{Geometric interpretation of Exact SSMF with $r=4$ and $n=6$}
  \label{preli:fig:geointerpSSMF}
\end{figure}
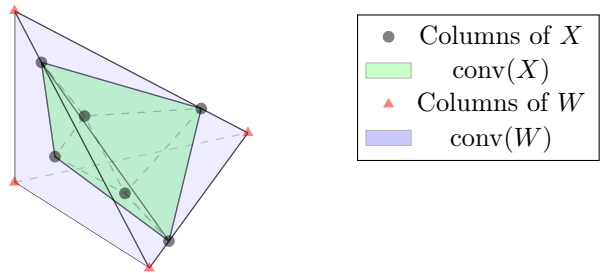

\pagebreak
\section{Identifiability} \label{bssmf:sec:identif}\label{preli:sec:identif}  

Let us first define a factorization model. 
\begin{definition}[Factorization model] 
Given a matrix $X \in \mathbb{R}^{m \times n}$, and an integer $r \leq \min(m,n)$, a  factorization model is an optimization model of the form 
\begin{equation} \label{preli:eq:factomodel}
\begin{split}
\min_{W \in \mathbb{R}^{m \times r}, H \in \mathbb{R}^{r \times n}} & 
g(W,H) \\
\text{ such that } & X = WH, \\
& W \in \Omega_W \text{ and } H \in \Omega_H,     
\end{split}
\end{equation}
where $g(W,H)$ is some criterion, and $\Omega_W$ and $\Omega_H$ are the feasible sets for $W$ and $H$, respectively. 
\end{definition}

Let us define the identifiability of a factorization model, and essential uniqueness of a pair $(W,H)$.  

\begin{definition}[Identifiability / Essential uniqueness]
\label{bssmf:def:identifiability}
\label{preli:def:identifiability}

\sloppy Let $X \in \mathbb{R}^{m \times n}$, and $r \leq \min(m,n)$ be an integer. 
Let $(W,H)$ be a solution to a given factorization model~\eqref{preli:eq:factomodel}. 
The pair $(W,H)$ is essentially unique 
for the factorization model~\eqref{preli:eq:factomodel} of matrix $X$ 
if and only if any other pair $(W',H') \in \mathbb{R}^{m \times r} \times \mathbb{R}^{r \times n}$ that solves the factorization model~\eqref{preli:eq:factomodel} satisfies, for all $k$, 
$$W'(:,k) = \alpha_k W(:,\pi(k)) $$
and 
$$H'(k,:) = \alpha_k^{-1} H(\pi(k),:), $$
where $\pi$ is a permutation of $\{1,2,\dots,r\}$, and $\alpha_k \neq 0$ for all $k$. 
In other terms, $(W',H')$ can only be obtained as a permutation and scaling of $(W,H)$. 
In that case, the factorization model is said to be identifiable for the matrix~$X$.  
\end{definition} 

A key question in theory and practice is to determine conditions on $X$, $g$, $ \Omega_W$ and $\Omega_H$ that lead to identifiable factorization models; see, e.g., \cite{xiao2019uniq, kueng2021binary} for discussions.
This will be a major topic of this thesis. 

\subsection{Identifiability of NMF} 
\label{bssmf:sec:identif_nmf}\label{preli:sec:identif_nmf}

NMF is not essentially unique in general. However, as opposed to SSMF (see \Cref{preli:sec:identif_ssmf}), NMF decompositions can be identifiable without the use of additional requirements. The first identifiability result was proposed in \cite{donoho2004does}. Their conditions, based on separability, are quite strong. In the context of nonnegative source separation,~\cite{moussaoui2005non} proposed some necessary conditions for the uniqueness of the solution.
One of the most relaxed sufficient condition for identifiability is based on the Sufficiently scattered condition (SSC).  
\begin{theorem}\cite[Theorem 4]{huang2013non} \label{bssmf:th:uniqNMFSSC}\label{preli:th:uniqNMFSSC}
If $W^\top \in \mathbb{R}^{r \times m}$ and $H \in \mathbb{R}^{r \times n}$ are sufficiently scattered,  
then the Exact NMF $(W,H)$ of $X=WH$ of size $r = \rank(X)$ is essentially unique. 
\end{theorem}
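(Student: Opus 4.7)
The plan is to reduce the identifiability question to showing that the ambiguity matrix connecting two Exact NMF solutions must be a scaled permutation. Suppose $X = W'H'$ is another Exact NMF of size $r$ with $W', H' \geq 0$. Since $r = \rank(X)$, the columns of $W$ and of $W'$ both span the $r$-dimensional column space of $X$, so there is a unique invertible $Q \in \R^{r \times r}$ with $W' = WQ^{-1}$; substituting into $WH = W'H'$ and using that $W$ has full column rank forces $H' = QH$. Hence essential uniqueness of $(W,H)$ in the sense of \Cref{preli:def:identifiability} is equivalent to showing that $Q$ must be a monomial matrix (a product of a positive diagonal and a permutation).

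Next I would exploit nonnegativity together with the SSC on each factor. The constraint $H' = QH \geq 0$ says that every row $q^\top$ of $Q$ satisfies $q^\top H \geq 0$, i.e.\ $q$ lies in the polar cone of $\cone(H)$ (understood row-wise on the columns of $H$). Symmetrically, $W' = W Q^{-1} \geq 0$ says that every column of $Q^{-1}$ lies in the polar cone of $\cone(W^\top)$. The substantive content of the SSC is precisely that these polar cones are squeezed into the nonnegative orthant: the SSC forces $\cone(H)$ to contain a certain second-order ``fat'' cone whose polar is contained in $\R_+^r$, with boundary contacts occurring only along the canonical rays $e_1,\dots,e_r$. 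Applying this to $H$ and to $W^\top$ therefore yields $Q \geq 0$ and $Q^{-1} \geq 0$ respectively.

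To finish, I would invoke the classical fact that an invertible nonnegative matrix with nonnegative inverse must be a monomial matrix: expanding $QQ^{-1}=I$ entrywise with both factors nonnegative forces each row of $Q$ to have exactly one nonzero, positive entry, and identifies its position via a permutation $\pi$ of $\{1,\dots,r\}$. Writing $Q = D\Pi$ for a positive diagonal $D = \diag(\alpha_1,\dots,\alpha_r)$ (with a small abuse denoting $\alpha_{\pi(k)}$ after reindexing) and reading off $W' = WQ^{-1}$ column-wise and $H' = QH$ row-wise yields exactly the relations $W'(:,k) = \alpha_k W(:,\pi(k))$ and $H'(k,:) = \alpha_k^{-1} H(\pi(k),:)$ of \Cref{preli:def:identifiability}.

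The main obstacle is the polar-cone step. The basis-change reduction and the monomial-matrix endgame are bookkeeping, but translating the geometric SSC into the clean algebraic conclusion ``every element of the polar of $\cone(H)$ is entrywise nonnegative'' is where the proof does real work: it requires unpacking the precise second-order-cone inclusion and the boundary condition in the SSC definition, and verifying that any direction in the polar that is not a nonnegative multiple of some $e_k$ would violate one of them. Without the boundary part of the SSC, spurious rays could survive and $Q$ could fail to be nonnegative, so the tightness of the SSC is exactly what the argument consumes.
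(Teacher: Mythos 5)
There is a genuine gap, and it sits exactly where you locate the ``real work'': the polar-cone step is false. From $QH\geq 0$ you correctly get that every row of $Q$ lies in the dual cone $\cone(H)^{*}$, and SSC1 ($\C\subseteq\cone(H)$) gives, by duality, $\cone(H)^{*}\subseteq\C^{*}$. But $\C^{*}=\{y\in\R^r \ | \ e^\top y\geq\|y\|_2\}$ (\Cref{preli:th:dualC}) is \emph{larger} than the nonnegative orthant for $r\geq 3$, not smaller, so this inclusion does not force $Q\geq 0$. Your claim that the SSC squeezes $\cone(H)^{*}$ into $\R_+^r$ is equivalent to $\cone(H)^{*}\subseteq(\R_+^r)^{*}$, i.e.\ (taking duals) $\cone(H)=\R_+^r$, which is precisely the separability assumption---strictly stronger than SSC. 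Concretely, the matrix $H=3A_{1/3}$ of \Cref{bssmf:example:1} satisfies the SSC, yet $y=(1,1,-0.3)^\top$ has nonnegative inner product with every column of $H$ and is not nonnegative; so under SSC alone the ambiguity matrix $Q$ cannot be shown to be nonnegative at this stage, and the ``nonnegative matrix with nonnegative inverse is monomial'' endgame never gets off the ground. Note also that your argument never genuinely uses SSC2, whereas it is indispensable (SSC1 alone does not give uniqueness, cf.\ \Cref{preli:fig:geoSSC:ssc1-not-ssc2}).

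The correct route (the one in Huang et al., and the one mirrored in this thesis in the proofs of \Cref{minvol:th:uniqueminvol} and \Cref{maxvolnmf:th:uniquemaxvol}) is: first remove the scaling ambiguity by a normalization (e.g.\ scale so that the rows of $H$ and of $H'=QH$ sum to one, which forces $Qe=e$ or $Q^{-1}e=e$); then use the two dual-cone inclusions---rows of $Q^{-1}$ (from $Q^{-1}H'\!$-type nonnegativity via SSC1 on $H$) and columns of $Q$ (from $WQ\geq 0$ via SSC1 on $W^\top$) lie in $\C^{*}$---to bound $|\det Q|\leq\prod_j\|Q(:,j)\|_2\leq\prod_j e^\top Q(:,j)\leq\bigl(\tfrac{e^\top Qe}{r}\bigr)^r=1$ by Hadamard and AM--GM, and symmetrically $|\det Q^{-1}|\leq 1$; since the two determinants multiply to one, all inequalities are equalities, which forces $Q$ to be orthogonal; finally $\cone(H)\subseteq\cone(Q^\top)$ with $Q$ orthogonal lets SSC2 conclude that $Q$ is a permutation. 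So the conclusion of the theorem is reached not by showing $Q$ and $Q^{-1}$ are nonnegative, but by a volume (determinant) argument that uses both factors' SSC1 simultaneously, plus SSC2 to exclude non-permutation rotations; your proposal is missing this mechanism and cannot be repaired without it.
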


The SSC is defined as follows. 
\begin{definition}[Sufficiently scattered condition]  \label{bssmf:def:ssc}\label{preli:def:ssc}  
The matrix $H \in \mathbb{R}^{r \times n}_+$ is sufficiently scattered if the following two conditions are satisfied:\index{sufficiently scattered condition!definition} \\

[SSC1]  $\mathcal{C} = \{x \in \mathbb{R}^r_+ \ | \ e^\top x \geq \sqrt{r-1} \|x\|_2 \} \; \subseteq \; \cone(H)$.\\

[SSC2]  There does not exist any orthogonal matrix $Q$ such that $\cone(H) \subseteq \cone(Q)$, except for permutation matrices. 
\end{definition} 

\begin{lemma}
  \label{preli:th:dualC}
  The dual cone of $\C$ is given by $\C^*=\left\{y\in\R^r,~e^\top y \geq\|y\|_2\right\}$.
\end{lemma}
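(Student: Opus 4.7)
The plan is to recognize $\C$ as a rescaled second-order cone, for which polarity is classical, and then translate back to the original basis. First, I would note that the nonnegativity constraint $x\geq 0$ in the definition of $\C$ is actually redundant: if $x_i < 0$ for some $i$, then Cauchy--Schwarz applied to the other $r-1$ coordinates gives $\sum_{j\neq i}x_j \leq \sqrt{r-1}\,\|x\|_2$, whence $e^\top x < \sqrt{r-1}\,\|x\|_2$, contradicting the defining inequality. Hence $\C = \{x\in\R^r \ | \ e^\top x \geq \sqrt{r-1}\,\|x\|_2\}$, a genuine second-order cone.

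Next I would change to an orthonormal basis whose first vector is $e/\sqrt r$, writing $x = t\,e/\sqrt r + x^\perp$ with $t = e^\top x/\sqrt r$ and $x^\perp \in e^\perp$. The change of basis is orthogonal, so inner products and norms are preserved and $\|x\|_2^2 = t^2 + \|x^\perp\|_2^2$. The defining inequality of $\C$ becomes $\sqrt r\,t \geq \sqrt{r-1}\,\sqrt{t^2+\|x^\perp\|_2^2}$, which (using $t\geq 0$ and squaring) is equivalent to the rescaled Lorentz cone
\[
  \C \;=\; \bigl\{(t,x^\perp) \ | \ t\geq\sqrt{r-1}\,\|x^\perp\|_2\bigr\}.
\]
I would then invoke the standard fact that the dual of $K_a = \{(t,z) : t\geq a\|z\|_2\}$ is $K_{1/a} = \{(s,w) : s\geq\|w\|_2/a\}$, applied with $a = \sqrt{r-1}$.

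Finally, I would translate this dual description back. Writing $y = s\,e/\sqrt r + y^\perp$, so that $s = e^\top y/\sqrt r$ and $\|y\|_2^2 = (e^\top y)^2/r + \|y^\perp\|_2^2$, the dual condition $s\geq\|y^\perp\|_2/\sqrt{r-1}$ becomes $e^\top y \geq \sqrt{r/(r-1)}\,\|y^\perp\|_2$. Squaring (legitimate since it forces $e^\top y \geq 0$) gives $(e^\top y)^2 \geq \tfrac{r}{r-1}\|y^\perp\|_2^2$, which simplifies via the norm decomposition to $(e^\top y)^2 \geq \|y\|_2^2$ together with $e^\top y\geq 0$, i.e.\ $e^\top y \geq \|y\|_2$. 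This is exactly the claimed description of $\C^*$.

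The main obstacle is purely algebraic bookkeeping: tracking how the slope $\sqrt{r-1}$ transforms to $1/\sqrt{r-1}$ through the change of basis, and how the squared inequality in the rotated coordinates collapses back to $e^\top y\geq\|y\|_2$ via the orthogonal decomposition of $\|y\|_2^2$. A coordinate-free alternative is available: apply the decomposition $x = (e^\top x/r)\,e + x^\perp$ directly on both $x$ and $y$, bound the cross term $(x^\perp)^\top y^\perp$ by Cauchy--Schwarz, and observe that the product of the two constraints meets $\|x^\perp\|_2\|y^\perp\|_2$ exactly, thanks to the identity $\sqrt{r(r-1)}\cdot\sqrt{r/(r-1)}=r$; the reverse inclusion follows by exhibiting an adversarial $x\in\C$ with $x^\perp$ aligned against $y^\perp$ whenever $e^\top y < \|y\|_2$.
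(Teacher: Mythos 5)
Your proof is correct. Note that the thesis itself does not prove this lemma; it only points to~\cite[Section 4.2.3.2]{gillis2020book}, so there is no in-text argument to compare against, and your self-contained derivation is a valid replacement. The two key steps both check out: the observation that the constraint $x\geq 0$ is implied by $e^\top x\geq\sqrt{r-1}\,\|x\|_2$ (via Cauchy--Schwarz on the remaining $r-1$ coordinates, with strict inequality from the negative entry), and the orthogonal change of basis sending $\C$ to the Lorentz cone $\{t\geq\sqrt{r-1}\,\|x^\perp\|_2\}$, whose dual $\{s\geq\|y^\perp\|_2/\sqrt{r-1}\}$ translates back, using $\|y\|_2^2=(e^\top y)^2/r+\|y^\perp\|_2^2$, exactly to $e^\top y\geq\|y\|_2$. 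Your coordinate-free alternative (decompose along $e$, bound the cross term by Cauchy--Schwarz, and use $\sqrt{r(r-1)}\cdot\sqrt{r/(r-1)}=r$, with an anti-aligned $x^\perp$ for the converse) is also sound and is closer in spirit to the direct computation one would find in the cited reference; the only degenerate case is $r=1$, where the slope formula breaks down but the statement is trivial.
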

\noindent The proof for this lemma is provided in~\cite[Section 4.2.3.2]{gillis2020book}.\\

SSC1 requires the columns of $H$ to contain the cone $\mathcal{C}$, which is tangent to every facet of the nonnegative orthant; see \Cref{bssmf:fig:geoSSC}.
\begin{figure}[p] 
\centering
\begin{subfigure}{0.39\textwidth}
  \centering
  \caption{cone $\mathcal{C}$}
  \label{preli:fig:geoSSC:conC}
  \begin{tikzpicture}[scale=0.8]
	\definecolor{mygreen}{RGB}{152,191,98}
	\node [color=mygreen] at (4,0.8) {$\mathcal{C}$};
	\node [color=blue!70] at (2.2,0.7) {$\Delta^3$};
	\begin{axis}[view={60}{30},
		axis lines=center,
		xlabel={$e_1$}, ylabel={$e_2$}, zlabel={$e_3$},
		ytick=\empty,xtick=\empty,ztick=\empty,
		xmin=0, xmax=1.5,
		ymin=0, ymax=1.5, 
		zmin=0, zmax=1.5,
		colormap={mygreen}{rgb255=(152,191,98) rgb255=(0,0,0)},
		every axis x label/.style={at={(ticklabel cs:0.66)},anchor=north},
		every axis y label/.style={at={(ticklabel cs:0.66)},anchor=north},
		every axis z label/.style={at={(ticklabel cs:0.66)},anchor=east},
		]
		\addplot3[domain=0:0.57735, y domain=0:2*pi,samples=2,samples y=40,surf,shader=flat,point meta=0,opacity=0,fill opacity=0.5] 
		({0.788675*0.707106*x*cos(deg(y))-0.211325*0.707106*x*sin(deg(y))+0.57735*x},{-0.211325*0.707106*x*cos(deg(y))+0.788675*0.707106*x*sin(deg(y))+0.57735*x},{-0.57735*0.707106*x*cos(deg(y))-0.57735*0.707106*x*sin(deg(y))+0.57735*x});
		\addplot3 [color=blue!70,fill=blue!70,line width=0pt,opacity=0.8]
		coordinates{
			(1,0,0)
			(0,1,0)
			(0,0,1)};
		\addplot3[domain=0.57735:1, y domain=0:2*pi,samples=2,samples y=40,surf,shader=flat,point meta=0,opacity=0,fill opacity=0.5] 
		({0.788675*0.707106*x*cos(deg(y))-0.211325*0.707106*x*sin(deg(y))+0.57735*x},{-0.211325*0.707106*x*cos(deg(y))+0.788675*0.707106*x*sin(deg(y))+0.57735*x},{-0.57735*0.707106*x*cos(deg(y))-0.57735*0.707106*x*sin(deg(y))+0.57735*x});
	\end{axis}
\end{tikzpicture}
\end{subfigure}
\begin{subfigure}{0.6\textwidth}
  \centering
  \caption{SSC}
  \label{preli:fig:geoSSC:ssc}
  \begin{tikzpicture}[scale=0.8]
	\definecolor{mygreen}{RGB}{152,191,98}
	\begin{axis}[width=7cm,height=7cm,hide axis,axis equal,legend style={at={(1,0.55)},anchor=west},legend style={cells={align=left}}]
		\addplot [color=blue!70,line width=2pt]
		coordinates{
			( 0.0, 1.0)
			(-0.866025 ,-0.5)
			( 0.866025 ,-0.5)
			( 0.0, 1.0)
			(-0.866025 ,-0.5)};
		
		\addplot [color=black,
		mark=*,mark options={solid},only marks]
		coordinates{
			(-0.519615   , -0.5)
			( 0.69282    , -0.2)
			( 0.519615   , -0.5)
			(-0.69282    , -0.2)
			( 0.173205    , 0.7)
			(-0.173205    , 0.7)
			( -0.445139 ,  -0.33079)
			( 0.392984  ,  0.250864)
			(-0.444567  , -0.121731)
			( 0.0634332 ,  0.0152832)
			( -0.137925  ,  0.56291)
			(0.296064  , -0.295143)
		};
		\addlegendimage{area legend, fill=gray!50,opacity=0.3}
		\addlegendimage{area legend, fill=mygreen,opacity=0.5}
		\filldraw [fill=gray!50,opacity=0.3] (axis cs: -0.173205, 0.7) -- (axis cs: 0.173205, 0.7) -- (axis cs: 0.69282,-0.2) -- (axis cs: 0.519615,-0.5) -- (axis cs: -0.519615,-0.5)  --  
		(axis cs: -0.69282,-0.2) --  cycle;
		
		\filldraw[fill=mygreen,opacity=0.5] (axis cs: 0,0) circle (0.5);
		
		\legend{$\bd(\Delta^3)$,Columns of $H$\\scaled on $\Delta^3$,$\cone(H)\cap\Delta^3$,$\mathcal{C}\cap\Delta^3$};
	\end{axis}
\end{tikzpicture}
\end{subfigure}

\vspace{2cm}

\begin{subfigure}{0.49\textwidth}
  \centering
  \caption{\sout{SSC1}}
  \label{preli:fig:geoSSC:not-ssc1}
  \begin{tikzpicture}[scale=0.8]
	\definecolor{mygreen}{RGB}{152,191,98}
	\begin{axis}[width=7cm,height=7cm,hide axis,axis equal,legend style={at={(0.37,0.55)},anchor=south west},legend style={cells={align=left}}]
		
		\addplot [color=blue!70,line width=2pt]
		coordinates{
			( 2+0.0, 1.0)
			(2+-0.866025 ,-0.5)
			( 2+0.866025 ,-0.5)
			( 2+0.0, 1.0)
			(2+-0.866025 ,-0.5)};
		
		\addplot [color=black,
		mark=*,mark options={solid},only marks]
		coordinates{
			(2+-0.519615   , -0.5)
			(2+ 0.69282    , -0.2)
			(2+ 0.519615   , -0.5)
			(2+-0.69282    , -0.2)
			(2+ 0.173205    , 0.7)
			(2+ -0.445139 ,  -0.33079)
			(2+ 0.392984  ,  0.250864)
			(2+-0.444567  , -0.121731)
			(2+ 0.0634332 ,  0.0152832)
			(2+ -0.137925  ,  0.56291)
			(2+0.296064  , -0.295143)
		};
		
		\filldraw [fill=gray!50,opacity=0.3] (axis cs: 2+ -0.137925, 0.56291) -- (axis cs: 2+0.173205, 0.7) -- (axis cs: 2+0.69282,-0.2) -- (axis cs: 2+0.519615,-0.5) -- (axis cs: 2+-0.519615,-0.5)  --  
		(axis cs: 2+-0.69282,-0.2) --  cycle;
		
		\filldraw[fill=mygreen,opacity=0.5] (axis cs: 2+0,0) circle (0.5);
		
	\end{axis}
\end{tikzpicture}
\end{subfigure}
\begin{subfigure}{0.49\textwidth}
  \centering
  \caption{SSC1 \sout{SSC2}}
  \label{preli:fig:geoSSC:ssc1-not-ssc2}
  \begin{tikzpicture}[scale=0.8]
	\definecolor{mygreen}{RGB}{152,191,98}
	\begin{axis}[width=7cm,height=7cm,hide axis,axis equal,legend style={at={(0.37,0.55)},anchor=south west},legend style={cells={align=left}}]
		\addplot [color=blue!70,line width=2pt]
		coordinates{
			( 0.0, 1.0)
			(-0.866025 ,-0.5)
			( 0.866025 ,-0.5)
			( 0.0, 1.0)
			(-0.866025 ,-0.5)};
		
		\addplot [color=black,
		mark=*,mark options={solid},only marks]
		coordinates{
			( -0.345139 ,  -0.33079)
			( 0.392984  ,  0.250864)
			(-0.444567  , -0.121731)
			( 0.0634332 ,  0.0152832)
			( -0.137925  ,  0.4291)
			(0.296064  , -0.295143)
			(-0.57735  ,  0.0)
			(-0.288675 ,  0.5)
			( 0.57735  ,  0.0)
			( 0.288675 ,  0.5)
			( 0.288675 , -0.5)
			(-0.288675 , -0.5)		   
		};
		\addlegendimage{area legend, fill=gray!50,opacity=0.3}
		\addlegendimage{area legend, fill=mygreen,opacity=0.5}
		\filldraw [fill=gray!50,opacity=0.3] (axis cs: -0.57735  ,  0.0) -- (axis cs: -0.288675 ,  0.5) -- (axis cs: 0.288675 ,  0.5) -- (axis cs: 0.57735  ,  0.0) -- (axis cs: 0.288675 , -0.5)  --  
		(axis cs: -0.288675 , -0.5) --  cycle;
		
		\filldraw[fill=mygreen,opacity=0.5] (axis cs: 0,0) circle (0.5);
		
	\end{axis}
\end{tikzpicture}
\end{subfigure}

\caption[Illustration of the SSC in three dimensions]{Illustration of the SSC in three dimensions. On (\subref{preli:fig:geoSSC:conC}): the sets $\Delta^3$ and $\mathcal{C}$, they intersect at (0,0.5,0.5), (0.5,0,0.5), and (0.5,0.5,0). 
On (\subref{preli:fig:geoSSC:ssc}), (\subref{preli:fig:geoSSC:not-ssc1}) and (\subref{preli:fig:geoSSC:ssc1-not-ssc2}): examples of a matrix $H \in \mathbb{R}^{3 \times n}$ respectively satisfying the SSC, not satisfying SSC1 and satisfying SSC1 but not SSC2. 
}
\label{bssmf:fig:geoSSC}\label{preli:fig:geoSSC}
\end{figure}
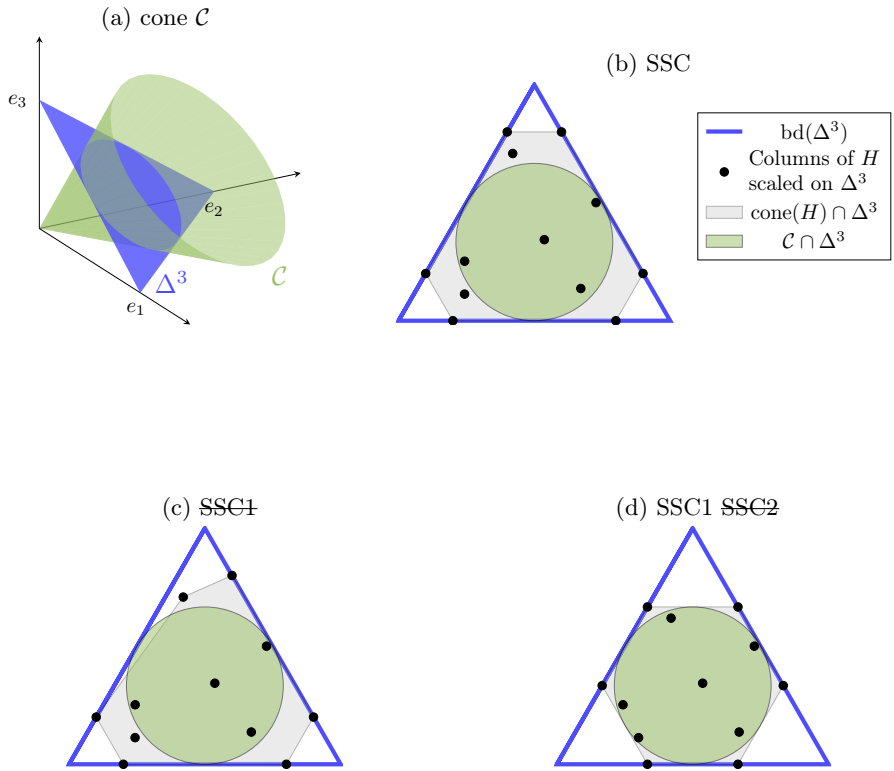 
Hence, satisfying SSC1 requires some degree of sparsity as $H$ needs to contain at least $r-1$ zeros per row~\cite[Th.~4.28]{gillis2020}. 
SSC2 is a mild regularity condition which is typically satisfied when SSC1 is satisfied. For more discussions on the SSC, we refer the interested reader to \cite{xiao2019uniq} and \cite[Chapter 4.2.3]{gillis2020}, and the references therein.

In practice, it is not likely for both $W^\top$ and $H$ to satisfy the SSC. Typically, $H$ will satisfy the SSC, as it is typically sparse. However, in many applications, $W^\top$ will not satisfy the SSC; in particular in applications where $W$ is not sparse, e.g., in hyperspectral unmixing, recommender systems, or imaging. This is why regularized NMF models have been introduced, including sparse and volume regularized NMF. 
We refer the interested reader to \Cref{chap:minvolnmf}, \Cref{chap:maxvolnmf} and \cite[Chapter 4]{gillis2020} for more details. 

\subsection{Identifiability of SSMF} 
\label{bssmf:sec:identif_ssmf}\label{preli:sec:identif_ssmf}
Without further requirements, SSMF is never identifiable; which follows from a result for semi-NMF which is a factorization model that requires only one factor, $H$, to be nonnegative~\cite{gillis2015exact}.  
Let $X = WH$ be an SSMF of $X$. We can obtain other SSMF of $X$ using the following transformation: for any $\alpha \geq 0$, let 
\[W(\alpha) := W \left( (1+\alpha) I - \frac{\alpha}{r}J \right),\]
and
\[\begin{split}
H(\alpha) :=& \left( \frac{1}{1+\alpha} H + \frac{\alpha}{(1+\alpha)r} J \right) \\
=& \left( \frac{1}{1+\alpha} I + \frac{\alpha}{(1+\alpha)r} J \right) H ,   
\end{split}\]
where $I$ is the identity matrix of appropriate dimension, $J$ is the matrix of all ones of appropriate dimension. The second equality follows from the fact that $e^\top H = e^\top$. 
The matrix $H(\alpha)$ is column stochastic since $H$ and $\frac{J}{r}$ are. One can check that $(W(\alpha), H(\alpha))$ is not a permutation and scaling of $(W,H)$ for $\alpha > 0$, while $WH = W(\alpha) H(\alpha)$ since\footnote{This is an invertible M-matrix, with positive diagonal elements and negative off-diagonal elements, whose inverse is nonnegative~\cite{berman1994nonnegative}.}  
\[\begin{split} A(\alpha) := & \left( (1+\alpha) I - \frac{\alpha}{r} J \right)^{-1} \\
= & \frac{1}{1+\alpha} I + \frac{\alpha}{(1+\alpha)r} J.\end{split}\] 
Geometrically, to obtain $W(\alpha)$, the columns of $W$ are moved towards the exterior of $\conv(W)$ and hence the convex hull of the column of $W(\alpha)$ contains the convex hull of the columns of $W$ and hence contains $\conv(X)$. This follows from the fact that  $W  = W(\alpha) A(\alpha)$, 
where $A$ is column stochastic. 

To obtain identifiability of SSMF, one needs  either to impose additional constraints on $W$ and/or $H$ such as sparsity~\cite{abdolali2021simplex}, 
or look for a solution minimizing a certain function~$g$. In particular, MinVol SSMF\label{acro:MinVol}, that is the solution $(W,H)$ that minimizes the volume of the convex hull of $W$ and the origin within its column space

\begin{mini}
  {\scriptstyle W\in\R^{m\times r}, H\in\R^{r\times n}}{\det(W^\top W) \label{preli:eq:exactminvolssmf}}{}{}
  \addConstraint{X=WH}
  \addConstraint{H\in\Delta^{r\times n},}
\end{mini}
is essentially unique given that $H$ satisfies the so-called sufficiently scattered condition (SSC). Note that the quantity $\det(W^\top W)$ is only relative to the aforementioned volume. The true volume is $\frac{1}{r!}\sqrt{\det(W^\top W)}$. 

For SSMF, we have the following identifiability result.  
\begin{theorem}\cite{fu2015blind, lin2015identifiability}  \label{bssmf:th:identifminvolSSMF}\label{preli:th:identifminvolSSMF}
If $H\in\R^{r\times n}$ is sufficiently scattered, then the Exact MinVol SSMF $(W,H)$, in the sense of~\eqref{preli:eq:exactminvolssmf}, of $X=WH$ of size $r=\rank{X}$ is essentially unique.

\end{theorem}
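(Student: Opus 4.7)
The plan is to adapt the standard volume-based identifiability argument. Let $(W', H') \in \R^{m \times r} \times \Delta^{r \times n}$ be any other Exact MinVol SSMF of $X$. Since $\rank(X) = r$, the columns of $W$ form a basis of the column space of $X$, so there exists an invertible $Q \in \R^{r \times r}$ with $W' = WQ$ and $H' = Q^{-1}H$. Setting $B := Q^{-1}$, I first translate the SSMF constraints on $(W', H')$ into constraints on $B$: the column stochasticity of $H'$ combined with $H$ having full row rank (which follows from $\rank(X) = r$) forces $e^\top B = e^\top$, while $H' \geq 0$ is exactly $BH \geq 0$.

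Second, the minimality of $\det(W^\top W)$ together with $\det(W'^\top W') = (\det Q)^2 \det(W^\top W)$ yields $|\det Q| = 1$, hence $|\det B| = 1$. The core technical step is to prove the matching upper bound $|\det B| \leq 1$ using SSC1 alone. The constraint $BH \geq 0$ says that each row $b_i^\top$ of $B$ lies in the dual cone $\cone(H)^*$; SSC1 gives $\C \subseteq \cone(H)$, so $\cone(H)^* \subseteq \C^*$, and by \Cref{preli:th:dualC} this reads $e^\top b_i \geq \|b_i\|_2$ for each $i$. Chaining Hadamard's inequality, the dual-cone inequality, and the arithmetic-geometric mean inequality yields
\[
|\det B| \;\leq\; \prod_{i=1}^{r} \|b_i\|_2 \;\leq\; \prod_{i=1}^{r} e^\top b_i \;\leq\; \left(\tfrac{1}{r}\sum_{i=1}^{r} e^\top b_i\right)^{\!r} = 1,
\]
the final equality using $\sum_i e^\top b_i = e^\top B e = e^\top e = r$.

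Third, combining $|\det B| = 1$ with the chain above forces every inequality to saturate. AM-GM equality yields $e^\top b_i = 1$ for all $i$; the dual-cone equality then forces $\|b_i\|_2 = 1$; and Hadamard equality forces the rows of $B$ to be pairwise orthogonal. Hence $B$ is an orthogonal matrix. The constraint $BH \geq 0$ rewrites as $\cone(H) \subseteq \{y : By \geq 0\}$, and for an orthogonal $B$ this set equals $\cone(B^\top)$ (the rotated nonnegative orthant is self-dual). Invoking SSC2 on the orthogonal matrix $B^\top$, we conclude that $B^\top$, and therefore $B$, is a permutation matrix. Then $Q = B^\top$ is itself a permutation, so $W'$ is a column permutation of $W$ and $H'$ the matching row permutation of $H$; the scaling factors $\alpha_k$ of \Cref{preli:def:identifiability} are all equal to $1$ because the simplex constraint on $H'$ leaves no scaling slack.

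The step I expect to require the most care is the derivation of $|\det B| \leq 1$: the chain rests on combining Hadamard's inequality with a cone-based dual inequality and AM-GM, and one has to verify carefully that the constraint $e^\top B = e^\top$ plugs in exactly at the final AM-GM step via $e^\top B e = r$. Analyzing the equality cases and invoking SSC2 on the orthogonal $B^\top$ is the second delicate step, and it is precisely there that SSC2 (and not just SSC1) is indispensable to eliminate the remaining orthogonal ambiguity and recover a genuine unsigned permutation.
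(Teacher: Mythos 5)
Your proof is correct and follows essentially the same route the paper uses: the right-inverse trick to force $e^\top B=e^\top$, the chain of Hadamard, dual-cone (via \Cref{preli:th:dualC}) and AM--GM inequalities to bound $|\det B|$, and the equality analysis plus SSC2 to force a permutation, exactly as in the thesis's proofs of \Cref{minvol:th:uniqueminvol} and \Cref{maxvolnmf:th:uniquemaxvol} (the paper itself defers this particular statement to the cited works, which argue the same way). No gaps: the full-row-rank of $H$, the nonnegativity needed for AM--GM, and the self-duality of the rotated orthant are all handled correctly.
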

Note that this result has been generalized to the case where the columns of $H$ belong to a given polytope instead of the probability simplex; see~\cite{tatli2021generalized}.

In practice, because of noise and model 
misfit, 
SSMF optimization models need to balance the data fitting term which measures the discrepancy between $X$ and $WH$, and the volume regularization for $\conv(W)$. 
Typically, a problem with objective function of the form 
\[
\|X-WH\|_F^2+\lambda\det(W^\top W), 
\] 
is solved. This requires the tuning of the parameter $\lambda$, which is a nontrivial process~\cite{ang2019algorithms, Zhuang2019paramminvol, nguyen2024towards}.   

\pagebreak
\section{Brief summary of the thesis content} 


In the following chapters, we will study various CLRMFs; they are organized as follows: 
\begin{itemize}
    \item In \Cref{chap:bssmf}, we introduce a new model, dubbed BSSMF, for Bounded Simplex-Structured Matrix Factorization. BSSMF imposes that the columns of $W$ belong to a chosen hyperrectangle while the columns of $H$ have to belong to the probability simplex. The resulting factorization $WH$ naturally belongs to the same chosen hyperrectangle. This behavior is particularly meaningful for naturally bounded data. BSSMF is also identifiable under milder conditions than NMF.
    \item In \Cref{chap:polytopicmf}, we introduce a new model, dubbed PMF, for Polytopic Matrix Factorization. PMF imposes that the rows of $W$ belong to a chosen polytope and the columns of $H$ belong to another chosen polytope. PMF is identifiable with the same core idea than \Cref{preli:th:uniqNMFSSC}, that is, the rows of $W$ and the columns of $H$ should be ``sufficiently scattered'' within their respective feasible set. PMF is a very generic model, meaning that we provide identifiability for a wild class of matrix factorization models.
    \item In \Cref{chap:randspa}, we introduce RandSPA, a greedy algorithm to solve NMF under the separability assumption. RandSPA creates a continuum between SPA and VCA. Thus, it combines the robustness of SPA and the randomness of VCA, allowing to outperform them when taking the best run among several.
    \item In \Cref{chap:minvolnmf}, we study MinVol NMF, a model similar to MinVol SSMF \eqref{preli:eq:exactminvolssmf}. Thus, it inherits from its identifiability. As a reminder, the MinVol criterion penalizes the volume of the convex hull generated by the columns of $W$ and the origin. Compared to MinVol SSMF, $W$ has to be nonnegative. Also, there are variants of MinVol NMF where the simplex-structure can be either on rows of $H$ or the columns of $W$. We develop a fast algorithm for MinVol NMF based on an inertial block majorization-minimization framework for non-smooth non-convex optimization, that was also used for BSSMF. Then, we show that the MinVol criterion shows promising results for matrix completion.
    \item In \Cref{chap:maxvolnmf}, we introduce a new model, dubbed MaxVol NMF. The core idea revolves around volume penalization, like with MinVol NMF. The difference is that the volume of $H$ is maximized, instead of the volume of $W$ being minimized. In the exact case, MinVol NMF and MaxVol NMF are equivalent. Thus, MaxVol NMF is as identifiable as MinVol NMF. However, in the inexact case, MaxVol NMF offers more control on the sparsity of the decomposition factor $H$. This behavior is interesting in the context of HU. In fact, MaxVol NMF shows better results than MinVol NMF on real hyperspectral datasets.
\end{itemize}

\chapter{Bounded Simplex-Structured Matrix Factorization}\label{chap:bssmf}

\begin{hiddenmusic}{HOME - Resonance}{https://eyewitnessrecords.bandcamp.com/track/resonance}
\end{hiddenmusic}Recently, simplex-structured matrix factorization (SSMF), discussed in \Cref{preli:sec:ssmf}, was introduced as a generalization of NMF~\cite{wu2017stochastic}; see also~\cite{abdolali2021simplex} and the references therein. 
SSMF does not impose any constraint on $W$, while it requires $H$ to be column stochastic, that is, $H(:,j) \in \Delta^r$ for all $j$. As a reminder, ${\Delta^r = \{ x \in\mathbb{R}^r \ | \ x \geq 0, e^\top x = 1 \}}$ is the probability simplex  and $e$ is the vector of all ones of appropriate dimension. SSMF is closely related to various machine learning problems, such as latent Dirichlet allocation, clustering, 
and the mixed membership stochastic block model; see~\cite{bakshilearning} and the references therein. 
Let us recall why SSMF is a generalizarion of NMF by considering the exact NMF model, $X=WH$. Let us normalize the input matrix such that the entries in each column sum to one (w.l.o.g.\ we assume $X$, and $W$, do not have a zero column), that is, such that $X^\top e = e$, and let us impose w.l.o.g.\ that the entries in each column of $W$ also sum to one (by the scaling degree of freedom in the factorization $WH$), that is, $W^\top e = e$. Then, we have 
\begin{equation} \label{bssmf:eq:sumtoone}
 X^\top e = e = (WH)^\top e = H^\top W^\top e = H^\top e, 
\end{equation}
so that $H$ has to be column stochastic, since $H \geq 0$ and $H^\top e = e$ is equivalent to $H(:,j)\in\Delta^r$ for all $j$. 

In this chapter, we introduce bounded simplex-structured matrix factorization (BSSMF)\label{acro:BSSMF}. BSSMF imposes the columns of $W$ to belong to a hyperrectangle, namely $W(i,j) \in [a_i, b_i]$ for all $i,j$ for some parameters $a_i \leq b_i$ for all $i$. For simplicity, given $a \leq b \in \mathbb{R}^m$, we denote the hyperrectangle 
\[
 [a,b] = \{ y \in \mathbb{R}^m 
 \ | \ a_i \leq y_i \leq b_i \text{ for all } i \}, 
\] and refer to it as an interval. The hyperrectangle constraint on $W$ is denoted as $W(:,j)\in[a,b]$ for all $j$. Let us formally define BSSMF. 

\begin{definition}[BSSMF] 
Let $X \in \mathbb{R}^{m \times n}$, let \mbox{$r \leq \min(m,n)$} be an integer, and 
let $a, b \in \mathbb{R}^m$ with $a \leq b$.  
The pair 
$(W,H) 
\in 
\mathbb{R}^{m \times r} 
\times 
\mathbb{R}^{r \times n}$ 
is a BSSMF of $X$ of size $r$ for the interval $[a,b]$ if 
\[X=WH,\quad
W(:,k) \in [a,b] \text{ for all } k, 
\quad  
H(:,j)\in\Delta^r \text{ for all } j. 
\]
\end{definition}  
Since the columns of $H$ define convex combinations, the convex hull of the columns of  $X=WH$ is contained in the convex hull of the columns of $W$, which is itself contained in the hyperrectangle $[a,b]$. 
This implies that the hyperrectangle $[a,b]$ must contain the columns of the data matrix, $X=WH$. BSSMF reduces to SSMF when $a_i = -\infty$ and $b_i = +\infty$ for all $i$. When $X \geq 0$, BSSMF reduces to NMF when $a_i = 0$ and $b_i = +\infty$ for all $i$, after a proper normalization of $X$; see the discussion around Equation~\eqref{bssmf:eq:sumtoone}.

\paragraph{Outline and contribution of the chapter}

The chapter is organized as follows. 
In \Cref{bssmf:sec:motiv}, we explain the motivation of introducing BSSMF.
In \Cref{bssmf:sec:algo}, we propose an efficient algorithm for BSSMF.
In \Cref{bssmf:sec:identifiability_bssmf}, we provide an identifiability result for BSSMF, 
which follows from an identifiability result for NMF. 
In \Cref{bssmf:sec:numexp}, we illustrate the effectiveness of BSSMF on two applications: 
\begin{itemize}
\item Image feature extraction: the entries of $X$ are pixel intensities. For example, for a gray level image, the entries of $X$ belong to the interval $[0,255]$.

\item Recommender systems: 
the entries of $X$ are ratings of users for some items (e.g., movies). These ratings belong to an interval, e.g., [1,5] for the Netflix and MovieLens datasets.

\end{itemize}

\section{Motivation of BSSMF} \label{bssmf:sec:motiv}

The motivation to introduce BSSMF is mostly fourfold; this is described in the next four paragraphs. 

\paragraph{Bounded low-rank approximation}

When the data naturally belong to intervals, imposing the approximation to belong to the same interval allows to provide better approximations, taking into account this prior information. 
Imposing that the entries in $W$ belong to some interval and that $H$ is column stochastic resolves this issue. BSSMF implies that the columns of the  approximation $WH$ belong to the same interval as the columns of $W$. In fact,  for all $j$, 
\[
X(:,j)  \; \approx \;  W H(:,j)  \; \in \;  [a,b], 
\]
since $W(:,k) \in [a,b]^m$ for all $k$, and the entries of ${H(:,j)}$ are nonnegative and sum to one. 

Another closely related model was proposed in~\cite{liu2021factor} where the entries of the factors $W$ and $H$ are required to belong to bounded intervals. The authors showed that their model is suitable for clustering. Nonetheless, it is not clear how to choose the lower and upper bounds on the entries of $W$ and $H$ to obtain tight lower and upper bounds for their product $WH$. With BSSMF the choice for the lower and upper bounds is easier, e.g., choosing $a_i$ and $b_i$ to be the smallest and largest entry in $X(i,:)$, respectively, that is, bounding $W$ in the same way the data matrix is; see \Cref{bssmf:sec:identifiability_bssmf} for more details. 

\paragraph{Interpretability} 

BSSMF allows us to easily interpret both factors: the columns of $W$ can be interpreted in the same way as the columns of $X$ (e.g., as movie ratings, or pixel intensities), while the columns of $H$ provide a soft clustering of the columns of $X$ as they are column stochastic.
 BSSMF can be interpreted geometrically similarly as SSMF and NMF: the convex hull of the columns of $W$, $\conv(W)$, must contain $\conv(X)$, since $X(:,j) = WH(:,j)$ for all $j$ where $H$ is column stochastic, while it is contained in 
the hyperrectangle $[a,b]$:
\[
\conv(X) 
\quad \subseteq \quad 
\conv(W) 
\quad \subseteq \quad 
[a,b].
\] 

Imposing bounds on the approximation, via the element-wise constraints $a \leq WH \leq b$ for some $a, b \in \mathbb{R}$, was proposed in~\cite{kannan2014bounded} and applied successfully to recommender systems. 
However, this model does not allow to interpret the basis factor, $W$, in the same way as the data. Some elements in $W$ will probably be out of the rating range because $W$ is not directly constrained. Hence, the basis elements in $W$ can only be interpreted as ``eigen users'', while with BSSMF, the basis elements can be interpreted as virtual meaningful users. 
It is also difficult to interpret the factor $H$ as it could contain negative contributions. In fact, only imposing $a \leq WH \leq b$ typically leads to dense factors $W$ and $H$ (that is, factors that do not contain many zeros, as opposed to sparse factors), 
while in most applications interpretability usually comes with a certain sparsity degree in at least one of the factors.

A closely related model that tackles blind source separation is bounded component analysis (BCA) proposed in~\cite{cruces2010bounded, erdogan2013class}, where the sources are assumed to belong to compact sets (hyperrectangle being a special case), while no constraints is imposed on the mixing matrix. Again, without any constraints on the mixing matrix, BCA will generate dense factors with negative linear combinations which are difficult to interpret. 
Let us note that their motivation is different from ours, as their objective is to extract mixed sources, while ours is to extract interpretable features and decompose data through them. In \cite{mansour2002blind}, the authors also proposed a blind source separation algorithm for bounded sources based on geometrical concepts. The mixtures are assumed to belong to a parallelogram. The proposed separation technique is relies on mapping this parallelogram to a rectangle. Again, their objective is to extract mixed sources. Nonetheless, working with a domain different from a hyperrectangle could be of interest for future work.
 
\paragraph{Identifiability} 

Identifiability is key in practice as it allows to recover the ground truth that generated the data; see the discussion 
in \Cref{preli:sec:identif},~\cite{xiao2019uniq, kueng2021binary} and the references therein.
A drawback of SSMF is that it is never identifiable, see \Cref{bssmf:sec:identif_ssmf} for further details. On the counterpart NMF can be identifiable, which is discussed in \Cref{bssmf:sec:identif_nmf}. Nonetheless, the conditions are not mild. For NMF to be identifiable, it is necessary that the supports of the columns of $W$ (that is, the set of non-zero entries) are not contained in one another (this is called a Sperner family), and similarly for the supports of the rows of $H$; 
see, e.g.,~\cite{moussaoui2005non, laurberg2008theorems}. 
This requires the presence of zeros in each column of $W$ and row of $H$,  which can be a strong condition in some applications. For example, in hyperspectral unmixing, $W$ is typically not sparse because it recovers spectral signatures of constitutive materials which are typically positive. 
Although the conditions for NMF (and SSMF) to be identifiable can be weakened using additional constraints and regularization terms, it then requires hyperparameter tuning procedures. 
In~\cite{tatli2021generalized}, they propose a model where the columns of $H$ belong to a polytope. Using a maximum volume criterion on the convex hull of $H$, their model is identifiable under the condition that the convex hull of $H$ contains the ellipsoid of maximum volume inscribed in the constraining polytope. The use of the maximum volume criterion also requires hyperparameter tuning.
In~\cite{cruces2010bounded, erdogan2013class}, the sources are identifiable by optimizing some geometric criterion, 
respectively minimizing a perimeter,  
and maximizing the ratio between the volume of an ellipsoid and the volume of a hyperrectangle.  
These identifiability conditions are not relevant to our model. 
As we will see in \Cref{bssmf:sec:identifiability_bssmf}, BSSMF is identifiable under relatively mild conditions, 
while it does not require parameter tuning, as opposed to most regularized structured matrix factorization models that are identifiable. Let us note that it is also possible to formulate identifiable nonlinear matrix approximation models like the bilinear model of~\cite{deville2019separability}, but this is out of the scope of this chapter.

\paragraph{Robustness to overfitting} 

Another drawback of NMF and SSMF is that they are rather sensitive to the choice 
of~$r$. When $r$ is chosen too large, these two models are over-parameterized and will typically lead to overfitting. 
This is a well-known behaviour that can be addressed with additional regularization terms that need to be fine-tuned~\cite{rendle2021revisiting}. As we will see experimentally in \Cref{bssmf:sec:robust} for matrix completion, 
without any parameter tuning, BSSMF is much more robust to overfitting than NMF and unconstrained matrix factorization. The reason is that the additional bound constraints on $W$ and sum-to-one constraint on $H$ prevents columns of $W$ and of $WH$ from going outside the feasible range, $[a,b]$. In turn, BSSMF will be less sensitive to noise and an overestimation of $r$. For example, an outlier that falls outside the feasible set $[a,b]$ will not pose problems to BSSMF, while it may significantly impact the NMF and SSMF solutions.

\section{Inertial block-coordinate descent algorithm for BSSMF} \label{bssmf:sec:algo}

In this chapter, we consider the following BSSMF problem 
\begin{equation}
\label{bssmf:eq:BSSMF}
\begin{split}
\min_{W,H} g(W,H)&:=\frac{1}{2}\| X - WH \|_F^2 \\
\text{ such that } & W(:,k) \in [a,b] \text{ for all } k, \\
& H \geq 0, \text{ and } H^\top e=e, 
\end{split}
\end{equation} 
that uses the squared Frobenius norm to measure the error of the approximation. 

\subsection{Proposed algorithm}\label{bssmf:sec:proposedalgo}

Most NMF algorithms rely on block coordinate descent methods, that is, they update a subset of the variables at a time, such as the popular multiplicative updates of Lee and Seung~\cite{lee2001algorithms}, the hierarchical alternating least squares algorithm~\cite{Cichocki07HALS, GG12}, and a fast gradient based algorithm~\cite{guan2012nenmf}; see, e.g.,~\cite[Chapter 8]{gillis2020} and the references therein for more detail. More recently, an inerTial block majorIzation minimization framework for non-smooth non-convex opTimizAtioN (TITAN)\label{acro:titan} was introduced in~\cite{hien2023inertial} and has been shown to be particularly powerful to solve matrix and tensor factorization problems~\cite{hien2019extrapolNMF, man2021accelerating,vuthanh2021inertial}. 

To solve~\eqref{bssmf:eq:BSSMF}, we therefore apply TITAN which updates one block $W$ or $H$ at a time while fixing the value of the other block. In order to update $W$ (resp.\ $H$), TITAN chooses a block surrogate function for $W$ (resp.\ $H$), embeds an inertial term to this surrogate function and then minimizes the obtained inertial surrogate function. We have $\nabla_W g(W,H)=-(X-WH) H^\top$ which is Lipschitz continuous in $W$ with the Lipschitz constant $\|HH^\top\|$, where $\|.\|$ is the spectral norm. Similarly, $\nabla_H g(W,H)=-W^\top (X-WH)$ is Lipschitz continuous in $H$ with constant $\|W^\top W\|$. Hence, we choose the Lipschitz gradient surrogate for both $W$ and $H$ and choose the Nesterov-type acceleration as analyzed in \cite[Section 4.2.1]{hien2023inertial} and \cite[Remark 4.1]{hien2023inertial}, see also~\cite[Section 6.1]{hien2023inertial} and~\cite{vuthanh2021inertial} for similar applications. 

Recall that applying BSSMF to recommender systems is one of our motivations, meaning that our model should be able to handle missing entries in $X$. Let us consider the more general model 
\begin{equation}
\label{bssmf:eq:WBSSMF}
\begin{split}
\min_{W,H} g(W,H) & :=\frac{1}{2}\| M\circ(X - WH) \|_F^2 \\
\text{ such that } & W(:,k) \in [a,b] \text{ for all } k, \\
& H \geq 0, \text{ and } H^\top e=e,    
\end{split}
\end{equation} 
where $\circ$ corresponds to the Hadamard product, and $M$ is a weight matrix which can model missing entries using $M(i,j) = 0$ when  $X(i,j)$ is missing, and $M(i,j) = 1$ otherwise. It can also be used in other contexts;  
see, e.g., \cite{gabriel1979lower, SJ03, gillis2011low}.
\sloppy TITAN can also be used to solve~\eqref{bssmf:eq:WBSSMF}, where the gradients are equal to ${\nabla_W g(W,H)=-(M \circ (X-WH)) H^\top}$ and ${\nabla_H g(W,H)=-W^\top (M\circ  (X-WH))}$. We acknowledge that the identifiability result that will be presented in \Cref{bssmf:sec:identifiability_bssmf} does not hold for the case where some data are missing, this is an interesting direction of future research. 
\cref{bssmf:alg:BSSMF} describes TITAN for solving the general problem~\eqref{bssmf:eq:WBSSMF}, where $[.]^a_b$ is the column-wise projection on $[a,b]$ and $[.]_{\Delta^r}$ is the column wise projection on the simplex $\Delta^r$. Let us clarify that our implementation of TITAN, although looking similar to alternating fast projection gradient methods (AFPGMs), differs from them. Concretely, with TITAN, the inertial sequence is evolving at every iteration and is not restarted when the algorithm alternates between updating $W$ and updating $H$. Typically, AFPGMs would restart the inertial sequence when the algorithm alternates between the blocks, because their goal is to solve alternatively the sub-problems. TITAN considers the whole problem instead of considering several sub-problems. Hence, TITAN tries to accelerate the global convergence of the sequences rather than trying to accelerate the convergence for the sub-problems. For more details, see \Cref{minvol:sec:algoTITAN} where an implementation of TITAN for MinVol NMF is shown to be faster than an alternating projection gradient method with Nesterov extrapolation.

Due to our derived algorithm being a particular instance of TITAN with Lipschitz gradient surrogates~\cite[Section 4.2]{hien2023inertial}, \cref{bssmf:alg:BSSMF} guarantees a subsequential convergence, that is, every limit point of the generated sequence is a stationary point of Problem~\eqref{bssmf:eq:BSSMF}. The Julia code for \cref{bssmf:alg:BSSMF} is available on gitlab\footnote{\href{bssmf:https://gitlab.com/vuthanho/bssmf.jl}{https://gitlab.com/vuthanho/bssmf.jl}} (a MATLAB code is also available on gitlab\footnote{\href{bssmf:https://gitlab.com/vuthanho/bounded-simplex-structured-matrix-factorization}{https://gitlab.com/vuthanho/bounded-simplex-structured-matrix-factorization}} but it does not handle missing data). We omit the implementation details here, but let us mention that when data are missing, our Julia implementation does not compute the Hadamard product with $M$ explicitly but rather takes advantage of the sparsity of the data by using multithreading to improve the computational time. The projections $[.]^a_b$ and $[.]_{\Delta^r}$ are also computed using multithreading.

\begin{algorithm}
\caption{Proposed algorithm for BSSMF}
\label{bssmf:alg:BSSMF}
\SetKwInOut{Input}{input}
\SetKwInOut{Output}{output}
\DontPrintSemicolon
\Input{Input data matrix $X \in \mathbb{R}^{m \times n}$, bounds $a\leq b\in\mathbb{R}^m$, initial factors $W \in \mathbb{R}^{m \times r}$ s.t. $W(:,k) \in [a,b]$ for all $k$ and simplex structured $H \in \mathbb{R}^{r \times n}_+$, weights $M\in[0,1]^{m\times n}$}
\Output{$W$ and $H$}
$\alpha_1=1$, $\alpha_2=1$, $W_{old}=W, H_{old}= H$, $L_W^{prev}=L_W=\| H H^\top\|$,  $L_H^{prev}=L_H=\|W^\top W\|$\;
\Repeat{some stopping criteria is satisfied\nllabel{bssmf:alg:BSSMF:line:outerloop}}{
    \While{stopping criteria not satisfied\nllabel{bssmf:alg:BSSMF:line:Winnerloop}}{
        $\alpha_{0}=\alpha_1, \alpha_1=(1+\sqrt{1+4\alpha_0^2})/2$\;
        $\beta_{W}=\min\left[~(\alpha_0-1)/\alpha_{1},0.9999\sqrt{L_W^{prev}/L_W} ~\right] $\;
        $\overbar{W}\leftarrow W+\beta_{W}(W-W_{old}) $\;
        $W_{old}\leftarrow W$\;
        $W \leftarrow \left[\overbar{W}+\frac{(M\circ(X-\overbar{W}H))H^{\top}}{L_W}\right]_a^b$\;
        $ L_W^{prev} = L_W$\;
    }
    $ L_H \leftarrow \|W^\top W\| $\;
    \While{stopping criteria not satisfied\nllabel{bssmf:alg:BSSMF:line:Hinnerloop}}{
        $\alpha_{0}=\alpha_2, \alpha_2=(1+\sqrt{1+4\alpha_0^2})/2$\;
    	$ \beta_{H}=\min\left[~(\alpha_0-1)/\alpha_{2},0.9999\sqrt{L_H^{prev}/L_H} ~\right] $\;
    	$ \overbar{H} \leftarrow H+\beta_{H}(H-H_{old}) $\;
    	$H_{old}\leftarrow H$\;
    	$ H \leftarrow \left[\overbar{H}+\frac{W^{\top} (M\circ(X-W\overbar{H}))}{L_H} \right]_{\Delta^r}$\;\nllabel{bssmf:alg:BSSMF:line:proj}
    	$L_H^{prev} \leftarrow L_H$\;
    }
    $ L_W = \|HH^\top\| $\;
}
\end{algorithm}

\paragraph{Initialization} A simple choice to initialize the factors, $W$ and $H$, in \cref{bssmf:alg:BSSMF} is to randomly initialize them: for all $i$, each entry of $W(i,:)$ is generated using the uniform distribution in the interval $[a_i,b_i]$, while $H$ is generated using a uniform distribution in $[0,1]^{r\times n}$ whose columns are then projected on the simplex $\Delta^r$.

\paragraph{Choice of Lipschitz constant} When some data are missing, the Lipschitz constant of the gradients relatively to $W$ and $H$ could be smaller than $\|HH^\top\|$ and $\|W^\top W\|$, respectively.
Relatively to $H$ for instance, a smaller Lipschitz constant would be $\max_j\|W^\top\diag(M(:,j))W\|$. Indeed, 
\begin{align*}
    \|\nabla_H g(W,H_1) - \nabla_H g(W,H_2)\|_F &=\|W^\top(M\circ(W(H_1 - H_2)))\|_F\\
    \|W^\top(M\circ(W(H_1 - H_2)))\|_F^2 &= \sum_j \|W^\top(M(:,j)\circ(W(H_1(:,j)-H_2(:,j))))\|_F^2\\
    &= \sum_j \|W^\top\diag(M(:,j))W(H_1(:,j)-H_2(:,j))\|_F^2\\
    &\leq \max_j\|W^\top\diag(M(:,j))W\|^2\|H_1-H_2\|_F^2.
\end{align*}
Obviously, $\max_j\|W^\top\diag(M(:,j))W\|\leq\|W^\top W\|$ due to $M$ being binary. Equality is achieved if there exists a $j$ such that $M(:,j)=e$, that is, if at least one column is fully observed. Consequently, $\|W^\top W\|$ is clearly a Lipschitz constant of $\nabla_H g(W,H)$ relatively to $H$, but $\max_j\|W^\top\diag(M(:,j))W\|$ is a tighter one. By symmetry of the problem, this also applies to $\nabla_W g(W,H)$ relatively to $W$, where $\|HH^\top\|$ is a Lipschitz constant, but $\max_i\|H\diag(M(i,:))H^\top\|$ is a tighter one. Yet, we choose to keep $\|HH^\top\|$ and $\|W^\top W\|$ even when some data are missing since those values are faster to compute. This choice can be compensated by data centering; see \Cref{bssmf:sec:centering}. Note that when $M$ is a weight matrix, ${\nabla_W g(W,H)=-(M^{\circ 2} \circ (X-WH)) H^\top}$ and ${\nabla_H g(W,H)=-W^\top (M^{\circ 2}\circ  (X-WH))}$, where $M^{\circ 2}=M\circ M$. Similarly to the case where $M$ is binary, we then retrieve $\max_i\|H\diag(M(i,:))^2H^\top\|$ and $\max_j\|W^\top\diag(M(:,j))^2W\|$ as tighter Lipschitz constants than $\|HH^\top\|$ and $\|W^\top W\|$.

\subsection{Accelerating BSSMF algorithms via data centering}  \label{bssmf:sec:centering}

Not only the BSSMF model is invariant to translations of the input data (this is explained in details in \Cref{bssmf:sec:identifiability_bssmf}), but also the optimization, because of the simplex constraints. In particular, for any $\mu\in\R^m$, minimizing
\begin{equation}
\label{bssmf:eq:SSMF}
f(W,H):=\frac{1}{2}\| X - WH \|_F^2
\end{equation}
or 
\begin{equation}
\label{bssmf:eq:TSSMF}
f_\mu(W,H):=\frac{1}{2}\| X-\mu e^\top - (W-\mu e^\top)H \|_F^2
\end{equation} 
is equivalent in BSSMF, since $\mu e^\top H = \mu e^\top$ as $H$ is column stochastic.
However, \emph{outside the feasible set, $f$ and $f_\mu$ do not have the same topology}. Computing the gradients, we have $\nabla_H f(W,H)=W^\top(WH-X)$ which is Lipschitz continuous in $H$ with the Lipschitz constant $\|W^\top W\|$, and $\nabla_H f_\mu(W,H)=W_\mu^\top(W_\mu H-X_\mu)$ which is Lipschitz continuous in $H$ with the Lipschitz constant $\|W_\mu^\top W_\mu\|$, where $W_\mu=W-\mu e^\top$ and $X_\mu=X-\mu e^\top$. Particularly, for BSSMF, since $W$ can be interpreted in the same way as $X$, we expect $\operatorname{mean_{row}}(X)=\frac{1}{n}Xe\approx\operatorname{mean_{row}}(W)\in\R^m$, where $\operatorname{mean_{row}}(.)$ is the empirical mean of each row of the input. Let us in fact choose $\mu=\operatorname{mean_{row}}(X)$. From~\cite[Theorem 3]{honeine2014eigenanalysis}, we have $\|X_\mu^\top X_\mu\|\leq\|X^\top X\|$. Consequently, we expect the Lipschitz constant $\|W_\mu^\top W_\mu\|$ to be smaller than $\|W^\top W\|$. A smaller Lipschitz constant means that, when updating $H$, the gradient steps are allowed to be larger without losing any convergence guarantee. Hence, with the right translation on our data $X$, the optimization problem on $H$ is unchanged on the feasible set but \cref{bssmf:alg:BSSMF} can be accelerated. 

\begin{figure}[htbp!]
    \centering
    \begin{tikzpicture}
 
\begin{groupplot}[view={30}{70}, 
    group style={group size=1 by 2}, 
    colormap/cool, 
    point meta min=0, 
    point meta max = 0.62, 
    zmax=0.62, 
    xlabel={$H[1]$}, 
    ylabel={$H[2]$}, 
    legend columns = 3,
    legend style={at={(1,1.2)},/tikz/every even column/.append style={column sep=0.2cm}},
    width=0.8\textwidth,height=9cm-21pt
]

\nextgroupplot[colorbar]
\addplot3 [surf,mesh/ordering=y varies,opacity=1] table{bssmf/data/error_notcentered.txt};
\addlegendentry{$\|X-WH\|_F$}

\addplot3 [line width = 2pt,opacity=0.4,dashed] table{bssmf/data/feasibleset.txt};
\addlegendentry{feasible set}

\addplot3 [line width = 1pt,opacity=0.46,forget plot] table{bssmf/data/gradientstep_notcentered.txt};
\addplot3 [only marks, color = magenta,opacity=0.6,mark size = 1.5pt] table{bssmf/data/gradientonly_notcentered.txt};
\addlegendentry{gradient step}

\addplot3 [only marks, color = lime,opacity=0.6,mark size = 1.5pt] table{bssmf/data/projonly_notcentered.txt};
\addlegendentry{projected gradient}

\addplot3 [only marks, mark = x,mark size = 3pt,color = red] coordinates {(0.4,0.6,0)};
\addlegendentry{optimal solution}

\nextgroupplot[colorbar]
\addplot3 [surf,mesh/ordering=y varies] table{bssmf/data/error_centered.txt};
\addplot3 [line width = 2pt,opacity=0.4,dashed] table{bssmf/data/feasibleset.txt};
\addplot3 [line width = 1pt,opacity=0.6,forget plot] table{bssmf/data/gradientstep_centered.txt};
\addplot3 [only marks, color = magenta,opacity=0.6,mark size = 1.5pt] table{bssmf/data/gradientonly_centered.txt};
\addplot3 [only marks, color = lime,opacity=0.6,mark size = 1.5pt] table{bssmf/data/projonly_centered.txt};
\addplot3 [only marks, mark = x,mark size = 3pt,color = red] coordinates {(0.4,0.6,0)};
\end{groupplot}
 
\end{tikzpicture}
    \caption[Influence of centering the data on the cost function topology regarding $H$ via a small example]{Influence of centering the data on the cost function topology regarding $H$ via a small example ($m=2,r=2,n=1$). Top: without centering. Bottom: with centering. 
    Five projected gradient steps are shown, decomposed through one gradient descent step followed by its projection onto the feasible set.}
    \label{bssmf:fig:centering_topo}
\end{figure}
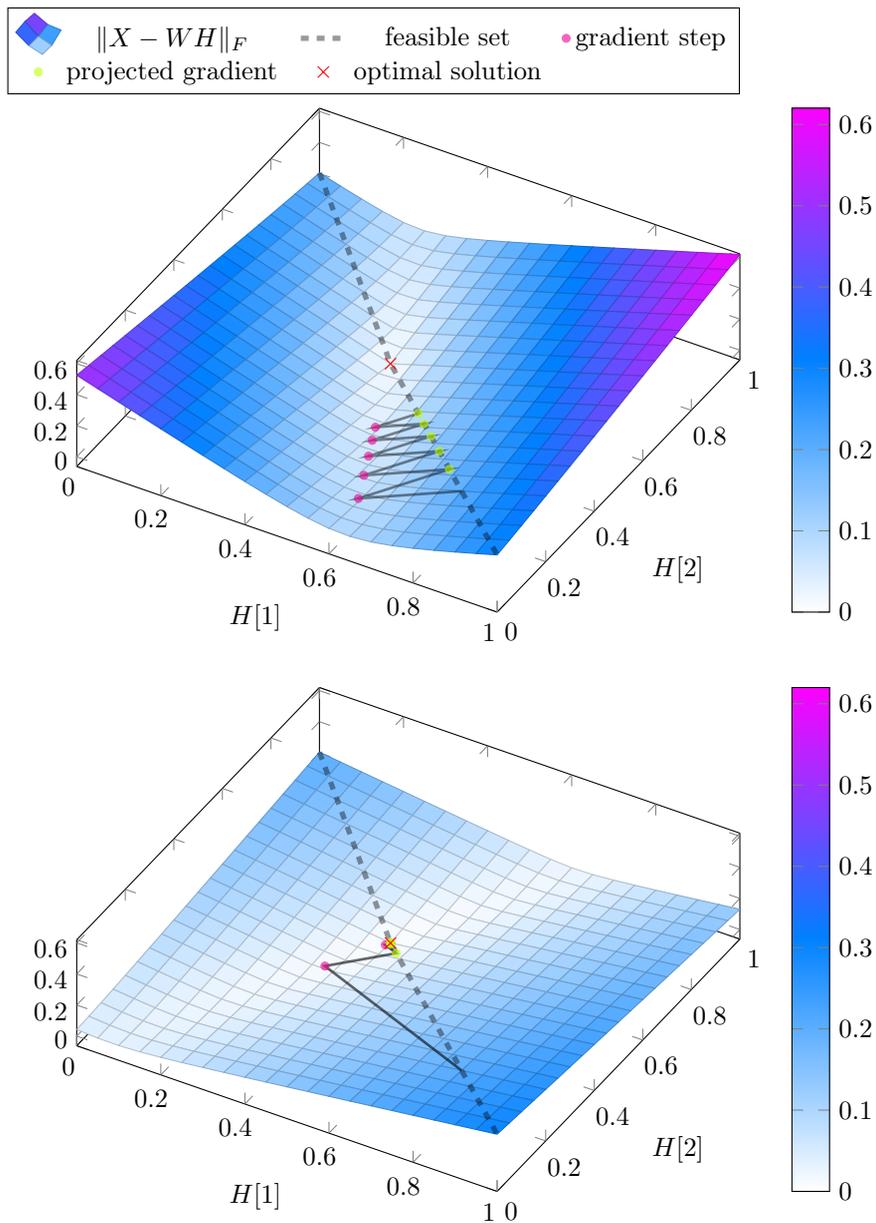


Let us illustrate this behavior on a small example with \mbox{$m=2$}, $n=1$, $r=2$. We choose $$
{X=\begin{pmatrix}0.4 & 0.3\\0.7 & 0.2\end{pmatrix}\begin{pmatrix}0.4\\0.6\end{pmatrix}}.
$$ 
We fix $$
W=\begin{pmatrix}0.4 & 0.3\\0.7 & 0.2\end{pmatrix}, 
$$ 
and try to solve, with respect to $H$, \cref{bssmf:eq:SSMF} and \cref{bssmf:eq:TSSMF}  with $\mu=\operatorname{mean_{row}}(X)$. 
We perform 5 projected gradient steps and display the results on \Cref{bssmf:fig:centering_topo}. On the top, 5 projected gradient steps are performed to update $H$ based on the original data $X$. On the bottom, 5 projected gradient steps are performed to update $H$ based on the centered data $X$. 
The feasible sets (in dash) are exactly the same, and therefore the optimal solutions are also the same. 
However, we observe that the landscape of the cost function outside the feasible region is smoother when the data are centered. This allows the solver to converge faster towards the optimal solution, as the gradients point better towards the optimal solution and the step sizes are larger. 
The improvement regarding the convergence speed by applying centering with real data will probably not be as drastic as in this small example. Still, minimizing a smoother function is always advantageous, and this will be shown empirically on real data in \Cref{bssmf:sec:convspeed}.

\subsection{Convergence speed and effect of acceleration strategies on real data} \label{bssmf:sec:convspeed}

In this subsection, the goal is twofold: 
(1)~show the effect of the extrapolation in TITAN by comparing \cref{bssmf:alg:BSSMF} to a non-extrapolated block coordinate descent, and 
(2)~show the acceleration effect of centering the data.

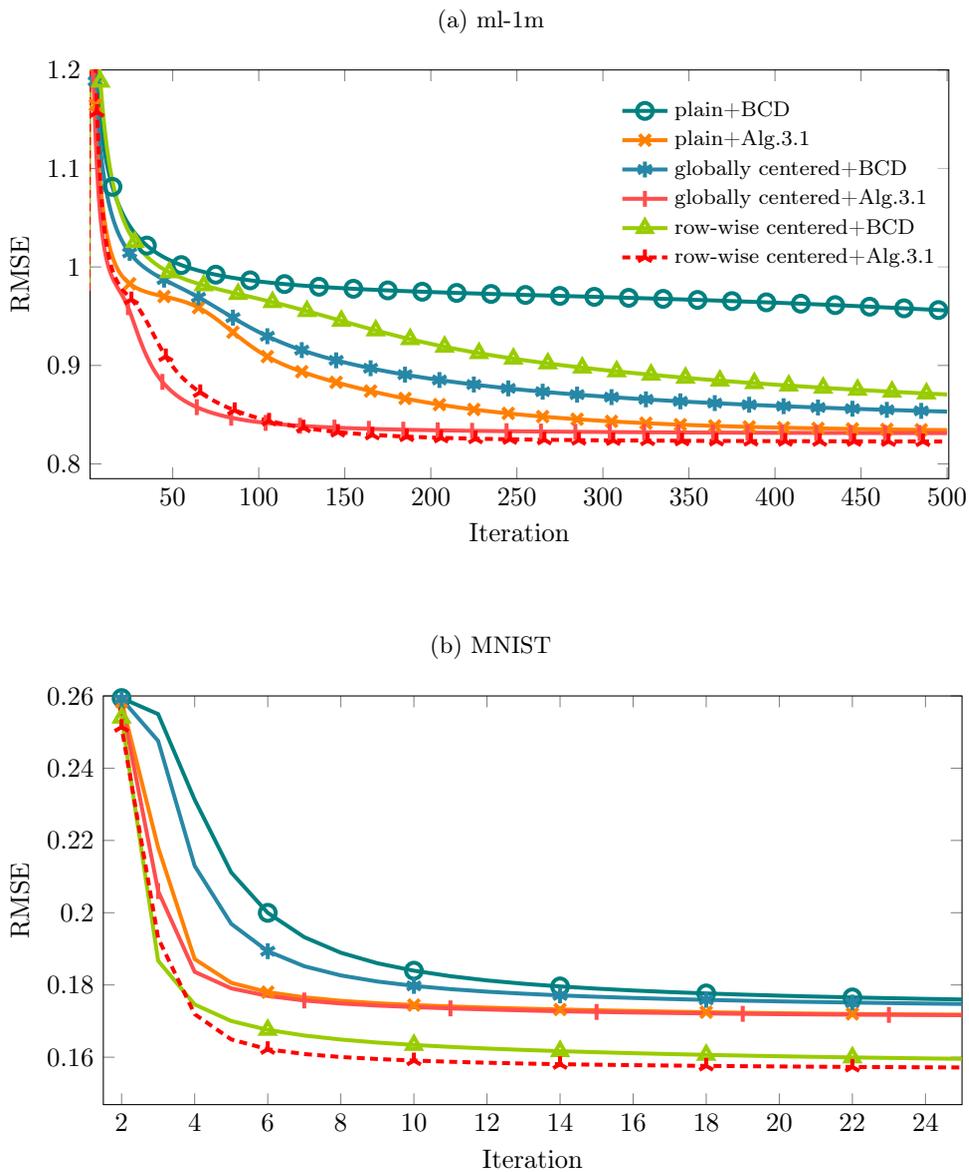
\begin{figure}
    \begin{subfigure}{\textwidth}
        \caption{ml-1m}
        \label{bssmf:fig:ml1m_extra_center}
        \tikzexternalexportnextfalse
        \begin{tikzpicture}
\begin{axis}[
            width=1\linewidth,
            height=7cm,
            ymax = 1.2,
            xmin = 2,
            xmax = 501,
            ylabel = {RMSE},
            xlabel = {Iteration},
            cycle list name=exotic,
            mark size = 3pt,
            mark repeat = 20,
            legend cell align={left},
            legend style={font=\footnotesize,at={(1,0.95)},anchor=north east,draw=none,fill opacity=0.5,text opacity=1}]

\addplot+[mark phase=10,mark = o,line width = 1.5pt] table[x index = 0, y index = 1]{bssmf/data/ml1m_extra_center.txt};
\addplot+[mark = x,line width = 1.5pt] table[x index = 0, y index = 2]{bssmf/data/ml1m_extra_center.txt};
\addplot+[mark = asterisk,line width = 1.5pt] table[x index = 0, y index = 3]{bssmf/data/ml1m_extra_center.txt};
\addplot+[mark = |,line width = 1.5pt] table[x index = 0, y index = 4]{bssmf/data/ml1m_extra_center.txt};
\addplot+[mark=triangle,line width = 1.5pt] table[x index = 0, y index = 5]{bssmf/data/ml1m_extra_center.txt};
\addplot+[mark=Mercedes star,line width = 1.5pt] table[x index = 0, y index = 6]{bssmf/data/ml1m_extra_center.txt};
\legend{plain+BCD,plain+Alg.\ref{bssmf:alg:BSSMF},globally centered+BCD,globally centered+Alg.\ref{bssmf:alg:BSSMF},row-wise centered+BCD,row-wise centered+Alg.\ref{bssmf:alg:BSSMF}}
\end{axis}
\end{tikzpicture}
    \end{subfigure}

    \begin{subfigure}{\textwidth}
        \caption{MNIST}
        \label{bssmf:fig:mnist_extra_center}
        \begin{tikzpicture}
\begin{axis}[
            width=1\linewidth,
            height=7cm,
            ymax = 0.26,
            xmin = 1.5,
            xmax = 25,
            ylabel = {RMSE},
            xlabel = {Iteration},
            cycle list name=exotic,
            mark size = 3pt,
            mark repeat = 4,
            legend cell align={left},
            legend style={font=\footnotesize,at={(1,0.85)},anchor=north east,draw=none,fill opacity=0.5,text opacity=1}]

\addplot+[mark = o,line width = 1.5pt] table[x index = 0, y index = 1]{bssmf/data/mnist_extra_center.txt};
\addplot+[mark = x,line width = 1.5pt] table[x index = 0, y index = 2]{bssmf/data/mnist_extra_center.txt};
\addplot+[ mark = asterisk,line width = 1.5pt] table[x index = 0, y index = 3]{bssmf/data/mnist_extra_center.txt};
\addplot+[mark phase=2, mark = |,line width = 1.5pt] table[x index = 0, y index = 4]{bssmf/data/mnist_extra_center.txt};
\addplot+[mark = triangle,line width = 1.5pt] table[x index = 0, y index = 5]{bssmf/data/mnist_extra_center.txt};
\addplot+[mark = Mercedes star,line width = 1.5pt] table[x index = 0, y index = 6]{bssmf/data/mnist_extra_center.txt};
\end{axis}
\end{tikzpicture}
    \end{subfigure}
    \caption[Evolution of the training error for ml-1m and MNIST]{Evolution of the training error for ml-1m and MNIST, averaged on 10 runs. For ml-1m, $r=5$, 1 inner iteration. For MNIST, $r=50$, 10 inner iterations.}
    \label{bssmf:fig:extra_center}
\end{figure} 
We will apply the BSSMF model on MNIST and ml-1m (these two datasets are properly introduced respectively in \Cref{bssmf:sec:inter} and \Cref{bssmf:sec:robust}) in six different scenarios: 3 data related scenarios $\times$ 2 algorithmic related scenarios. The data scenarios are raw data, globally centered data, and row-wise centered data (respectively called `plain', `globally centered' and `row-wise centered' in \Cref{bssmf:fig:extra_center}). Globally centered data are such that $\mu=\frac{e^\top X e}{mn}e$ and row-wise centered data are such that $\mu=\frac{1}{n}Xe$. Note that with a global centering, result from \cite[Theorem 3]{honeine2014eigenanalysis} does not hold anymore. Yet, we propose to see here how this centering strategy behaves.
For each data case, 2 algorithms are tested: 
(1)~\cref{bssmf:alg:BSSMF}, and (2)~a standard block coordinate descent (BCD) which is \cref{bssmf:alg:BSSMF} where the $\beta$'s are fixed to 0; this corresponds to the popular proximal alternating linearized minimization (PALM) algorithm~\cite{bolte2014proximal}. 
When the algorithms are compared on the same data scenario, \cref{bssmf:alg:BSSMF} always converges faster and to a better solution than BCD. We also observe that when the data are centered, globally or row-wise, applying the same algorithm always lead to faster convergence than on the plain case. Let us first comment the results on ml-1m (\Cref{bssmf:fig:ml1m_extra_center}). Applying BCD on the globally centered data is almost as fast as applying \cref{bssmf:alg:BSSMF} in the plain case, meaning that a good preprocessing is almost as important as a good acceleration strategy. With BCD only, global centering is faster than row-wise centering. With \cref{bssmf:alg:BSSMF}, global centering converges faster but row-wise centering converges to slightly better solutions.
\begin{table}[hbtp!]
    \centering
    \begin{tabular}{l|c|c|c}
        & plain & globally centered & row-wise centered \\ \hline
    BCD & 0.93  & 0.89  & 0.91 \\ \hline
    \cref{bssmf:alg:BSSMF} & 0.87 & 0.87 & 0.87
    \end{tabular}
    \caption[RMSE on the test set for ml-1m]{RMSE on the test set for ml-1m depending on the algorithm and the centering strategy}
    \label{bssmf:tab:rmseml1mcentering}
\end{table}
The root-mean-square errors (RMSEs) on the test set are available in \Cref{bssmf:tab:rmseml1mcentering}, highlighting the importance of a good acceleration strategy. This could be expected, since the centering only affects the convergence speed, but does not change the solutions.
Let us now comment the results on MNIST shown on \Cref{bssmf:fig:mnist_extra_center}.
Global centering does not really improve the convergence speed compared to the plain case, regardless of the used algorithm. Interestingly, row-wise centering provides a great improvement in convergence speed and better local minima, regardless of the used algorithm. In this case, a good centering strategy seems even more important than a good acceleration strategy.
Globally, regardless of the dataset, applying \cref{bssmf:alg:BSSMF} on centered data is the best strategy as compared with using plain data. As a consequence, it will be our default choice for the experiments in \Cref{bssmf:sec:numexp}. 

As mentioned above, when entries are missing, \cref{bssmf:alg:BSSMF} can take advantage of the sparsity of the data and uses multithreading. We report in~\Cref{bssmf:tab:time_ml-1m} the computation time of~\cref{bssmf:alg:BSSMF} in the experiment settings of~\Cref{bssmf:fig:ml1m_extra_center}, given by the macro \texttt{@btime} from the package \texttt{BenchmarkTools.jl}. The computation time in the settings of the experiment in~\Cref{bssmf:fig:mnist_extra_center} is reported in~\Cref{bssmf:tab:time_mnist}. When the dataset is full, like with MNIST, multithreading is only used for the projections $[.]^a_b$ and $[.]_{\Delta^r}$. Of course, multithreading is also employed for every matrix multiplication. However, the numbers of threads shown in \Cref{bssmf:tab:time_ml-1m,bssmf:tab:time_mnist} do not affect the computation time of matrix multiplications, as BLAS selects its own number of threads, independent of the number assigned to Julia. Note that there is no distinction between~\cref{bssmf:alg:BSSMF} and BCD in terms of computation time because the computation of the acceleration is negligible compared to the other computations.
\begin{table}[ht]
    \centering
    \begin{tabular}{l|c|c|c|c|c|c|c}
    \# threads & 1 & 2 & 4 & 6 & 8 & 10 & 12\\ \hline
    time (s) & 30.53 & 5.14 & 2.98 & 2.85 & 3.00 & 2.78 & 3.31 
    \end{tabular}
    \caption{Computation time of~\cref{bssmf:alg:BSSMF} in the experiment settings of~\Cref{bssmf:fig:ml1m_extra_center} depending on the number of used threads.}
    \label{bssmf:tab:time_ml-1m}
\end{table}
\begin{table}[ht]
    \centering
    \begin{tabular}{l|c|c|c|c|c|c|c}
    \# threads & 1 & 2 & 4 & 6 & 8 & 10 & 12\\ \hline
    time (s) & 27.79 & 21.92 & 16.67 & 15.22 & 15.73 & 16.01 & 16.65 
    \end{tabular}
    \caption{Computation time of~\cref{bssmf:alg:BSSMF} in the experiment settings of~\Cref{bssmf:fig:mnist_extra_center} depending on the number of used threads.}
    \label{bssmf:tab:time_mnist}
\end{table}

\section{Identifiability of BSSMF} \label{bssmf:sec:identifiability_bssmf}

A main motivation to introduce Bounded simplex-structured matrix factorization (BSSMF) is that it is identifiable under weaker conditions than NMF. 
We now state our main identifiability result for BSSMF, it is a consequence of the identifiability result of NMF and the following simple observation: $X = WH$ is a BSSMF for the interval $[a,b]$ implies that 
$be^\top - X =(be^\top - W)H$ and $X - ae^\top =(W - ae^\top) H$ are Exact NMF decompositions. 

\begin{theorem} \label{bssmf:th:uniqueBSSMF} 
Let ${W \in \mathbb{R}^{m \times r}}$ and ${H \in \mathbb{R}^{r \times n}}$ satisfy ${W(:,k) \in [a,b]}$ for all $k$ for some $a \leq b$,   
${H \geq 0}$, and ${H^\top e = e}$. 
If $\binom{W - a e^\top}{b e^\top - W}^\top \in \mathbb{R}^{r \times 2m}$ and $H \in \mathbb{R}^{r \times n}$ are sufficiently scattered,  
then the BSSMF $(W,H)$ of $X=WH$ of size $r = \rank(X)$ for the interval $[a,b]$ is essentially unique. 
\end{theorem}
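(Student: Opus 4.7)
The plan is to reduce the BSSMF identifiability question to NMF identifiability via the stacking trick hinted at in the paragraph preceding the theorem, then use the simplex constraint on $H$ to remove the residual scaling ambiguity.

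First, I would observe that since $(W,H)$ is a BSSMF, every column of $X=WH$ is a convex combination of columns of $W$, so $X(:,j) \in [a,b]$, giving $X - ae^\top \ge 0$ and $be^\top - X \ge 0$. Combined with $W - ae^\top \ge 0$ and $be^\top - W \ge 0$, and using $e^\top H = e^\top$, I get the two identities
\[
X - ae^\top = (W - ae^\top)H, \qquad be^\top - X = (be^\top - W)H,
\]
which I stack vertically into
\[
\widetilde X := \binom{X - ae^\top}{be^\top - X} = \binom{W - ae^\top}{be^\top - W} H =: \widetilde W H.
\]
This is an exact NMF of $\widetilde X \in \R_+^{2m \times n}$. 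Next I would verify that $\rank(\widetilde X) = r$: the SSC on $\widetilde W^\top$ forces $\widetilde W$ to have full column rank (since SSC1 implies $\cone(\widetilde W^\top) \supseteq \mathcal{C}$ spans $\R^r$), and likewise SSC on $H$ forces $H$ to have full row rank; hence the rank of the product is exactly $r$.

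Now I would apply \Cref{bssmf:th:uniqNMFSSC} to the NMF $\widetilde X = \widetilde W H$: since $\widetilde W^\top$ and $H$ are sufficiently scattered, this NMF is essentially unique. So if $(W',H')$ is any other BSSMF of $X$ with interval $[a,b]$, the analogous stacking $\widetilde W' := \binom{W' - ae^\top}{be^\top - W'}$ provides another NMF $\widetilde X = \widetilde W' H'$ with $\widetilde W', H' \ge 0$ of the same rank $r$. Essential uniqueness of this NMF yields a permutation matrix $P$ and a positive diagonal $D$ such that
\[
\widetilde W' = \widetilde W P D, \qquad H' = D^{-1} P^\top H.
\]

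The remaining step is to kill the scaling $D$ using the simplex constraint. From $e^\top H' = e^\top$ and $e^\top H = e^\top$, I get $e^\top D^{-1} P^\top H = e^\top H$, and full row rank of $H$ forces $e^\top D^{-1} P^\top = e^\top$, i.e., $D^{-1} e = P e = e$; since $D$ is a positive diagonal, this means $D = I$. Hence $H' = P^\top H$, and from $\widetilde W' = \widetilde W P$ I read off on each block $W' - ae^\top = (W - ae^\top) P$, i.e., $W' = W P$. This is precisely the permutation-only essential uniqueness of \Cref{preli:def:identifiability} with $\alpha_k = 1$.

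The main technical point — and the only place real work is needed beyond bookkeeping — is checking that the rank of the stacked matrix $\widetilde X$ is exactly $r$ so that \Cref{bssmf:th:uniqNMFSSC} applies verbatim; this is handled by the two full-rank consequences of the SSC noted above. Everything else is either a direct algebraic consequence of $H$ being column stochastic or a direct invocation of the NMF identifiability theorem.
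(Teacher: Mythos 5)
Your proof is correct and follows essentially the same route as the paper: stack $X-ae^\top$ and $be^\top-X$ into a single exact NMF $\binom{X - ae^\top}{be^\top - X} = \binom{W - ae^\top}{be^\top - W}H$ and invoke the NMF identifiability theorem under the SSC. The extra touches you add — explicitly checking $\rank$ of the stacked matrix equals $r$ via the full-rank consequences of the SSC, and killing the diagonal scaling using $e^\top H = e^\top$ — are sound; the paper treats the latter in a separate lemma on the scaling ambiguity rather than inside the proof.
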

\begin{proof}
Let $(W,H)$ be a BSSMF of $X$ for the interval $[a,b]$. 
As in the proof of \Cref{bssmf:lem:transfoBSSMF}, we have 
\[
X - a e^\top = WH - a e^\top = (W - a e^\top) H, 
\]
since $e^\top=e^\top H$. 
This implies that $(W- a e^\top,H)$ is an Exact NMF of $X - a e^\top$, since $W- a e^\top$ and $H$ are nonnegative. 
Similarly, we have 
\[
b e^\top - X 
= b e^\top - WH 
= (b e^\top - W)H,  
\]
which implies that $(b e^\top - W, H)$ is an Exact NMF of $b e^\top - X$, since  $b e^\top - W \geq 0$. 
Therefore, we have the Exact NMF  
\[
\left( 
\begin{array}{c}
X - a e^\top \\ 
b e^\top - X 
\end{array}
\right) = 
\left( 
\begin{array}{c}
W - a e^\top \\ 
b e^\top - W  
\end{array}
\right) H.  
\] 
By \Cref{bssmf:th:uniqNMFSSC}, this Exact NMF is unique if $\binom{W - a e^\top}{b e^\top - W}^\top$ and $H$ satisfy the SSC. 
This proves the result: in fact, the derivations above hold for any BSSMF of $X$. 
Hence, if $(W,H)$ was not an essentially unique BSSMF of $X$, there would exist another Exact NMF of $\binom{W - a e^\top}{b e^\top - W}^\top$, not obtained by permutation and scaling of $
\left( \binom{
W - a e^\top}{
b e^\top - W  
} , H \right)$, 
a contradiction. 
\end{proof}

Let us note that $W-ae^\top$ and $H$ being SSC, or $be^\top-W$ and $H$ being SSC, are also sufficient conditions for identifiability. These conditions are stronger, as $W-ae^\top$ being SSC or $be^\top-W$ being SSC implies that $\binom{W - a e^\top}{b e^\top - W}^\top$ is SSC. However, $\binom{W - a e^\top}{b e^\top - W}^\top$ does not imply that $W-ae^\top$ or $be^\top-W$ is SSC. The condition that $\binom{W - a e^\top}{b e^\top - W}^\top$ is SSC is much weaker than requiring $W^\top$ to be SSC in NMF. 
In fact, in NMF, $W^\top$ being SSC requires that it contains some
zero entries (at least $r-1$ per row \cite[Th.~4.28]{gillis2020}; this can also be seen on \Cref{preli:fig:geoSSC} in the case $r=3$). 
Since the SSC is only defined for nonnegative matrices and $W^\top$ contains zeros, $a$ has to be equal to the zero vector. 
In this case, $W^\top$ being SSC implies that $W^\top - e a^\top$ 
is SSC, and hence the corresponding BSSMF is identifiable. However, the reverse is not true. In fact, $\binom{W - a e^\top}{b e^\top - W}^\top$ being SSC means that sufficiently many values in $W$ are equal to its minimum and maximum bounds in $a$ and $b$. 
For example, in recommender systems, with $W(i,j) \in [1,5]$ for all $(i,j)$, many entries of $W$ are expected to be equal to 1 or to 5 (the minimum and maximum ratings), so that $\binom{W - a e^\top}{b e^\top - W}^\top$ will contain many zero entries, and hence likely to satisfy the SSC~\cite{fu2018identifiability}. On the other hand, $W$ is positive, and hence it cannot be part of an essentially unique Exact NMF.

Let us illustrate the difference between NMF and BSSMF on a simple example. 
\begin{example}[Non-unique NMF vs.\ unique BSSMF] 
\label{bssmf:example:1}
Let $\omega \in [0,1)$ and let 
\[
A_{\omega}  
= 
\left( \begin{array}{cccccc}
\omega & 1 & 1 & \omega & 0 & 0 \\ 
1 & \omega & 0 & 0 & \omega & 1 \\ 
0 & 0 & \omega & 1 & 1 & \omega \\ 
\end{array}
\right). 
\]
For $\omega < 0.5$, $A_{\omega}$ satisfies the SSC, while it does not for $\omega \leq 0.5$; see 
\cite[Example~3]{laurberg2008theorems}, 
\cite[Example~2]{huang2013non}, 
\cite[Example~4.16]{gillis2020}. 
Let us take 
\[
 H = 3 A_{1/3}  = 
\left( \begin{array}{cccccc}
1 & 3 & 3 & 1 & 0 & 0 \\ 
3 & 1 & 0 & 0 & 1 & 3 \\ 
0 & 0 & 1 & 3  & 3 & 1 \\ 
\end{array}
\right), 
\] 
which satisfies the SSC, and    
\[
W^\top = 3 A_{2/3} = 
\left( \begin{array}{cccccc} 
  2 &   3 &   3 &   2 &   0 &   0 \\ 
  3 &   2 &   0 &   0 &   2 &   3 \\ 
  0 &   0 &   2 &   3 &   3 &   2 \\ 
\end{array} \right), 
\]
which does not satisfy the SSC, but has some degree of sparsity. 
The NMF of 
\[
X 
= WH 
= \left( \begin{array}{cccccc} 
 11 &   9 &   6 &   2 &   3 &   9 \\ 
  9 &  11 &   9 &   3 &   2 &   6 \\ 
  3 &   9 &  11 &   9 &   6 &   2 \\ 
  2 &   6 &   9 &  11 &   9 &   3 \\ 
  6 &   2 &   3 &   9 &  11 &   9 \\ 
  9 &   3 &   2 &   6 &   9 &  11 \\ 
\end{array} \right)
\]
is not essentially unique. 
For example, 
\[
X = 
\left( \begin{array}{ccc} 
  0 &   3 &   1 \\ 
  1 &   3 &   0 \\ 
  3 &   1 &   0 \\ 
  3 &   0 &   1 \\ 
  1 &   0 &   3 \\ 
  0 &   1 &   3 \\ 
\end{array} \right) 
\left( \begin{array}{cccccc} 
  0 &   2 &   3 &   3 &   2 &   0 \\ 
  3 &   3 &   2 &   0 &   0 &   2 \\ 
  2 &   0 &   0 &   2 &   3 &   3 \\ 
\end{array} \right)
\]
is another decomposition which cannot be obtained as a scaling and permutation of $(W,H)$.  

However, the BSSMF of $X$ is unique, taking $a_i = 0$ and $b_i = 3$ for all $i$. In fact, $(3-W)^\top$ 
satisfies the SSC, as it is equal to  $3A_{1/3}$, up to permutation of its columns: 
\[
\begin{split}
3 - W^\top  
&= 
\left( \begin{array}{cccccc} 
  1 &   0 &   0 &   1 &   3 &   3 \\ 
  0 &   1 &   3 &   3 &   1 &   0 \\ 
  3 &   3 &   1 &   0 &   0 &   1 \\ 
\end{array} \right)\\
&= 
3 A_{1/3}(:, [4, 5, 6, 1, 2, 3] ).  
\end{split}
\]
Therefore, by \Cref{bssmf:th:uniqueBSSMF}, the BSSMF of $X$ is unique.  
\end{example}


\paragraph{Scaling ambiguity}
BSSMF is in fact more than essentially unique in the sense of~\Cref{bssmf:def:identifiability}. 
In fact, the scaling ambiguity can be removed because of $H$ being simplex structured, as shown in the following lemma. 
\begin{lemma}
	Let $H\in\mathbb{R}^{r\times n}$ such that $e^\top H=e^\top$ and $\rank(H)=r$. 
	Let $D \in \mathbb{R}^{r \times r}$ be a diagonal matrix, 
	and let $H'=DH$ be a scaling of the rows of $H$, and such that $e^\top H'=e^\top$. 
	Then $D$ must be the identity matrix, that is, $D=I$. 
\end{lemma}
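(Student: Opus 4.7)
The plan is to reduce the statement to a linear independence argument about the rows of $H$. First I would translate the two simplex conditions into a single linear relation involving $D$. Since $D$ is diagonal, writing $d = (d_1, \ldots, d_r)^\top$ for its diagonal entries, we have $e^\top D = d^\top$. The hypothesis $e^\top H' = e^\top$ together with $H' = DH$ then yields
\begin{equation*}
d^\top H \;=\; e^\top D H \;=\; e^\top H' \;=\; e^\top.
\end{equation*}

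Next I would combine this with the given $e^\top H = e^\top$ to obtain
\begin{equation*}
(d - e)^\top H \;=\; 0.
\end{equation*}
At this point the key step is invoking the rank hypothesis: since $\rank(H) = r$, the rows of $H$ are linearly independent, so the only row vector that annihilates $H$ on the left is the zero vector. Hence $d - e = 0$, i.e., $d_k = 1$ for all $k$, which is precisely $D = I$.

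I don't anticipate any real obstacle here; the entire argument is two lines of linear algebra once one observes that $e^\top D$ reads off the diagonal of $D$. The only subtle point worth stating explicitly in the proof is why $d^\top H = e^\top H$ forces $d = e$, namely the use of full row rank of $H$ (which follows from $\rank(H) = r$ and the fact that $H$ has $r$ rows). No appeal to the SSC or to identifiability machinery is needed, which is consistent with the lemma being a clean deterministic cancellation statement rather than an identifiability result.
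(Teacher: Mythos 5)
Your proof is correct and follows essentially the same route as the paper: both reduce the hypotheses to $e^\top D H = e^\top H$ and then cancel $H$ using $\rank(H)=r$, the paper by multiplying with the right inverse $H^\dagger$ and you by invoking linear independence of the rows, which are equivalent formulations of the same cancellation step.
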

\begin{proof} 
Let us denote $H^\dagger \in \mathbb{R}^{n\times r}$ the right inverse of $H$, which exists and is unique since $\rank(H) = r$, so that $H H^\dagger = I$. We have 
\begin{equation*}
\begin{split}
& e^\top H'= e^\top DH = e^\top \\
\Rightarrow \quad & e^\top DHH^\dagger=e^\top H^\dagger = e^\top \\
& \quad \quad \text{ since } e^\top H^\dagger=e^\top HH^\dagger=e^\top \\
\Rightarrow \quad & e^\top D = e^\top 
\quad \Rightarrow \quad D = I. 
\end{split}
\end{equation*}
Note that this lemma does not require $H$, $H'$ and $D$ to be nonnegative. 
\end{proof}

\paragraph{Geometric interpretation of BSSMF}

Solving BSSMF is equivalent to finding a polytope with $r$ vertices 
within the hyperrectangle defined by $[a,b]$ 
 that reconstructs as well as possible the data points. The fact that BSSMF is constrained within a hyperrectangle makes BSSMF more constrained than NMF, and hence more likely to be  essentially unique. 
 This will be illustrated empirically in \Cref{bssmf:sec:ident}. 
 Let us provide a toy example to better understand the distinction between NMF and BSSMF, namely let us use \Cref{bssmf:example:1} with $W=\frac{3}{10}A_{2/3}$ and $H=\frac{2}{3}A_{1/2}$ so that $X=WH$ is column stochastic. 
 \Cref{bssmf:fig:geobssmf} represents the feasible regions of NMF and BSSMF for the hypercube $[a,b] = [0,\frac{3}{10}]^3$ in a two-dimensional space within the affine hull of $W$;  see~\cite{gillis2020} for the details on how to construct such a representation.  
 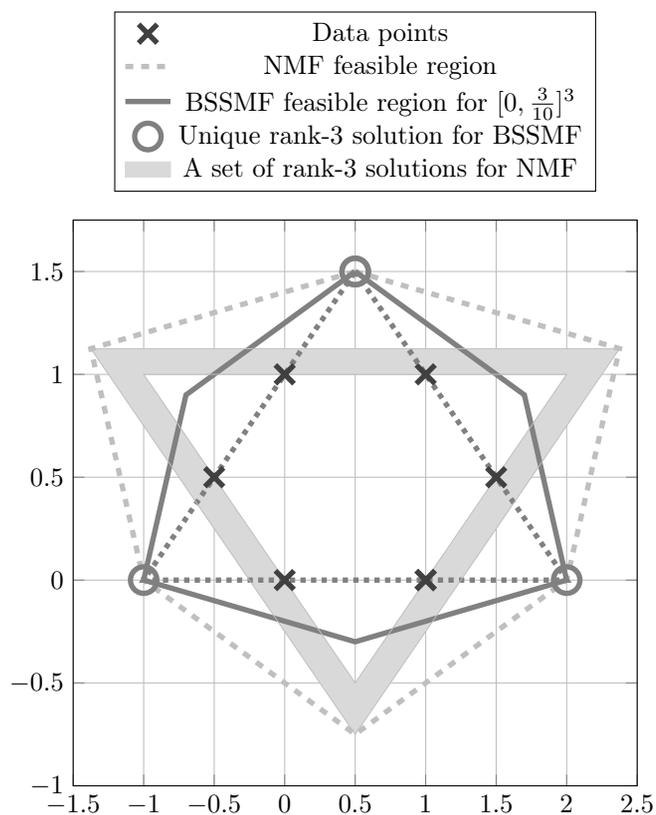
\begin{figure}
    \centering
%
%
\begin{tikzpicture}

\begin{axis}[%
width=7.5cm,
height=7.5cm,
at={(0cm,0cm)},
scale only axis,
xmin=-1.5,
xmax=2.5,
ymin=-1,
ymax=1.75,
xmajorgrids,
ymajorgrids,
legend style={at={(0.5,1.05)},anchor=south}
]
\addplot [color=darkgray,line width=2.0pt,only marks,mark size=5pt, mark=x]
  table[row sep=crcr]{%
1	0\\
1.5	0.5\\
1	1\\
0	1\\
-0.5	0.5\\
0   0\\
1	0\\
};
\addlegendentry{Data points}

\addplot [color=lightgray, dashed, line width=2.0pt, mark size=7pt]
  table[row sep=crcr]{%
0.5 	1.5\\
2.375	1.125\\
2	0\\
0.5	    -0.75\\
-1	    0\\
-1.375	1.125\\
0.5 	1.5\\
};
\addlegendentry{NMF feasible region}

\addplot [color=gray, line width=2.0pt, mark size=7.0pt]
  table[row sep=crcr]{%
0.5 	1.5\\
1.7	0.9\\
2	0\\
0.5	    -0.3\\
-1	    0\\
-0.7	0.9\\
0.5 	1.5\\
};
\addlegendentry{BSSMF feasible region for $[0,\frac{3}{10}]^3$}

\addplot [color=gray, dotted, line width=2.0pt, forget plot]
  table[row sep=crcr]{%
0.5 	1.5\\
2	0\\
-1	    0\\
0.5 	1.5\\
};
\addplot [only marks, color=gray, line width=2.0pt, mark size=5.0pt, mark=o]
  table[row sep=crcr]{%
0.5 	1.5\\
2	0\\
-1	    0\\
0.5 	1.5\\
};
\addlegendentry{Unique rank-3 solution for BSSMF}

\addplot [name path=A1, color=lightgray, line width=0pt, forget plot]
  table[row sep=crcr]{%
2.375	1.125\\
0.5	    -0.75\\
-1.375	1.125\\
2.375	1.125\\
};
\addplot [name path=A2, color=lightgray, line width=0pt, forget plot]
  table[row sep=crcr]{%
2   1\\
0.5 -0.5\\
-1  1\\
2   1\\
};
\addplot[gray!30] fill between[of=A1 and A2];
\addlegendentry{A set of rank-3 solutions for NMF}
\end{axis}
\end{tikzpicture}
    \caption[Geometric interpretation of BSSMF for \Cref{bssmf:example:1}]{Geometric interpretation of BSSMF for \Cref{bssmf:example:1}. Any triangle in the gray filled area containing the data points is a rank-3 solution for NMF. On the contrary, there is a unique rank-3 solution for BSSMF since there is a unique triangle containing the data points in the BSSMF feasible set.}
    \label{bssmf:fig:geobssmf}
\end{figure} 
 For this rank-3 factorization problem, solving NMF and BSSMF is equivalent to finding a triangle nested between the convex hull of the data points and the corresponding feasible region. 
 BSSMF has a unique solution, that is, there is a unique triangle between the data points and the BSSMF feasible region. On the other hand,  NMF is not identifiable: for example, any triangle within  the gray area containing the data points is a solution. 

In summary, for the BSSMF of $X = WH$ to be essentially unique, $W$ must contain sufficiently many entries equal to the lower and upper bounds, while $H$ must be sufficiently sparse. 

\paragraph{Choice of $a$ and $b$} \label{bssmf:para:choiceab} In practice, if $a$ and $b$ are unknown, it may be beneficial to choose them such that as many entries of $X$  are equal to the lower and upper bounds, and hence BSSMF is more likely to be identifiable. 
Let us denote $\tilde{a}_i = \min_j X(i,j)$ and $\tilde{b}_i = \max_j X(i,j)$ for all $i$, 
and let $X = WH$ be a BSSMF for the hyperrectangle $[a,b]$. 
We have $\tilde{a} \geq a$ and $\tilde{b} \leq b$ 
since $H(:,j) \in \Delta^r$ for all $j$. Hence, without any prior information, 
it makes sense to use a BSSMF with interval $[\tilde{a},\tilde{b}]$ which is contained in $[a,b]$. Note that such strategy will make BSSMF sensitive to outliers that are out of the ideal bounds $a$ and $b$.


\begin{remark}
    Interestingly, as shown in \Cref{bssmf:lem:transfoBSSMF} below, in the exact case, that is, when $X = WH$, we can assume w.l.o.g.\ that $[a_i, b_i] = [0,1]$ for all $i$ in BSSMF.   
\begin{lemma} \label{bssmf:lem:transfoBSSMF} 
Let $a \in \mathbb{R}^m$ and $b \in \mathbb{R}^m$ 
be such that $a_i < b_i$ for all $i$. 
The matrix $X = WH$ admits a BSSMF for the interval 
$[a, b]$ if and only if the matrix 
$\frac{[X - a e^\top]}{[(b-a) e^\top]}$ admits a BSSMF for the interval $[0, 1]^m$, 
where $\frac{[\cdot]}{[\cdot]}$ is the component-wise division of two matrices of the same size. 
\end{lemma}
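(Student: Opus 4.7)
The plan is to prove both implications by an explicit affine change of variables on the rows of $W$, exploiting the column-stochasticity of $H$. The crucial algebraic identity is that, because $e^\top H = e^\top$, any matrix of the form $v e^\top$ passes through the factorization:
\[
v e^\top = v (e^\top H) = (v e^\top) H,
\]
so shifting $W$ by $-ae^\top$ produces a matching shift $-ae^\top$ on $X$. Row-wise rescaling by the diagonal matrix $D = \diag(b-a)$, which is invertible by the hypothesis $a_i < b_i$, is then compatible with the factorization since it acts only on the left factor.

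For the forward direction, I would assume $(W,H)$ is a BSSMF of $X$ for $[a,b]$ and set
\[
\tilde W := D^{-1}(W - ae^\top), \qquad \tilde H := H, \qquad \tilde X := D^{-1}(X - ae^\top).
\]
The computation $\tilde W \tilde H = D^{-1}(W - ae^\top) H = D^{-1}(WH - ae^\top) = \tilde X$ uses exactly the identity above. For each $k$ and each $i$, $W(i,k) \in [a_i, b_i]$ gives $\tilde W(i,k) = (W(i,k) - a_i)/(b_i - a_i) \in [0,1]$, so the columns of $\tilde W$ lie in $[0,1]^m$. The matrix $\tilde H = H$ remains column-stochastic, so $(\tilde W, \tilde H)$ is a BSSMF of $\tilde X$ for $[0,1]^m$. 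Observing that $\tilde X$ is exactly the entrywise expression $\frac{[X - ae^\top]}{[(b-a)e^\top]}$ closes this direction.

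For the reverse direction, given a BSSMF $(\tilde W, \tilde H)$ of $\tilde X$ for $[0,1]^m$, I would define
\[
W := D \tilde W + ae^\top, \qquad H := \tilde H,
\]
and verify symmetrically that $WH = D\tilde W \tilde H + ae^\top \tilde H = D\tilde X + ae^\top = X$, that $W(:,k) \in [a,b]$ since $\tilde W(:,k) \in [0,1]^m$, and that $H$ is column-stochastic. I expect no real obstacle: the lemma is essentially a bookkeeping statement that the BSSMF constraints are invariant under affine rescaling of the rows, and the only nontrivial point is the use of $e^\top H = e^\top$ to push the shift $ae^\top$ through the product. A short remark noting that the same argument would fail without the simplex constraint on $H$ (because then the translate $ae^\top$ could not be absorbed) would make the role of each hypothesis transparent.
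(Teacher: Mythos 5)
Your proof is correct and follows essentially the same route as the paper: push the shift $ae^\top$ through the product using $e^\top H = e^\top$, then rescale the rows (your left-multiplication by $D^{-1}=\diag(b-a)^{-1}$ is exactly the paper's entrywise division by $(b-a)e^\top$), and check the bound and simplex constraints. The only cosmetic difference is that you write out the converse explicitly, whereas the paper notes it is obtained symmetrically.
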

\begin{proof} 
Let us show the direction $\Rightarrow$, the other is obtained exactly in the same way. 
Let the matrix  $X = WH$ admit a BSSMF for the interval 
$[a, b]$. We have 
\[
X - ae^\top 
= WH - a e^\top 
= (W - a e^\top) H, 
\]
since $e^\top H = e^\top$, as $H$ is column stochastic. This shows that $X' = X - ae^\top$ admits a BSSMF for the interval $[0,b-a]$ since $W' = (W - a e^\top) \in [0,b-a]$. For simplicity, let us denote $c = b-a > 0$. We have $X' = W' H$, while 
\[
\frac{[X-a e^\top]}{[(b-a) e^\top]} 
= 
\frac{[X']}{[c e^\top]} 
= 
\frac{[W' H]}{[c e^\top]} 
= 
\frac{[W']}{[c e^\top]}  H , 
\]
because $H$ is column stochastic. In fact, for all $i,j$, 
\[
\begin{split}
\frac{[W' H]_{i,j}}{[c e^\top]_{i,j}} 
& = 
 \frac{\sum_{k} W'(k,i) H(k,j)]_{i,j}}{c_i} \\
& = \sum_{k} \frac{W'(k,i)}{c_i} H(k,j) \\
& = \left( \frac{[W']}{[c e^\top]} H  \right)_{i,j}.     
\end{split}
\] 
Hence, $\frac{[X-a e^\top]}{[(b-a) e^\top]}$ admits a BSSMF for the interval $[0,1]^m$ since $H$ is column stochastic, 
and all columns of 
${\frac{[W']}{[c e^\top]} = \frac{[W - a e^\top]}{[(b-a) e^\top]}}$ belong to $[0,1]^m$. 
\end{proof}
\end{remark}

\begin{remark}[What if $a_i=b_i$ for some $i$?]  
\Cref{bssmf:lem:transfoBSSMF} does not cover the case $a_i=b_i$ for some $i$. In that case, we have $W(i,:) = a_i = b_i$ and therefore 
$X(i,:) = W(i,:) H = a_i e^\top = b_i e^\top$. 
This is not an interesting situation, and rows of $X$ with identical entries can be removed. In fact, after the transformation $X - ae^\top$, these rows are identically zero. 
\end{remark}

\Cref{bssmf:lem:transfoBSSMF} highlights another interesting property of BSSMF: as opposed to NMF, it is invariant to translations of the entries of the input matrix, given that $a$ and $b$ are translated accordingly. 
For example, in recommender systems datasets such as Netflix and MovieLens, $X(i,j) \in \{1,2,3,4,5\}$ for all $i,j$. 
Changing the scale, say to $\{0,1,2,3,4\}$, 
does not change the interpretation of the data, 
but will typically impact the NMF solution significantly\footnote{In fact, for NMF, it would make more sense to work on the datasets translated to $[0,4]$, 
as it would potentially allow it to be identifiable: zeros in $X$ imply zeros in $W$ and $H$, which are therefore more likely to satisfy the SSC.}, while the BSSMF solution will be unchanged, if the interval is translated from $[1,5]$ to $[0,4]$ since $H$ is invariant by translation on $X$. This property is in fact coming from SSMF.

\paragraph{Tightness of \Cref{bssmf:th:uniqueBSSMF}} Unfortunately, the conditions in \Cref{bssmf:th:uniqueBSSMF} are not necessary. This is due to SSC not being necessary for the uniqueness of NMF. Here is an example with
$$X=\begin{pmatrix}
    0.25 & 0.25 & 0.75 & 0.75 \\
    0.2  & 0.6  & 0.6  & 0.2  \\
    0.75 & 0.75 & 0.25 & 0.25 \\
    0.8  & 0.4  & 0.4  & 0.8  \\
\end{pmatrix}.$$
The unique BSSMF of $X$ of size 3 with the bounds 0 and 1 is given by
$$X=\underbrace{\begin{pmatrix}
    0 & 0.5 & 1 \\
    0.2 & 1 & 0.2 \\
    1 & 0.5 & 0 \\
    0.8 & 0 & 0.8
\end{pmatrix}}_W
\underbrace{\begin{pmatrix}
    0.75 & 0.5 & 0 & 0.25 \\
    0 & 0.5 & 0.5 & 0 \\
    0.25 & 0 & 0.5 & 0.75 \\
\end{pmatrix}\vphantom{\begin{pmatrix}0 \\ 0 \\ 0 \\ 0\end{pmatrix}}}_H.$$
However, $H$ cannot be SSC since there are not at least $r-1=2$ zeros per row. The matrix $[W^\top~J{-}W^\top]$ is sparse enough, yet, it cannot be SSC since its cone does not contain $e-e_i$ for $i$ in $1,\dots,3$.
See~\cite[Chapter 4.2.5]{gillis2020book} for the details on how the aforementioned factorization $X=WH$ is a unique NMF, and hence, a unique BSSMF for our chosen bounds. 
\section{Numerical experiments} \label{bssmf:sec:numexp}

The goal of this section is to  highlight the motivation points mentioned in \Cref{bssmf:sec:motiv} on real data sets. All experiments are run on a PC with an Intel(R) Core(TM) i7-9750H CPU @ 2.60GHz and 16GiB RAM. Let us recall that in order to retrieve NMF from \cref{bssmf:alg:BSSMF}, the bounds need to be set to $(a,b)=(0,+\infty)$ and the projection step on the probability simplex in \cref{bssmf:alg:BSSMF:line:proj} should be replaced by a projection on the nonnegative orthant. Hence, in our experiments, both NMF and BSSMF are solved with the same code implementation. 
\subsection{Interpretability} \label{bssmf:sec:inter}

When applied on a pixel-by-image matrix, NMF allows to automatically extract common features among a set of images. For example, if each row of $X$ is a vectorized facial image, the rows of $W$ will correspond to facial features~\cite{lee1999learning}. 

Let us compare NMF with BSSMF on the widely used MNIST handwritten digits dataset ($60,000$ images, $28\times28$ pixels) \cite{lecun1998gradient}. Each column of $X$ is a vectorized handwritten digit. For BSSMF to make more sense, we preprocess $X$ so that the intensities of the pixels in each digit belong to the interval $[0,1]$ (first remove from $X(:,j)$ its minimum entry, then divide by the maximum entry minus the minimum entry).

Let us take a toy example with $n=500$ randomly selected digits and $r=10$, in order to visualize the natural interpretability of BSSMF. The choice of $n$ is made solely for computational time considerations. For larger $n$, \Cref{bssmf:fig:Wb} might change but we will not lose interpretability. 
\begin{figure}
    \begin{subfigure}{\textwidth}
        \centering
        \includegraphics[width=\textwidth]{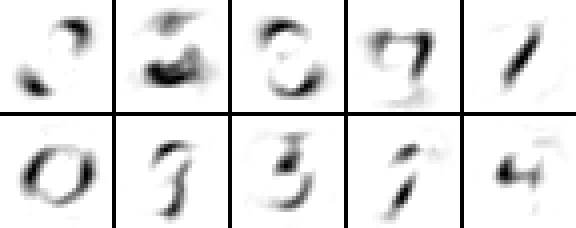}
        \caption{NMF}
        \label{bssmf:fig:Wf}
    \end{subfigure}

    \vspace{2cm}

    \begin{subfigure}{\textwidth}
        \centering
        \includegraphics[width=\textwidth]{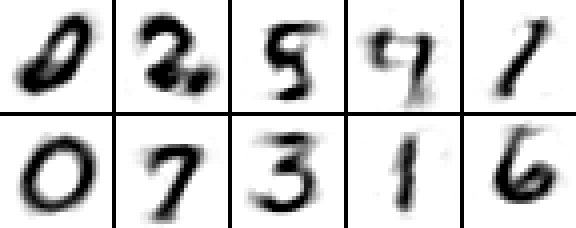}
        \caption{BSSMF}
        \label{bssmf:fig:Wb}
    \end{subfigure}
    \caption[Reshaped columns of the basis matrix $W$ for $r=10$ for MNIST]{Reshaped columns of the basis matrix $W$ for $r=10$ for MNIST with 500 digits.}
    \label{bssmf:fig:Ws}
\end{figure} 
\Cref{bssmf:fig:Wf} shows the features learned by NMF  which look like parts of digits. On the other hand, the features learned by BSSMF in \Cref{bssmf:fig:Wb} look mostly like real digits, because of the bound constraint and the simplex structure. 
%
In fact, as it is well known~\cite{lee1999learning} that NMF learns part-based representations, in this case, parts of digits. 
In other words, the columns of $W$ in NMF identify subset of pixels that are activated simultaneously in as many images as possible. 
Now, by the scaling degree of freedom, assume w.l.o.g.\ that $W(:,j) \in [0,1]^m$ for all $j$ in NMF. Since the columns of $W$ are parts of digits, each digit will have to use several of these parts, with an intensity close to one, so that $H$ will be far from being column stochastic. 
BSSMF, with the simplex constraint on $H$ and the bound constraints on $W$, therefore cannot  learn such a part-based representation. 
This is the reason why BSSMF learns more global features that, added on top of each other, reconstruct the digits. As it is shown in the MNIST experiment, these features look like digits themselves. 
Interestingly, if we progressively increase the upper bound, we would see that BSSMF progressively learns parts of digits, like NMF (using a lower bound of zero, that is, BSSMF with $[0,u]^m$ with $u \geq 1$). 
This is an indirect way of balancing the sparsity between $W$ and $H$. The larger the upper bound, the more relaxed is BSSMF and hence the sparser $W$ will be (given that the lower bound is 0).   
%
In~\Cref{bssmf:fig:Wb}, we distinguish numbers (like 7, 3 and 6). From a clustering point of view, this is of much interest because a column of $H$ which is near a ray of the probability simplex can directly be associated with the corresponding digit from~$W$. In this toy example, due to $r$ being small, 
an 8 cannot be seen. Nonetheless, an eight can be reconstructed as the weighted sum of the representations of a 5, a 3 and an italic 1; see \Cref{bssmf:fig:eight} for an example.   
\begin{figure}
    \centering
    \scalebox{1}{\input{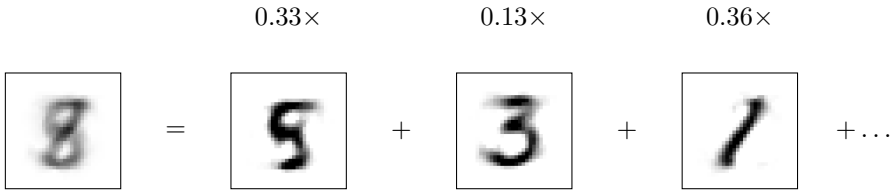}}
    \caption{Decomposition of an eight by BSSMF with $r=10$.}
    \label{bssmf:fig:eight}
\end{figure} 
Note that since BSSMF is more constrained than NMF, its reconstruction error might be larger than that of NMF. 
For our example ($r=10$), BSSMF has  relative error $\|X-WH\|_F/\|X\|_F$ of $61.56\%$, and  NMF of $59.04\%$. This is not always a drawback. In some applications, due to the presence of noise, although the reconstruction error of BSSMF is larger than that of NMF, the accuracy of the estimated factors $W$ and $H$ could be better, because it uses the prior information and is less prone to overfitting and less sensitive to outliers that might be outside the bounds. 
See also the discussion in \Cref{bssmf:sec:robust}   
 where NMF has a lower RMSE than BSSMF on the training set, but a larger RMSE than BSSMF on the test set. 
Note that we also compute NMFs using \cref{bssmf:alg:BSSMF} where the projections are performed on the nonnegative orthant, instead of on the bounded set for $W$ and on the probability simplex for~$H$. The stopping criteria in \cref{bssmf:alg:BSSMF:line:outerloop,bssmf:alg:BSSMF:line:Winnerloop,bssmf:alg:BSSMF:line:Hinnerloop} of \cref{bssmf:alg:BSSMF} are a maximum number of iterations equal to 500, 20 and 20, respectively, for both algorithms.

\subsection{Identifiability} \label{bssmf:sec:ident}
As it is NP-hard to check the SSC~\cite{huang2013non}, we perform experiments on MNIST and synthetic data where only a necessary condition for SSC1 is verified, 
namely \cite[Alg.~4.2]{gillis2020}. 

\paragraph{MNIST dataset} On MNIST, to see when $H$ satisfies this condition, we first vary $n$ from $100$ to $300$ for $m$ fixed (=$28$$\times$$28$). For $W^\top$, we fix $n$ to $300$, and downscale the resolution $m$ from $28$$\times$$28$ to $12$$\times$$12$ with a linear interpolation (\texttt{imresize3} in MATLAB), and the rank $r$ is varied from $12$ to $30$. 
Recall that both factors need to satisfy the SSC to correspond to an essentially unique factorization. In \Cref{bssmf:fig:ssc_nmf}, we see that $W^\top$ of NMF often satisfies the necessary condition. 
This is due to NMF learning ``parts'' of objects~\cite{lee1999learning}, which are sparse by nature, and sparse matrices are more likely to satisfy the SSC (\cref{bssmf:def:ssc}).  
On the contrary, even for a relatively large $n$, $H$ is too dense to satisfy the necessary condition. For $r\geq30$, the factor $H$ generated by NMF never satisfies the condition.
Meanwhile, in \Cref{bssmf:fig:ssc_bssmf} we see that $H$ of BSSMF always satisfies the condition when $n \geq225$ for $r=30$ and more generally, if $n$ and $m$ are large enough, both $H$ and $\binom{W}{J - W}^\top$ satisfy the necessary condition. This substantiates that BSSMF provides  essentially unique factorizations more often than NMF does. 

\begin{figure}
  \centering
  \begin{subfigure}{\textwidth}
    \centering
      \def\step{0.35}
\begin{tikzpicture}
\coordinate (grid_start) at (-5*\step , -4*\step);
\coordinate (grid_end) at (4*\step , 5*\step);
\node (titleH) [yshift=5*\step cm+0.5cm] {$H$};
\foreach \x in {1, ..., 5}
  \draw (-5*\step + 2*\x*\step -1.5*\step ,-4*\step) -- (-5*\step + 2*\x*\step-1.5*\step ,-4*\step cm -1pt) node[anchor=north] {$\pgfmathparse{10+(\x-1)*10}\pgfmathprintnumber{\pgfmathresult}$};
\node (xaxis) [yshift=-5*\step cm-0.3 cm] {rank $r$};
\foreach \x in {1, ..., 5}
  \draw (-5*\step , -4*\step + 2*\x*\step -1.5*\step) -- (-5*\step cm -1pt, -4*\step + 2*\x*\step-1.5*\step) node[anchor=east] {$\pgfmathparse{300-(\x-1)*50}\pgfmathprintnumber{\pgfmathresult}$};
\node (yaxis) [xshift=-6*\step cm-0.7 cm,rotate=90] {number of samples $n$};

\foreach \y [count=\n] in {
    {100,80,50,10,0,0,0,0,0},
    {100,100,70,0,0,0,0,0,0},
    {100,100,70,50,10,0,0,0,0},
    {100,100,90,50,0,0,0,0,0},
    {100,100,90,40,0,0,0,0,0},
    {100,100,90,30,10,0,0,0,0},
    {100,100,90,20,0,0,0,0,0},
    {100,100,100,20,10,0,0,0,0},
    {100,100,100,20,0,0,0,0,0},
} {
  \foreach \x [count=\m] in \y {
    \fill [white!\x!black] (-6*\step + \m*\step,6*\step - \n*\step) rectangle (4*\step cm,-4*\step cm);
  }
}
\draw[step=\step,gray,very thin] (grid_start) grid (grid_end);
\end{tikzpicture}
\hspace{0.3cm}
\begin{tikzpicture}
\coordinate (grid_start) at (-5*\step , -4*\step);
\coordinate (grid_end) at (4*\step , 5*\step);
\node (titleW) [yshift=5*\step cm+0.5cm] {$W^\top$};
\foreach \x in {1, ..., 5}
  \draw (-5*\step + 2*\x*\step -1.5*\step ,-4*\step) -- (-5*\step + 2*\x*\step-1.5*\step ,-4*\step cm -1pt) node[anchor=north] {$\pgfmathparse{14+(\x-1)*4}\pgfmathprintnumber{\pgfmathresult}$};
\node (xaxis) [yshift=-5*\step cm-0.3 cm] {rank $r$};
\foreach \x in {1, ..., 5}
  \draw (-5*\step , -4*\step + 2*\x*\step -1.5*\step) -- (-5*\step cm -1pt, -4*\step + 2*\x*\step-1.5*\step) node[anchor=east] {$\pgfmathparse{28-(\x-1)*4}\pgfmathprintnumber{\pgfmathresult}$};
\node (yaxis) [xshift=-6*\step cm-0.5 cm,rotate=90] {resolution $\sqrt{m}$};

\foreach \y [count=\n] in {
    {80,80,60,20,10,10,0,0,0},
    {100,60,70,40,20,10,0,0,0},
    {100,100,90,80,80,60,40,10,10},
    {100,100,80,80,100,90,50,30,20},
    {100,100,90,90,100,90,70,70,40},
    {100,100,100,100,100,100,90,70,90},
    {100,100,100,100,100,100,90,90,100},
    {100,100,100,100,100,100,90,100,90},
    {100,100,100,100,100,100,100,100,100},
} {
  \foreach \x [count=\m] in \y {
    \fill [white!\x!black] (-6*\step + \m*\step,6*\step - \n*\step) rectangle (4*\step cm,-4*\step cm);
  }
}
\draw[step=\step,gray,very thin] (grid_start) grid (grid_end);
\node (colormap) [yshift = 0.5*\step cm, xshift = 5*\step cm + 0.3 cm] {\pgfplotscolorbardrawstandalone[ 
    colormap={blackwhite}{color=(black) color=(white)},
    colorbar,
    point meta min=0,
    point meta max=1,
    colorbar style={
        width = \step cm,
        height=9*\step cm,
        ytick={0,0.2,0.4,0.6,0.8,1}}]};   
\end{tikzpicture}
      \caption{NMF}
      \label{bssmf:fig:ssc_nmf}
  \end{subfigure}
  
\vspace{1cm}

  \begin{subfigure}{\textwidth}
    \centering
      \def\step{0.35}
\begin{tikzpicture}
\coordinate (grid_start) at (-5*\step , -4*\step);
\coordinate (grid_end) at (4*\step , 5*\step);
\node (titleH) [yshift=5*\step cm+0.5cm] {$H$};
\foreach \x in {1, ..., 5}
  \draw (-5*\step + 2*\x*\step -1.5*\step ,-4*\step) -- (-5*\step + 2*\x*\step-1.5*\step ,-4*\step cm -1pt) node[anchor=north] {$\pgfmathparse{10+(\x-1)*10}\pgfmathprintnumber{\pgfmathresult}$};
\node (xaxis) [yshift=-5*\step cm-0.3 cm] {rank $r$};
\foreach \x in {1, ..., 5}
  \draw (-5*\step , -4*\step + 2*\x*\step -1.5*\step) -- (-5*\step cm -1pt, -4*\step + 2*\x*\step-1.5*\step) node[anchor=east] {$\pgfmathparse{300-(\x-1)*50}\pgfmathprintnumber{\pgfmathresult}$};
\node (yaxis) [xshift=-6*\step cm-0.7 cm,rotate=90] {number of samples $n$};

\foreach \y [count=\n] in {
    {90,100,90,50,10,0,0,0,0},
    {100,90,80,70,30,0,0,0,0},
    {100,100,90,90,80,30,10,0,0},
    {100,100,100,90,90,50,10,0,0},
    {100,100,100,100,90,70,40,20,0},
    {100,100,100,100,100,80,80,50,0},
    {100,100,100,100,100,100,90,50,30},
    {100,100,100,100,100,90,80,80,60},
    {100,100,100,100,100,100,100,80,70},
} {
  \foreach \x [count=\m] in \y {
    \fill [white!\x!black] (-6*\step + \m*\step,6*\step - \n*\step) rectangle (4*\step cm,-4*\step cm);
  }
}
\draw[step=\step,gray,very thin] (grid_start) grid (grid_end);
\end{tikzpicture}
\hspace{0.3cm}
\begin{tikzpicture}
\coordinate (grid_start) at (-5*\step , -4*\step);
\coordinate (grid_end) at (4*\step , 5*\step);
\node (titleW) [yshift=5*\step cm+0.5cm] {$\binom{W}{J - W}^\top$};
\foreach \x in {1, ..., 5}
  \draw (-5*\step + 2*\x*\step -1.5*\step ,-4*\step) -- (-5*\step + 2*\x*\step-1.5*\step ,-4*\step cm -1pt) node[anchor=north] {$\pgfmathparse{14+(\x-1)*4}\pgfmathprintnumber{\pgfmathresult}$};
\node (xaxis) [yshift=-5*\step cm-0.3 cm] {rank $r$};
\foreach \x in {1, ..., 5}
  \draw (-5*\step , -4*\step + 2*\x*\step -1.5*\step) -- (-5*\step cm -1pt, -4*\step + 2*\x*\step-1.5*\step) node[anchor=east] {$\pgfmathparse{28-(\x-1)*4}\pgfmathprintnumber{\pgfmathresult}$};
\node (yaxis) [xshift=-6*\step cm-0.5 cm,rotate=90] {resolution $\sqrt{m}$};

\foreach \y [count=\n] in {
    {70,20,0,0,0,0,0,0,0},
    {70,30,10,0,0,0,0,0,0},
    {100,80,70,20,20,0,0,0,0},
    {80,100,80,50,0,0,0,0,0},
    {70,100,90,50,70,30,0,0,0},
    {80,90,100,100,100,40,40,10,0},
    {90,100,100,100,90,70,40,30,0},
    {100,80,100,100,100,70,80,30,30},
    {100,90,100,100,100,100,100,100,80},
} {
  \foreach \x [count=\m] in \y {
    \fill [white!\x!black] (-6*\step + \m*\step,6*\step - \n*\step) rectangle (4*\step cm,-4*\step cm);
  }
}
\draw[step=\step,gray,very thin] (grid_start) grid (grid_end);
\node (colormap) [yshift = 0.5*\step cm, xshift = 5*\step cm + 0.3 cm] {\pgfplotscolorbardrawstandalone[ 
    colormap={blackwhite}{color=(black) color=(white)},
    colorbar,
    point meta min=0,
    point meta max=1,
    colorbar style={
        width = \step cm,
        height=9*\step cm,
        ytick={0,0.2,0.4,0.6,0.8,1}}]};   
\end{tikzpicture}
      \caption{BSSMF}
      \label{bssmf:fig:ssc_bssmf}
  \end{subfigure}
  \caption[Ratio on satisfying a necessary condition for SSC1]{Ratio, over 10 runs, of the factors generated by NMF in \Cref{bssmf:fig:ssc_nmf} and by BSSMF in \Cref{bssmf:fig:ssc_bssmf} that satisfy the necessary condition for SSC1  (white squares indicate that all matrices meet the necessary condition, black squares that none do). }
  \label{bssmf:fig:ssc_nec_cond}
\end{figure}
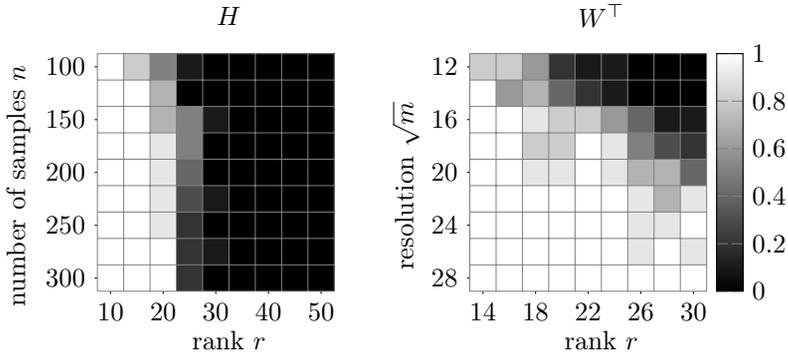
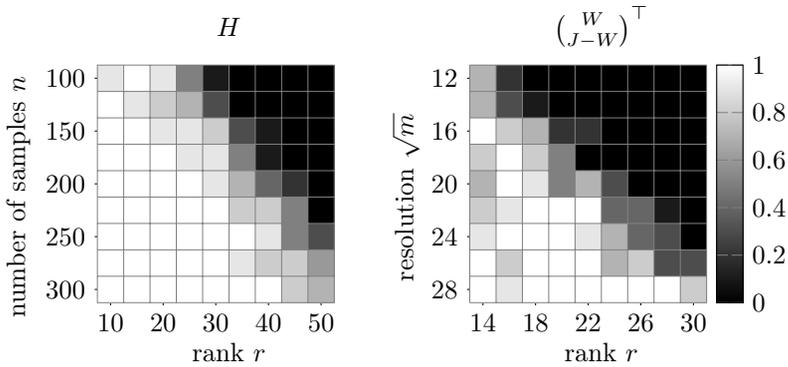

\paragraph{Synthetic datasets} 
Let us now perform an experiment to show how BSSMF is more likely than NMF to recover factors closer to the true ones, even when the sufficient conditions for identifiability are not satisfied. 
As there is no groundtruth for NMF and BSSMF on MNIST, we generate synthetic data as follows. Our synthetic datasets are of size $100\times100$, and their factorization rank is $10$. 
The matrix $H$ is generated randomly with values uniformly distributed between zero and one, and we randomly set $30\%$ of the values to zero. This allows us to ensure that $H$ satisfies the SSC. The reason behind ensuring that $H$ is SSC is that both 
NMF (\Cref{bssmf:th:uniqNMFSSC}) and BSSMF (\Cref{bssmf:th:uniqueBSSMF}) require that $H$ satisfies the SSC\footnote{In this experiment, because $n$ and $r$ are smaller, we could check that the SSC is satisfied (not a necessary condition), using Gurobi (\url{https://www.gurobi.com/}), a global optimization software.}. 
As we want to emphasize on how likely it is to retrieve the true factors for NMF and BSSMF, we make sure that their common conditions for identifiability are satisfied. 
The matrix $W$ is also generated randomly with values uniformly distributed between zero and one, and we then set a percentage of $p_{0,1}$ of the entries to zero and one, with the same probability to be equal to zero or one. Hence, $p_{0,1}$ percent of the values in $W$ touches the lower and upper bounds in BSSMF. 
Finally, we let $X=WH$ to get our synthetic data. We solve NMF and BSSMF on $X$ using \cref{bssmf:alg:BSSMF}. To assess the quality of the solutions, we report the average of the mean removed spectral angle (MRSA) and the subspace angle (see \Cref{minvol:def:angle}) between the columns of the true $W$ and the estimated $W$ (after an optimal permutation of the columns), as this is standard in the NMF literature. Given any two vectors $a$ and $b$, their MRSA is defined as 
\begin{equation*}
    \text{MRSA}(a,b) = \frac{100}{\pi} \text{arcos} 
    \left(\frac{ (a - \overline{a}e)^\top (b - \overline{b}e) }{\Vert a - \overline{a}e \Vert_{2}\Vert b - \overline{b} e\Vert_{2}}\right) \in \left[  0, 100\right], 
\end{equation*}
where $\overline{\cdot}$~is the average of the entries of a vector. 

\begin{figure}[htbp!]
\begin{subfigure}{\textwidth}
    \centering
\pgfplotsset{boxplot circle legend/.style={
    legend image code/.code={
        \draw[#1,line width=0.7pt] (0cm,-0.1cm) rectangle (0.6cm,0.1cm)
        (0.6cm,0cm) -- (0.7cm,0cm) (0cm,0cm) -- (-0.1cm,0cm)
        (0.7cm,0.1cm) -- (0.7cm,-0.1cm) (-0.1cm,0.1cm) -- (-0.1cm,-0.1cm)
        (0.3cm,-0.1cm) -- (0.3cm,0.1cm);
        \filldraw[#1] (0.9cm,0cm) circle (2pt);
    },},
    boxplot halfcircle legend/.style={
    legend image code/.code={
        \draw[#1,line width=0.7pt] (0cm,-0.1cm) rectangle (0.6cm,0.1cm)
        (0.6cm,0cm) -- (0.7cm,0cm) (0cm,0cm) -- (-0.1cm,0cm)
        (0.7cm,0.1cm) -- (0.7cm,-0.1cm) (-0.1cm,0.1cm) -- (-0.1cm,-0.1cm)
        (0.3cm,-0.1cm) -- (0.3cm,0.1cm);
        \begin{scope}
            \clip (0.9cm,0cm) circle (2pt);
            \fill[#1] (0.9cm+2pt,2pt) rectangle ++(-1.9pt,-4pt);
        \end{scope}
        \draw[#1,line width=0.7pt] (0.9cm,0cm) circle (2pt);
    },},
}
\begin{tikzpicture}
\begin{axis}[
    width=\textwidth,
    height=8cm,
    xticklabels={0\%,5\%,10\%,15\%,20\%,25\%,30\%},
    xmin=0,xmax=7,
    x tick label as interval,
    xtick={0,1,2,...,7},
    ylabel={Average MRSA},xlabel={\% of values equal to 0 and 1},
    ymode=log,
    ymin=1.0e-8,ymax=20,
    ymajorgrids=true,
    grid style=dashed,
    boxplot={
        draw position={1/(2+1) + floor(\plotnumofactualtype/2+1e-3) + 1/(2+1)*mod(\plotnumofactualtype,2)},
        box extend=1/(2+2)},
    legend style={
        at={(0.05,0.05)},anchor=south west,},]
\addlegendimage{boxplot halfcircle legend=gray}
\addlegendimage{boxplot circle legend=black}
\foreach \col in {0,...,6}{
\addplot[boxplot/draw direction=y,gray,mark=halfcircle*,line width=0.7pt] table[y index = \col]{bssmf/data/MRSA_NMF.txt};
\addplot[boxplot/draw direction=y,black,line width=0.7pt] table[y index = \col]{bssmf/data/MRSA_BSSMF.txt};
};
\legend{NMF,BSSMF}
\end{axis}
\end{tikzpicture}
    \label{bssmf:fig:mrsa_nmf_vs_bssmf}
\end{subfigure}  

\begin{subfigure}{\textwidth}
    \centering
\pgfplotsset{boxplot circle legend/.style={
    legend image code/.code={
        \draw[#1,line width=0.7pt] (0cm,-0.1cm) rectangle (0.6cm,0.1cm)
        (0.6cm,0cm) -- (0.7cm,0cm) (0cm,0cm) -- (-0.1cm,0cm)
        (0.7cm,0.1cm) -- (0.7cm,-0.1cm) (-0.1cm,0.1cm) -- (-0.1cm,-0.1cm)
        (0.3cm,-0.1cm) -- (0.3cm,0.1cm);
        \filldraw[#1] (0.9cm,0cm) circle (2pt);
    },},
    boxplot halfcircle legend/.style={
    legend image code/.code={
        \draw[#1,line width=0.7pt] (0cm,-0.1cm) rectangle (0.6cm,0.1cm)
        (0.6cm,0cm) -- (0.7cm,0cm) (0cm,0cm) -- (-0.1cm,0cm)
        (0.7cm,0.1cm) -- (0.7cm,-0.1cm) (-0.1cm,0.1cm) -- (-0.1cm,-0.1cm)
        (0.3cm,-0.1cm) -- (0.3cm,0.1cm);
        \begin{scope}
            \clip (0.9cm,0cm) circle (2pt);
            \fill[#1] (0.9cm+2pt,2pt) rectangle ++(-1.9pt,-4pt);
        \end{scope}
        \draw[#1,line width=0.7pt] (0.9cm,0cm) circle (2pt);
    },},
}
\begin{tikzpicture}
\begin{axis}[
    width=\textwidth,
    height=8cm,
    xticklabels={0\%,5\%,10\%,15\%,20\%,25\%,30\%},
    xmin=0,xmax=7,
    x tick label as interval,
    xtick={0,1,2,...,7},
    ytick={1e-9,1e-8,1e-7,1e-6,1e-5,1e-4,1e-3,1e-2,1e-1,1,10},
    ylabel={Angle},xlabel={\% of values equal to 0 and 1},
    ymode=log,
    ymin=1.0e-10,ymax=10,
    ymajorgrids=true,
    grid style=dashed,
    boxplot={
        draw position={1/(2+1) + floor(\plotnumofactualtype/2+1e-3) + 1/(2+1)*mod(\plotnumofactualtype,2)},
        box extend=1/(2+2)},
    legend style={
        at={(0.05,0.05)},anchor=south west,},]
\addlegendimage{boxplot halfcircle legend=gray}
\addlegendimage{boxplot circle legend=black}
\foreach \col in {0,...,6}{
\addplot[boxplot/draw direction=y,gray,mark=halfcircle*,line width=0.7pt] table[y index = \col]{bssmf/data/ANGLE_NMF.txt};
\addplot[boxplot/draw direction=y,black,line width=0.7pt] table[y index = \col]{bssmf/data/ANGLE_BSSMF.txt};
};
\legend{NMF,BSSMF}
\end{axis}
\end{tikzpicture}
    \label{bssmf:fig:angle_nmf_vs_bssmf}
\end{subfigure}  
\caption[Boxplots of the average MRSA and subspace angle between the true $W$ and the estimated $W$]{Boxplots of the average MRSA and subspace angle between the true $W$ and the estimated $W$ by NMF and BSSMF for the hypercube $[0,1]^{100}$ over 20 trials, depending on the percentage, $p_{0,1}$, of values equal to 0 and 1 in the true $W$.}
\end{figure}

We vary the percentage $p_{0,1}$ of values touching the lower and uppper bounds in $W$ (namely, 0 and 1) from $0\%$ to $30\%$ with a $5\%$ increment. For each value of $p_{0,1}$, the test is performed 20 times. 
Let us note that among the generated true $W$'s, 
between $p_{0,1}=0\%$ and $p_{0,1}=15\%$, $\binom{W}{J-W}^\top$ never satisfies the necessary conditions for SSC1. 
For $p_{0,1}=20\%$, 3 out of the 20 generated $\binom{W}{J-W}^\top$ satisfies the necessary conditions for SSC1, 
10 out of 20 for $p_{0,1}=25\%$, 
and 17 out of 20 for $p_{0,1}=30\%$. 
Let us also note that for all values of $p_{0,1}$ within the considered range, $W$ never satisfies the necessary conditions for SSC1. 
The distribution of the average MRSAs and the subspace angle are respectively reported in \Cref{bssmf:fig:mrsa_nmf_vs_bssmf} and \Cref{bssmf:fig:angle_nmf_vs_bssmf}. Clearly, the MRSA is always smaller for BSSMF compared to NMF, even when the necessary conditions for SSC1 are not satisfied for $\binom{W}{J-W}^\top$; this is because the feasible set of BSsMF is contained in that of NMF, and hence the generated factors are more likely to be closer to the ground truth. This also illustrates that the conditions of \Cref{bssmf:th:uniqueBSSMF} for the identifiability of BSSMF are only sufficient, since BSSMF finds solutions with MRSA close to machine epsilon when these conditions are not fulfilled. 
\subsection{Robustness to overfitting} \label{bssmf:sec:robust}

In this section we compare unconstrained matrix factorization (MF), NMF and BSSMF on the matrix completion problem; more precisely, on rating datasets for recommendation systems. Let $X$ be an item-by-user matrix and suppose that user $j$ has rated item $i$, that rating would be stored in $X_{i,j}$. The matrix $X$ is then highly incomplete since a user has typically only rated a few of the items. 
In this context, NMF looks for nonnegative factors $W$ and $H$ such that $M\circ X\approx M\circ (WH)$, where $M_{i,j}$ is equal to $1$ when user $j$ rated item $i$ and is equal to $0$ otherwise. A missing rating $X_{i,j}$ is then estimated by computing $W(i,:)H(:,j)$. Features learned by NMF on rating datasets tend to be parts of typical users. Yet, the nonnegative constraint on the factors hardly makes the features interpretable by a practitioner. Suppose that the rating a user can give is an integer between $1$ and $5$ like in many rating systems, NMF can learn features whose values may fall under the minimum rating $1$ or may exceed the maximum rating $5$. Consequently, the features cannot directly be interpreted as typical users. On the contrary, with BSSMF, the extracted features will directly be interpretable if the lower and upper bounds are set to the minimum and maximum ratings. On top of that, BSSMF is expected to be less sensitive to overfitting than NMF since its feasible set is more constrained.

This last point will be highlighted in the following experiment on the ml-1m dataset\footnote{\href{bssmf:https://grouplens.org/datasets/movielens/1m/}{https://grouplens.org/datasets/movielens/1m/}}, which contains 1 million ratings from 6040 users on 3952 movies. As in~\cite{liang2018variational}, we split the data in two sets~: a training set and a test set. The test set contains 500 users. We also remove any movie that has been rated less than 5 times from both the training and test sets. For the test set, 80\% of a user's ratings are considered as known. The remaining 20\% are kept for evaluation. During the training, we learn $W$ only on the training set. During the testing, the learned $W$ is used to predict those 20\% kept ratings of the test set by solving the $H$ part only on the 80\% known ratings. This simulates new users that were not taken into account during the training, but for whom we would still want to predict the ratings. 
The reported RMSEs are computed on the 20\% kept ratings of the test set. In order to challenge the overfitting issue, we vary $r$ in $\{1,5,10,20,50,100\}$ for BSSMF, NMF and an unconstrained MF which are all computed using \cref{bssmf:alg:BSSMF}, where the projections onto the feasible sets are adapted accordingly (projection onto the nonnegative orthant for NMF, no projection for unconstrained MF). The stopping criteria in \cref{bssmf:alg:BSSMF:line:outerloop,bssmf:alg:BSSMF:line:Winnerloop,bssmf:alg:BSSMF:line:Hinnerloop} of \cref{bssmf:alg:BSSMF} are a maximum number of iterations equal to 200, 1 and 1, respectively, for all algorithms. The experiment is conducted on 10 random initializations and the average RMSEs are reported is \Cref{bssmf:tab:rmse_ml-1m}. As expected, BSSMF and NMF are more robust to overfitting than unconstrained MF. Additionnaly, BSSMF is also clearly more robust to overfitting than NMF. Its worse RMSE is $0.89$ with $r=100$ (and it is still equal to $0.89$ with $r=200$), while, for NMF, the RMSE is $0.92$ when $r=100$ (which is worse than a rank-one factorization giving a RMSE of $0.91$). 
\begin{table}
    \centering
    \begin{tabular}{r|lll}
        r & BSSMF & NMF & MF \\ \hline
        1 & $0.97 \pm 2\cdotp10^{-5}$ & $0.88 \pm 0.002$ & $0.91 \pm 5\cdotp10^{-6}$ \\
        5 & $0.87 \pm 0.001$ & $0.87 \pm 0.003$ & $0.87 \pm 0.003$ \\
        10 & $0.86 \pm 0.002$ & $0.87 \pm 0.001$ & $0.87 \pm 0.002$ \\
        20 & $0.87 \pm 0.002$ & $0.87 \pm 0.002$ & $0.88 \pm 0.002$ \\
        50 & $0.88 \pm 0.002$ & $0.90 \pm 0.004$ & $0.93 \pm 0.004$ \\
        100 & $0.89 \pm 0.003$ & $0.92 \pm 0.003$ & $0.99 \pm 0.004$
    \end{tabular}
    \caption{RMSE on the test set according to $r$, averaged \textpm{ standard deviation} on 10 runs on ml-1m}
    \label{bssmf:tab:rmse_ml-1m}
\end{table}


 The same experiment is conducted on the ml-100k dataset\footnote{\href{bssmf:https://grouplens.org/datasets/movielens/100k/}{https://grouplens.org/datasets/movielens/100k/}} which contains 100,000 ratings from 1,700 movies rated by 1,000 users. The test set contains 50 users. The results are reported in \Cref{bssmf:tab:rmse_ml-100k}, and the observations are similar: BSSMF is significantly more robust to overfitting than NMF and unconstrained MF. 
\begin{table}
    \centering
    \begin{tabular}{r|lll}
        r & BSSMF & NMF & MF \\ \hline
        1 & $0.98 \pm 1\cdotp10^{-4}$ & $0.91 \pm 3\cdotp10^{-5}$ & $0.91 \pm 5\cdotp10^{-5}$\\
        5 & $0.89 \pm 0.005$ & $0.89 \pm 0.01$ & $0.89 \pm 0.008$ \\
        10 & $0.90\pm 0.008$ & $0.90 \pm 0.009$ & $0.92 \pm 0.01$ \\
        20 & $0.91 \pm 0.01$ & $0.93 \pm 0.01$ & $0.97 \pm 0.02$ \\
        50 & $0.93 \pm 0.01$ & $0.97 \pm 0.01$ & $1.06 \pm 0.03$ \\
        100 & $0.94 \pm 0.01$ & $1.01 \pm 0.007$ & $1.13 \pm 0.02$
    \end{tabular}
    \caption{RMSE on the test set according to $r$, averaged \textpm{ standard deviation} on 10 runs on ml-100k}
    \label{bssmf:tab:rmse_ml-100k}
\end{table}

\section{Conclusion} 

In this chapter, we proposed a new factorization model, namely bounded simplex structured matrix factorization (BSSMF). 
Fitting this model retrieves interpretable factors: 
the learned basis features can be interpreted in the same way as the original data while the activations are nonnegative and sum to one, leading to a straightforward soft clustering interpretation. 
Instead of learning parts of objects as NMF, BSSMF learns objects that  can be used to explain the data through convex combinations. 
We have proposed a dedicated fast algorithm for BSSMF, and showed that, 
under mild conditions, BSSMF is essentially unique. 
 We also showed that the constraints in BSSMF make it robust to overfitting on rating datasets without adding any regularization term. 
Further work could include:  \begin{itemize}
     \item the use of BSSMF for other applications, 


     \item  the design of more efficient algorithms for BSSMF,  and 

     \item  the design of algorithms for other BSSMF models, e.g., with other data fitting terms such as the Kullback-Leibler divergence, as done recently in~\cite{leplat2021multiplicative} for SSMF with nonnegativity constraint on $W$. 
 \end{itemize}

\chapter{Identifiability of Polytopic Matrix Factorization}\label{chap:polytopicmf}
\begin{hiddenmusic}{Kino - Спокойная ночь}{https://www.youtube.com/watch?v=LRPmZ_ALkmY}
\end{hiddenmusic}NMF is not essentially unique in general. However, it has been proven to be identifiable under the sufficiently scattered conditions (SSC). A geometric interpretation of these sufficient conditions is the following: while making sure that $X=WH$ and that $W$ is nonnegative, it is not possible to decrease the ``volume'' of the cone of $W^\top$ without making the cone of $H$ get out of the nonnegative orthant, and vice versa; see \Cref{preli:sec:identif_nmf} for details. 

In this chapter, we focus on the identifiability of polytopic matrix factorization (PMF)\label{acro:PMF}. With NMF, the feasible domain is the nonnegative orthant. With PMF, the feasible domains are convex polytopes: the columns of $W^\top$ and $H$ belong to the polytopes $\P_W$ and $\P_H$, respectively. 
A variant of PMF has already been studied    in~\cite{tatli2021polytopic,tatli2021generalized} where the authors proposed a structured matrix factorization where: (i)~the matrix $W$ is unconstrained,  
(ii)~the columns of $H$ belong to a convex polytope, and 
(iii)~the goal is to find a factorization maximizing the volume of the convex hull of the columns of $H$. This model, proposed in~\cite{tatli2021polytopic,tatli2021generalized}, is also referred to as PMF, although it would have been more appropriate to refer to it as maximum-volume PMF. In fact, their proposed model could be viewed as a polytopic variant 
of minimum-volume semi-NMF; see~\cite{fu2016robust} and \Cref{preli:sec:identif_ssmf}, while our proposed model would rather be a polytopic variant of NMF. 

\paragraph{Outline and contribution of the chapter} 
Inspired by the identifiability conditions in~\cite{tatli2021polytopic} and similarly 
to~\Cref{preli:th:uniqNMFSSC}, our main contribution in this chapter is to  show that if the convex hull of $W^\top$ and $H$ are sufficiently scattered within their respective polytope, then the corresponding PMF is identifiable 
(\Cref{polytopicmf:th:uniquePMF}). 

In \Cref{polytopicmf:sec:pmf} we introduce PMF. \Cref{polytopicmf:sec:def} provides important definitions and properties. In \Cref{polytopicmf:sec:id} we prove our main result. \Cref{polytopicmf:sec:ex} presents known structured matrix factorization that are special cases of PMF, and how our theoretical finding relates to previous results.

\section{Polytopic Matrix Factorization}
\label{polytopicmf:sec:pmf}
In this chapter, we consider convex polytopes, 
that is, bounded polyhedra. 
A convex polytope  $\P$  can always be expressed in V-form, through a convex combination of its vertices:  
	\begin{equation}
		\P = \conv(V) = \{x \ | \ x = Vh, h \geq 0, \sum_i h_i = 1\},
		\label{polytopicmf:eq:polydefV} \vspace{-0.1cm} 
	\end{equation}
	where the columns of $V$ are the vertices, or the extremum points, of $\P$.
 We can now define PMF. Given a data matrix $X\in\R^{m\times n}$ and $r$, PMF computes $W$ and $H$ such that   
\begin{equation}
\label{polytopicmf:eq:PMF}
\begin{split}
 X = WH 	\text{ s.t. } & W(i,:) \in \P_W \text{ for all $i$ in } 1,\dots,m,  \\
	& H(:,j) \in \P_H \text{ for all $j$ in } 1,\dots,n, 
\end{split}
\end{equation} 
where $W\in\R^{m\times r}$ is the basis matrix, $H\in\R^{r\times n}$ is the coefficient matrix, $\P_W$ and $\P_H$
 are convex polytopes that respectively constrain the rows of $W$ and the columns of $H$. 
 This PMF is referred to as the quadruple $(W, H, \P_W, \P_H)$. 
 This framework is quite general: 
 it offers infinite varieties of structured matrix factorizations that promote different behaviors in the latent space, depending on the choice of $\P_W$ and $\P_H$. As we will show in~\Cref{polytopicmf:sec:ex}, PMF recovers factorizations that have been studied in the literature. 
 
\section{Definitions and Properties}
\label{polytopicmf:sec:def}

In this section, we provide important 
definitions and properties that are needed to achieve our main result on the identifiability of PMF (Theorem~\ref{polytopicmf:th:uniquePMF} in Section~\ref{polytopicmf:sec:id}). 

\paragraph{Identifiability.} 
Let us clarify what is meant by identifiability. A PMF $(W,H,\P_W,\P_H)$ is identifiable if for any other PMF $(W_*,H_*,\P_W,\P_H)$ of $X$, there exist a permutation matrix $\Pi$ and a diagonal matrix $D$ with diagonal values in $\{-1,1\}$ such that $W_*=W\Pi^\top D^{-1}$ and $H_*=D\Pi H$. We will refer to a matrix of the form $D\Pi$ as a signed permutation.  
Essential uniqueness of PMF is stronger than the NMF one, as it only allows a sign ambiguity, while NMF allows a scaling ambiguity. 

\paragraph{Maximum-volume ellipsoid and sufficient scatteredness.}  
Our sufficient scatteredness conditions that guarantee identifiability heavily rely on the notion of ellipsoids. 
Given a center, $\overbar{x}  \in \R^r$, and a positive definite matrix $E$, an ellipsoid is defined as 
$\E(E,\overbar{x}):=\left\{x\in\R^r  |   (x-\overbar{x})^\top E(x-\overbar{x})\leq r\right\}$. 
Its volume is given by $
	\vol(\E(E,\overbar{x}))=\frac{r^{r/2}\Omega_r}{\sqrt{\det(E)}}$ where $\Omega_r$ is the volume of a ball of radius $1$ in $\R^r$. 
 The axis of the ellipsoid are given by the eigenvectors of $E$, and their length is inversely proportional to the square root of corresponding eigenvalues; 
 see, e.g.,~\cite{todd2016minimum}. 
 Given an ellipsoid $\E(E,\overbar{x})$ and an invertible matrix $Q$, it can be shown that 
 $Q(\E(E,\overbar{x}))
 =\{Qx |   x\in\E\} 
 = 
\E(Q^{-\top} E Q^{-1},\overbar{y})$ where $\overbar{y} = Q\overbar{x}$, and hence the volume of $Q\E$ equals the volume of $\E$ times $|\det(Q)|$. 
This will be useful in our identifiability proof.

The Maximum-Volume Inscribed Ellipsoid (MVIE\label{acro:MVIE}) of a polytope $\P$, denoted $\E_{\P}$, is defined as the ellipsoid 
$\E_{\P} \subset \P$ with maximum volume $\vol(\E(E,\overbar{x}))$, that is, for which $\det(E)$ is minimized. It can be computed by solving a convex semidefinite program; see, e.g.,~\cite[Chap.~8.4.2]{boyd2004convex}.
A convex set is said to be sufficiently scattered relative 
to a polytope when it is contained in that polytope while  containing the MVIE 
of this polytope~\cite{tatli2021polytopic}. 

Our identifiability result will be based on the following sufficient scatteredness condition: \vspace{-0.1cm} 
\begin{definition}[Sufficiently Scattered Factor~\cite{tatli2021polytopic}]
	\label{polytopicmf:def:SSF}
	The matrix $H\in\R^{r\times n}$ is called a sufficiently scattered factor (SSF) corresponding to $\P$ if\vspace{0.1cm}
	
	\noindent [PMF.SSC1] $\P\supseteq \conv(H) \supset \E_{\P}$, and\vspace{0.1cm}
	
	\noindent [PMF.SSC2] $\conv(H)^{*,g_{\P}}\cap \bd(\E_\P^{*,g_\P})=\ext(\P^{*,g_\P})$, 
 
	\noindent where $\E_\P$ is the MVIE of $\P$ centered at $g_\P$. \vspace{-0.1cm} 
\end{definition}

The idea behind the condition [PMF.SSC1] is similar to [SSC1] in Theorem~\ref{preli:th:uniqNMFSSC}, as both conditions ensure that the considered factor is sufficiently scattered within its feasible set. The MVIE acts like the second order cone $\C$ in [SSC1] which is the largest cone contained in the nonnegative orthant. Here, [PMF.SSC1] ensures that the convex hull of a factor $H$ is contained in the polytope $\P$ and contains the MVIE of $\P$. The second condition [PMF.SSC2] makes sure that the MVIE is not contained too tightly. Let us illustrate why [PMF.SSC2] is important with the PMF $(H^\top,H,\Delta^3,\Delta^3)$ using 
Example~3 from~\cite{laurberg2008theorems}, see also~\cite[Example~2]{huang2013non}: 

\begin{equation}
\label{polytopicmf:eq:exampleH}
H=\frac{1}{3}\begin{pmatrix}
1 & 2 & 2 & 1 & 0 & 0\\
2 & 1 & 0 & 0 & 1 & 2\\
0 & 0 & 1 & 2 & 2 & 1
\end{pmatrix}.    
\end{equation}
As it can be seen on~\cref{polytopicmf:fig:exampleHpmfssc1}, $H$ satisfies [PMF.SSC1]. However,~\cref{polytopicmf:fig:exampleHpmfssc2} exposes why $H$ does not satisfy [PMF.SSC2], and it turns out that the PMF $(H^\top,H,\Delta^3,\Delta^3)$ is not identifiable:  
$$Q=\frac{1}{3}\begin{pmatrix}
-1 & 2 & 2\\
2 & -1 & 2\\
2 & 2 & -1
\end{pmatrix}$$
provides another PMF, $(H^\top Q^\top,QH,\Delta^3,\Delta^3)$, while $QH$ is not a signed permutation of the rows of $H$:
$$QH=\frac{1}{3}\begin{pmatrix}
    1 & 0 & 0 & 1 & 2 & 2\\
    0 & 1 & 2 & 2 & 1 & 0\\
    2 & 2 & 1 & 0 & 0 & 1
\end{pmatrix}.$$

\begin{figure}
    \centering
    \begin{subfigure}[b]{\textwidth}
        \centering
        \begin{tikzpicture}[scale=1]
	\begin{axis}[hide axis,legend style={at={(1.2,0.5)},anchor=center}]
 
		\addplot [color=red,line width=2pt]
		coordinates{
			( 0.0, 1.0)
			(-0.866025 ,-0.5)
			( 0.866025 ,-0.5)
			( 0.0, 1.0)
			(-0.866025 ,-0.5)};
        
		\addplot [color=teal,
		mark=*,mark options={solid},only marks]
		coordinates{
            (-0.57735  ,  0.0)
            (-0.288675 ,  0.5)
            ( 0.288675 ,  0.5)
            ( 0.57735  ,  0.0)
            ( 0.288675 , -0.5)
            (-0.288675 , -0.5)};
   
		\addlegendimage{area legend, fill=teal!50,opacity=0.6}
		\addlegendimage{area legend, fill=cyan,opacity=0.6}
		\filldraw [fill=teal!50,opacity=0.6] (axis cs: -0.57735, 0.0) -- (axis cs: -0.288675, 0.5) -- (axis cs: 0.288675, 0.5) -- (axis cs: 0.57735, 0.0) -- (axis cs: 0.288675, -0.5)  --  
		(axis cs: -0.288675, -0.5) --  cycle;
		
		\filldraw[fill=cyan,opacity=0.6] (axis cs: 0,0) circle (0.5);
		

		

      \legend{$\bd(\Delta^3)$,$H$,$\conv(H)$,$\E_{\Delta^3}$}
	\end{axis}
\end{tikzpicture}
        \caption{Visualization of why $H$ satisfies [PMF.SSC1].}
        \label{polytopicmf:fig:exampleHpmfssc1}
    \end{subfigure}

    \vspace{2cm}

    \begin{subfigure}[b]{\textwidth}
        \centering
        \begin{tikzpicture}[scale=1]
	\begin{axis}[hide axis,axis equal,legend style={at={(1.2,0.5)},anchor=center}]
        
		\filldraw [fill=teal!50,opacity=0.6] (axis cs: 0.0, 1.0) -- (axis cs: -0.866025, 0.5) -- (axis cs: -0.866025, -0.5) -- (axis cs: 0, -1.0) -- (axis cs: 0.866025, -0.5)  --  
		(axis cs: 0.866025, 0.5) --  cycle;
  
		\addplot [color=red,dashed,line width=2pt]
		coordinates{
            (-0.866025, 0.5)
            (-1.11022e-16, -1.0)
            (0.866025, 0.5)
            (-0.866025, 0.5)};
        
		\addplot [color=teal,
		mark=*,mark options={solid},only marks]
		coordinates{
			(0.0, 1.0)
            (-0.866025, 0.5)
            (-0.866025, -0.5)
            (0, -1.0)
            (0.866025, -0.5)
            (0.866025, 0.5)};
   
		\addlegendimage{area legend, fill=teal!50,opacity=0.6}
		\addlegendimage{cyan,line width = 2pt}
		
		\draw[cyan,line width = 2pt] (axis cs: 0,0) circle (1);
		

		

      \legend{$\bd({\Delta^3}^{*,g})$,$\ext(\conv(H)^{*,g})$,$\conv(H)^{*,g}$,$\bd(\E_{\Delta^3}^{*,g})$}
	\end{axis}
\end{tikzpicture}
        \caption{Visualization of why $H$ does not satisfy [PMF.SSC2].}
        \label{polytopicmf:fig:exampleHpmfssc2}
    \end{subfigure}
    \caption[Small example showing how {[PMF.SSC1]} can be satisfied without {[PMF.SSC2]} being satisfied]{A small example, with $H$ from~\cref{polytopicmf:eq:exampleH} and ${g=\begin{pmatrix}  1/3 & 1/3 & 1/3 
    \end{pmatrix}^\top}$, showing how [PMF.SSC1] can be satisfied without [PMF.SSC2] being satisfied.}
    \label{polytopicmf:fig:PMF.SSC}
\end{figure}

\paragraph{Permutation-and/or-sign-only invariant sets.}  
In addition to the sufficient scatteredness, the identifiability of PMF will rely on the following condition for the sets of vertices of $\P_W$ and $\P_H$.  
\begin{definition}
\label{polytopicmf:def:permsigninvariantset}
A set $\mathcal{X}$ is called a permutation-and/or-sign-only invariant (PSOI) set if, and only if, every linear transformation $A$ such that $A(\mathcal{X})=\mathcal{X}$ is a signed permutation, that is, 
 \mbox{$A=D\Pi$} where $\Pi$ is a permutation matrix and $D$ is a diagonal matrix with diagonal entries in $\{-1,1\}$.
\end{definition}

The set of vertices of full-dimensional polytopes will in most cases be PSOI sets. 
\begin{lemma} \label{polytopicmf:lem:rotation}
    Let the columns of $V \in \mathbb{R}^{r \times n}$ contain the vertices of the polytope $\mathcal{V} \subset \mathbb{R}^{r}$ and such that $\rank(V) = r$ (this holds for full-dimensional polytopes). 
    Let $A \in \mathbb{R}^{r \times r}$ be such that 
     $A V = V(:,\Pi)$ for some permutation $\Pi$. 
     Then $A$ is an orthogonal matrix, that is, a rotation of $\mathbb{R}^{r}$. 
\end{lemma}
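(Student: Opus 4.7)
My plan is to deduce orthogonality of $A$ from the uniqueness of the maximum-volume inscribed ellipsoid of $\mathcal{V}$. First, since $V$ has full row rank $r$, the identity $AV = V(:,\Pi)$ combined with the fact that $V(:,\Pi)$ has the same rank as $V$ immediately forces $A$ to be invertible (for instance, by right-multiplying both sides by a right-inverse of $V$ and arguing on determinants). Moreover, $A$ sends each column of $V$ to a column of $V$, i.e., each vertex of $\mathcal{V}$ to a vertex of $\mathcal{V}$; since the polytope is the convex hull of its vertex set, this lifts to the set equality $A(\mathcal{V}) = \mathcal{V}$.

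Second, I would transport the MVIE through $A$. The image $A(\mathcal{E}_{\mathcal{V}})$ is again an ellipsoid, contained in $A(\mathcal{V}) = \mathcal{V}$, whose volume is $|\det A|$ times $\vol(\mathcal{E}_{\mathcal{V}})$ by the formula recalled just before the lemma. Applying the same reasoning to $A^{-1}$ gives $|\det A| = 1$, so $A(\mathcal{E}_{\mathcal{V}})$ is an inscribed ellipsoid of maximal volume; by uniqueness of the MVIE, $A(\mathcal{E}_{\mathcal{V}}) = \mathcal{E}_{\mathcal{V}}$. Writing $\mathcal{E}_{\mathcal{V}} = \mathcal{E}(E,\overline{x})$, this equality is equivalent to $A\overline{x} = \overline{x}$ together with $A^{\top} E A = E$.

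To conclude $A^{\top} A = I$ in the usual Euclidean sense, I would invoke the fact that in the PMF setting $\mathcal{V}$ is a polytope whose MVIE is the Euclidean unit ball centered at the origin (the canonical examples $\Delta^r$, $[-1,1]^r$, and the cross-polytope fall in this class after the standard normalization used implicitly in this chapter). In that case $\overline{x} = 0$ and $E = I$, so the identity $A^{\top} E A = E$ collapses to $A^{\top} A = I$, which is exactly the claim.

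The delicate point, and the obstacle I expect to be the hardest to handle cleanly, is the last step: without some normalization hypothesis on $\mathcal{V}$, the natural conclusion is only that $A$ is an isometry of the quadratic form induced by the MVIE, not necessarily of the standard inner product. A small non-regular triangle quickly produces linear maps that permute its vertices but fail $A^{\top} A = I$, so an additional assumption or a preliminary change of basis bringing $E$ to the identity is needed to justify the Euclidean orthogonality stated in the lemma.
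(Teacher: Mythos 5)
Your route is genuinely different from the paper's, and the point where you stall is not a defect of your argument but of the statement itself. The paper argues as follows: since there are finitely many permutations, some power satisfies $A^k V = V$; right-invertibility of $V$ then gives $A^k = I_r$; and from "the eigenvalues of $A$ are roots of unity" it concludes that $A$ is orthogonal. That last implication is invalid: a matrix of finite order is diagonalizable over $\mathbb{C}$ with unimodular eigenvalues, but it is orthogonal only if it also admits an orthonormal eigenbasis. Your closing worry about non-regular triangles is exactly right and yields a counterexample to the lemma as stated: take
$V = \begin{pmatrix} 1 & 0 & -1 \\ 0 & 1 & -1 \end{pmatrix}$ and $A = \begin{pmatrix} 0 & -1 \\ 1 & -1 \end{pmatrix}$;
then $AV = V(:,\Pi)$ for the cyclic permutation, $A^3 = I_2$, yet $A^\top A \neq I_2$. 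So neither your argument nor the paper's can legitimately reach Euclidean orthogonality without an additional hypothesis on $\mathcal{V}$.

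Within that constraint, your MVIE argument is sound up to its final step and delivers the strongest conclusion that is true in general: $A(\mathcal{V})=\mathcal{V}$, $|\det A| = 1$, and, by uniqueness of the MVIE, $A\overline{x}=\overline{x}$ and $A^\top E A = E$, i.e., $A$ is an isometry of the quadratic form of the inscribed ellipsoid (orthogonal after the change of basis $E^{1/2}$). This is in fact enough for the way the lemma is used downstream: for the hypercube $[a,b]^r$ the MVIE is a Euclidean ball, so $E \propto I_r$ and your identity collapses to $A^\top A = I_r$, which is all that Corollary~\ref{polytopicmf:cor:cubePSOI} needs (the simplex case is handled separately without the lemma). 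Note also that you do not need the ball to be centered at the origin: $A^\top (cI_r) A = cI_r$ already forces orthogonality, so no normalization of $\overline{x}$ is required, only of $E$. In short: your proposal takes a different route (MVIE invariance versus the paper's finite-order argument), both proofs fail at the same final step, and your version has the merit of isolating the correct extra hypothesis --- the MVIE of $\mathcal{V}$ being a ball --- under which the stated conclusion actually holds.
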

\begin{proof}
    Since $A$ permutes the columns of $V$, and the set of permutations is finite, there exists $n$ such that $A^n V = V$. Since $V$ has rank $r$, it admits a right inverse, so that $A^n = I_r$, where $I_r$ is identity matrix of dimension $r$. 
    This implies that the eigenvalues of $A$ are roots of 1, and hence $A$ is orthogonal, that is, $A^\top A = I_r$. 
\end{proof}

In two dimensions, sets that are not PSOI are any regular polygon centered at the origin, except for the square (which is obtained by a rotation of 90 or 180 degrees in which case $A$ is a signed permutation). 
For example, the vertices of the regular triangle given by the columns of 
\[
V = \begin{pmatrix}
0 & \sqrt{3}/2 &  -\sqrt{3}/2 \\
1 & -1/2 & -1/2 
\end{pmatrix}
\]
are preserved by a rotation of 120 degrees, corresponding to $A = 
\begin{pmatrix}
-1/2& \sqrt{3}/2 \\
-\sqrt{3}/2 & -1/2 
\end{pmatrix}$, and $AV = 
\begin{pmatrix}
 \sqrt{3}/2 &  -\sqrt{3}/2 & 0  \\
 -1/2 & 1/2 & 1  
\end{pmatrix}$. 

In Section~\ref{polytopicmf:sec:ex}, we will use two polytopes:  $\Delta^r$ and $[a,b]^r$ for $b > a$. Let us show that their vertices are PSOI sets. 
For $\Delta^r$, this is trivial since $\Delta^r = \conv(I_r)$, hence any $A$ 
that satisfies   
$A I_r = I_r(:,\Pi)$ for some permutation $\Pi$ 
must be a permutation (note there is no sign ambiguity possible here).  
For the hypercube $[a,b]^r$, let us first prove the following lemma. 
\begin{lemma} \label{polytopicmf:lem:vectord}
    Let $a < b$ be scalars, and $d \in \mathbb{R}^r$ with $\|d\|_2=1$ be such that $d^\top x \in \{a,b\}$ for all $x \in \{a,b\}^r$. 
    Then $d$ is a unit vector, up to multiplication by -1. 
\end{lemma}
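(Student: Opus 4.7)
The plan is to exploit pairs of vertices of $\{a,b\}^r$ that differ in exactly one coordinate, since such differences isolate single entries of $d$. Concretely, for any index $i$, pick $x \in \{a,b\}^r$ with $x_i = a$ and let $y \in \{a,b\}^r$ agree with $x$ on every coordinate except the $i$-th, where $y_i = b$. Then $y - x = (b-a)e_i$, and so $d^{\top}(y-x) = (b-a)d_i$.

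The next step is to bound this quantity using the hypothesis. Since both $d^{\top}x$ and $d^{\top}y$ lie in $\{a,b\}$, their difference belongs to $\{a-b,\,0,\,b-a\}$. Combining with the previous equality and dividing by $b-a \neq 0$ yields $d_i \in \{-1,0,1\}$. As $i$ was arbitrary, every entry of $d$ takes values in $\{-1,0,1\}$.

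Finally, I would invoke the norm constraint. Since $\|d\|_2^2 = \sum_{i=1}^r d_i^2 = 1$ with each $d_i^2 \in \{0,1\}$, exactly one coordinate of $d$ equals $\pm 1$ and the rest vanish, giving $d = \pm e_i$ for some $i$, which is the desired conclusion.

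There is no real obstacle here: the argument is just a one-coordinate perturbation followed by counting. The only thing to watch is that $b-a$ is nonzero (guaranteed by $a < b$) so that division is legal, and that the hypothesis is applied to both $x$ and $y$ independently (they are both genuine hypercube vertices). This lemma will then feed into the PSOI verification for $[a,b]^r$ by characterizing the rows of any linear map that permutes the vertex set as signed standard basis vectors.
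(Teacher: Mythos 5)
Your proof is correct. Both arguments hinge on the same elementary observation: flipping a single coordinate of a vertex between $a$ and $b$ changes $d^\top x$ by $d_i(b-a)$, and since both values must lie in $\{a,b\}$ this forces $d_i(b-a)\in\{a-b,0,b-a\}$, i.e.\ $d_i\in\{-1,0,1\}$. The paper, however, organizes this as an induction on $r$: it isolates only the last coordinate, derives a contradiction when $d_r\notin\{-1,0,1\}$ by comparing $x_r=a$ with $x_r=b$ for a fixed $x_{r-1}$, and then invokes the induction hypothesis (in the case $d_r=0$, the truncated vector $d_{r-1}$ inherits both the unit norm and the $\{a,b\}$-valued property, so the inductive step applies). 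You instead apply the one-coordinate flip uniformly to every index, conclude $d_i\in\{-1,0,1\}$ for all $i$ at once, and finish with the counting argument $\sum_i d_i^2=1$ with $d_i^2\in\{0,1\}$. This removes the induction entirely and yields a shorter, more transparent proof; the paper's inductive framing buys nothing extra here, since the norm constraint already does the work your final step performs. Your cautions (division by $b-a\neq 0$, and applying the hypothesis to both vertices $x$ and $y$) are exactly the right ones, and the conclusion $d=\pm e_i$ is what the PSOI argument in Corollary~\ref{polytopicmf:cor:cubePSOI} needs.
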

\begin{proof}
Let us prove the result by induction. 
    For $r=1$, the result is trivial, we must have $d=1$. 
    Assume the result holds for all $r' < r$, and let us denote $d = [d_{r-1}, d_r]$ with 
    $d_{r-1} \in \mathbb{R}^{r-1}$, and similarly for $x$. We have for all  $x \in \{a,b\}^r$ that \vspace{-0.1cm} 
    \[
d^\top x = 
d_{r-1}^\top x_{r-1} + d_r x_r \in \{a,b\}.  \vspace{-0.1cm}  
    \]
    If $d_r \in \{-1,0,1\}$, the result follows by induction since $\| d \|_2 = 1$. 
    Hence, assume $d_r \notin \{-1,0,1\}$. We have \vspace{-0.1cm}  
    \[
d_{r-1}^\top x_{r-1} + d_r a \in \{a,b\} 
\; \text{ and }  \;  
d_{r-1}^\top x_{r-1} + d_r b \in \{a,b\}.  \vspace{-0.1cm}  
    \]
    Let us denote $\alpha = d_{r-1}^\top x_{r-1}$, we have \vspace{-0.1cm}  
    \[
\alpha \in \{a- d_r a, b-d_r a\} 
\; \text{ and } \;  
\alpha \in \{a - d_r b, b-d_r b\}. \vspace{-0.1cm}  
    \]
    Since  $a \neq b$, 
  $a- d_ra \neq  a - d_rb$ and $b-d_r a \neq b - d_r b$ as $d_r \neq 0$, 
      $a- d_ra \neq  b-d_rb$ as $d_r \neq 1$, 
      and 
      $b-d_ra \neq b-d_rb$ as $d_r \neq -1$. 
      Hence, $\alpha$ cannot exist for $x_r \in \{a,b\}$, a contradiction. 
\end{proof}

\begin{corollary} \label{polytopicmf:cor:cubePSOI}
The set of vertices of $[a,b]^r$ is a PSOI set. 
\end{corollary}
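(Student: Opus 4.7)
The plan is to chain Lemmas \ref{polytopicmf:lem:rotation} and \ref{polytopicmf:lem:vectord}: the first forces any vertex-preserving $A$ to be orthogonal, and the second then constrains each row of $A$ to be a signed canonical basis vector. Concretely, I would take any $A \in \mathbb{R}^{r \times r}$ with $A(\{a,b\}^r) = \{a,b\}^r$, collect the $2^r$ vertices as the columns of a matrix $V \in \mathbb{R}^{r \times 2^r}$, and observe that $A$ merely permutes those columns, so that $AV = V(:,\Pi)$ for some permutation $\Pi$.

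Next I would check that $V$ has rank $r$ so that Lemma \ref{polytopicmf:lem:rotation} applies. This is immediate: the $r+1$ vertices $(a,\ldots,a)^\top$ together with $(a,\ldots,a)^\top + (b-a)\,e_i$ for $i = 1,\ldots,r$ already span $\mathbb{R}^r$. Lemma \ref{polytopicmf:lem:rotation} then yields that $A$ is orthogonal, which in particular gives $\|d_i\|_2 = 1$ for every row $d_i^\top$ of $A$.

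Reading $Ax \in \{a,b\}^r$ coordinate by coordinate then gives $d_i^\top x \in \{a,b\}$ for every $x \in \{a,b\}^r$ and every $i$. This is exactly the hypothesis of Lemma \ref{polytopicmf:lem:vectord} (the unit-norm condition having just been supplied by orthogonality), so each $d_i$ equals $\pm e_{\pi(i)}$ for some index $\pi(i)$. Invertibility of $A$, guaranteed by orthogonality, rules out two rows being collinear, hence $\pi$ is a bijection and $A = D\Pi$ with $D$ diagonal with entries in $\{-1,1\}$ and $\Pi$ a permutation matrix, which is the required signed-permutation form.

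I do not anticipate a genuine obstacle: Lemmas \ref{polytopicmf:lem:rotation} and \ref{polytopicmf:lem:vectord} have been arranged precisely so that the corollary reduces to a two-step chase, and the only mildly nontrivial verification is the rank-$r$ property of $V$, which is a direct consequence of $[a,b]^r$ being full-dimensional.
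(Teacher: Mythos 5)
Your proposal is correct and follows essentially the same route as the paper: use Lemma~\ref{polytopicmf:lem:rotation} to get orthogonality (hence unit-norm rows), then Lemma~\ref{polytopicmf:lem:vectord} to force each row to be a signed canonical basis vector, and finally orthogonality of the rows to conclude $A = D\Pi$. The only addition is your explicit verification that $\rank(V)=r$, which the paper leaves implicit via the full-dimensionality remark in Lemma~\ref{polytopicmf:lem:rotation}, and it is a worthwhile (and correct) detail.
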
 
\begin{proof}
The set of vertices of $[a,b]^r$ are all vectors in $\{a,b\}^r$. Let the columns of $V \in \mathbb{R}^{r \times 2^r}$ contain the vertices of $[a,b]^r$, and the linear transformation $A$ satisfy  $AV = V(:,\Pi)$ for some permutation $\Pi$.  
By Lemma~\ref{polytopicmf:lem:rotation}, $A$ is orthogonal hence its rows have unit $\ell_2$ norm. This implies that every row of $A$ must satisfy the condition of Lemma~\ref{polytopicmf:lem:vectord} and hence are unit vectors. Since rows of $A$ are orthogonal, $A$ must be a signed permutation. 
\end{proof}

\section{Identifiability}
\label{polytopicmf:sec:id}

We can now state our main result: it fills a gap in the literature by combining the ideas of the identifiability of maximum-volume PMF in~\cite{tatli2021polytopic}, 
and of NMF in~\cite{huang2013non}. 

\begin{theorem}
	\label{polytopicmf:th:uniquePMF}
	Let $(W,H,\P_W,\P_H)$ be a PMF of $X$ 
 of size $r=\rank(X)$. 
 If $\Wt$ and $H$ are SSFs, and $\ext(\P_W)$ and $\ext(\P_H)$ are PSOI sets, then the PMF $(W,H,\P_W,\P_H)$ of $X=WH$ of size $r=\rank(X)$ is identifiable. 
\end{theorem}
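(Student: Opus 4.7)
The plan is to reduce the problem to showing that any ambiguity matrix relating two PMFs must be a signed permutation, and to do so in three stages: a volume argument, a contact-point duality argument, and the PSOI finish. Let $(W_*,H_*,\P_W,\P_H)$ be any other PMF of $X$ of size $r=\rank(X)$. Since both $W$ and $W_*$ have $r$ columns and span the column space of $X$, there exists an invertible $Q\in\R^{r\times r}$ with $H_*=QH$ and $W_*=WQ^{-1}$. The polytope constraints then translate into $Q(\conv(H))=\conv(H_*)\subseteq\P_H$ and $Q^{-\top}(\conv(\Wt))=\conv(W_*^\top)\subseteq\P_W$, and the goal becomes showing that $Q=D\Pi$ for a permutation $\Pi$ and a sign matrix $D$.

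The first substep extracts $|\det Q|=1$ from the MVIE volume. By [PMF.SSC1], $\E_{\P_H}\subseteq\conv(H)$, so $Q(\E_{\P_H})\subseteq\P_H$; since $\vol(Q\E)=|\det Q|\,\vol(\E)$ and $\E_{\P_H}$ is the maximum-volume inscribed ellipsoid, this forces $|\det Q|\leq 1$. The symmetric argument on the $W$ side yields $|\det Q^{-\top}|\leq 1$, hence $|\det Q|=1$. Both $Q(\E_{\P_H})$ and $Q^{-\top}(\E_{\P_W})$ are therefore MVIEs of their respective polytopes; by uniqueness of the MVIE, $Q(\E_{\P_H})=\E_{\P_H}$ and $Q^{-\top}(\E_{\P_W})=\E_{\P_W}$. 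In particular $Q$ fixes $g_{\P_H}$, which I would translate to the origin so that polar duality acts cleanly as $(Q\mathcal{X})^{*}=Q^{-\top}\mathcal{X}^{*}$, with $\E_{\P_H}^{*}$ fixed by $Q^{-\top}$.

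The main obstacle will be upgrading this ellipsoid symmetry into a polytope symmetry, and this is where [PMF.SSC2] enters. Applying $Q^{-\top}$ to the identity $\conv(H)^{*}\cap\bd(\E_{\P_H}^{*})=\ext(\P_H^{*})$ gives
\[\conv(H_*)^{*}\cap\bd(\E_{\P_H}^{*})=Q^{-\top}(\ext(\P_H^{*})).\]
On the other hand, $\conv(H_*)\subseteq\P_H$ yields $\conv(H_*)^{*}\supseteq\P_H^{*}$, so the left-hand side contains $\ext(\P_H^{*})$ by [PMF.SSC2] applied to $H$. Combining, $Q^{-\top}(\ext(\P_H^{*}))\supseteq\ext(\P_H^{*})$; since this set is finite and $Q^{-\top}$ is a bijection, equality holds, whence $Q^{-\top}(\P_H^{*})=\P_H^{*}$ and, by biduality, $Q(\P_H)=\P_H$. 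Then $Q$ permutes $\ext(\P_H)$, and the PSOI hypothesis (Definition~\ref{polytopicmf:def:permsigninvariantset}) forces $Q=D\Pi$, giving $H_*=D\Pi H$ and $W_*=W\Pi^\top D^{-1}$, which is exactly the identifiability conclusion; the hypotheses on the $W$ side serve symmetrically, either to provide a parallel argument or to license the volume inequality above. The most delicate point is the polar duality step: one must carefully reconcile the ``polar with respect to $g$'' convention used in the paper with the standard polar after the translation $g_{\P_H}=0$, and verify that the contact-set characterization of [PMF.SSC2] transfers to $\conv(H_*)$ even though $H_*$ is not a priori assumed to be an SSF.
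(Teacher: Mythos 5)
Your proof is correct and follows essentially the same route as the paper: reduce to an invertible ambiguity matrix $Q$, use [PMF.SSC1] and the MVIE volume bound to force $|\det(Q)|=1$ and hence $Q(\E_{\P_H})=\E_{\P_H}$, $Q^{-\top}(\E_{\P_W})=\E_{\P_W}$, then use [PMF.SSC2] to get $Q(\ext(\P_H))=\ext(\P_H)$ and conclude by the PSOI property. The only difference is that you spell out the polar/contact-point step that the paper defers to the cited result of Tatli and Erdogan, and your worry about needing [PMF.SSC2] for $H_*$ is unfounded since your argument only uses it for $H$ together with $\conv(H_*)^{*}\supseteq\P_H^{*}$.
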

\begin{proof}
	This proof follows that from~\cite[Th.~6]{tatli2021polytopic} where only $H$ is required to be sufficiently scattered while its volume is maximized. 
	Let $Q\in\R^{r\times r}$ be an invertible matrix such that $(WQ^{-1},QH)$ is a PMF of $X$ with 
	\begin{equation}
		\label{polytopicmf:eq:Qconpoly}
		\conv( Q^{-\top}\Wt)\subseteq\P_{W} 
  \; \text{ and  } \; 
  \conv(QH)\subseteq\P_H.
	\end{equation}
	Since $\Wt$ and $H$ are sufficiently scattered factors, their convex hull contains their corresponding MVIE: \begin{equation}
		\E_{\P_{W}} \subset \conv(\Wt) \; \text{ and } \;  \E_{\P_{H}} \subset \conv(H).
	\end{equation} 
 Then, \cref{polytopicmf:eq:Qconpoly} leads to 
	\begin{equation}
		\label{polytopicmf:eq:QMVIE}
		Q^{-\top}(\E_{\P_W}) \subseteq \P_{W} 
 \; \text{ and } \; Q(\E_{\P_{H}}) \subseteq \P_H.
	\end{equation} 

 The set $Q^{-\top}(\E_{\P_W})$ (resp.\ $Q(\E_{\P_{H}})$) is still an ellipsoid of volume $|\det(Q^{-1})|$ $\vol(\E_{\P_W})$ (resp.\ $|\det(Q)|\vol(\E_{\P_H})$). By definition of the MVIE, we have 
	\begin{align*}
		& |\det(Q^{-1})|\vol(\E_{\P_W}) \leq \vol(\E_{\P_W}) \text{ and } |\det(Q)|\vol(\E_{\P_H}) \leq \vol(\E_{\P_H}) \\
		\Leftrightarrow & |\det(Q^{-1})| \leq 1 \text{ and } |\det(Q)| \leq 1 \Leftrightarrow |\det(Q)| = 1.  
	\end{align*}
	This implies that $Q^{-\top}$ and $Q$ respectively map $\E_{\P_W}$ and $\E_{\P_H}$ onto themselves :
	\begin{equation}
        \label{polytopicmf:eq:Qselfmap}
		Q^{-\top}(\E_{\P_W}) = \E_{\P_W} \;  \text{ and }  \;  Q(\E_{\P_H}) = \E_{\P_H}.
	\end{equation}
	The remaining of the proof is exactly like in the remaining proof of~\cite[Th.~6]{tatli2021polytopic} by focusing on either $H$ or $\Wt$. 
 Focus on $H$ for example, and using [PMF.SSC2], the idea is to show that $Q(\ext(\P_H))=\ext(\P_H)$. Then, because $\ext(\P_H)$ is a PSOI set, $Q$ has to be a signed permutation.
\end{proof}

The last part of the proof of~\Cref{polytopicmf:th:uniquePMF} does not rely on both $\Wt$ and $H$ satisfying [PMF.SSC2], and on both $\ext(\P_W)$ and $\ext(\P_H)$ being PSOI sets. Actually,~\Cref{polytopicmf:th:uniquePMF} remains valid if only one the factors satisfies [PMF.SSC2] and if the vertices of its corresponding polytope form a PSOI set.
\begin{corollary}
Let $\Wt$ and $H$ satisfy [PMF.SSC1] and \\
\null\quad(i) $\Wt$ satisfy [PMF.SSC2] and $\ext(\P_W)$ be a PSOI set,\\
or\\
\null\quad(ii) $H$ satisfy [PMF.SSC2] and $\ext(\P_H)$ be a PSOI set,\\

\noindent then the PMF $(W,H,\P_W,\P_H)$ of $X=WH$ of size $r=\rank(X)$ is identifiable. 
\end{corollary}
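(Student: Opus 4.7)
The plan is to re-run the proof of \Cref{polytopicmf:th:uniquePMF} while carefully tracking which hypotheses are actually used at each stage, and to observe that [PMF.SSC2] and the PSOI property enter only at the very last step of that argument. As in the original proof, I start by setting up an arbitrary alternative PMF $(WQ^{-1}, QH, \P_W, \P_H)$ of $X$, where $Q \in \R^{r\times r}$ is invertible — such $Q$ exists since $r = \rank(X)$. The target is to show that $Q$ is a signed permutation, which is exactly the identifiability of the factorization.

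Next I would invoke [PMF.SSC1] on both $\Wt$ and $H$, which is the only place where both factors are used symmetrically. From $\E_{\P_W} \subset \conv(\Wt)$ and $\E_{\P_H} \subset \conv(H)$, together with the feasibility conditions $\conv(Q^{-\top}\Wt) \subseteq \P_W$ and $\conv(QH) \subseteq \P_H$, I recover $Q^{-\top}(\E_{\P_W}) \subseteq \P_W$ and $Q(\E_{\P_H}) \subseteq \P_H$ as in \eqref{polytopicmf:eq:QMVIE}. Comparing volumes against the MVIEs forces $|\det(Q^{-1})| \leq 1$ and $|\det(Q)| \leq 1$, hence $|\det(Q)| = 1$, so that \eqref{polytopicmf:eq:Qselfmap} holds: $Q^{-\top}(\E_{\P_W}) = \E_{\P_W}$ and $Q(\E_{\P_H}) = \E_{\P_H}$. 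None of this uses [PMF.SSC2] or PSOI.

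The heart of the argument is the observation that the closing paragraph of the proof of \Cref{polytopicmf:th:uniquePMF} — the step that derives $Q(\ext(\P_H)) = \ext(\P_H)$ from [PMF.SSC2] on $H$ and then applies the PSOI property of $\ext(\P_H)$ to conclude $Q = D\Pi$ — uses nothing about $\Wt$ beyond the identity $Q(\E_{\P_H}) = \E_{\P_H}$ established above. So under hypothesis (ii) the conclusion follows verbatim from that one-sided closing argument. Under hypothesis (i), I would run the completely symmetric argument on $\Wt$: from $Q^{-\top}(\E_{\P_W}) = \E_{\P_W}$ and [PMF.SSC2] on $\Wt$ I deduce $Q^{-\top}(\ext(\P_W)) = \ext(\P_W)$, and the PSOI property of $\ext(\P_W)$ then forces $Q^{-\top}$, hence $Q$, to be a signed permutation.

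The main obstacle I anticipate is purely one of bookkeeping rather than new mathematics: verifying that the step cited from the original proof genuinely relies on only one of the two [PMF.SSC2] hypotheses and one of the two PSOI hypotheses, and that no symmetry between $\Wt$ and $H$ is tacitly exploited elsewhere in that step. Once this inspection is carried out, no additional calculation is required, and the corollary drops out as a one-sided specialization of \Cref{polytopicmf:th:uniquePMF}.
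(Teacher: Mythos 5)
Your proposal is correct and matches the paper's own argument: the paper proves this corollary by observing that the proof of \Cref{polytopicmf:th:uniquePMF} goes through unchanged up to \eqref{polytopicmf:eq:Qselfmap} (which needs [PMF.SSC1] on both factors), and that the final step needs [PMF.SSC2] and the PSOI property on only one of the two sides, by symmetry. Your bookkeeping of exactly where each hypothesis enters is precisely the content of the paper's two-line proof.
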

\begin{proof}
    The same proof as~\Cref{polytopicmf:th:uniquePMF} applies. By symmetry, whether it is (i) or (ii) that is verified allows us to conclude that $Q$ is a signed permutation.
\end{proof}


The PSOI set condition can be relaxed to sets that are ``mutually'' PSOI, that is, there cannot exist a matrix $A$ which is not a signed permutation such that 
$A^{-\top}(\ext(\P_W)) = \ext(\P_W)$ and $A(\ext(\P_H)) = \ext(\P_H)$.
\begin{corollary}
\label{polytopicmf:th:relaxeduniquePMF}
	Let $(W,H,\P_W,\P_H)$ be a PMF of $X$ 
 of size $r=\rank(X)$. 
 If $\Wt$ and $H$ are SSFs, and $\ext(\P_W)$ and $\ext(\P_H)$ are mutually PSOI sets, then the PMF $(W,H,\P_W,\P_H)$ of $X=WH$ of size $r=\rank(X)$ is identifiable. 
\end{corollary}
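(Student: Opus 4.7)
The plan is to imitate the proof of \Cref{polytopicmf:th:uniquePMF} up to the point where one would appeal to PSOI, and then replace that final invocation with the weaker mutual PSOI hypothesis. Concretely, let $Q\in\R^{r\times r}$ be any invertible matrix such that $(WQ^{-1},QH,\P_W,\P_H)$ is another PMF of $X$. The first step is to reproduce the volume argument verbatim: the inclusions $\conv(Q^{-\top}W^\top)\subseteq\P_W$ and $\conv(QH)\subseteq\P_H$ together with $\E_{\P_W}\subset\conv(W^\top)$ and $\E_{\P_H}\subset\conv(H)$ (which hold because $W^\top$ and $H$ are SSFs) yield $Q^{-\top}(\E_{\P_W})\subseteq\P_W$ and $Q(\E_{\P_H})\subseteq\P_H$. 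Comparing the two inscribed-ellipsoid volumes as in \eqref{polytopicmf:eq:Qselfmap} forces $|\det(Q)|=1$ and then
\[
Q^{-\top}(\E_{\P_W}) \;=\; \E_{\P_W},\qquad Q(\E_{\P_H}) \;=\; \E_{\P_H}.
\]

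The second step is to leverage [PMF.SSC2] on \emph{both} factors simultaneously (this is the only place where the two-sidedness matters). The argument from \cite[Th.~6]{tatli2021polytopic} applied to $H$ shows that $Q$ must map $\ext(\P_H)$ bijectively onto $\ext(\P_H)$, because the vertices of $\P_H$ are exactly characterized as the contact points between $\bd(\E_{\P_H}^{*,g})$ and $\conv(H)^{*,g}$ via polar duality, and this characterization is preserved by the self-map $Q$. Applying the same argument to $W^\top$ (using [PMF.SSC2] for $W^\top$) yields that $Q^{-\top}$ permutes $\ext(\P_W)$. Thus
\[
Q^{-\top}\bigl(\ext(\P_W)\bigr)=\ext(\P_W) \qquad\text{and}\qquad Q\bigl(\ext(\P_H)\bigr)=\ext(\P_H).
\]

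The third and final step is where the new hypothesis enters. By the mutual PSOI assumption on the pair $(\ext(\P_W),\ext(\P_H))$, any invertible matrix $A$ that simultaneously satisfies $A^{-\top}(\ext(\P_W))=\ext(\P_W)$ and $A(\ext(\P_H))=\ext(\P_H)$ is of the form $A=D\Pi$ with $\Pi$ a permutation matrix and $D$ a diagonal sign matrix. Applying this with $A=Q$ shows $Q$ is a signed permutation, which is exactly what identifiability of the PMF $(W,H,\P_W,\P_H)$ requires.

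The main obstacle is conceptual rather than computational: one must be careful that the weakening from individually PSOI to mutually PSOI does not cost anything in the proof. The key observation is that step two above produces the two constraints $Q^{-\top}(\ext(\P_W))=\ext(\P_W)$ and $Q(\ext(\P_H))=\ext(\P_H)$ \emph{jointly}, using the same matrix $Q$. So invoking the joint (mutual) PSOI property of the pair, rather than the individual PSOI property of either set, is exactly the right level of generality. Everything else is a direct transcription of the proof of \Cref{polytopicmf:th:uniquePMF}.
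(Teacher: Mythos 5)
Your proposal is correct and follows essentially the same route as the paper: reuse the proof of Theorem~\ref{polytopicmf:th:uniquePMF} up to~\eqref{polytopicmf:eq:Qselfmap}, use [PMF.SSC2] on both factors to conclude that $Q^{-\top}$ and $Q$ map $\ext(\P_W)$ and $\ext(\P_H)$ onto themselves, and then invoke the mutual PSOI hypothesis on the pair to force $Q$ to be a signed permutation. No gaps.
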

\begin{proof}
    The same proof as~\Cref{polytopicmf:th:uniquePMF} applies up to~\cref{polytopicmf:eq:Qselfmap}. 
    Then, $W^\top$ and $H$ satisfying [PMF.SSC2] leads to $Q^{-\top}(\ext(\E_{\P_W}))=\ext(\E_{\P_W})$ and $Q(\ext(\E_{\P_H}))=\ext(\E_{\P_H})$. Then, because $\ext(\P_W)$ and $\ext(\P_H)$ are mutually PSOI sets, $Q$ has to be a signed permutation.
\end{proof}

\section{Examples of PMF}
\label{polytopicmf:sec:ex}

In this section, we show that some known constrained matrix factorizations are special instances of PMF, and explain how Theorem~\ref{polytopicmf:th:uniquePMF} relates to known identifiability results for these special cases. 

\subsection{Nonnegative Matrix Factorization (NMF)}
\label{polytopicmf:subsec:nmf}

An NMF, $X = WH$, requires $W$ and $H$ to be component-wise nonnegative. This is not a PMF since the nonnegative orthant is unbounded. However, 
if $W^\top$ and $H$ do not contain a column full of zeros (which can be assumed w.l.o.g.),  
then there exist two diagonal matrices, $D_l$ and $D_r$, such that $D_l W e = e$ and $e^\top H D_r = e^\top$. Hence, we can transform the NMF $X=WH$ into the PMF 
$(\Tilde{W},\Tilde{H},\Delta^r,\Delta^r)$ of $\Tilde{X}$ with $\Tilde{X}=D_l X D_r$, where $\Tilde{W}=D_l W$ and $\Tilde{H}=H D_r$.

Interestingly, the identifiability conditions for NMF 
in~\Cref{preli:th:uniqNMFSSC} and for PMF in~\Cref{polytopicmf:th:uniquePMF} are equivalent, because $\Tilde{H}$ satisfies the SSC in~\cref{preli:def:ssc} \textit{if and only if} $\Tilde{H}$ is an SSF according to~\cref{polytopicmf:def:SSF}, while $\ext(\Delta^r)$ is a PSOI set (see Section~\ref{polytopicmf:sec:def}). This is due to the fact that $\E_{\P_W}=\E_{\P_H}=\C\cap\Delta^r$, since the MVIE of $\Delta^r$ is an $(r-1)$-dimensional ball centered at $\frac{1}{r}e$ of radius $\frac{1}{\sqrt{r(r-1)}}$, within the affine subspace $\{x\in\R^r,~e^\top x=1\}$. Indeed, the diagonal matrices are just rescaling the rows of $W$ and the columns of $H$ such that they belong to $\Delta^r$. Hence, $\C\cap\Delta^r\subseteq\conv(\Tilde{H})$ \textit{if and only if} $\C\subseteq\cone(H)$, and by symmetry this also holds for $\Tilde{W}^\top$ and $W^\top$. 

\subsection{Factor-Bounded Matrix Factorization}
\label{polytopicmf:subsec:fbmf}

Factor-bounded matrix factorization (FBMF) requires the elements of each factor to be bounded. Given $a < b \in \R$, we write $a\leq W\leq b$ if $a \leq W(i,k) \leq b$ for all $(i,k)$. 
\begin{definition}[Factor-Bounded MF] \label{polytopicmf:def:factorboundedMF}
	Let $X \in \mathbb{R}^{m \times n}$, $r$ be an integer, $l_W < u_W\in\R$ and $l_H < u_H\in\R$. The pair $(W,H) \in \mathbb{R}^{m \times r} \times \mathbb{R}^{r \times n}$ is a FBMF of $X$ of size $r$ for the intervals $[l_W,u_W]$ and $[l_H,u_H]$ if 
	\begin{equation}
		\label{polytopicmf:eq:factorboundedMF}
X = WH 	\text{ such that }   l_W\leq W \leq u_W, l_H\leq H \leq u_H.  
	\end{equation}
\end{definition}  
This means that each row of $W$ then belongs to the hypercube $[l_W,u_W]^r$ and each column of $H$ belongs to the hypercube $[l_H,u_H]^r$. In~\cite{liu2021factor}, the authors 
propose a nonnegative FBMF (NFBMF), where $0\leq l_W$ and $0\leq l_H$ in~\cref{polytopicmf:eq:factorboundedMF}. 
They showed that NFBMF is particularly well suited for clustering tasks. To the best of our knowledge, FBMF has never been proven to be identifiable. Since~\cref{polytopicmf:eq:factorboundedMF} is a PMF with the choice $\P_W=[l_W,u_W]^r$ and $\P_H=[l_H,u_H]^r$, Theorem~\ref{polytopicmf:th:uniquePMF} applies to FBMF. 
The MVIE $\E_{\P_W}$ is an $r$-dimensional ball centered at $\frac{u_W+2l_W}{2}e$ of radius $\frac{u_W-l_W}{2}$, and similarly for $\E_{\P_H}$, while $\ext(\P_W)$ and $\ext(\P_H)$ are PSOI sets (Corollary~\ref{polytopicmf:cor:cubePSOI}).

\subsection{Bounded Simplex-Structured Matrix Factorization}
\label{polytopicmf:subsec:bssmf}

Bounded simplex-structured matrix factorization (BSSMF) was already presented in \Cref{chap:bssmf} as model useful to explain data that are convex combinations of vectors belonging to a hyperrectangle $[a,b]$, where $a \leq b \in \mathbb{R}^m$. The convex combinations are the columns of $H$ and the vectors belonging to $[a,b]$ are the columns of $W$. For more details on BSSMF, refer to \Cref{chap:bssmf}.
BSSMF does not belong to the class of PMFs. The hyperrectangle constraint on the columns of $W$ cannot in general be expressed as a polytopic constraint on the rows of $W$. 
However, when all entries of $a$, and of $b$, are equal to one another, the hyperrectangle constraint becomes a hypercube constraint that can be expressed by a polytopic row wise constraint. 
For example, when $X$ corresponds to a set of vectorized images, 
the intensity of a pixel belongs to $[0,1]$. 
If there is no specific pixel position that should be bounded differently than the others, every row of $W$ is bounded in the same way. 
In other words, the rows of $W$ belong to the hypercube $[0,1]^r$. Another example is when $X$ is a rating matrix whose entries are ordinal, e.g., the Netflix matrix with entries in $\{1,2,3,4,5\} \in [1,5]$. 
In these cases, 
BSSMF uses a hypercube $[a,b]^m$ and is equivalent to PMF since \Cref{bssmf:eq:BSSMF} is equivalent to~\cref{polytopicmf:eq:PMF} with $\P_W=[a,b]^r$ and $\P_{H}=\Delta^r$. 
BSSMF was shown to be identifiable under conditions described in \Cref{bssmf:th:uniqueBSSMF}, different from the ones in~\Cref{polytopicmf:th:uniquePMF}. 
When BSSMF and PMF are equivalent, which identifiability theorem is the strongest? Since BSSMF is invariant by translation along $e$, we can assume w.l.o.g.\ that $a=0$ for the sake of simplicity. Also, we do not need to focus on the conditions for $H$. Indeed, when $H(:,j)\in\Delta^r$ for all~$j$, [SSC1] is equivalent to [PMF.SSC1] because the MVIE of $\P_H=\Delta^r$ is equal to $\C\cap\Delta^r$. We then focus on the sufficient scatteredness of $W^\top$. The MVIE of $[0,b]^r$ is a ball $\E_{[0,b]^r}$ centered at $\frac{b}{2}e$ of radius $\frac{b}{2}$. This ball is tightly contained by $\C$, which means that for any convex set $A$ that contains $\E_{[0,b]^r}$, $\C\subseteq\cone(A)$. As a consequence, if $W^\top$ satisfies [PMF.SSC1], $W^\top$ satisfies [SSC1], which implies that $\binom{W}{b e^\top - W}^\top$ satisfies [SSC1]. However, it is possible that $\binom{W}{b e^\top - W}^\top$ satisfies [SSC1] while $W^\top$ does not satisfy [PMF.SSC1]. Here is an example with $\P_W=[0,1]^3$: \vspace{-0.1cm} 
\begin{equation}
\label{polytopicmf:eq:exampleW}
W^\top=\begin{pmatrix}
	0.8 & 0 & 0.2 & 0.2 & 0.8 & 1 \\
	0.2 & 0.8 & 0 & 1 & 0.2 & 0.8 \\
	0 & 0.2 & 0.8 & 0.8 & 1 & 0.2
\end{pmatrix}. \vspace{-0.1cm}  
\end{equation}
As it can be seen in~\cref{polytopicmf:fig:exampleSSC1}, the cone of $\binom{W}{1 - W}^\top$ contains $\C$ because $W$ reaches enough times the minimum and maximum bounds $0$ and $1$. However, in~\Cref{polytopicmf:fig:exampleSSF1} the convex hull of $W^\top$ does not contain the MVIE of $[0,1]^3$. Therefore, \Cref{polytopicmf:th:uniquePMF} is quite general but is not as strong as \Cref{bssmf:th:uniqueBSSMF} for BSSMF. 

\begin{figure}[htbp!]
\centering
\begin{subfigure}[b]{\textwidth}
    \centering
    \begin{tikzpicture}[scale=1]
	\begin{axis}[hide axis,legend style={at={(1.1,0.5)},anchor=west}]
 
		\addplot [color=red,line width=2pt]
		coordinates{
			( 0.0, 1.0)
			(-0.866025 ,-0.5)
			( 0.866025 ,-0.5)
			( 0.0, 1.0)
			(-0.866025 ,-0.5)};
        
		\addplot [color=teal,
		mark=*,mark options={solid},only marks]
		coordinates{
			(-0.173205, 0.7) 
			(-0.519615,-0.5) 
			( 0.69282,-0.2) 
			(-0.0866025,-0.35) 
			( 0.34641, 0.1) 
			(-0.259808, 0.25) 
			( 0.0866025,-0.35) 
			( 0.259808, 0.25) 
			(-0.34641, 0.1) 
			( 0.173205, 0.7) 
			(-0.69282,-0.2) 
			( 0.519615,-0.5)};
   
		\addlegendimage{area legend, fill=teal!50,opacity=0.6}
		\addlegendimage{area legend, fill=cyan,opacity=0.6}
		\filldraw [fill=teal!50,opacity=0.6] (axis cs: -0.173205, 0.7) -- (axis cs: 0.173205, 0.7) -- (axis cs: 0.69282,-0.2) -- (axis cs: 0.519615,-0.5) -- (axis cs: -0.519615,-0.5)  --  
		(axis cs: -0.69282,-0.2) --  cycle;
		
		\filldraw[fill=cyan,opacity=0.6] (axis cs: 0,0) circle (0.5);
		

		

      \legend{$\bd(\Delta^3)$,$\binom{W}{1 - W}^\top$,$\cone\left(\binom{W}{1 - W}^\top\right)\cap\Delta^3$,$\C\cap\Delta^3$}
	\end{axis}
\end{tikzpicture}
    \caption{[SSC1] being satisfied.}
    \label{polytopicmf:fig:exampleSSC1}
\end{subfigure}
                                                                                                                                                                                                                                                                                                                                                                     
\vspace{2cm}

\begin{subfigure}[b]{\textwidth}
    \centering
    \begin{tikzpicture}[scale=1]
\begin{axis}[view={10}{10},
legend style={at={(1.1,0.5)},
anchor=west},
axis equal, 
axis lines=none,
xmin = 0,xmax = 1,ymin = 0,ymax = 1,zmin = 0,zmax = 1,
legend columns=1,
legend style={/tikz/every even column/.append style={column sep=0.5cm}}
]

\addplot3 [color=orange,loosely dotted,line width=2pt,forget plot]
    coordinates{
(0,0,0)
(0,1,0)
(0,1,1)
(0,1,0)
(1,1,0)};

\addplot3 [color=red,line width=2pt,line join=round]
		coordinates{
			(0,0,1)
			(0,1,0)
			(1,0,0)
			(0,0,1)};

\addplot3 [color=black,
mark=*,mark options={solid},only marks]
coordinates{
    (0.8,0.2,0)
    (0,0.8,0.2)
    (0.2,0,0.8)
    (0.2,1,0.8)
    (0.8,0.2,1)
    (1,0.8,0.2)};

\addplot3 [color=black,fill=gray!50,opacity=0.2,
mark=none,forget plot]
coordinates{
    (0.8,0.2,0) 
    (0,0.8,0.2) 
    (0.2,0,0.8) 
    (0.2,1,0.8) 
    (0.8,0.2,1) 
    (0.2,0,0.8) 
    (0.8,0.2,0) 
    (0.8,0.2,1) 
    (1,0.8,0.2) 
    (0.8,0.2,0) 
    (0.8,0.2,1) 
    (0.2,0,0.8) 
    };
\addplot3 [color=black,line width=1.07pt,
mark=none,forget plot]
coordinates{
    (0.8,0.2,0) 
    (0,0.8,0.2) 
    (0.2,0,0.8) 
    (0.2,1,0.8) 
    (0.8,0.2,1) 
    (0.2,0,0.8) 
    (0.8,0.2,0) 
    (0.8,0.2,1) 
    (1,0.8,0.2) 
    (0.8,0.2,0) 
    (0.8,0.2,1) 
    (0.2,0,0.8) 
    };
\addlegendimage{area legend,line width = 0.5pt,fill=gray!15,opacity=1}

\addplot3 [color=black,loosely dashed,
mark=none,forget plot]
coordinates{
    (0,0.8,0.2) 
    (0.2,1,0.8) 
    (1,0.8,0.2) 
    (0,0.8,0.2) 
    };

\begin{scope}[canvas is yz plane at x=0.5]
\draw[blue, line width=2pt, dashed] let \p1=(0.5,0), \n1={veclen(\x1,\y1)} in (0.5,0.5) circle[radius=\n1];
\end{scope}
\begin{scope}[canvas is xy plane at z=0.5]
\draw[blue, line width=2pt, dashed] let \p1=(0.5,0), \n1={veclen(\x1,\y1)} in (0.5,0.5) circle[radius=\n1];
\end{scope}
\begin{scope}[canvas is xz plane at y=0.5]
\draw[blue, line width=2pt, dashed] let \p1=(0.5,0), \n1={veclen(\x1,\y1)} in (0.5,0.5) circle[radius=\n1];
\end{scope}
\addlegendimage{line width=2pt,blue,dashed}

\addplot3 [color=orange,line width=2pt,line join=round]
    coordinates{
(0,0,1)
(1,0,1)
(1,0,0)
(0,0,0)
(0,0,1)
(0,1,1)
(1,1,1)
(1,0,1)
(1,1,1)
(1,1,0)
(1,0,0)};

\addplot3 [color=red,line width=2pt]
coordinates{
    (0.1,0,0.9)
    (0.95,0,0.05)};

\legend{$\bd(\Delta^3)$,$W^\top$,$\conv(W^\top)$,$\bd(\E_{[0,1]^3})$,$\bd({[0,1]^3})$}
\end{axis}

\end{tikzpicture}
    \caption{[PMF.SSC1] not being satisfied.}
    \label{polytopicmf:fig:exampleSSF1}
\end{subfigure}
\caption[Visualization of how can $\binom{W}{1 - W}^\top$ satisfy {[SSC1]} while $W^\top$ does not satisfy {[PMF.SSC1]}]{Visualization of $\binom{W}{1 - W}^\top$ from~\cref{polytopicmf:eq:exampleW} satisfying [SSC1] while $W^\top$ does not satisfy [PMF.SSC1]. The cone of $\binom{W}{1 - W}^\top$ contains $\C$, while the convex hull of $W^\top$ does not contain the ball $\E_{[0,1]^3}$.}
\end{figure}

\section{Conclusion}
\label{polytopicmf:sec:conclu}

We presented PMF, a structured matrix factorization model where the latent space of the factors is constrained by given polytopes. 
The choice of the polytopes should depend on the data and the application at hand. When the polytopes have certain invariant properties, we derived some sufficient conditions under which the identifiability of a PMF is guaranteed. Geometrically, these conditions are based on the scatteredness of the factors within the constraining polytopes.     

\chapter{Randomized Successive Projection Algorithm for Separable NMF}\label{chap:randspa}




\begin{hiddenmusic}{ YĪN YĪN - One Inch Punch }{https://yinyin.bandcamp.com/album/one-inch-punch}
\end{hiddenmusic}In general, NMF is NP-hard~\cite{vavasis2010complexity} and not necessarily identifiable (\Cref{preli:sec:identif_nmf}), which are two main issues of NMF. 
However, under the \emph{separability assumption}, it is solvable in polynomial time and is identifiable~\cite{arora2012computing}.
This assumption states that for every vertex (column of $W$), there exists at least one data point (column of $X$) equal to this vertex.
In blind HU, which consists in identifying the materials present in a hyperspectral image as well as their distribution in the pixels of the image, this is known as the \emph{pure-pixel assumption} and means that for each material, there is at least one pixel composed almost purely of this material.
Many algorithms have been introduced that leverage this assumption, see for instance~\mbox{\cite[Chapter 7]{gillis2020book}} and the references therein.
Recently, algorithms for separable NMF that are provably robust to noise have been introduced~\cite{arora2012computing}.
One of the most widely used is the successive projection algorithm (SPA)~\cite{araujo2001successive}\label{acro:SPA}.

SPA is robust to noise and generally works well in practice.
However, it suffers from several drawbacks, notably it is sensitivity to outliers.
SPA is deterministic, that is for a given problem it gives the same result at every run. It is also greedy, in the sense that it extract vertices sequentially, so an error at a given iteration cannot be compensated in the following iterations.
In this chapter, we aim at addressing the sensitivity to outliers by designing a non-deterministic variant of SPA that could be run several times, in the hope that at least one run will not extract outliers.

Let us discuss an observation from~\cite{nadisic2021smoothed}.
The separable NMF algorithm called vertex component analysis (VCA) \cite{nascimento2005vertex}\label{acro:VCA} includes a random projection, therefore it is non-deterministic and at each run it produces potentially a different result.
VCA is simpler and its guarantees are weaker than those of SPA, and the experiments in~\cite{nadisic2021smoothed} show that VCA performs worse than SPA on average, but they also show that the best result of VCA over many runs is in most cases better that the result of SPA in terms of reconstruction error.
This observation is our main motivation to design a non-deterministic variant of SPA, that we coin as randomized SPA (RandSPA). 

\paragraph{Outline and contribution of the chapter}
In \cref{randspa:sec:spa} we introduce the general form of recursive algorithm for separable NMF analyzed in \cite{gillis2013fast} which generalizes SPA. 
In \cref{randspa:sec:randspa} we present the main contribution of this chapter, that is a randomized variant of SPA, called RandSPA\label{acro:RandSPA}.
We show the theoretical results on the robustness to noise of SPA still hold for RandSPA, while the randomization allows to better handle outliers by allowing a diversity in the solutions produced. 
In \cref{randspa:sec:xp} we illustrate the advantages of our method with experiments on both synthetic datasets and the unmixing of hyperspectral images.

\section{Successive Projection Algorithm}\label{randspa:sec:spa}

In this section, we discuss the successive projection algorithm (SPA).
It is based on the \emph{separability assumption}, detailed below.

\begin{assumption}[Separability]
\label{randspa:assumption:separability}
The $m$-by-$n$ matrix $X \in \mathbb{R}^{m\times n}$ is $r$-separable if there exist a nonnegative matrix $H$ such that \mbox{$X = X(:,\mathcal{J}) H$}, where $X(:,\mathcal{J})$ denotes the subset of columns of $X$ indexed by $\mathcal{J}$ and $|\mathcal{J}| = r$. 
\end{assumption}

The pseudocode for a general recursive algorithm for separable NMF is given in \Cref{randspa:algo:sepnmf}. Historically, the first variant of \Cref{randspa:algo:sepnmf} has been introduced by Araújo et al.~\cite{araujo2001successive} for spectroscopic component analysis with $f(x)=\|x\|_2^2=x^{\top}x$, which is the so-called SPA. In the noiseless case, that is, under \Cref{randspa:assumption:separability}, SPA is guaranteed to retrieve $\mathcal{J}$ and more generally, the vertices of the set of points which are the columns of $X$ \cite{ma2013signal}. 
This particular choice of $f$ is proved to be the most robust to noise given the bounds in \cite{gillis2013fast}. {See \Cref{randspa:corollary:precond} with $Q=I$ for the error bounds.} The algorithm is iterative and is composed of the following two main steps:
\begin{itemize}

    \item Selection step: the column that maximizes a given function $f$ is selected (\cref{randspa:alg:select}).
    
    \item Projection step: all the columns are projected onto the orthogonal complement of the current selected columns (\cref{randspa:alg:proj}).
    
\end{itemize}
These two steps are repeated $r$ times, $r$ being the target number of extracted columns. 
\begin{algorithm}[htb!]
\caption{Recursive algorithm for separable NMF~\cite{gillis2013fast}. {It coincides with SPA when $f(x) = \|x\|_2^2$.}  \label{randspa:algo:sepnmf}}

\KwIn{An $r$-separable matrix $X \in \mathbb{R}^{m \times n}$, a function $f$ to maximize.}

\KwOut{Index set $\mathcal{J}$ of cardinality $r$ such that $X \approx X(:,\mathcal{J}) H$ for some $H \geq 0$.}

Let $\mathcal{J} = \emptyset$, $P^\bot = I_m$, $V = [\;]$. \\

\For{$k=1$ : $r$}
{
    Let $j_k = \argmax_{1 \leq j \leq n} f(P^{\bot}X(:,j))$. (Break ties arbitrarily, if necessary.) \label{randspa:alg:select}\\
    Let $\mathcal{J} = \mathcal{J} \cup \{ j_k  \}$. \\
    Update the projector $P^\bot$ onto the orthogonal complement of $X(:,\mathcal{J})$:
    \begin{align*}
    & v_k = \frac{P^\bot X(:,j_k)}{\| P^\bot X(:,j_k) \|_2},\\
    & V = [V \, v_k],\\
    & P^\bot \leftarrow \left( I_m - VV^T \right).
    \end{align*} \label{randspa:alg:proj}
}
\end{algorithm}
The drawback with the $\ell_2$-norm is its sensitivity to outliers and the fact that it makes SPA deterministic.
If some outliers are selected, running SPA again would still retrieve the exact same outliers.


\section{Randomized SPA}\label{randspa:sec:randspa}

In this section, we introduce the main contribution of this work, that is a randomized variant of SPA called RandSPA.
Its key features are that it computes potentially different solutions at each run, thus allowing a multi-start strategy, and that the theoretical robustness results of SPA still hold.

RandSPA follows \Cref{randspa:algo:sepnmf} with $f(x)=x^\top QQ^\top x$, with $Q\in\mathbb{R}^{m\times\nu}$ being a randomly generated matrix with $\nu\geq r$.
To control the conditioning of $Q$, we generate the columns of $Q$ such that they are mutually orthogonal and such that $${\|Q(:,1)\|_2=1\geq\dots\geq\|Q(:,\nu)\|_2=1/\sqrt{\kappa}}$$ where $\kappa$ is the desired conditioning of $QQ^\top$. 
For the columns between the first and the last one, we make the arbitrary choice to fix them also to $1/\sqrt{\kappa}$.
If $Q^\top W$ has full column rank, which happens with probability one if $\nu \geq r$, RandSPA is robust to noise with the following bounds:

\begin{theorem}{\cite[Corollary 1]{gillis2015enhancing}} 
\label{randspa:corollary:precond}
Let $\Tilde{X}=X+N$, where $X$ satisfies \Cref{randspa:assumption:separability}, $W$ has full column rank, and $N$ is noise with $\max_j\|N(:,j)\|_2\leq\epsilon$; and let $Q\in\mathbb{R}^{m\times \nu}$ with $\nu\geq r$. If $Q^\top W$ has full column rank and 
$$ \epsilon\leq\mathcal{O}\left(\frac{\sigma_{\min}(W)}{\sqrt{r}\kappa^3(Q^\top W)}\right), $$
then SPA applied on matrix $Q^\top\Tilde{X}$ identifies a set of indices $\mathcal{J}$ corresponding to the columns of $W$ up to the error
$$\max_{1\leq j\leq r}\min_{k\in\mathcal{J}}\left\|W(:,j)-\Tilde{X}(:,k)\right\|_2\leq\mathcal{O}\left(\epsilon\kappa(W)\kappa(Q^\top W)^3\right).$$
\end{theorem}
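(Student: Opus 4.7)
The plan is to view this as a corollary of the robustness result for vanilla SPA from~\cite{gillis2013fast} (the $Q=I$ instance of the present statement), applied to the preconditioned matrix $Q^{\top}\tilde{X}$. Under \Cref{randspa:assumption:separability}, there exists an index set $\mathcal{J}_{0}$ of cardinality $r$ and a nonnegative $H$ containing the canonical vectors $e_{1},\dots,e_{r}$ among its columns such that $X = X(:,\mathcal{J}_{0})H = W H$ with $W := X(:,\mathcal{J}_{0})$. Pre-multiplying by $Q^{\top}$ gives $Q^{\top}X = (Q^{\top}W)H$, so $Q^{\top}X$ is itself $r$-separable, with mixing matrix $Q^{\top}W$ assumed to have full column rank. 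The matrix fed to SPA is therefore $Q^{\top}\tilde{X} = (Q^{\top}W)H + Q^{\top}N$, which fits the separable-plus-noise template with noise level $\max_{j}\|Q^{\top}N(:,j)\|_{2} \leq \|Q\|\,\epsilon \leq \epsilon$, since $Q$ is built with pairwise orthogonal columns of norm at most $1$, so $\|Q\| = 1$.

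Next, I would invoke the Gillis--Vavasis robustness theorem on $Q^{\top}\tilde{X}$. It guarantees that, provided the perturbation is small enough relative to $\sigma_{\min}(Q^{\top}W)$ and $\kappa(Q^{\top}W)$, SPA returns an index set $\mathcal{J}$ such that every column of $Q^{\top}W$ is matched by some $Q^{\top}\tilde{X}(:,k)$, $k\in\mathcal{J}$, within distance of order $\epsilon\,\kappa^{2}(Q^{\top}W)$. Rewriting $\sigma_{\min}(Q^{\top}W) = \|Q^{\top}W\|/\kappa(Q^{\top}W)$ and absorbing the factor $\|Q\|\leq 1$, the admissibility condition takes precisely the form $\epsilon \leq \mathcal{O}\bigl(\sigma_{\min}(W)/(\sqrt{r}\,\kappa^{3}(Q^{\top}W))\bigr)$ once one accounts for $\sigma_{\min}(Q^{\top}W) \geq \sigma_{\min}(W)/\kappa(Q^{\top}W)$ up to constants.

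It then remains to transfer the proximity from the $Q^{\top}$-space back to the original column space. For a matching pair $(W(:,j),\tilde{X}(:,k))$ one has $\tilde{X}(:,k) = W H(:,k) + N(:,k)$, and the small distance $\|Q^{\top}W(:,j) - Q^{\top}\tilde{X}(:,k)\|_{2}$ is converted, via the left pseudo-inverse of $Q^{\top}W$, into a bound on $\|H(:,k) - e_{j}\|_{2}$ with a multiplicative loss of $\sigma_{\min}(Q^{\top}W)^{-1}$. Multiplying on the left by $W$ and using $\|W\|/\sigma_{\min}(W) = \kappa(W)$ yields the bound on $\|W(:,j) - W H(:,k)\|_{2}$, and the triangle inequality with $\|N(:,k)\|_{2}\leq \epsilon$ gives the bound on $\|W(:,j) - \tilde{X}(:,k)\|_{2}$.

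The main obstacle is the bookkeeping that produces the exact cubic dependence $\kappa^{3}(Q^{\top}W)$ together with a single factor $\kappa(W)$: the SPA robustness already contributes $\kappa^{2}(Q^{\top}W)$, one extra factor $\kappa(Q^{\top}W)$ arises when inverting $Q^{\top}W$ to recover the $H$-columns, and the pull-back through $W$ contributes only $\kappa(W)$ because $H(:,k)$ is already forced to be close to a canonical vector. Aggregating these three multipliers, and verifying that no hidden factor blows up, delivers the final error $\mathcal{O}\bigl(\epsilon\,\kappa(W)\,\kappa^{3}(Q^{\top}W)\bigr)$ as stated.
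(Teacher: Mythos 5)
You are not, strictly speaking, competing with a proof in this thesis: the statement is imported verbatim from \cite[Corollary~1]{gillis2015enhancing} and no proof is given here. Your overall strategy is the natural one and matches the source's mechanism — view $Q^\top\tilde{X}=(Q^\top W)H+Q^\top N$ as a near-separable matrix, invoke the Gillis--Vavasis robustness theorem for plain SPA on it, then transfer the guarantee back to the original space.

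There is, however, a genuine gap at the transfer step, and it is the load-bearing step. You assert that ``$\sigma_{\min}(Q^{\top}W)\geq\sigma_{\min}(W)/\kappa(Q^{\top}W)$ up to constants'', which (after your normalization $\|Q\|=1$) is the claim $\|Q^{\top}W\|\geq\sigma_{\min}(W)$. This is false for a general $Q$ with $Q^\top W$ of full column rank: take $W=[e_1\ e_2]\in\R^{4\times 2}$ and $Q=[\,\delta e_1+\sqrt{1-\delta^2}\,e_3,\ \ \delta e_2+\sqrt{1-\delta^2}\,e_4\,]$, whose columns are orthonormal; then $Q^\top W=\delta I_2$, so $\kappa(Q^\top W)=1$ while $\|Q^\top W\|=\delta$ is arbitrarily smaller than $\sigma_{\min}(W)=1$. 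Without that inequality, your own computation only delivers (i) the admissibility requirement $\|Q\|\epsilon\leq\mathcal{O}\bigl(\sigma_{\min}(Q^\top W)/(\sqrt{r}\,\kappa^2(Q^\top W))\bigr)$, which is \emph{not} implied by the stated condition $\epsilon\leq\mathcal{O}\bigl(\sigma_{\min}(W)/(\sqrt{r}\,\kappa^3(Q^\top W))\bigr)$, and (ii) an error bound carrying the factor $\|Q\|\,\|W\|/\sigma_{\min}(Q^\top W)$, which cannot be bounded by $\kappa(W)\,\kappa(Q^\top W)$ — in the example above both quantities blow up like $1/\delta$ even though all condition numbers equal one. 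So the final ``bookkeeping'' paragraph does not close: the extra factor you attribute to inverting $Q^\top W$ is $1/\sigma_{\min}(Q^\top W)$, not $\kappa(Q^\top W)$, and turning one into the other is exactly the unproven alignment inequality. The argument can only be completed with additional structure ensuring $Q^\top$ does not shrink $\col(W)$ — e.g., in the cited source the preconditioner acts invertibly on an $r$-dimensional space containing the data, so the pull-back goes through $Q^{-\top}$ and one uses $\kappa(Q)\leq\kappa(W)\,\kappa(Q^\top W)$ — or by keeping the ratio $\|Q\|\sigma_{\min}(W)/\sigma_{\min}(Q^\top W)$ explicitly in both the noise condition and the error bound. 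As a minor point, $\|Q\|=1$ is a property of RandSPA's particular construction, not a hypothesis of the theorem; it is harmless only because SPA's output and every quantity in the statement are invariant under rescaling $Q\mapsto cQ$, and that normalization argument should be made explicit rather than assumed.
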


\Cref{randspa:corollary:precond} is directly applicable to RandSPA since choosing $f(x)=x^\top QQ^\top x$ is equivalent to performing SPA on $Q^\top \Tilde{X}$. The only subtlety is that with RandSPA, a random $Q$ is drawn at each column extraction. The error bound for RandSPA is then the one with the highest drawn $\kappa(Q^\top W)$.

Let us note that choosing $\nu=1$ or
$\|Q(:,j)\|=1/\sqrt{\kappa}$ with $\kappa\rightarrow \infty$ for all $j>1$ retrieves VCA.
Choosing $\nu=m$ and $\kappa(Q)=1$ retrieves SPA. 
Hence, RandSPA creates a continuum between SPA, with more provable robustness, and VCA, with more solution diversity.

\section{Numerical experiments}\label{randspa:sec:xp}

In this section, we study empirically the performance of the proposed algorithm RandSPA on the unmixing of hyperspectral images.
The algorithms have been implemented in Julia~\cite{bezanson2017julia}.
The code for the algorithm is available as a Julia package in an online repository\footnote{\url{https://gitlab.com/vuthanho/randspa.jl}}. A different repository with the data and test scripts used in our experiments is also available\footnote{\url{https://gitlab.com/nnadisic/randspa}}. Our tests are performed on 5 real hyperspectral datasets\footnote{Originally downloaded from \url{http://lesun.weebly.com}, available at \url{https://gitlab.com/vuthanho/data}} described in \Cref{randspa:tab:datasets}.
\begin{table}[ht]
\centering 
\begin{tabular}{l|l|l|l}
Dataset & $m$  & $n$                      & $r$   \\ \hline
Jasper  & $198$  & $100 \times 100 = 10000$ & $4$  \\
Samson  & $156$  & $95 \times 95 = 9025$    & $3$  \\
Urban   & $162$  & $307 \times 307 = 94249$ & $5$  \\
Cuprite & $188$  & $250 \times 191 = 47750$ & $12$ \\
San Diego & $188$  & $400 \times 400 = 160000$ & $8$
\end{tabular}
\caption{Summary of the datasets, for which $X \in \mathbb{R}^{m\times n}$
.}\label{randspa:tab:datasets}
\end{table}

For all the tests, we choose $\nu=r+1$ and a relatively well conditioned $Q$ with $\kappa(Q)=1.5$. 
We then compute \mbox{$W = X(:,\mathcal{J})$} once with SPA and 30 times with RandSPA.
Next, we compute $H$ by solving the nonnegative least squares (NNLS) subproblem $\min_{H \geq 0} \| X - WH \|_F^2$ exactly with an active-set algorithm \cite{kim2008toward}, and we compute the relative reconstruction error ${\|X-WH\|_F}/{\|X\|_F}$.
For RandSPA, we show the best error and the median error among the 30 runs.
Note that in our setting we choose the best solution as the one with the lower reconstruction error, but other methods could be used to choose the best solution among all the computed ones.

The results of the experiments for SPA and RandSPA are presented in \Cref{randspa:tab:recerrhsu}.
The median error of RandSPA is on the same order than that of SPA, except for Cuprite where it is higher. This is probably because $r$ is greater than on other datasets. A good RandSPA run needs $r$ good successive $Q$'s, which is less probable when $r$ gets greater. This highlights that RandSPA could be improved. One possible improvement would be to draw several matrices $Q$'s at each iteration and select the best based on a criterion. The open question is then which criterion, and why. Going back to the median error of RandSPA, it is even slightly smaller than that of SPA for Samson and Urban.
On the other hand, the error from the best run of RandSPA is always smaller than that of SPA.
Particularly, the error is decreased respectively by 37\%, 32\% and 27\% for Samson, Urban and San Diego.
This improvement is quite noticeable.
\begin{table}[hbt!]
    \centering
    \begin{tabular}{l|ccc}
        Dataset &SPA     &Med. RandSPA  &Best RandSPA \\ \hline
        Jasper  &$8.6869$  &$8.7577$          &$8.0206$ \\
        Samson  &$6.4914$  &$6.3114$          &$3.9706$ \\
        Urban   &$10.9367$ &$9.6354$          &$6.5402$ \\
        Cuprite &$2.6975$  &$3.526$           &$2.2824$ \\
        San Diego &$12.6845$ &$12.8714$         &$9.2032$
    \end{tabular}
    \caption{Relative reconstruction error ${\|X-WH\|_F}/{\|X\|_F}$ in percent.}
    \label{randspa:tab:recerrhsu}
\end{table}

The resulting false color images for Jasper, Samson, Urban and Cuprite are shown on \Cref{randspa:fig:falsecolor}.
They represent the repartition of the materials identified by SPA and RandSPA in the image.
As we can see for Urban, SPA does not manage to separate well the grass and the trees (both the grass and trees are in green), while with RandSPA, it occurred that some random $Q$ amplified some directions that separate better the grass (in blue) and the trees (in green). Similarly, in the abundance maps from the unmixing of Samson in \Cref{randspa:fig:falsecolor}, RandSPA separates the soil (in red), the water (in blue) and the trees (in green) better than SPA where the soil (in blue) is extracted but the water is not clearly identified.

Let us discuss another experiment on the dataset Samson.
We add some Gaussian noise such that $SNR=20dB$, we fix $\kappa=1$ and vary $\nu$, and then show the average best error in 1,5,10 and 20 runs on \Cref{randspa:fig:avgrbestrun}.
As we can see, with a sufficient amount of runs that is 10 in this experiment, the relative error significantly improves for a $\nu$ near 10 in comparison to other choices of $\nu$. In particular, it is also better than both $\nu=1$ (VCA) and a high $\nu$ like $50$ that should behave like SPA. Without added noise, VCA would perform better than every $\nu$ higher than 1 starting from 10 runs.
However, when the data is noisy, this experiment 
highlights that VCA is not robust enough to noise and that the best run from a method between SPA and VCA is better than both SPA and VCA.



\begin{figure}[htb!]
    \centering
    \begin{tabular}{rcc}
    & (a) SPA & (b) RandSPA \\
    \rotatebox[origin=lc]{90}{\phantom{uuuuuuu}Jasper}
    & \includegraphics[width=3.6cm]{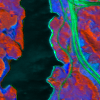}
    & \includegraphics[width=3.6cm]{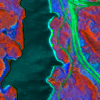} \\
    \rotatebox[origin=lc]{90}{\phantom{uuuuuuu}Samson}
    & \includegraphics[width=3.6cm]{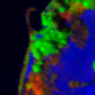}
    & \includegraphics[width=3.6cm]{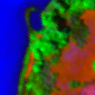} \\
    \rotatebox[origin=lc]{90}{\phantom{uuuuuuu}Urban}
    & \includegraphics[width=3.6cm]{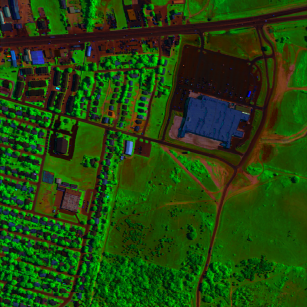}
    & \includegraphics[width=3.6cm]{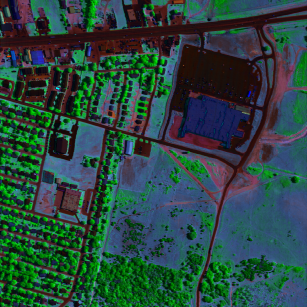} \\
    \rotatebox[origin=lc]{90}{\phantom{uuuuuuuuuu}Cuprite}
    & \includegraphics[width=3.6cm]{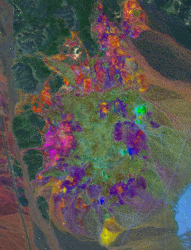}
    & \includegraphics[width=3.6cm]{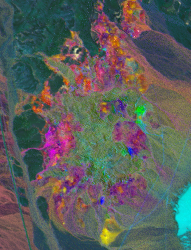}
    \end{tabular}
    \caption{Abundance maps in false color from the unmixing of hyperspectral images.}
    \label{randspa:fig:falsecolor}
\end{figure}

\begin{figure}[htb!]
    \centering
    \begin{tikzpicture}
\begin{axis}[ytick={1,...,6},
            yticklabels={1,3,6,10,25,50},
            ylabel = {Rank $\nu$ of $Q$},
            ymin=0,ymax=7,
            title = {Average best in 1 run},width=0.55\textwidth,
            height=5cm]
\foreach \col in {0,...,5}{
\addplot[boxplot,blue] table[y index = \col]{randspa/xp/res_nbrun1_snr20.txt};
\addplot[mark=none,line width = 0.5pt,dashed,red] coordinates {(12.41, 0) (12.41, 7)};
\legend{,,,,,,,SPA};
};
\end{axis}
\end{tikzpicture}
\begin{tikzpicture}
\begin{axis}[ytick={1,...,6},
            yticklabels={},
            ymin=0,ymax=7,
            title = {Average best in 5 runs},width=0.55\textwidth,
            height=5cm]
\foreach \col in {0,...,5}{
\addplot[boxplot,blue] table[y index = \col]{randspa/xp/res_nbrun5_snr20.txt};
\addplot[mark=none,line width = 0.5pt,dashed,red] coordinates {(12.41, 0) (12.41, 7)};
};
\end{axis}
\end{tikzpicture}

\vspace{2cm}

\begin{tikzpicture}
\begin{axis}[ytick={1,...,6},
            yticklabels={1,3,6,10,25,50},
            ylabel = {Rank $\nu$ of $Q$},
            ymin=0,ymax=7,
            xlabel = {Relative error ($\%$)},
            title = {Average best in 10 runs},width=0.55\textwidth,
            height=5cm]
\foreach \col in {0,...,5}{
\addplot[boxplot,blue] table[y index = \col]{randspa/xp/res_nbrun10_snr20.txt};
\addplot[mark=none,line width = 0.5pt,dashed,red] coordinates {(12.41, 0) (12.41, 7)};
};
\end{axis}
\end{tikzpicture}
\begin{tikzpicture}
\begin{axis}[ytick={1,...,6},
            yticklabels={},
            ymin=0,ymax=7,
            xlabel = {Relative error ($\%$)},
            title = {Average best in 20 runs},width=0.55\textwidth,
            height=5cm]
\foreach \col in {0,...,5}{
\addplot[boxplot,blue] table[y index = \col]{randspa/xp/res_nbrun20_snr20.txt};
\addplot[mark=none,line width = 0.5pt,dashed,red] coordinates {(12.41, 0) (12.41, 7)};
};
\end{axis}
\end{tikzpicture}
    \caption[Average best reconstruction error on Samson]{Average best reconstruction error on several runs, depending on $\nu$, with $\kappa=1$, on the hyperspectral image Samson with added noise such that $SNR=20dB$.}
    \label{randspa:fig:avgrbestrun}
\end{figure}
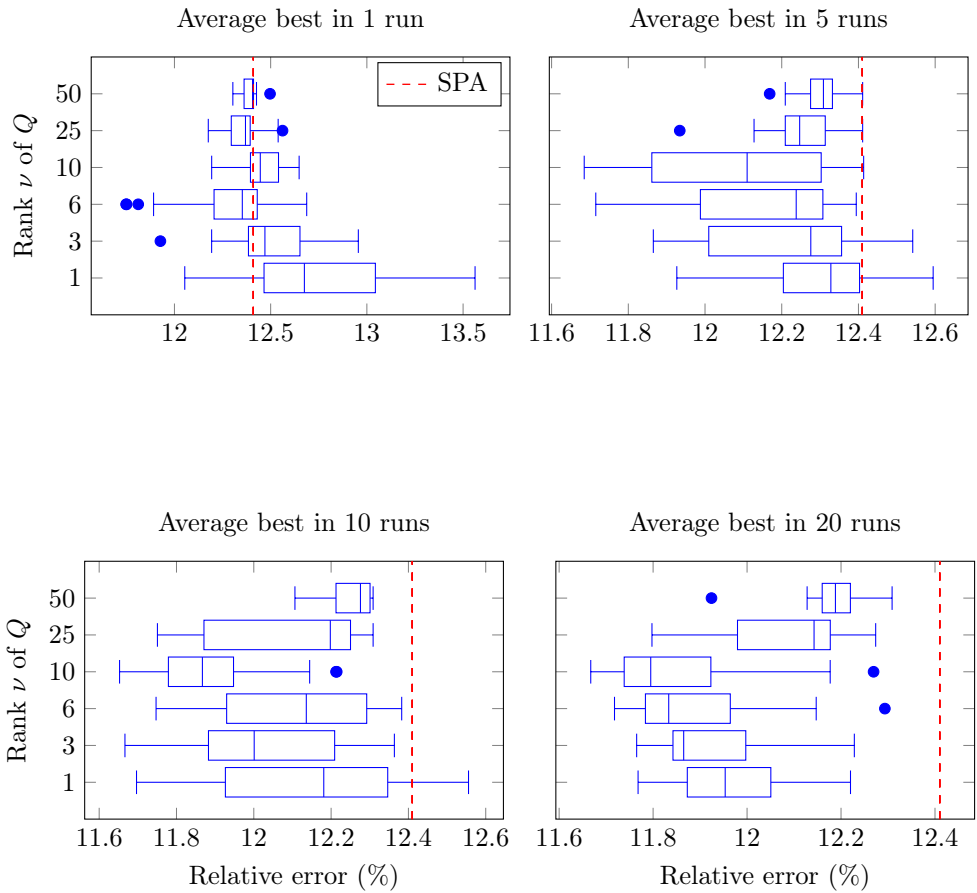


\section{Conclusion}\label{randspa:sec:conclu}

In this chapter, we introduced RandSPA, a variant of the separable NMF algorithm SPA that introduces randomness to allow a multi-start strategy.
The robustness results of SPA still hold for RandSPA, provided a bound on the noise that depends on the parameters used.
We showed empirically on the unmixing of hyperspectral images that, with sufficiently many runs, the best solution from RandSPA is generally better that the solution from SPA.
We also showed that RandSPA creates a continuum between the two algorithms SPA and VCA, as we can recover these algorithms by running RandSPA with some given parameter values.

\chapter{Minimum-Volume Nonnegative Matrix Factorization}\label{chap:minvolnmf}
\begin{hiddenmusic}{Thom Draft - Breathtaking}{https://www.youtube.com/watch?v=4vdToJ0x2ds}
\end{hiddenmusic}The minimum-volume criterion was originally thought by \cite{full1981extended} and the so called Craig's belief \cite{craig1994minimum}. It also appeared later in the chemometrics community. Up to our knowledge, the first implementation of the minimum-volume criterion coupled with NMF was proposed in~\cite{miao2007endmember}. The idea is that in the absence of pure pixels, given that all the data points are not strong mixtures, finding endmembers whose cone or convex hull tightly contains the data points retrieves the true endmembers. If the main motivation was spectral unmixing, the minimum-volume criterion has also been shown to be useful in other applications, like blind audio source separation for instance \cite{leplat2019blind,wang2021minimum}. Regardless of the application, the minimum-volume criterion encourages interpretability of the features since they are close to the data points. 
\paragraph{Outline and contribution of the chapter} In this chapter, we present known declinations of MinVol NMF\label{acro:MinVolNMF} in \Cref{minvol:sec:knownminvols}, we quickly recall on the identifiability of MinVol NMF in \Cref{minvol:sec:identifiability}, we propose a fast algorithm for MinVol NMF in \Cref{minvol:sec:titanizedminvol} and finally, we show how the MinVol criterion is promising for matrix completion in \Cref{minvol:sec:minvolnmc}.


\section{Existing variants of MinVol NMF}\label{minvol:sec:knownminvols}

Geometrically, the NMF $X=WH$ implies that $\cone(X)\subseteq\cone(W)$. 
Distinctively, with MinVol NMF, the convex hull of $W$ should enclose the convex hull of $X$ as tightly as possible\footnote{This interpretation only holds with a column-wise simplex-structured $H$.}, hence the expression ``minimum-volume''. In other words, MinVol NMF consists in finding a couple of factors $(W,H)\in\R_+^{m\times r}\times\R_+^{r\times n}$ such that $X=WH$ while minimizing the volume of the convex hull of 
the columns of $W$ and the origin, which is given by $\frac{1}{r!}\sqrt{\det(\Wt W)}$. 
This improves the interpretability of the features (the columns of $W$) while prioritizing a unique decomposition of the data under relatively mild assumptions, that are given in~\Cref{preli:th:identifminvolSSMF}. Additionally, one of the factors should be constrained such that the scaling ambiguity between $W$ and $H$ coupled with the minimized volume does not make $W$ tend to zero at optimality. Identifiable MinVol NMFs typically use simplex structuring constraints, namely $W\in\Delta^{m\times r}$~\cite{leplat2019blind} or $H\in\Delta^{r\times n}$~\cite{fu2015blind} or $H^\top\in\Delta^{n\times r}$~\cite{fu2018identifiability}, where $\Delta^{m\times r}=\{Y\in\R_+^{m\times r},e^\top Y=e^\top\}$ and $e$ is the all-one vector of appropriate dimension. 
See \Cref{minvol:sec:identifiability} for more details on the identifiability of MinVol NMF. The constraint $W\in\Delta^{m\times r}$ ensures that the columns of $W$ lie within the probability simplex. The constraint $H^\top\in\Delta^{n\times r}$ can be seen as a budget assignment constraint: each feature should be used in the decomposition as much as the others. Both aforementioned constraints are without loss of generality relatively to NMF because of the scaling ambiguity between $W$ and $H$, that is, any NMF $(W,H)$ can be scaled so that $W\in\Delta^{m\times r}$ or $H^\top\in\Delta^{n\times r}$ is satisfied. The constraint $H\in\Delta^{r\times n}$ is stronger than the other two as it is not without loss of generality relatively to NMF. For example, for the matrix  $X=\begin{pmatrix}
    1 & 0 & 1\\
    0 & 1 & 1
\end{pmatrix} $, 
there exists a rank-2 NMF of $X$ which is simply $X=IX$. However, any exact NMF of $X$ with the additional constraint that $H$ has to be column wise stochastic is of rank at least 3. Despite the loss of generality, this constraint remains useful in practice as it provides a soft clustering interpretation of the decomposition. It has been instrumental in hyperspectral imaging where each column of $H$ contains the abundances of the pure materials in a pixel which are nonnegative and sum to one~\cite{ma2014signal}. The constraint $H^\top\in\Delta^{n\times r}$ can be responsible for an ill conditioned $W$ that can lead to numerical issues~\cite{leplat2019blind}.
Consequently, among the three mentioned variants of MinVol, we will only consider the following exact formulation in the remainder of this chapter:
\begin{mini}
	{\scriptstyle W,H}{\det(\Wt W )}{\label{minvol:eq:exactminvol}}{}
	\addConstraint{X=WH}
	\addConstraint{W\in\Delta^{m \times r}}{,~H\in\R_+^{r \times n}.}{}
\end{mini}
Consequently, the inexact formulation is
\begin{mini}
	{\scriptstyle W,H}{\frac{1}{2}\|X-WH\|_F^2+\frac{\lambda}{2}\logdet(\Wt W +\delta I)}{\label{minvol:eq:nonexactminvol}}{}
	\addConstraint{W\in\Delta^{m \times r}}{,~H\in\R_+^{r \times n},}{}
\end{mini}
where $\delta$ is a parameter that prevents the $\logdet$ from going to $-\infty$ when $W$ is rank deficient, and $\lambda\geq0$ balances the two terms. Note that the true volume spanned by the columns of $W$ and the origin is equal to $\frac{1}{r!}\sqrt{\det(\Wt W)}$, but minimizing $\logdet(\Wt W)$ is equivalent in the exact case and makes the problem numerically easier to solve because the function $\logdet(\cdot)$ is concave and it is easier to design a ``nice'' majorizer for it~\cite{fu2016robust}. 

\section{Identifiability of MinVol NMF}\label{minvol:sec:identifiability}

The identifiability of MinVol NMF with $H\in\Delta^{r\times n}$ was indirectly mentioned through the identifiability of MinVol SSMF in \Cref{preli:th:identifminvolSSMF}. The additional nonnegative constraint on $W$ does not change this identifiability result. MinVol NMF with $H^\top\in\Delta^{n\times r}$ and $W\in\Delta^{m\times r}$ are also identifiable with very similar proofs. Since we only use MinVol NMF with $W\in\Delta^{m\times r}$, let us remind the proof of its identifiability. The proof was given in~\cite{leplat2019blind} and adapted from~\cite{fu2015blind}.
\begin{theorem}[\cite{leplat2019blind}]
	\label{minvol:th:uniqueminvol}
	Let $X=WH$ be a MinVol NMF of $X$ of size $r = \rank(X)$, in the sense of~\eqref{minvol:eq:exactminvol}. If $H$ satisfies SSC as in \Cref{preli:def:ssc}, then MinVol NMF $(W,H)$ of $X$ is essentially unique.
\end{theorem}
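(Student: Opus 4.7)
The plan is to adapt the identifiability argument for exact MinVol SSMF (\Cref{preli:th:identifminvolSSMF}) to the present setting, where $W$ is additionally constrained to be column-stochastic. Let $(W',H')$ be another optimal solution of~\eqref{minvol:eq:exactminvol}. Since $X = WH = W'H'$ and $r = \rank(X) = \rank(W)$, there exists an invertible matrix $Q \in \mathbb{R}^{r \times r}$ with $W' = W Q^{-1}$ and $H' = Q H$. The first step is to translate each structural feature of the problem into a constraint on $Q$. From $e^\top W' = e^\top = e^\top W$ one gets $e^\top Q^{-1} = e^\top$, equivalently $Q^\top e = e$, i.e.\ the columns of $Q$ sum to $1$. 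From $H' = Q H \geq 0$ together with SSC1 ($\mathcal{C} \subseteq \cone(H)$), each row $q_i^\top$ of $Q$ must satisfy $q_i^\top h \geq 0$ for every $h \in \cone(H)$, so $q_i \in \cone(H)^{*} \subseteq \mathcal{C}^{*}$; by \Cref{preli:th:dualC} this is exactly $e^\top q_i \geq \|q_i\|_2$. Finally, $\det(W'^\top W') = (\det Q)^{-2}\det(W^\top W)$ together with the common optimal value $\det(W^\top W) = \det(W'^\top W')$ forces $|\det Q| = 1$.

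The second step chains three classical inequalities to pin down $Q$. Summing $e^\top q_i \geq \|q_i\|_2$ over $i$ and using $\sum_i q_i = Q^\top e = e$ gives $\sum_i \|q_i\|_2 \leq r$. Hadamard's inequality yields $1 = |\det Q| \leq \prod_i \|q_i\|_2$, while AM-GM gives $\prod_i \|q_i\|_2 \leq \bigl(\tfrac{1}{r}\sum_i \|q_i\|_2\bigr)^r \leq 1$. All three inequalities must therefore collapse to equalities: AM-GM equality combined with $\sum_i\|q_i\|_2 = r$ forces $\|q_i\|_2 = 1$ for all $i$, Hadamard equality forces the rows of $Q$ to be pairwise orthogonal, and the SSC1/dual-cone bound $e^\top q_i \geq \|q_i\|_2$ becomes $e^\top q_i = 1$. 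Hence $Q$ is orthogonal.

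The final step invokes SSC2. Writing $H = Q^{-1}H' = Q^\top H'$ with $H' \geq 0$, every column of $H$ is a nonnegative combination of the columns of $Q^\top$, so $\cone(H) \subseteq \cone(Q^\top)$ with $Q^\top$ orthogonal. SSC2 forces $Q^\top$, and hence $Q$, to be a permutation matrix. Because the columns of $W'$ must remain in $\Delta^m$ there is no room for a sign flip, and we conclude $(W',H') = (W\Pi^\top, \Pi H)$ for some permutation $\Pi$, establishing essential uniqueness.

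The main obstacle is the middle step: it is not obvious a priori which inequalities to line up or why they should all become tight simultaneously, but once the SSC1/dual-cone inequality, Hadamard's inequality, and AM-GM are chained in that order, the orthogonality of $Q$ drops out essentially for free, and SSC2 finishes the argument in a single line.
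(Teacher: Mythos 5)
Your proof is correct and follows essentially the same route as the paper's: feasibility gives $Q^\top e=e$ and, via SSC1 and the dual cone $\mathcal{C}^*$, the row-wise bound $e^\top q_i\geq\|q_i\|_2$; optimality gives $|\det Q|=1$; chaining Hadamard, the dual-cone bound, and AM--GM then forces all inequalities tight, so $Q$ is orthogonal; and SSC2 concludes that $Q$ is a permutation. The only (immaterial) difference is the last step, where you obtain $\cone(H)\subseteq\cone(Q^\top)$ directly from $H=Q^\top H'$ with $H'\geq0$, whereas the paper invokes duality and the self-duality of the cone of an orthogonal matrix.
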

\begin{proof}
	Let $Q\in\R^{r\times r}$ be an invertible matrix such that $(WQ^{-1},QH)$ is another feasible solution of~\eqref{minvol:eq:exactminvol}. Since $W^\top e = e$ and $Q^{-\top}W^\top e=e$ because $(WQ^{-1},QH)$ is feasible, we have
	\begin{equation}
	 Q^{-\top}W^\top e=e 
		\quad \Leftrightarrow \quad  Q^{-\top} e = e.
	\end{equation}
	Multiplying on the left by $Q^\top$ leads to $e=Q^\top e$. Using again feasibility of $(WQ^{-1},QH)$,
	\begin{align}
		QH\geq0 \quad\Leftrightarrow\quad &  H^\top Q^\top\geq0 \label{minvol:eq:QHgeq0}\\
		\Leftrightarrow\quad & Q(i,:)^\top\in\cone^*(H) \\
		\Leftrightarrow\quad & \cone(Q^\top)\subseteq\cone^*(H).\label{minvol:eq:coneQ-in-coneH}
	\end{align}
	Since $H$ satisfies SSC1, $\C\subseteq\cone(H)$. By duality, $\cone^*(H)\subseteq\C^*$, where $\C^*$ is given in \Cref{preli:th:dualC}. With \eqref{minvol:eq:coneQ-in-coneH}, this implies that $\cone(Q^\top)\subseteq\C^*$. More explicitly,
	\begin{equation}\label{minvol:eq:explicit-coneQ-in-dualC}
		Q(i,:)e\geq\|Q(i,:)\|_2 \text{ for } i=1,\dots,r.
	\end{equation}
	Therefore,
	\begin{equation}
		\label{minvol:eq:detQineq}
		\begin{split}
			|\det(Q)| \leq & \prod_{i=1}^r\|Q(i,:)\|_2 \\
			\leq & \prod_{i=1}^r Q(i,:)e \\
			\leq & \left(\frac{\sum_{i=1}^{r}Q(i,:)e}{r}\right)^r = \left(\frac{e^\top Q^\top e}{r}\right)^r =1,
		\end{split}
	\end{equation}
	where the first inequality is coming from the Hadamard's inequality, the second from~\eqref{minvol:eq:explicit-coneQ-in-dualC}, and the last one from the arithmetic-geometric mean inequality and that $Q^\top e = e$. \\

	Suppose now that $(WQ^{-1},QH)$ is also an optimal solution to~\eqref{minvol:eq:exactminvol}. Then,
	\begin{align}
		&\det(Q^{-\top}W^\top W Q^{-1})  = \det(W^\top W) \\
		\Leftrightarrow \quad &|\det(Q)|^{-2}\det(W^\top W) = \det(W^\top W) \\
		\Leftrightarrow \quad &|\det(Q)|=1.
	\end{align}
	With $|\det(Q)|=1$, all inequalities in~\eqref{minvol:eq:detQineq} are equalities. Particularly, for all $i$,
	\begin{equation}
		Q(i,:)e = \|Q(i,:)\|_2 = 1
	\end{equation}
	and $|\det(Q)|=\prod_{i=1}^r\|Q(i,:)\|_2$, implying that $Q^\top$ is orthogonal. By duality of \eqref{minvol:eq:coneQ-in-coneH} and using that the cone of any orthogonal matrix is self dual, we have that $\cone(H)\subseteq\cone(Q^\top)$. Finally, since $H$ satisfies SSC2, $Q^\top$ can only be a permutation matrix.
\end{proof}

\section{Solving MinVol NMF with TITAN}\label{minvol:sec:titanizedminvol} 

As opposed to PCA/SVD, solving NMF is NP-hard in general~\cite{vavasis2009complexity}. Hence, most NMF algorithms rely on standard non-linear optimization schemes without global optimality guarantee. This also applies to MinVol NMF. In this section, we propose a fast method to solve MinVol NMF in \Cref{minvol:sec:algoTITAN}. Our method is an application of a recent inertial block majorization-minimization framework called TITAN~\cite{hien2023inertial}, that we already used in \Cref{chap:bssmf}. Experimental results on real datasets show that the proposed method performs better than the state of the art; see \Cref{minvol:sec:numexp}.

\subsection{TITANized MinVol NMF} \label{minvol:sec:algoTITAN} 

As far as we know, all algorithms for MinVol NMF rely on two-block coordinate descent methods that update each block ($ W $ or $ H $) by using some outer  optimization algorithm to solve the subproblems formed by restricting the MinVol NMF problem to each block.
For example, the state-of-the-art method from~\cite{leplat19} uses Nesterov fast gradient method to update each factor matrix, one at a time. 

Our proposed algorithm for~\eqref{minvol:eq:nonexactminvol} will be based on the TITAN framework from~\cite{hien2023inertial}. TITAN is an inertial block majorization minimization framework for nonsmooth nonconvex optimization. It updates one block at a time while fixing the values of the other blocks, as previous MinVol NMF algorithms. 
In order to update a block, TITAN chooses a block surrogate function for the corresponding objective function (a.k.a.\ a majorizer), embeds an inertial term to this surrogate function and then minimizes the obtained inertial surrogate function. When a Lipschitz gradient surrogate is used, TITAN reduces to the Nesterov-type accelerated gradient descent step for each block of variables \cite[Section 4.2]{hien2023inertial}.
The difference of TITAN compared to previous MinVol NMF algorithms is threefold: 
\begin{enumerate} 

\item The inertial force (also known as the extrapolation, or momentum) is used between block updates. This is a crucial aspect that will make our proposed algorithm faster: when we start the update of a block of variables (here, $ W $ or $ H $), we can use the inertial force (using the previous iterate)  although the other blocks have been updated in the meantime.  
\item TITAN allows to update the surrogate after each update of $ W $ and $ H $, which was not possible with the algorithm from~\cite{leplat19} because it applied fast gradient from convex optimization on a fixed surrogate.  

\item  It has subsequential convergence guarantee, that is, every limit point of the generated sequence is a stationary point of Problem~\eqref{minvol:eq:nonexactminvol}. Note that the state-of-the-art algorithm from~\cite{leplat19} does not have convergence guarantees.
\end{enumerate} 

\noindent \textbf{Remark.} 
The block prox-linear (BPL) method from~\cite{Xu2017} can be used to solve~\eqref{minvol:eq:nonexactminvol} since the block functions in $ W \mapsto \frac12\|X- W  H \|^2_F$ and in $ H \mapsto \frac12\|X- W  H \|^2_F$ 
have Lipschitz continuous gradients. 
However, BPL applies extrapolation to the Lipschitz gradient surrogate of these block functions and requires to compute the proximal point of the regularizer $\frac{\lambda}{2}\logdet( W^\top W +\delta I)$, which does not have a closed form. 
In contrast, TITAN applies extrapolation to the surrogate function of $ W \mapsto f( W , H )$ with a surrogate function for the regularizer $\frac{\lambda}{2}\logdet( W^\top W +\delta I)$  (see \Cref{minvol:sec:surrogateW}). This allows TITAN  to have closed-form solutions for the subproblems, an acceleration effect, and convergence guarantee.

\subsubsection{Surrogate functions} \label{minvol:sec:surrogates}
An important step of TITAN is to define a surrogate function for each block of variables. 
These surrogate functions are upper approximation of the objective function at the current iterate. Denote 
\[f( W ,  H ) = \frac{1}{2}\left\|X- W  H \right\|^2_F+\frac{\lambda}{2}\logdet( W^\top W +\delta I)\] and suppose we are cyclically updating $( W , H )$. Let us denote $u_{W_k}( W )$ the surrogate function of $ W \mapsto f( W ,H_k) $ to update $W_k$, that is, 
\begin{equation}
	f( W ,H_k) \leq u_{W_k}( W ) 
	\; 
	\text{ for all } 
	\; 
	 W  \in \mathcal{X}_W , 
\end{equation} 
where $ u_{W_k}(W_k) =f(W_k,H_k)$ and $\mathcal{X}_W $ is the feasible domain of $  W  $. Similarly, let us denote $u_{H_k}( H )$ the surrogate function of $ H \mapsto f(W_{k+1}, H )$ to update $H_k$, that is 
\begin{equation}
	f(W_{k+1}, H )\leq u_{H_k}( H ) 
	\; 
	\text{ for all } 
	\; 
	 H  \in \mathcal{X}_H ,  
\end{equation}
where $u_{H_k}(H_k) =f(W_{k+1},H_k)$ and $\mathcal{X}_H $ is the feasible domain of $  H  $. \\

\paragraph{Surrogate function and update of  $ W $} 
\label{minvol:sec:surrogateW}

Denote $ A =  W^\top W +\delta I,~B_k = W_k^\top W_k+\delta I $ and $ P_k = (B_k)^{-1}$. Since $ \logdet $ is concave, its first-order Taylor expansion around $B_k$ leads to $\logdet(A)\leq \logdet(B_k) + \langle (B_k)^{-1},A-B_k \rangle$. Hence,
\begin{equation}
f( W ,H_k)\leq \widetilde{f}_{W_k}( W )
:=\frac{1}{2}\left\|X- W H_k\right\|^2_F + \frac{\lambda}{2}\langle P_k, W^\top W  \rangle + C_1,
\label{minvol:eq:majorW}
\end{equation}
where $C_1$ is a constant independent of $ W $. Note that the gradient of $ W \mapsto\widetilde{f}_{W_k}( W )$, being equal to \[( W H_k-X)H_k^\top+\lambda  W P_k,\] is $ L^k_W  $-Lipschitz continuous with $ L^k_W =\|H_kH_k^\top+\lambda P_k\| $. Hence, from~\eqref{minvol:eq:majorW} and the descent lemma (see \cite[Section 2.1]{Nesterov2018}), 
\begin{equation}
f( W ,H_k) \leq u_{W_k}( W ) := \langle\nabla \widetilde{f}_{W_k}(W_k), W \rangle + \frac{L^k_W }{2}\| W -W_k\|^2_F + C_2, 
\label{minvol:eq:surrogateforW}
\end{equation}
where $C_2$ is a constant depending on $W_k$. We use the surrogate $u_{W_k}( W ) $ defined in~\eqref{minvol:eq:surrogateforW} to update $W_k$. As TITAN recovers Nesterov-type acceleration for the update of each block of variables~\cite[Section 4.2]{hien2023inertial}, 
we have the following update for $ W $:   
\begin{equation}
\label{minvol:eq:updateW}
\begin{split}
 W_{k+1}   & = \argmin_{ W \in\mathcal{X}_W } \langle\nabla \widetilde{f}_{W_k}(\overbar{W_k}), W \rangle + \frac{L^k_W }{2}\| W -\overbar{W_k}\|^2_F, \\
            & = \left[ \overbar{W_k} + \frac{(X-\overbar{W_k}H_k)H_k^\top-\lambda\overbar{W_k}P}{L_W^k} \right]_{\Delta^{m\times r}},
\end{split}
\end{equation}
where $[.]_{\Delta^{m\times r}}$ performs column wise projections onto the unit simplex as in~\cite{condat2016} in order to satisfy the constraint on $ W $ in \eqref{minvol:eq:nonexactminvol}, and where $\overbar{W_k}$ is an extrapolated point, that is, the current point $W_k$ plus some momentum,
\begin{equation}
    \overbar{W_k} = W_k + \beta_{ W }^{k}(W_k-W_{k-1}),
\end{equation}
where the extrapolation parameter $\beta_{ W }^{k}$ is chosen as follows 
\begin{equation}
    \label{minvol:eq:betaW}
\beta_{ W }^{k} = \min\left(~ \dfrac{\alpha_k-1}{\alpha_{k+1}} ,0.9999\sqrt{\dfrac{L_W^{k-1}}{L_W^{k}}} ~\right),  
\end{equation}
$\alpha_0=1$, $\alpha_k=(1+\sqrt{1+4 \alpha_{k-1}^2})/2$. This choice of parameter satisfies the conditions to have a subsequential convergence of TITAN, see \Cref{minvol:sec:convergence}.

\paragraph{Surrogate function and update of  $ H $} \label{minvol:sec:surrogateH}
Since
\[
\nabla_H  f(W_{k+1}, H )=W_{k+1}^\top(W_{k+1} H -X),
\]
the gradient of $ f $ according to $  H  $ is $L_H^{k}$-Lipschitz continuous with $L_H^{k}=\|W_{k+1}^\top W_{k+1}\|$. Hence, we use the following Lipschitz gradient surrogate to update $H_k$: 
\begin{equation}
	u_{H_k}( H ) = \langle\nabla_H  f(W_{k+1},H_k), H \rangle + \frac{L^k_H }{2}\| H -H_k\|^2_F +C_3,
\label{minvol:eq:surrogateH}
\end{equation}
where $ C_3 $ is a constant depending on $ H_k $. We derive our update rule for $ H $ by minimizing the surrogate function from \Cref{minvol:eq:surrogateH} embedded with extrapolation,
\begin{equation}
	\label{minvol:eq:minsurrogateH}
\begin{split}
	H_{k+1}    & = \argmin_{ H \in\mathcal{X}_H} \langle\nabla_H  f(W_{k+1},\overbar{H_k}), H \rangle + \frac{L^k_H }{2}\| H -\overbar{H_k}\|^2_F, \\
	& = \left[\overbar{ H }_k+\frac{1}{L_H^{k}}W_{k+1}^\top(X-W_{k+1}\overbar{ H }_k)  \right]_+,
\end{split}
\end{equation}
where $[\,.\,]_+$ denotes the projector setting all negative values to zero, and $\overbar{H_k}$ is the extrapolated $H_k$: 
\begin{equation}
    \overbar{H_k}=H_k+\beta^{k}_H (H_k - H_{k-1}),
\end{equation}
 where, as for the update of $ W $, 
 \begin{equation}
     \beta_{ H }^{k} = \min\left(~ \dfrac{\alpha_k-1}{\alpha_{k+1}} ,0.9999\sqrt{ \dfrac{L_H^{k-1}}{L_H^{k}}} ~\right).
 \end{equation}

\subsubsection{Algorithm} \label{minvol:sec:algodesc}
Note that the update of $ W $ in~\eqref{minvol:eq:updateW} and $ H $ in~\eqref{minvol:eq:minsurrogateH} was described when the cyclic update rule is applied. Since  TITAN also allows the essentially cyclic rule~\cite[Section 5]{hien2023inertial}, we can update $ W $ several times before switching updating $ H $, and vice versa. Doing so allows to pre-compute some matrix operations before updating the factors. For instance, $XH^\top$ can be computed before updating $W$. The result can then be used several times during the update, which will save some computation time. Note that this pre-computing trick only works when there are no missing entries, due to the Hadamard product with $M$.
This leads to our proposed method TITANized MinVol, see \Cref{minvol:algo:ourmethod} for the pseudocode.
The stopping criteria in \cref{minvol:algo:stopcritW,minvol:algo:stopcritH} are the same as in \cite{leplat19}. The way $\lambda$ and $\delta$ are computed is also identical to~\cite{leplat19}. 
Let us mention that technically the main difference with \cite{leplat19} resides in how the extrapolation is embedded. In \cite{leplat19} the Nesterov sequence is restarted and evolves in each inner loop to solve each subproblem corresponding to each block. In our algorithm, the extrapolation parameter $\beta_W $ (and $\beta_H $) for updating each block $ W $ (and $ H $) is updated continuously without restarting.
It means we are accelerating the global convergence of the sequences rather than trying to accelerate the convergence for the subproblems. Moreover, TITAN allows to update the surrogate function at each step, while the algorithm from \cite{leplat19} can only update it before each subproblem is solved, as it relies on Nesterov's acceleration for convex optimization.

\begin{algorithm}[htbp!]
    \caption{TITANized MinVol}
    \label{minvol:algo:ourmethod}
    \DontPrintSemicolon
    \KwIn{$W_0,H_0,\lambda,\delta$}
    $\alpha_1=1$, $\alpha_2=1$, $W_{old}=W_0,H_{old}=H_0$, $L_H^{prev}=\|W_{0}^\top W_{0}\|$, $L_W^{prev}=\|H_{0} H^\top_{0}+\lambda (W_{0}^\top W_{0}+\delta I)^{-1}\|$\;
    \KwOut{$W,H$}
    \While{stopping criteria not satisfied}{
        \While{stopping criteria not satisfied\label{minvol:algo:stopcritW}}{
            $\alpha_{0}=\alpha_1, \alpha_1=(1+\sqrt{1+4\alpha_0^2})/2$\;
            $ P \leftarrow ( W^\top W +\delta I)^{-1} $\;
            $L_W  \leftarrow\| H  H^\top+\lambda P\| $\;
            $\beta_{ W }=\min\left(~(\alpha_0-1)/\alpha_{1},0.9999\sqrt{L_W^{prev}/L_W } ~\right) $\;
            $\overbar{ W }\leftarrow W +\beta_{ W }( W -W_{old}) $\;
            $W_{old}\leftarrow  W $\;
            $ W  \leftarrow \left[\overbar{ W }+\frac{(X H^\top-\overbar{ W }( H  H^\top+\lambda P))}{L_W }\right]_{\Delta^{m\times r}}$\;
            $ L_W^{prev} \leftarrow L_W $\;
        }
        $ L_H  \leftarrow \| W^\top W \| $\;
        \While{stopping criteria not satisfied\label{minvol:algo:stopcritH}}{ 
            $\alpha_{0}=\alpha_2, \alpha_2=(1+\sqrt{1+4\alpha_0^2})/2$\;
            $ \beta_{ H }=\min\left(~(\alpha_0-1)/\alpha_{2},0.9999\sqrt{L_H^{prev}/L_H } ~\right) $\;
            $ \overbar{ H } \leftarrow  H +\beta_{ H }( H -H_{old}) $\;
            $H_{old}\leftarrow  H $\;
            $  H  \leftarrow \left[\overbar{ H }+\frac{ W^\top(X- W \overbar{ H }) }{L_H } \right]_+ $\;
            $L_H^{prev} \leftarrow L_H $\;
        }
    }
    \end{algorithm}

\subsubsection{Convergence guarantee} \label{minvol:sec:convergence}

In order to have a convergence guarantee, TITAN requires the update of each block to satisfy the nearly sufficiently decreasing property (NSDP), see \cite[Section 2]{hien2023inertial}.  
By \cite[Section 4.2.1]{hien2023inertial}, the update for $ H $ of TITANized MinVol satisfies the NSDP condition since it uses a Lipschitz gradient surrogate for $ H \mapsto f( W , H )$ combined with the Nesterov-type extrapolation; and the bounds of the extrapolation parameters in the update of $ H $ are derived similarly as in \cite[Section 6.1]{hien2023inertial}. However, it is important noting that the update for $ W $ of TITANized MinVol does not directly use a Lipschitz gradient surrogate for $ W \mapsto f( W , H )$. We thus need to verify NSDP condition for  the update of $ W $ by another method that is presented in the following.  

The function $u_{W_k}( W )$ is a Lipschitz gradient surrogate of $\tilde f_{W_k}( W ) $, and we apply the Nesterov-type extrapolation to obtain the update in \eqref{minvol:eq:updateW}. Note that the feasible set of $ W $ is convex.  Hence, it follows from \cite[Remark 4.1]{hien2023inertial} that 
\begin{equation}
\label{minvol:eq:NSDP1}
    \tilde f_{W_k}(W_k) + \frac{L_{ W }^k (\beta_{ W }^k)^2}{2}\|W_k-W_{k-1}\|^2_F 
    \geq
    \tilde f_{W_k}(W_{k+1}) + \frac{L_{ W }^k}{2}\|W_{k+1}-W_k\|^2_F.
\end{equation}
Furthermore, we note that $ \tilde f_{W_k}(W_k)=f(W_k,H_k)$, and $ \tilde f_{W_k}(W_{k+1}) \geq f(W_{k+1},H_k)$. Therefore, from~\eqref{minvol:eq:NSDP1} we have 
\begin{equation}
    f(W_k,H_k) + \frac{L_{ W }^k (\beta_{ W }^k)^2}{2}\|W_k-W_{k-1}\|^2_F 
    \geq
    f(W_{k+1},H_k) + \frac{L_{ W }^k}{2}\|W_{k+1}-W_k\|^2_F,
\end{equation}
which is the required NSDP condition of TITAN. Consequently, the choice of $\beta_{ W }^k$ in~\eqref{minvol:eq:betaW}  satisfy the required condition to guarantee subsequential convergence~\cite[Proposition 3.1]{hien2023inertial}.  

On the other hand, we note that the error function $ W  \mapsto err_1( W ):= u_{W_k}( W  )-f( W ,H_k )$ is continuously differentiable and  $\nabla_{ W } err_1(W_k )=0$; similarly for the error function $ H  \mapsto err_2( H ):= u_{H_k}( H  )-f(W_{k+1}, H  )$. Hence, it follows from \cite[Lemma 2.3]{hien2023inertial} that the Assumption 2.2 in \cite{hien2023inertial} is satisfied. Applying \cite[Theorem 3.2]{hien2023inertial}, we conclude that every limit point of the generated sequence is a stationary point of Problem~\eqref{minvol:eq:nonexactminvol}. It is worth noting that as TITANized MinVol does not apply restarting step, \cite[Theorem 3.5]{hien2023inertial} for a global convergence is not applicable.

\subsection{Numerical Experiments} \label{minvol:sec:numexp}

In this section we compare TITANized MinVol to~\cite{leplat19}, an accelerated version of the method from~\cite{fu16} (for $p=2$), on two NMF applications: hyperspectral unmixing and document clustering, which are dense and sparse datasets, respectively. All tests are performed on MATLAB R2018a, on a PC with an Intel® Core™ i7 6700HQ and 24 GB RAM. The code is available on an online repository\footnote{\url{https://gitlab.com/vuthanho/titanized-minvol}}.

The datasets used are shown in \Cref{minvol:tab:datasets}. For each data set, each algorithm is launched with the same random initializations, for the same amount of wall-clock time. In order to derive some statistics, for both hyperspectral unmixing and document clustering, 20 random initializations are used (each entry of $ W $ and $ H $ are drawn from the uniform distribution in [0,1]).  
The wall-clock time used for each data set is adjusted manually, and corresponds to the maximum displayed value on the respective time axes in \Cref{minvol:fig:avg}; see also \Cref{minvol:tab:leadtime}.  
    

\begin{table}[thb!]
    \centering
    \normalsize
    \begin{tabular}{l||c|c|c}
        Data set     & $m$ & $n$ & $r$ \\ \hline
        Urban       & 162 & 94249 & 6   \\
        Indian Pine & 200 & 21025 & 16 \\
        Pavia Univ. & 103 & 207400 & 9 \\
        San Diego   & 158 & 160000 & 7 \\
        Terrain     & 166 & 153500 & 5 \\
        20 News     & 61188 & 7505 & 20 \\
        Sports      & 14870 & 8580 & 7 \\
        Reviews     & 18483 & 4069 & 5 \\
    \end{tabular}
    
    \caption{Datasets used in our experiments and their respective dimensions}
    \label{minvol:tab:datasets}
\end{table}

For display purposes, for each data set, we compare the average of the scaled objective functions according to time, that is, the average of  $(f( W , H )-e_{\min})/\|X\|_F$ where $e_{\min}$ is the minimum obtained error among the 20 different runs and among both methods. The results are presented in \Cref{minvol:fig:avg}. On both hyperspectral and document datasets, TITANized MinVol converges on average faster than \cite{leplat19} except for the San Diego data set (although TITANized MinVol converges initially faster). For most tested datasets, MinVol~\cite{leplat19} cannot reach the same error as TITANized MinVol within the allocated time. 

\begin{figure}[htbp!]
\begin{subfigure}{0.49\linewidth}
\captionsetup{skip=0pt}
\caption{Urban}
\tikzexternalexportnextfalse 
\begin{tikzpicture}
	\begin{semilogyaxis}[
		ymax=1e-4,
		grid=major,width = \textwidth,height=4cm,cycle list name=exotic,no markers,
		legend entries={TITANized MinVol,MinVol from~\cite{leplat19}},
		]
		\addplot+[line width = 2pt]table {titanizedminvol/results_hyper/mean_urban1.dat};
		\addplot+[line width = 2pt]table {titanizedminvol/results_hyper/mean_urban2.dat};
	\end{semilogyaxis}
\end{tikzpicture}
\end{subfigure}
\begin{subfigure}{0.49\linewidth}
\captionsetup{skip=0pt}
\caption{Indian Pines}
\begin{tikzpicture}
	\begin{semilogyaxis}[
		ymax=1e-4,
		grid=major,width = \textwidth,height=4cm,cycle list name=exotic,no markers,
		]
		\addplot+[line width = 2pt]table {titanizedminvol/results_hyper/mean_indian1.dat};
		\addplot+[line width = 2pt]table {titanizedminvol/results_hyper/mean_indian2.dat};
	\end{semilogyaxis}
\end{tikzpicture}
\end{subfigure}
\vspace{0.3cm}

\begin{subfigure}{0.49\linewidth}
\captionsetup{skip=0pt}
\caption{Pavia Uni}
\begin{tikzpicture}
	\begin{semilogyaxis}[
		ymax=1e-4,
		xtick={0,30,60,90},
		grid=major,width = \textwidth,height=4cm,cycle list name=exotic,no markers,
		]
		\addplot+[line width = 2pt]table {titanizedminvol/results_hyper/mean_paviau1.dat};
		\addplot+[line width = 2pt]table {titanizedminvol/results_hyper/mean_paviau2.dat};
	\end{semilogyaxis}
\end{tikzpicture}
\end{subfigure}
\begin{subfigure}{0.49\linewidth}
\captionsetup{skip=0pt}
\caption{San Diego}
\begin{tikzpicture}
	\begin{semilogyaxis}[
		ymax=1e-4,
		ymin=1e-5,
		xtick={0,40,80,120},
		grid=major,width = \textwidth,height=4cm,cycle list name=exotic,no markers,
		]
		\addplot+[line width = 2pt]table {titanizedminvol/results_hyper/mean_sandiego1.dat};
		\addplot+[line width = 2pt]table {titanizedminvol/results_hyper/mean_sandiego2.dat};
	\end{semilogyaxis}
\end{tikzpicture}
\end{subfigure}
\vspace{0.3cm}

\begin{subfigure}{0.49\linewidth}
\captionsetup{skip=0pt}
\caption{Terrain}
\begin{tikzpicture}
	\begin{semilogyaxis}[
		ymax=1e-4,
		grid=major,width = \textwidth,height=4cm,cycle list name=exotic,no markers,
		]
		\addplot+[line width = 2pt]table {titanizedminvol/results_hyper/mean_terrain1.dat};
		\addplot+[line width = 2pt]table {titanizedminvol/results_hyper/mean_terrain2.dat};
	\end{semilogyaxis}
\end{tikzpicture}
\end{subfigure}
\begin{subfigure}{0.49\linewidth}
\captionsetup{skip=0pt}
\caption{20News}
\begin{tikzpicture}
	\begin{semilogyaxis}[
		ymax=1e-3,
		grid=major,width = \textwidth,height=4cm,cycle list name=exotic,no markers,
		]
		\addplot+[line width = 2pt]table {titanizedminvol/results_doc/mean_20news1.dat};
		\addplot+[line width = 2pt]table {titanizedminvol/results_doc/mean_20news2.dat};
	\end{semilogyaxis}
\end{tikzpicture}
\end{subfigure}
\vspace{0.3cm}

\begin{subfigure}{0.49\linewidth}
\captionsetup{skip=0pt}
\caption{Reviews}
\begin{tikzpicture}
	\begin{semilogyaxis}[
		ymax=1e-3,
		xlabel={time (s)},
		grid=major,width = \textwidth,height=4cm,cycle list name=exotic,no markers,
		]
		\addplot+[line width = 2pt]table {titanizedminvol/results_doc/mean_reviews1.dat};
		\addplot+[line width = 2pt]table {titanizedminvol/results_doc/mean_reviews2.dat};
	\end{semilogyaxis}
\end{tikzpicture}
\end{subfigure}
\begin{subfigure}{0.49\linewidth}
\captionsetup{skip=0pt}
\caption{Sports}
\begin{tikzpicture}
	\begin{semilogyaxis}[
		ymax=1e-3,
		xlabel={time (s)},
		grid=major,width = \textwidth,height=4cm,cycle list name=exotic,no markers,
		]
		\addplot+[line width = 2pt]table {titanizedminvol/results_doc/mean_sports1.dat};
		\addplot+[line width = 2pt]table {titanizedminvol/results_doc/mean_sports2.dat};
	\end{semilogyaxis}
\end{tikzpicture}
\end{subfigure}
\caption[Evolution w.r.t.\ time of the error for different datasets]{Evolution w.r.t.\ time of the average of  $(f( W , H )-e_{\min})/\|X\|_F$ for the  different datasets.} 
\label{minvol:fig:avg}
\end{figure}
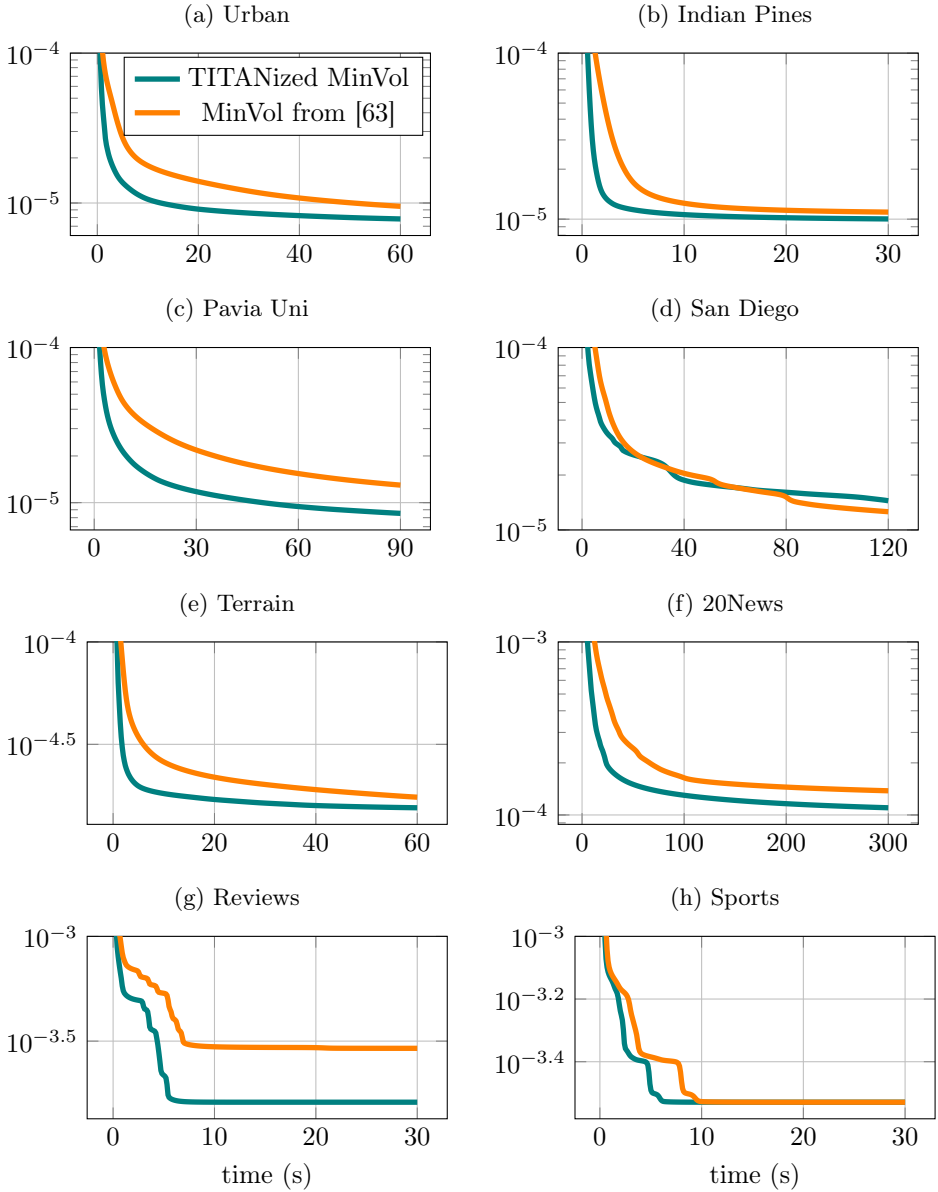

\begin{center}
\begin{table}[thb!] 
    \begin{center} 
    \normalsize
        \begin{tabular}{l||c|c|c} 
            Data set     & Our method's & wall-clock time 
            & 
            Saved \\
                        & lead time (s) & 
                for~\cite{leplat19}  &  wall-clock time \\ 
            \hline 
            Urban       &  $ 44$  & 60 & 73\% \\ 
            Indian Pines&  $ 25$  & 30 & 83\% \\
            Pavia Univ. &  $ 68$  & 90 & 76\% \\ 
            San Diego   &  NaN    & 120 & 0\% \\ 
            Terrain     &  $ 44$  & 60 & 73\% \\ 
            20News      &  $221$  & 300 & 74\% \\ 
            Reviews     &  $ 26$  & 30 & 80\% \\ 
            Sports      &  $ 15$  & 30 & 50\% \\ 
        \end{tabular}
    \end{center} 
\caption[TITANized MinVol's lead time]{TITANized MinVol's lead time over MinVol~\cite{leplat19} to obtain the same minimum error.}
\label{minvol:tab:leadtime}
\end{table} 
\end{center}

\begin{center}
\begin{table}[thb!]
    \centering
    \normalsize
    \begin{tabular}{c||c|c} 
        \multirow{2}{*}{Algorithm}  & \multicolumn{2}{c}{ranking} \\ \cline{2-3}
                                    & Hyperspectral unmixing    & Document clustering \\ \hline
        TITANized MinVol                  & $( 94, 6 ) $                & $( 55, 5 )$ \\ \hline
        MinVol \cite{leplat19}     & $(  6,94 )$                 & $(  5,55 )$ 
    \end{tabular}
    \caption[Ranking depending on the algorithm and the kind of data set]{Ranking among the different runs depending on the algorithm and the kind of data set}
    \label{minvol:tab:ranking}
\end{table}
\end{center}

The ranking among all the tests has been reported in~\Cref{minvol:tab:ranking}, where the $i$-th entry denotes how many times the corresponding algorithm was in the $i$-th place. We also reported in \Cref{minvol:tab:leadtime} TITANized MinVol's lead time over~\cite{leplat19} when the latter reaches its minimum error after the maximum allotted wall-clock time. The lead time is the time saved by TITANized MinVol to achieve the error of the method from~\cite{leplat19} using the maximum allotted wall-clock time. 
On average, TITANized MinVol is twice faster than \cite{leplat19}, with an average gain of wall-clock time above 50\%.

To summarize, our experimental results show that TITANized MinVol has a faster convergence speed and smaller final solutions than \cite{leplat19}. 

\section{Minimum-volume Nonnegative Matrix Completion}\label{minvol:sec:minvolnmc} 

Given a data matrix $X\in\R^{m\times n}$, there exist many scenarios where only a few entries of $X$ are observed, e.g., in recommender systems illustrated by the famous Netflix problem~\cite{koren2009matrix}. 
Recovering these missing entries is often tackled by assuming that the fully observed data follow a certain structure. 
If the structuring assumption is meaningful, by fitting a model that follows the same structure on the observed entries, it is possible to recover the missing entries; see, e.g., ~\cite{candes2010matrix,gross2011recovering,candes2012exact}. 
The low-rank assumption is meaningful in many scenarios~\cite{udell2019big}. If $X\in\R^{m\times n}$ is low-rank, we can express it as the product of two smaller matrices, $W\in\R^{m\times r}$ and $H\in\R^{r\times n}$, as $X=WH$ where $r\ll\min(m,n)$. 
Let us denote $\Omega\subseteq\{1,\dots,m\}\times\{1,\dots,n\}$ the set containing the indices of the observed entries in $X$. 
If the rank of $X$ is equal to $r$, we can look for $W\in\R^{m\times r}$ and $H\in\R^{r\times n}$ such that $X(i,j)=W(i,:)H(:,j)$ for all $(i,j)\in\Omega$. Then, for every missing entry at $(i,j)\in\overbar{\Omega}$, $X(i,j)$ can be estimated by computing $W(i,:)H(:,j)$. If $X$ is noisy and does not follow the low-rank assumption, it might still be relevant to approximate it through a low-rank structure, because low-rank matrix approximations can identify patterns in the data via the extraction of common features among data points.  

When the rank is unknown, a common tractable strategy is to minimize the nuclear norm, that is the sum of the singular values, of the estimation $\tilde{X}$ of $X$: 
\begin{equation*}
	\min_{\tilde{X}} \| \tilde{X} \|_*  
	\quad \text{ such that } \quad \P_\Omega(\tilde{X}) = \P_\Omega(X), 
\end{equation*}
where $\P_\Omega(Y)$ sets $Y(i,j)$ to zero if $(i,j)\notin\Omega$, or does not change it otherwise.

In this section, we consider the rank to be known, and our goal is not only to recover the missing entries in $X$, but also to recover the unique matrices $W$ and $H$ that generated the data $X=WH$. This could be useful in hyperspectral unmixing with missing data for instance, where the columns of $W$ are expected to be the spectral signatures of the underlying materials, and where the $j$-th column of $H$ contains the abundance in the $j$-th pixel of each extracted material. In this scenario, it is of course preferable to recover a unique set $(W,H)$. To perform this task, it is possible to first use a data completion algorithm, and then use a constrained matrix factorization algorithm to estimate the sought factors $W$ and $H$. Here, we focus on performing both tasks together, since estimating correctly $W$ and $H$ on $\Omega$ implies a correct recovery of the missing entries in $X = WH$.
We assume that the data and the factors are nonnegative, that is, $X\geq0$, $W\geq0$ and $H\geq0$, where $\geq$ is applied element wise. Hence, our goal is to perform NMF with missing data while recovering a unique decomposition. To do so, MinVol NMF is a relevant option, and its performances on matrix completion have never been explored before. In this chapter, we show that when correctly tuned, MinVol NMF performs well on the matrix completion task and is also able to retrieve the true underlying factors  using only a few observed entries. 

 
\subsection{Motivation}
\label{minvol:sec:NMC}

In this section, we justify the choice of the minimum-volume criterion for the task of nonnegative matrix completion. Matrix completion in general has been well studied, especially by the compressed sensing community~\cite{candes2012exact}. Among the techniques to perform matrix completion, the low-rank approach often arises, because the low-rank structure has been observed to be quite powerful in this setting, as it is able to identify hidden (linear) features in data. However, minimizing the rank of the estimation matrix while guaranteeing the equality constraints on the set of observed entries is NP-hard in general. 
A good convex relaxation that promotes low-rank structures is the nuclear norm minimization; see~\cite{recht2010guaranteed}. This is coming from the fact that the rank is the $\ell_0$ norm of the vector of the singular values, while the nuclear norm is the $\ell_1$ norm of this vector. Still, this requires to store the whole estimation $\tilde{X}$ of $X$, and it also becomes harder to impose additional structuring constraints. When the rank is known, we can fully exploit the low-rank structure by working with the low-rank factors $W$ and $H$ instead. It is then easier to add some structuring constraints on $W$ and $H$. Also, this allows one to deal with larger problems. 
Since 
$$
\|X\|_* = \min_{X=WH}\frac{1}{2}\left(\|W\|_F^2+\|H\|_F^2\right),
$$ 
\sloppy a good alternative to the nuclear norm regularization is then the regularizer $\frac{1}{2}\left(\|W\|_F^2+\|H\|_F^2\right)$~\cite{srebro2005maximum}. 
If the rank is unknown, an overestimated rank coupled with a proper penalization of $\frac{1}{2}\left(\|W\|_F^2+\|H\|_F^2\right)$ can yield state-of-the-art results. For example, in~\cite{rendle2022revisiting}, a properly tuned matrix factorization model using the above regularizer can outperform deep neural networks on recommendation systems. 
In~\cite{liu2017newtheory}, they showed that the sightly different regularizer $\|W\|_*+\frac{1}{2}\|H\|^2_F$ yields better results than $\frac{1}{2}\left(\|W\|_F^2+\|H\|_F^2\right)$, both with uniform or non-uniform samplings. Going back to our point of interest, it is interesting to observe that the MinVol regularizer provides more adaptability as a (non-convex) relaxation of the rank~\cite{leplat2019minimum}, since 
$\logdet(W^\top W+\delta I) = \sum_i\log(\sigma_i^2(W)+\delta).$ 
\begin{figure}[htb!]
\centering
\begin{tikzpicture}
	\begin{axis}[
		xmin=0, xmax=1.5, 
		ymin=0, ymax=1.25, 
		domain=0:1.5,  
		legend pos=south east,
		width=\textwidth,
		height=6cm,
		]
		\addplot    {x};
		\addlegendentry{$\ell_1$}
		\addplot    {(ln(x^2+0.1)-ln(0.1))/(ln(1+0.1)-ln(0.1))};
		\addlegendentry{$f_{10^{-1}}(x)$}
		\addplot    {(ln(x^2+1e-3)-ln(1e-3))/(ln(1+1e-3)-ln(1e-3))};
		\addlegendentry{$f_{10^{-3}}(x)$}
		\addplot    {(ln(x^2+1e-6)-ln(1e-6))/(ln(1+1e-6)-ln(1e-6))};
		\addlegendentry{$f_{10^{-6}}(x)$}
		\addplot    {(ln(x^2+1e-9)-ln(1e-9))/(ln(1+1e-9)-ln(1e-9))};
		\addlegendentry{$f_{10^{-9}}(x)$}
		\addplot[only marks] 	{min(x*1e9,1)};
		\addlegendentry{$\ell_0$}
	\end{axis}
\end{tikzpicture}
\caption[Approximation of the $\ell_0$ and $\ell_1$ norm via $f_\delta(x)$]{Function $f_\delta(x)=\frac{\ln(x^2+\delta)-\ln(\delta)}{\ln(1+\delta)-\ln(\delta)}$ for various values of $\delta$, along the $\ell_0$ and $\ell_1$ norm.}
\label{minvol:fig:minvolL0L1}
\end{figure}
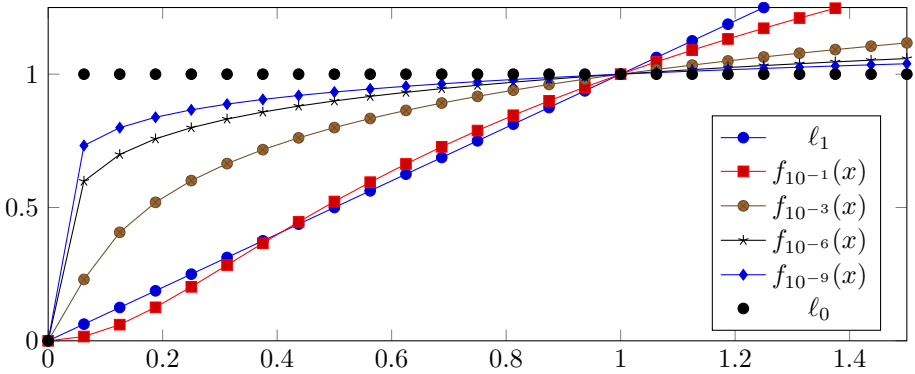
As it can be seen in~\cref{minvol:fig:minvolL0L1}, $\logdet(W^\top W + \delta I)$ approximates a range of behaviors between the $\ell_0$ and the $\ell_1$ norms. In particular, as $\delta$ goes to zero, $\logdet(W^\top W + \delta I)$ converges to the $\ell_0$ norm of the vector of singular values of $X$, up to a constant factor.  Hence the MinVol criterion $\logdet(W^\top W+\delta I)$ is clearly a good candidate as a regularizer for NMC. 

Let us now propose two models to tackle NMC. 
The first one is to adapt~\eqref{minvol:eq:nonexactminvol} to the NMC problem, which yields
\begin{mini}
	{\scriptstyle W,H}{\frac{1}{2}\|\P_\Omega(X-WH)\|_F^2+\frac{\lambda}{2}\logdet(\Wt W +\delta I)}{\label{minvol:eq:nonexactminvolcompletion}}{}
	\addConstraint{W\in\Delta^{m \times r}}{,~H\in\R_+^{r \times n}.}{}
\end{mini} 
\Cref{minvol:th:uniqueminvol} does not extend to the case where some values are missing. If the matrix completion is not unique, then it is impossible to guarantee a unique recovery of the matrices $W$ and $H$. Hence, a trivial way to adapt \Cref{minvol:th:uniqueminvol} to missing values is to add the condition that matrix completion under MinVol NMF should be unique. However, better conditions than standard low-rank matrix completion theory under which solving~\eqref{minvol:eq:nonexactminvolcompletion} recovers a unique completion are, up to now, unknown.

The second one introduces a new variant of MinVol NMF which is not simplex structured. Inspired by the regularizer $\|W\|_*+\frac{1}{2}\|H\|^2_F$ and motivated by the link between the behavior of the nuclear norm and the MinVol criterion, here we consider $\logdet(W^\top W+\delta I)+\|H\|^2_F$ as a regularizer. The resulting new MinVol NMF adapted for NMC is 
\begin{mini}
	{\scriptstyle W,H}{\frac{1}{2}\|\P_\Omega(X\text{$-$}WH)\|_F^2+\frac{\lambda}{2}\logdet(\Wt W\text{$+$}\delta I)+\frac{\gamma}{2}\|H\|_F^2 \label{minvol:eq:newminvol}}{}{}
	\addConstraint{W\in\R_+^{m \times r} ,~H\in\R_+^{r \times n},}
\end{mini}
where $\lambda\geq0$ and $\gamma\geq0$ balance the regularizers.
Note that neither $W$ nor $H$ is simplex structured. The scaling ambiguity coupled with the volume penalization is counter balanced by the penalization of $\|H\|_F^2$. In fact, in the exact case and when $\delta=0$, every row of $H$ has the same norm at optimality. Consider a feasible $(W,H)$ for~\eqref{minvol:eq:newminvol} such that $X=WH$ and let ${f(D)=\frac{\lambda}{2}\logdet(D^{-1}W^\top WD^{-1})+\frac{\gamma}{2}\|DH\|_F^2}$ where $D=\diag(d_1,\dots,d_r)$ is a positive diagonal matrix that can be seen as the scaling ambiguity between $W$ and $H$. Nullifying the gradient of $f$ relatively to each $d_i$, we have that $d_i^2=\frac{\lambda}{\gamma\|H(i,:)\|_F^2}$, meaning that at optimality $\|H(i,:)\|_F^2=\frac{\lambda}{\gamma}$ for all $i$.



\subsection{Algorithms}
\label{minvol:sec:algorithms}

In~\Cref{minvol:sec:exp}, we compare NMF, MinVol~\eqref{minvol:eq:nonexactminvolcompletion} and new MinVol~\eqref{minvol:eq:newminvol}. For a fair comparison, these models are fit with the same algorithmic scheme, adapted from~\cite{vuthanh2021inertial}, which is an extrapolated alternating block majorization-minimization method already described in \Cref{minvol:sec:titanizedminvol}. Our adaptation is described in~\Cref{minvol:alg:generictitan}, where $\P_{\Delta^{m\times r}}$ (respectively $\P_{\R_+^{m\times r}}$)  projects a matrix of size $m\times r$ onto $\Delta^{m\times r}$ (respectively $\R_+^{m\times r}$). See~\cite{condat2016fast} for the details on the projection onto $\Delta^{m\times r}$. Essentially, the updates for $W$ and $H$ are several projected gradient descent steps, performed with a step size equal to the inverse of the Lipschitz constant. The updates for each model and each factor, as well as the corresponding Lipschitz constant, are given in~\Cref{minvol:tab:updatesW} and~\Cref{minvol:tab:updatesH}. The used Lipschitz constants are deliberately not tight. Consider the MinVol NMF update of $H$ for instance. 
Let $M\in\{0,1\}^{m\times n}$ be such that $M(i,j)=1$ if $(i,j)\in\Omega$, $M(i,j) = 0$ otherwise. A tighter Lipschitz constant is $\max_j \left\|W^\top\diag(M(:,j))W\right\|$; see the paragraph in \Cref{bssmf:sec:proposedalgo} on the choice of Lipschitz constant for the details.
We deliberately keep $\|W^\top W\|$ as it is less costly to compute and the additional cost might not be worth it. Moreover, if at least one column of $X$ is fully observed, then ${\max_j \left\|W^\top\diag(M(:,j))W\right\| = \|W^\top W\|} = \|W\|^2$. 

\begin{algorithm}[htb!]
\caption{Main algorithm scheme}
\label{minvol:alg:generictitan}
\SetKwInOut{Input}{input}
\DontPrintSemicolon
\Input{data matrix $X \in \mathbb{R}^{m \times n}$, initial factors $W \in \R_+^{m \times r}$  and $H\in \R_+^{r \times n}$}
$\alpha_1=\alpha_2=1,~\Wold=W,~\Hold=H$\;
\While{stopping criteria not satisfied\label{minvol:line:whileouterloop}}{
	\While{stopping criteria not satisfied\label{minvol:line:whileW}}{
		$\alpha_0=\alpha_1,\quad\alpha_1=\frac{1}{2}(1+\sqrt{1+4\alpha_0^2})$\;
		$\Wextra=W+\frac{\alpha_0-1}{\alpha_1}(W-\Wold)$\;
		$\Wold=W$\;
		Update $W$ according to~\Cref{minvol:tab:updatesW}\;
	}
	\While{stopping criteria not satisfied\label{minvol:line:whileH}}{
		$\alpha_0=\alpha_2,\quad\alpha_2=\frac{1}{2}(1+\sqrt{1+4\alpha_0^2})$\;
		$\Hextra=H+\frac{\alpha_0-1}{\alpha_2}(H-\Hold)$\;
		$\Hold=H$\;
		Update $H$ according to~\Cref{minvol:tab:updatesH}\;
	}
}
\end{algorithm}
\begin{table}[htb!]
	\centering
	\begin{tabular}{c||c}
		& Update \\ \hline &  \\
		MinVol & $\P_{\Delta^{m\times r}}\left(\Wextra-\frac{1}{L}\nabla_W\right)$ \\ &\\ \hline &  \\
		new MinVol~/~NMF (with $\lambda=0$) &  $\P_{\R_+^{m\times r}}\left(\Wextra-\frac{1}{L}\nabla_W\right)$
	\end{tabular}
\caption[Updates for $W$ according to the model]{Updates for $W$ according to the model, where $P=(\Wextra^\top\Wextra+\delta I)^{-1}$, $L=\|HH^\top+\lambda P\|$ and ${\nabla_W=\P_\Omega\left(\Wextra H-X\right)H^\top+\lambda WP}$.}
\label{minvol:tab:updatesW}
\end{table}
\begin{table}[hbt!]
	\centering
	\begin{tabular}{c||c|c}
		& $L$ & Update \\ \hline & & \\
		MinVol~/~NMF & $\|W^\top W\|$ & $\P_{\R_+^{r\times n}}\left(\Hextra-\frac{1}{L}\nabla_H\right)$ \\ &&\\ \hline & & \\
		new MinVol & $\|W^\top W+\gamma I\|$ & $\P_{\R_+^{r\times n}}\left(\frac{L-\gamma}{L}\Hextra-\frac{1}{L}\nabla_H\right)$
	\end{tabular}
	\caption[Updates for $H$ according to the model]{Updates for $H$ according to the model, where $\nabla_H=W^\top\P_\Omega\left(W\Hextra-X\right)$.}
	\label{minvol:tab:updatesH}
\end{table}


\subsection{Experiments}
\label{minvol:sec:exp}

The goal of this section is to highlight the performance of the MinVol criterion for NMC. 
All experiments are run with Julia on a PC with an Intel(R) Core(TM) i7-9750H CPU @ 2.60GHz and 16GB RAM. All displayed measurements are averaged out of 20 runs. The code is available at \url{https://gitlab.com/vuthanho/minvol-nmc}. 
The compared models are NMF (to provide a baseline of a non-regularized model), MinVol~\eqref{minvol:eq:nonexactminvolcompletion}, and the new proposed MinVol~\eqref{minvol:eq:newminvol}. For all models, the stopping criteria of the \textbf{while} loop in~\cref{minvol:line:whileouterloop} is just a number of outer iterations equal to 50, and the stopping criteria of the two \textbf{while} loops in~\cref{minvol:line:whileW,minvol:line:whileH} is a number of inner iterations equal to 20.
All models are also initialized with the same warm start $(W_0,H_0)$, which is the output of 500 iterations of NMF where the columns of $W$ are simplex-structured. In this setting, all methods converge. For both MinVols, $\lambda$ is first set to $\frac{\max(\|\P_\Omega(X-W_0H_0)\|_F^2,10^{-6})}{|\logdet(W_0^\top W_0+\delta I)|}$. For the new proposed MinVol, $\gamma$ is first set to $0.01\frac{\max(\|\P_\Omega(X-W_0H_0)\|_F^2,10^{-6})}{\|H_0\|_F^2}$. On the hyperparameters $\lambda$ and $\gamma$, we adapt the automatic tuning method developed in~\cite{nguyen2024towards}. The automatic tuning does not introduce a significant additional cost and is triggered when the difference between the current and the last objective values divided by $\|\P_\Omega(X)\|_F^2$ is less than $10^{-3}$. 

\paragraph{First experiment: noiseless synthetic data} 
The first experiment focuses on both data completion and recovery of the exact generating factors in a noiseless case. For this experiment, for a given rank $r$, we randomly generate two factors $(W,H)=[0,1]^{200\times r}\times[0,1]^{r\times200}$ following a uniform distribution. Then, $80\%$ random values of $H$ are set to zeros. This is a reasonable assumption in real scenarios such as hyperspectral unmixing. For the explored range of ranks, this will provide almost surely a sufficiently scattered $H$. Then, we generate the full data matrix $X$ simply by computing $WH$. The average of the elements of $X$ is always set to 1, dividing $X$ by its average. Finally, we create the observed data $\tilde{X}$ by removing a certain percentage of the entries in $X$. We vary the rank from 5 to 10, and the percentage of missing values from 80\% to 90\%. We report the root-mean-squared error (RMSE) of the missing values according to~\cref{minvol:def:rmse} and the maximum subspace angle between the factor $W$ that took part in generating the data $X$ and its estimation $\tilde{W}$ according to~\cref{minvol:def:angle}.

\begin{definition}[RMSE]
\label{minvol:def:rmse}
	The RMSE on the unobserved set $\overbar{\Omega}$ is defined as follows \\$${\text{RMSE}(\tilde{X},WH)=\sqrt{\frac{1}{|\overbar{\Omega}|} \| \P_{\overbar{\Omega}} (\tilde{X}-WH)\|^2_F}.}$$
\end{definition}

\begin{definition}[Subspace angle \cite{bjorck1973numerical,wedin2006angles}] 
\label{minvol:def:angle}
	Let $USV$ and $\tilde{U}\tilde{S}\tilde{V}$ respectively be the singular value decomposition of $W$ and $\tilde{W}$. Then the angle between the two subspaces specified by the columns of $W$ and $\tilde{W}$ is defined as follows
	$$\text{Angle}(W,\tilde{W})=\arcsin(\min(1,\|\tilde{U} - UU^\top\tilde{U}\|_ 2)).$$ 
\end{definition}

\def\step{0.35}
\def\nwidth{11}
\def\nheight{6}
\def\maxrmse{1}

\begin{figure}[htbp!]
\centering
	\begin{subfigure}{0.45\textwidth}
	\centering
	\caption{NMF}\vspace{-0.35cm}
		\loaddata{minvolnmc/data/mean_rmse_NMF.txt}
\begin{tikzpicture}[
]
\coordinate (grid_start) at (-{floor(0.5*\nwidth)*\step} , -{floor(0.5*\nheight)*\step});
\coordinate (grid_end) at ({ceil(0.5*\nwidth)*\step} , {ceil(0.5*\nheight)*\step});
\foreach \k in {1, ..., 6} \draw let   
    \p{a}=(grid_start), 
    \p{b}=(grid_end) in
    ( \x{a} + 2*\k*\step cm - 1.5*\step cm,\y{a}) -- (\x{a} + 2*\k*\step cm - 1.5*\step cm,\y{a} -1pt) node[anchor=north] {$\pgfmathparse{80+(\k-1)*2}\pgfmathprintnumber{\pgfmathresult}$};
\node (xaxis) [yshift=-floor(0.5*(\nheight))*\step cm -0.6 cm] {missing values (\%)};
\foreach \k in {1, ..., 6} \draw let   
    \p{a}=(grid_start), 
    \p{b}=(grid_end) in
    (\x{a} , \y{a} + \k*\step cm -0.5*\step cm) -- (\x{a} -1pt, \y{a} + \k*\step cm -0.5*\step cm) node[anchor=east] {$\pgfmathparse{4+\k}\pgfmathprintnumber{\pgfmathresult}$};
 \node (yaxis) [xshift=-floor(0.5*(\nwidth))*\step cm-0.6 cm,rotate=90] {rank};

\foreach \row [count=\n] in 
\loadeddata{
  \foreach \col [evaluate=\col as \scaledcol using {min(100,100/\maxrmse*\col)}] [count=\m] in \row {
    \fill [black!\scaledcol!white] (-{floor(0.5*\nwidth)*\step + (\m-1)*\step},{ceil(0.5*\nheight)*\step - (\n-1)*\step}) rectangle ++(\step,-\step);
  }
}
\draw[step=\step,gray,very thin] (grid_start) grid (grid_end);
\node[anchor=west] (colormap) [ xshift = ceil(0.5*\nwidth)*\step cm ] {\pgfplotscolorbardrawstandalone[ 
   colormap={blackwhite}{color=(white) color=(black)},
   colorbar,
   point meta min=0,
   point meta max=1,
   colorbar style={
   	ytick style ={color = white},
       width = \step cm,
       height= \nheight*\step cm,
       ytick={0,0.25,...,1},
   	yticklabels={0,,0.5,,1},
   }]};   
\end{tikzpicture}
	\end{subfigure}
	\hspace{0.5cm}
	\begin{subfigure}{0.45\textwidth}
	\centering
	\caption{MinVol}\vspace{-0.35cm}
		\loaddata{minvolnmc/data/mean_rmse_Minvol.txt}
\begin{tikzpicture}[
]
\coordinate (grid_start) at (-{floor(0.5*\nwidth)*\step} , -{floor(0.5*\nheight)*\step});
\coordinate (grid_end) at ({ceil(0.5*\nwidth)*\step} , {ceil(0.5*\nheight)*\step});
\foreach \k in {1, ..., 6} \draw let   
    \p{a}=(grid_start), 
    \p{b}=(grid_end) in
    ( \x{a} + 2*\k*\step cm - 1.5*\step cm,\y{a}) -- (\x{a} + 2*\k*\step cm - 1.5*\step cm,\y{a} -1pt) node[anchor=north] {$\pgfmathparse{80+(\k-1)*2}\pgfmathprintnumber{\pgfmathresult}$};
\node (xaxis) [yshift=-floor(0.5*(\nheight))*\step cm -0.6 cm] {missing values (\%)};
\foreach \k in {1, ..., 6} \draw let   
    \p{a}=(grid_start), 
    \p{b}=(grid_end) in
    (\x{a} , \y{a} + \k*\step cm -0.5*\step cm) -- (\x{a} -1pt, \y{a} + \k*\step cm -0.5*\step cm) node[anchor=east] {$\pgfmathparse{4+\k}\pgfmathprintnumber{\pgfmathresult}$};
\node (yaxis) [xshift=-floor(0.5*(\nwidth))*\step cm-0.6 cm,rotate=90] {rank};

\foreach \row [count=\n] in 
\loadeddata{
  \foreach \col [evaluate=\col as \scaledcol using {min(100,100/\maxrmse*\col)}] [count=\m] in \row {
    \fill [black!\scaledcol!white] (-{floor(0.5*\nwidth)*\step + (\m-1)*\step},{ceil(0.5*\nheight)*\step - (\n-1)*\step}) rectangle ++(\step,-\step);
  }
}
\draw[step=\step,gray,very thin] (grid_start) grid (grid_end);
\node[anchor=west] (colormap) [ xshift = ceil(0.5*\nwidth)*\step cm ] {\pgfplotscolorbardrawstandalone[ 
   colormap={blackwhite}{color=(white) color=(black)},
   colorbar,
   point meta min=0,
   point meta max=1,
   colorbar style={
   	ytick style ={color = white},
       width = \step cm,
       height= \nheight*\step cm,
       ytick={0,0.25,...,1},
   	yticklabels={0,,0.5,,1},
   }]};
\end{tikzpicture}
	\end{subfigure}

	\begin{subfigure}{0.45\textwidth}
	\centering
	\caption{auto MinVol}\vspace{-0.35cm}
		\loaddata{minvolnmc/data/mean_rmse_auto_Minvol.txt}
\begin{tikzpicture}[
]
\coordinate (grid_start) at (-{floor(0.5*\nwidth)*\step} , -{floor(0.5*\nheight)*\step});
\coordinate (grid_end) at ({ceil(0.5*\nwidth)*\step} , {ceil(0.5*\nheight)*\step});
\foreach \k in {1, ..., 6} \draw let   
    \p{a}=(grid_start), 
    \p{b}=(grid_end) in
    ( \x{a} + 2*\k*\step cm - 1.5*\step cm,\y{a}) -- (\x{a} + 2*\k*\step cm - 1.5*\step cm,\y{a} -1pt) node[anchor=north] {$\pgfmathparse{80+(\k-1)*2}\pgfmathprintnumber{\pgfmathresult}$};
\node (xaxis) [yshift=-floor(0.5*(\nheight))*\step cm -0.6 cm] {missing values (\%)};
\foreach \k in {1, ..., 6} \draw let   
    \p{a}=(grid_start), 
    \p{b}=(grid_end) in
    (\x{a} , \y{a} + \k*\step cm -0.5*\step cm) -- (\x{a} -1pt, \y{a} + \k*\step cm -0.5*\step cm) node[anchor=east] {$\pgfmathparse{4+\k}\pgfmathprintnumber{\pgfmathresult}$};
\node (yaxis) [xshift=-floor(0.5*(\nwidth))*\step cm-0.6 cm,rotate=90] {rank};

\foreach \row [count=\n] in 
\loadeddata{
  \foreach \col [evaluate=\col as \scaledcol using {min(100,100/\maxrmse*\col)}] [count=\m] in \row {
    \fill [black!\scaledcol!white] (-{floor(0.5*\nwidth)*\step + (\m-1)*\step},{ceil(0.5*\nheight)*\step - (\n-1)*\step}) rectangle ++(\step,-\step);
  }
}
\draw[step=\step,gray,very thin] (grid_start) grid (grid_end);
\node[anchor=west] (colormap) [ xshift = ceil(0.5*\nwidth)*\step cm ] {\pgfplotscolorbardrawstandalone[ 
   colormap={blackwhite}{color=(white) color=(black)},
   colorbar,
   point meta min=0,
   point meta max=1,
   colorbar style={
   	ytick style ={color = white},
       width = \step cm,
       height= \nheight*\step cm,
       ytick={0,0.25,...,1},
   	yticklabels={0,,0.5,,1},
   }]};
\end{tikzpicture}
	\end{subfigure}
	\hspace{0.5cm}
	\begin{subfigure}{0.45\textwidth}
	\centering
	\caption{new MinVol}\vspace{-0.35cm}
		\loaddata{minvolnmc/data/mean_rmse_auto_Minvol_penFro.txt}
\begin{tikzpicture}[
]
\coordinate (grid_start) at (-{floor(0.5*\nwidth)*\step} , -{floor(0.5*\nheight)*\step});
\coordinate (grid_end) at ({ceil(0.5*\nwidth)*\step} , {ceil(0.5*\nheight)*\step});
\foreach \k in {1, ..., 6} \draw let   
    \p{a}=(grid_start), 
    \p{b}=(grid_end) in
    ( \x{a} + 2*\k*\step cm - 1.5*\step cm,\y{a}) -- (\x{a} + 2*\k*\step cm - 1.5*\step cm,\y{a} -1pt) node[anchor=north] {$\pgfmathparse{80+(\k-1)*2}\pgfmathprintnumber{\pgfmathresult}$};
\node (xaxis) [yshift=-floor(0.5*(\nheight))*\step cm -0.6 cm] {missing values (\%)};
\foreach \k in {1, ..., 6} \draw let   
    \p{a}=(grid_start), 
    \p{b}=(grid_end) in
    (\x{a} , \y{a} + \k*\step cm -0.5*\step cm) -- (\x{a} -1pt, \y{a} + \k*\step cm -0.5*\step cm) node[anchor=east] {$\pgfmathparse{4+\k}\pgfmathprintnumber{\pgfmathresult}$};
\node (yaxis) [xshift=-floor(0.5*(\nwidth))*\step cm-0.6 cm,rotate=90] {rank};

\foreach \row [count=\n] in 
\loadeddata{
  \foreach \col [evaluate=\col as \scaledcol using {min(100,100/\maxrmse*\col)}] [count=\m] in \row {
    \fill [black!\scaledcol!white] (-{floor(0.5*\nwidth)*\step + (\m-1)*\step},{ceil(0.5*\nheight)*\step - (\n-1)*\step}) rectangle ++(\step,-\step);
  }
}
\draw[step=\step,gray,very thin] (grid_start) grid (grid_end);
\node[anchor=west] (colormap) [ xshift = ceil(0.5*\nwidth)*\step cm ] {\pgfplotscolorbardrawstandalone[ 
   colormap={blackwhite}{color=(white) color=(black)},
   colorbar,
   point meta min=0,
   point meta max=1,
   colorbar style={
   	ytick style ={color = white},
       width = \step cm,
       height= \nheight*\step cm,
       ytick={0,0.25,...,1},
   	yticklabels={0,,0.5,,1},
   }]}; 
\end{tikzpicture}
	\end{subfigure}
\caption[Average RMSE according to the rank $r$ and to the percentage of missing values]{Average RMSE according to the rank $r$ and to the percentage of missing values over 20 runs.}
\label{minvol:fig:noiseless_rmse}
\end{figure}

\begin{figure}[hbp!]
	\centering
    \begin{subfigure}{0.45\textwidth}
	\centering
    \caption{NMF}\vspace{-0.35cm}
        \loaddata{minvolnmc/data/mean_angle_NMF.txt}
\begin{tikzpicture}[
]
\coordinate (grid_start) at (-{floor(0.5*\nwidth)*\step} , -{floor(0.5*\nheight)*\step});
\coordinate (grid_end) at ({ceil(0.5*\nwidth)*\step} , {ceil(0.5*\nheight)*\step});
\foreach \k in {1, ..., 6} \draw let   
    \p{a}=(grid_start), 
    \p{b}=(grid_end) in
    ( \x{a} + 2*\k*\step cm - 1.5*\step cm,\y{a}) -- (\x{a} + 2*\k*\step cm - 1.5*\step cm,\y{a} -1pt) node[anchor=north] {$\pgfmathparse{80+(\k-1)*2}\pgfmathprintnumber{\pgfmathresult}$};
\node (xaxis) [yshift=-floor(0.5*(\nheight))*\step cm -0.6 cm] {missing values (\%)};
\foreach \k in {1, ..., 6} \draw let   
    \p{a}=(grid_start), 
    \p{b}=(grid_end) in
    (\x{a} , \y{a} + \k*\step cm -0.5*\step cm) -- (\x{a} -1pt, \y{a} + \k*\step cm -0.5*\step cm) node[anchor=east] {$\pgfmathparse{4+\k}\pgfmathprintnumber{\pgfmathresult}$};
 \node (yaxis) [xshift=-floor(0.5*(\nwidth))*\step cm-0.6 cm,rotate=90] {rank};

\foreach \row [count=\n] in 
\loadeddata{
  \foreach \col [evaluate=\col as \scaledcol using {100/90*\col}] [count=\m] in \row {
    \fill [black!\scaledcol!white] (-{floor(0.5*\nwidth)*\step + (\m-1)*\step},{ceil(0.5*\nheight)*\step - (\n-1)*\step}) rectangle ++(\step,-\step);
  }
}
\draw[step=\step,gray,very thin] (grid_start) grid (grid_end);
\node[anchor=west] (colormap) [ xshift = ceil(0.5*\nwidth)*\step cm ] {\pgfplotscolorbardrawstandalone[ 
	colormap={blackwhite}{color=(white) color=(black)},
	colorbar,
	point meta min=0,
	point meta max=90,
	colorbar style={
		ytick style ={color = white},
		width = \step cm,
		height= \nheight*\step cm,
		ytick={0,30,60,90}}]};   
\end{tikzpicture}
    \end{subfigure}
	\hspace{0.5cm}
    \begin{subfigure}{0.45\textwidth}
	\centering
    \caption{MinVol}\vspace{-0.35cm}
        \loaddata{minvolnmc/data/mean_angle_Minvol.txt}
\begin{tikzpicture}[
]
\coordinate (grid_start) at (-{floor(0.5*\nwidth)*\step} , -{floor(0.5*\nheight)*\step});
\coordinate (grid_end) at ({ceil(0.5*\nwidth)*\step} , {ceil(0.5*\nheight)*\step});
\foreach \k in {1, ..., 6} \draw let   
    \p{a}=(grid_start), 
    \p{b}=(grid_end) in
    ( \x{a} + 2*\k*\step cm - 1.5*\step cm,\y{a}) -- (\x{a} + 2*\k*\step cm - 1.5*\step cm,\y{a} -1pt) node[anchor=north] {$\pgfmathparse{80+(\k-1)*2}\pgfmathprintnumber{\pgfmathresult}$};
\node (xaxis) [yshift=-floor(0.5*(\nheight))*\step cm -0.6 cm] {missing values (\%)};
\foreach \k in {1, ..., 6} \draw let   
    \p{a}=(grid_start), 
    \p{b}=(grid_end) in
    (\x{a} , \y{a} + \k*\step cm -0.5*\step cm) -- (\x{a} -1pt, \y{a} + \k*\step cm -0.5*\step cm) node[anchor=east] {$\pgfmathparse{4+\k}\pgfmathprintnumber{\pgfmathresult}$};
\node (yaxis) [xshift=-floor(0.5*(\nwidth))*\step cm-0.6 cm,rotate=90] {rank};

\foreach \row [count=\n] in 
\loadeddata{
  \foreach \col [evaluate=\col as \scaledcol using {100/90*\col}] [count=\m] in \row {
    \fill [black!\scaledcol!white] (-{floor(0.5*\nwidth)*\step + (\m-1)*\step},{ceil(0.5*\nheight)*\step - (\n-1)*\step}) rectangle ++(\step,-\step);
  }
}
\draw[step=\step,gray,very thin] (grid_start) grid (grid_end);
\node[anchor=west] (colormap) [ xshift = ceil(0.5*\nwidth)*\step cm ] {\pgfplotscolorbardrawstandalone[ 
	colormap={blackwhite}{color=(white) color=(black)},
	colorbar,
	point meta min=0,
	point meta max=90,
	colorbar style={
		ytick style ={color = white},
		width = \step cm,
		height= \nheight*\step cm,
		ytick={0,30,60,90}}]};   
\end{tikzpicture}
    \end{subfigure}

    \begin{subfigure}{0.45\textwidth}
	\centering
    \caption{auto MinVol}\vspace{-0.35cm}
        \loaddata{minvolnmc/data/mean_angle_auto_Minvol.txt}
\begin{tikzpicture}[
]
\coordinate (grid_start) at (-{floor(0.5*\nwidth)*\step} , -{floor(0.5*\nheight)*\step});
\coordinate (grid_end) at ({ceil(0.5*\nwidth)*\step} , {ceil(0.5*\nheight)*\step});
\foreach \k in {1, ..., 6} \draw let   
    \p{a}=(grid_start), 
    \p{b}=(grid_end) in
    ( \x{a} + 2*\k*\step cm - 1.5*\step cm,\y{a}) -- (\x{a} + 2*\k*\step cm - 1.5*\step cm,\y{a} -1pt) node[anchor=north] {$\pgfmathparse{80+(\k-1)*2}\pgfmathprintnumber{\pgfmathresult}$};
\node (xaxis) [yshift=-floor(0.5*(\nheight))*\step cm -0.6 cm] {missieng values (\%)};
\foreach \k in {1, ..., 6} \draw let   
    \p{a}=(grid_start), 
    \p{b}=(grid_end) in
    (\x{a} , \y{a} + \k*\step cm -0.5*\step cm) -- (\x{a} -1pt, \y{a} + \k*\step cm -0.5*\step cm) node[anchor=east] {$\pgfmathparse{4+\k}\pgfmathprintnumber{\pgfmathresult}$};
\node (yaxis) [xshift=-floor(0.5*(\nwidth))*\step cm-0.6 cm,rotate=90] {rank};

\foreach \row [count=\n] in 
\loadeddata{
  \foreach \col [evaluate=\col as \scaledcol using {100/90*\col}] [count=\m] in \row {
    \fill [black!\scaledcol!white] (-{floor(0.5*\nwidth)*\step + (\m-1)*\step},{ceil(0.5*\nheight)*\step - (\n-1)*\step}) rectangle ++(\step,-\step);
  }
}
\draw[step=\step,gray,very thin] (grid_start) grid (grid_end);
\node[anchor=west] (colormap) [ xshift = ceil(0.5*\nwidth)*\step cm ] {\pgfplotscolorbardrawstandalone[ 
   colormap={blackwhite}{color=(white) color=(black)},
   colorbar,
   point meta min=0,
   point meta max=90,
   colorbar style={
   	ytick style ={color = white},
       width = \step cm,
       height= \nheight*\step cm,
       ytick={0,30,60,90}}]};  
\end{tikzpicture}
    \end{subfigure}
	\hspace{0.5cm}
    \begin{subfigure}{0.45\textwidth}
	\centering
    \caption{new MinVol}\vspace{-0.35cm}
        \loaddata{minvolnmc/data/mean_angle_auto_Minvol_penFro.txt}
\begin{tikzpicture}[
]
\coordinate (grid_start) at (-{floor(0.5*\nwidth)*\step} , -{floor(0.5*\nheight)*\step});
\coordinate (grid_end) at ({ceil(0.5*\nwidth)*\step} , {ceil(0.5*\nheight)*\step});
\foreach \k in {1, ..., 6} \draw let   
    \p{a}=(grid_start), 
    \p{b}=(grid_end) in
    ( \x{a} + 2*\k*\step cm - 1.5*\step cm,\y{a}) -- (\x{a} + 2*\k*\step cm - 1.5*\step cm,\y{a} -1pt) node[anchor=north] {$\pgfmathparse{80+(\k-1)*2}\pgfmathprintnumber{\pgfmathresult}$};
\node (xaxis) [yshift=-floor(0.5*(\nheight))*\step cm -0.6 cm] {missing values (\%)};
\foreach \k in {1, ..., 6} \draw let   
    \p{a}=(grid_start), 
    \p{b}=(grid_end) in
    (\x{a} , \y{a} + \k*\step cm -0.5*\step cm) -- (\x{a} -1pt, \y{a} + \k*\step cm -0.5*\step cm) node[anchor=east] {$\pgfmathparse{4+\k}\pgfmathprintnumber{\pgfmathresult}$};
\node (yaxis) [xshift=-floor(0.5*(\nwidth))*\step cm-0.6 cm,rotate=90] {rank};

\foreach \row [count=\n] in 
\loadeddata{
  \foreach \col [evaluate=\col as \scaledcol using {100/90*\col}] [count=\m] in \row {
    \fill [black!\scaledcol!white] (-{floor(0.5*\nwidth)*\step + (\m-1)*\step},{ceil(0.5*\nheight)*\step - (\n-1)*\step}) rectangle ++(\step,-\step);
  }
}
\draw[step=\step,gray,very thin] (grid_start) grid (grid_end);
\node[anchor=west] (colormap) [ xshift = ceil(0.5*\nwidth)*\step cm ] {\pgfplotscolorbardrawstandalone[ 
	colormap={blackwhite}{color=(white) color=(black)},
	colorbar,
	point meta min=0,
	point meta max=90,
	colorbar style={
		ytick style ={color = white},
		width = \step cm,
		height= \nheight*\step cm,
		ytick={0,30,60,90}}]};    
\end{tikzpicture}
    \end{subfigure}
\caption[Average angle according to the rank $r$ and to the percentage of missing values]{Average angle according to the rank $r$ and to the percentage of missing values over 20 runs.}
\label{minvol:fig:noiseless_angle}
\end{figure}
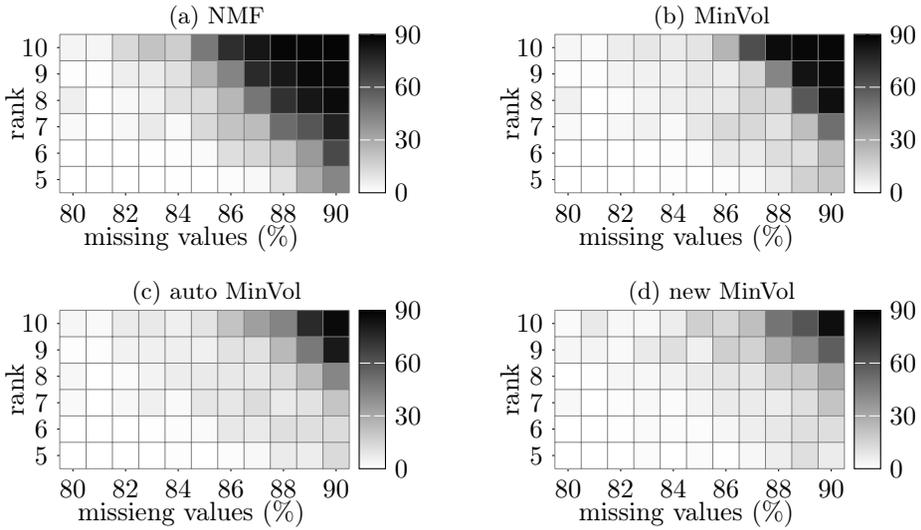

The RMSEs are reported in~\cref{minvol:fig:noiseless_rmse} and the subspace angles in~\cref{minvol:fig:noiseless_angle}. MinVol NMF coupled with the proposed auto-tuning proposed in~\cite{nguyen2024towards} clearly outperforms the vanilla MinVol NMF with a fixed $\lambda$. The auto-tuned MinVol NMF is itself outperformed by our new proposed variant of MinVol NMF. For $90\%$ missing values and a rank equal to 10 for instance, the average RMSE of the auto-tuned Minvol is 0.52 while it is 0.41 for the new MinVol.

\paragraph{Second experiment: noisy synthetic data} We keep the same settings as in the first experiment, while fixing the rank to 10, and adding some uniformly distributed noise. The noise level corresponds to the RMSE between the clean data and the noisy data. We vary the noise level from 0 to 1 and the percentage of missing values from $80\%$ to $90\%$. We report the RMSE in~\cref{minvol:fig:noisy_rmse}. It is not necessary to report the subspace angle since it is degrading too fast. Perfect matrix completion is a necessary condition to retrieve a low subspace angle, which is already not possible starting from a noise level equal to 0.2. Results in~\cref{minvol:fig:noisy_rmse} show that our proposed variant of MinVol NMF is more consistent relatively to the percentage of missing values and more precise than vanilla MinVol NMF in the presence of noise.

\begin{figure}[htbp!]
	\centering
	\begin{subfigure}{0.45\textwidth}
		\centering
		\caption{NMF}\vspace{-0.35cm}
		\loaddata{minvolnmc/data/noisy_mean_rmse_NMF.txt}
\begin{tikzpicture}[
]
\coordinate (grid_start) at (-{floor(0.5*\nwidth)*\step} , -{floor(0.5*\nheight)*\step});
\coordinate (grid_end) at ({ceil(0.5*\nwidth)*\step} , {ceil(0.5*\nheight)*\step});
\foreach \k in {1, ..., 6} \draw let   
    \p{a}=(grid_start), 
    \p{b}=(grid_end) in
    ( \x{a} + 2*\k*\step cm - 1.5*\step cm,\y{a}) -- (\x{a} + 2*\k*\step cm - 1.5*\step cm,\y{a} -1pt) node[anchor=north] {$\pgfmathparse{80+(\k-1)*2}\pgfmathprintnumber{\pgfmathresult}$};
\node (xaxis) [yshift=-floor(0.5*(\nheight))*\step cm -0.6 cm] {missing values (\%)};
\foreach \k in {1, ..., 6} \draw let   
    \p{a}=(grid_start), 
    \p{b}=(grid_end) in
    (\x{a} , \y{a} + \k*\step cm -0.5*\step cm) -- (\x{a} -1pt, \y{a} + \k*\step cm -0.5*\step cm) node[anchor=east] {$\pgfmathparse{0+(\k-1)*0.2}\pgfmathprintnumber{\pgfmathresult}$};
\node (yaxis) [xshift=-floor(0.5*(\nwidth))*\step cm-0.8 cm,rotate=90] {noise level};

\foreach \row [count=\n] in 
\loadeddata{
  \foreach \col [evaluate=\col as \scaledcol using {min(100,100/\maxrmse*\col)}] [count=\m] in \row {
    \fill [black!\scaledcol!white] (-{floor(0.5*\nwidth)*\step + (\m-1)*\step},{ceil(0.5*\nheight)*\step - (\n-1)*\step}) rectangle ++(\step,-\step);
  }
}
\draw[step=\step,gray,very thin] (grid_start) grid (grid_end);
\node[anchor=west] (colormap) [ xshift = ceil(0.5*\nwidth)*\step cm ] {\pgfplotscolorbardrawstandalone[ 
	colormap={blackwhite}{color=(white) color=(black)},
	colorbar,
	point meta min=0,
	point meta max=1,
	colorbar style={
		ytick style ={color = white},
		width = \step cm,
		height= \nheight*\step cm,
		ytick={0,0.25,...,1},
		yticklabels={0,,0.5,,$\geq$1},
	}]};  
\end{tikzpicture}
	\end{subfigure}
	\hspace{0.5cm}
	\begin{subfigure}{0.45\textwidth}
		\centering
		\caption{MinVol}\vspace{-0.35cm}
		\loaddata{minvolnmc/data/noisy_mean_rmse_Minvol.txt}
\begin{tikzpicture}[
]
\coordinate (grid_start) at (-{floor(0.5*\nwidth)*\step} , -{floor(0.5*\nheight)*\step});
\coordinate (grid_end) at ({ceil(0.5*\nwidth)*\step} , {ceil(0.5*\nheight)*\step});
\foreach \k in {1, ..., 6} \draw let   
    \p{a}=(grid_start), 
    \p{b}=(grid_end) in
    ( \x{a} + 2*\k*\step cm - 1.5*\step cm,\y{a}) -- (\x{a} + 2*\k*\step cm - 1.5*\step cm,\y{a} -1pt) node[anchor=north] {$\pgfmathparse{80+(\k-1)*2}\pgfmathprintnumber{\pgfmathresult}$};
\node (xaxis) [yshift=-floor(0.5*(\nheight))*\step cm -0.6 cm] {missing values (\%)};
\foreach \k in {1, ..., 6} \draw let   
    \p{a}=(grid_start), 
    \p{b}=(grid_end) in
    (\x{a} , \y{a} + \k*\step cm -0.5*\step cm) -- (\x{a} -1pt, \y{a} + \k*\step cm -0.5*\step cm) node[anchor=east] {$\pgfmathparse{0+(\k-1)*0.2}\pgfmathprintnumber{\pgfmathresult}$};
    \node (yaxis) [xshift=-floor(0.5*(\nwidth))*\step cm-0.8 cm,rotate=90] {noise level};

\foreach \row [count=\n] in 
\loadeddata{
  \foreach \col [evaluate=\col as \scaledcol using {min(100,100/\maxrmse*\col)}] [count=\m] in \row {
    \fill [black!\scaledcol!white] (-{floor(0.5*\nwidth)*\step + (\m-1)*\step},{ceil(0.5*\nheight)*\step - (\n-1)*\step}) rectangle ++(\step,-\step);
  }
}
\draw[step=\step,gray,very thin] (grid_start) grid (grid_end);
\node[anchor=west] (colormap) [ xshift = ceil(0.5*\nwidth)*\step cm ] {\pgfplotscolorbardrawstandalone[ 
	colormap={blackwhite}{color=(white) color=(black)},
	colorbar,
	point meta min=0,
	point meta max=1,
	colorbar style={
		ytick style ={color = white},
		width = \step cm,
		height= \nheight*\step cm,
		ytick={0,0.25,...,1},
		yticklabels={0,,0.5,,$\geq$1},
	}]};  
\end{tikzpicture}
	\end{subfigure}

	\begin{subfigure}{0.45\textwidth}
		\centering
		\caption{auto MinVol}\vspace{-0.35cm}
		\loaddata{minvolnmc/data/noisy_mean_rmse_auto_Minvol.txt}
\begin{tikzpicture}[
]
\coordinate (grid_start) at (-{floor(0.5*\nwidth)*\step} , -{floor(0.5*\nheight)*\step});
\coordinate (grid_end) at ({ceil(0.5*\nwidth)*\step} , {ceil(0.5*\nheight)*\step});
\foreach \k in {1, ..., 6} \draw let   
    \p{a}=(grid_start), 
    \p{b}=(grid_end) in
    ( \x{a} + 2*\k*\step cm - 1.5*\step cm,\y{a}) -- (\x{a} + 2*\k*\step cm - 1.5*\step cm,\y{a} -1pt) node[anchor=north] {$\pgfmathparse{80+(\k-1)*2}\pgfmathprintnumber{\pgfmathresult}$};
\node (xaxis) [yshift=-floor(0.5*(\nheight))*\step cm -0.6 cm] {missing values (\%)};
\foreach \k in {1, ..., 6} \draw let   
    \p{a}=(grid_start), 
    \p{b}=(grid_end) in
    (\x{a} , \y{a} + \k*\step cm -0.5*\step cm) -- (\x{a} -1pt, \y{a} + \k*\step cm -0.5*\step cm) node[anchor=east] {$\pgfmathparse{0+(\k-1)*0.2}\pgfmathprintnumber{\pgfmathresult}$};
    \node (yaxis) [xshift=-floor(0.5*(\nwidth))*\step cm-0.8 cm,rotate=90] {noise level};

\foreach \row [count=\n] in 
\loadeddata{
  \foreach \col [evaluate=\col as \scaledcol using {min(100,100/\maxrmse*\col)}] [count=\m] in \row {
    \fill [black!\scaledcol!white] (-{floor(0.5*\nwidth)*\step + (\m-1)*\step},{ceil(0.5*\nheight)*\step - (\n-1)*\step}) rectangle ++(\step,-\step);
  }
}
\draw[step=\step,gray,very thin] (grid_start) grid (grid_end);
\node[anchor=west] (colormap) [ xshift = ceil(0.5*\nwidth)*\step cm ] {\pgfplotscolorbardrawstandalone[ 
   colormap={blackwhite}{color=(white) color=(black)},
   colorbar,
   point meta min=0,
   point meta max=1,
   colorbar style={
   	ytick style ={color = white},
       width = \step cm,
       height= \nheight*\step cm,
       ytick={0,0.25,...,1},
   	yticklabels={0,,0.5,,$\geq$1},
   }]};  
\end{tikzpicture}
	\end{subfigure}
	\hspace{0.5cm}
	\begin{subfigure}{0.45\textwidth}
        \centering
        \caption{new MinVol}\vspace{-0.35cm}
        \loaddata{minvolnmc/data/noisy_mean_rmse_auto_Minvol_penFro.txt}
\begin{tikzpicture}[
]
\coordinate (grid_start) at (-{floor(0.5*\nwidth)*\step} , -{floor(0.5*\nheight)*\step});
\coordinate (grid_end) at ({ceil(0.5*\nwidth)*\step} , {ceil(0.5*\nheight)*\step});
\foreach \k in {1, ..., 6} \draw let   
    \p{a}=(grid_start), 
    \p{b}=(grid_end) in
    ( \x{a} + 2*\k*\step cm - 1.5*\step cm,\y{a}) -- (\x{a} + 2*\k*\step cm - 1.5*\step cm,\y{a} -1pt) node[anchor=north] {$\pgfmathparse{80+(\k-1)*2}\pgfmathprintnumber{\pgfmathresult}$};
\node (xaxis) [yshift=-floor(0.5*(\nheight))*\step cm -0.6 cm] {missing values (\%)};
\foreach \k in {1, ..., 6} \draw let   
    \p{a}=(grid_start), 
    \p{b}=(grid_end) in
    (\x{a} , \y{a} + \k*\step cm -0.5*\step cm) -- (\x{a} -1pt, \y{a} + \k*\step cm -0.5*\step cm) node[anchor=east] {$\pgfmathparse{0+(\k-1)*0.2}\pgfmathprintnumber{\pgfmathresult}$};
\node (yaxis) [xshift=-floor(0.5*(\nwidth))*\step cm-0.8 cm,rotate=90] {noise level};

\foreach \row [count=\n] in 
\loadeddata{
  \foreach \col [evaluate=\col as \scaledcol using {min(100,100/\maxrmse*\col)}] [count=\m] in \row {
    \fill [black!\scaledcol!white] (-{floor(0.5*\nwidth)*\step + (\m-1)*\step},{ceil(0.5*\nheight)*\step - (\n-1)*\step}) rectangle ++(\step,-\step);
  }
}
\draw[step=\step,gray,very thin] (grid_start) grid (grid_end);
\node[anchor=west] (colormap) [ xshift = ceil(0.5*\nwidth)*\step cm ] {\pgfplotscolorbardrawstandalone[ 
	colormap={blackwhite}{color=(white) color=(black)},
	colorbar,
	point meta min=0,
	point meta max=1,
	colorbar style={
		ytick style ={color = white},
		width = \step cm,
		height= \nheight*\step cm,
		ytick={0,0.25,...,1},
		yticklabels={0,,0.5,,$\geq$1},
	}]};  
\end{tikzpicture}
	\end{subfigure}
\caption[Average RMSE according to the noise level and to the percentage of missing values]{Average RMSE according to the noise level and to the percentage of missing values over 20 runs.}
\label{minvol:fig:noisy_rmse}
\end{figure}
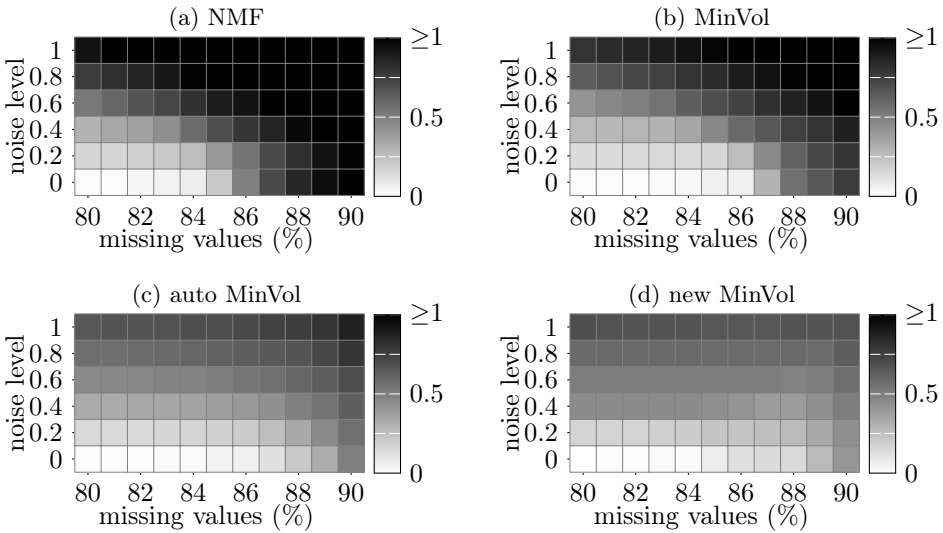

\vfill

\pagebreak
\section{Identifiability of MinVol NMF with \texorpdfstring{$\ell_1$}{l1} penalty}

In the previous section, we studied the model 
\begin{mini}
	{\scriptstyle W,H}{\frac{1}{2}\logdet(\Wt W)+\frac{1}{2}\|H\|_F^2\label{minvol:eq:fullexactfrominvol}}{}{}
	\addConstraint{X=WH}
	\addConstraint{W\in\R_+^{m \times r} ,~H\in\R_+^{r \times n}}
\end{mini}
for missing data. Additionally, we mentioned that the identifiability of \eqref{minvol:eq:fullexactfrominvol} remains unknown with conditions milder than \Cref{preli:th:uniqNMFSSC}. For the sake of completion, let us mention in this section that the model
\begin{mini}
	{\scriptstyle W,H}{\frac{1}{2}\logdet(\Wt W)+\|H\|_1\label{minvol:eq:exactL1minvol}}{}{}
	\addConstraint{X=WH}
	\addConstraint{W\in\R_+^{m \times r} ,~H\in\R_+^{r \times n}}
\end{mini}
is just as identifiable as vanilla MinVol NMF. 
\begin{theorem}
	\label{minvol:th:uniqueexactL1minvol}
	Let $X=WH$ be an $\ell_1$-MinVol NMF of $X$ of size $r = \rank(X)$, in the sense of~\eqref{minvol:eq:exactL1minvol}. If $H$ satisfies SSC as in \Cref{preli:def:ssc}, then $\ell_1$-MinVol NMF $(W,H)$ of $X$ is essentially unique.
\end{theorem}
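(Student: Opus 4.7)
The plan is to follow the structure of the proof of \Cref{minvol:th:uniqueminvol}, replacing the simplex constraint $W\in\Delta^{m\times r}$ with a normalization of the rows of $H$ that is forced by the $\ell_1$ penalty. The first step would be to show that any globally optimal pair $(W,H)$ of \eqref{minvol:eq:exactL1minvol} satisfies $He=e$. Since $\rank(X)=r$ forces $W$ to have full column rank and $H$ to have full row rank (so no row of $H$ vanishes), for every positive diagonal $D=\diag(d_1,\dots,d_r)$ the pair $(WD^{-1},DH)$ is feasible with objective value $\tfrac12\logdet(W^\top W)-\log\det(D)+\sum_i d_i\|H(i,:)\|_1$ up to an additive constant. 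Global optimality of $(W,H)$ means $D=I$ minimizes this convex function of $D$ over positive diagonals; setting $\partial/\partial d_i$ to zero at $d_i=1$ yields $\|H(i,:)\|_1=1$ for every $i$, i.e., $He=e$.

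Next I would consider any other optimum, necessarily of the form $(W',H')=(WQ^{-1},QH)$ for some invertible $Q$. Applying the previous step to $H'$ gives $QHe=e$, and combined with $He=e$ this forces $Qe=e$. Since $H,QH\geq 0$, one has $\|H\|_1=e^\top He=r=e^\top QHe=\|QH\|_1$, so the equality of objective values reduces to $|\det(Q)|=1$. Feasibility $QH\geq 0$ also yields $\cone(Q^\top)\subseteq\cone^*(H)$, and combining with [SSC1] of $H$ gives $\cone(Q^\top)\subseteq\mathcal{C}^*$, i.e., $Q(i,:)e\geq\|Q(i,:)\|_2$ for each row of $Q$.

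Chaining Hadamard's inequality, the above row-wise bound, and the arithmetic-geometric mean inequality together with $Qe=e$ (so that $e^\top Q^\top e=r$), one then obtains
\[
1=|\det(Q)|\leq\prod_{i=1}^{r}\|Q(i,:)\|_2\leq\prod_{i=1}^{r} Q(i,:)e\leq\left(\frac{e^\top Q^\top e}{r}\right)^{r}=1,
\]
forcing all inequalities to be equalities. Hadamard equality forces $Q^\top$ to be orthogonal; its cone is then self-dual, so dualizing $\cone(Q^\top)\subseteq\cone^*(H)$ yields $\cone(H)\subseteq\cone(Q^\top)$. Applying [SSC2] of $H$ to this inclusion forces $Q^\top$, hence $Q$, to be a permutation matrix, and since $He=e$ already fixes the scalings, essential uniqueness follows.

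The main obstacle will be the very first step, namely deriving $He=e$ as a first-order optimality condition: it rests on smoothness of the objective under positive diagonal rescaling and on the fact that no row of $H$ can vanish (which uses $\rank(X)=r$). Once $He=e$ is in hand, the rest of the argument is a close parallel of the proof of \Cref{minvol:th:uniqueminvol}, with $Qe=e$ (obtained from rescaling of $H$) playing the role previously played by the hard simplex constraint on the columns of $W$.
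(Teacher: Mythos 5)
Your proposal is correct and takes essentially the same approach as the paper: the paper's proof likewise obtains $He=e$ by nullifying the gradient of the objective under positive diagonal rescaling of the rows of $H$, and then concludes by invoking the known identifiability of MinVol NMF with the constraint $He=e$ under the SSC (citing the literature), which you instead re-derive explicitly by adapting the argument of \Cref{minvol:th:uniqueminvol}. The equality chain you spell out (Hadamard's inequality, the bound $Q(i,:)e\geq\|Q(i,:)\|_2$ from SSC1, $Qe=e$, orthogonality and self-duality, then SSC2) is exactly the template used for \Cref{minvol:th:uniqueminvol} and \Cref{maxvolnmf:th:uniquemaxvol}, so no gap remains.
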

This follows from two key points:
\begin{itemize}
	\item Consider a feasible $(W,H)$ for~\eqref{minvol:eq:exactL1minvol} such that $X=WH$ and let ${f(D)=\frac{1}{2}\logdet(D^{-1}W^\top WD^{-1})+\|DH\|_1}$ where $D=\diag(d_1,\dots,d_r)$ is a positive diagonal matrix that can be seen as the scaling ambiguity between $W$ and $H$. Nullifying the gradient of $f$ relatively to each $d_i$, we have that $d_i=\frac{1}{\|H(i,:)\|_1}$, meaning that at optimality $\|H(i,:)\|_1=1$ for all $i$, since $d_i=1$ at optimality (otherwise one can improve the solution by scaling, which would therefore not be globally optimal). Or more compactly, $He=e$.
	\item If $H$ is SSC, MinVol NMF with $He=e$ is identifiable~\cite{fu2018identifiability}.
\end{itemize}

\section{Conclusion}

In this chapter, we developed a new algorithm to solve MinVol NMF based on the inertial block majorization-minimization framework of~\cite{hien2023inertial}. This framework, under some conditions that hold for our method, guarantees subsequential convergence. Experimental results show that this acceleration strategy performs better than the state-of-the-art accelerated MinVol NMF algorithm from~\cite{leplat19}. Then, we argued on the favor of using more the MinVol criterion in the domain of matrix completion, which has never been explored before. Not only the MinVol criterion can emulate a broad of behaviors going from the rank minimization to the nuclear minimization, but it also acts in favor of recovering the unique decomposition of a low-rank matrix if it exists. This paper also introduced a new variant of MinVol NMF which is not simplex-structured. Experiments show that a properly tuned MinVol NMF provides encouraging results, both on the task of matrix completion and unique factors recovery. Last but not least, experiments show that our new proposed variant of MinVol NMF outperforms vanilla MinVol NMF. Future work should focus on the potential identifiability of this new variant and on comparing with other matrix completion algorithms.



\chapter{Maximum-Volume Nonnegative Matrix Factorization}\label{chap:maxvolnmf}
\begin{hiddenmusic}{Hélène Vogelsinger - Reminiscence}{https://helenevogelsinger.bandcamp.com/track/reminiscence} 
\end{hiddenmusic}In this chapter, we present a new volume regularized NMF, dubbed MaxVol NMF\label{acro:MaxVolNMF} for Maximum-Volume Nonnegative Matrix Factorization. Compare to MinVol NMF, MaxVol NMF maximizes the volume of $H$ instead of minimizing the volume of $W$. To the best of our knowledge, MaxVol MF (without nonnegativity) has only been briefly discussed in~\cite{tatli2021polytopic} as the sparse nonnegative case of their framework. Its behavior on HU has not been explored, and their proposed algorithm is in fact using an algorithm designed to solve MinVol MF coming from~\cite{fu2016robust}. However, we will see that in the inexact case MinVol NMF and MaxVol NMF behaves differently. In particular MaxVol NMF is much more effective to extract sparse factors and does not generate rank-deficient solutions; see below for more details. 
\paragraph{Outline and contribution of the chapter} In \Cref{maxvolnmf:sec:motivation} and \Cref{maxvolnmf:sec:maxvolnmf}, we motivate, introduce and analyze MaxVol NMF. In \Cref{maxvolnmf:sec:solvemaxvol}, we propose two algorithms to solve MaxVol NMF. In \Cref{maxvolnmf:sec:normmaxvolnmf}, we present a normalized variant of MaxVol NMF that exhibits better performance than MinVol NMF and MaxVol NMF in the context of HU. Finally, we conclude and discuss future works in \Cref{maxvolnmf:sec:conclusion}.

\section{Motivation}\label{maxvolnmf:sec:motivation}
In the previous chapter, we highlighted the strengths of MinVol NMF. Let us also highlight two of its main weaknesses in the context of Hyperspectral Unmixing. In the remaining of this chapter, the MinVol NMF we are referring to is the one where the simplex structure is imposed on the columns of $H$, that is, $H\in\Delta^{r\times n}$.

First, the MinVol criterion introduces a bias that can reduce the quality of the unmixing. Let us illustrate this with the Samson dataset. The three main endmembers present in Samson are water, soil and tree. Due to the spectral signature of water having a low magnitude relatively to the spectral signature of soil and tree, a bad estimation of the water spectral signature does not increase significantly the reconstruction error. Consider the MinVol penalty on top of that, decreasing the norm of the spectral signature of water is an easy way to decrease to volume of $W$, and it can be done at a very small ``reconstruction price''. This can be seen on \Cref{maxvolnmf:fig:lambda1minvol}, where the spectral signature of water (in red) for MinVol NMF with $\lambda=1$ contains $156-132=24$ zeros (reported in \Cref{maxvolnmf:table:normswatersamsonminvol}), while there should not be any zeros because there is not a wavelength at which water absorbs completely electromagnetic energy. Here, increasing $\lambda$ will only worsen this behavior, as it can be seen with $\lambda=50$ on \Cref{maxvolnmf:fig:samsonseverallambdaminvol} and with the $l_0$ norms reported in \Cref{maxvolnmf:table:normswatersamsonminvol}.

\begin{table}
    \centering
    \begin{tabular}{c|c|c|c|c|c|c}
       $\lambda$ & 0 (NMF) & 1 & 5 & 10 & 50 & 1 with autotuning \\ \hline
       $l_2$ norm of water & 0.73 & 0.35 & 0.36 & 0.35 & 0.29 & 0.31 \\ \hline
       $l_0$ norm of water & 152 & 132 & 130 & 128 & 120 & 123
    \end{tabular}
    \caption[Norms of the spectral signature of the water retrieved by MinVol for the Samson dataset]{$l_2$ and $l_0$ of the spectral signature of the water retrieved by MinVol for the Samson dataset, which is of size $156\times 9025$.}
    \label{maxvolnmf:table:normswatersamsonminvol}
\end{table}
\begin{figure}
\begin{subfigure}{\linewidth}
    \caption{NMF}
    \begin{minipage}{0.6\textwidth}
            \fbox{\includegraphics[width=\linewidth]{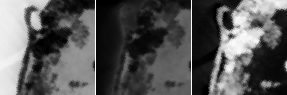}}
    \end{minipage}
    \begin{minipage}{0.39\textwidth}
        	\begin{tikzpicture}
        		\begin{axis}[cycle list name=color list,width=\linewidth,height=0.7\linewidth,line width = 1pt,no markers,grid=major,
                    tick label style={/pgf/number format/fixed,font=\tiny},ymin=0]
        		\foreach \col in {0,...,2} {
        			\addplot table [x expr=\coordindex+1, y index=\col] {maxvolnmf/figures/samson_minvol_0.txt};
        		}
        		\end{axis}
        	\end{tikzpicture}
    \end{minipage}
\end{subfigure}

\vspace{0.7cm}

\begin{subfigure}{\linewidth}
    \caption{$\lambda=1$}
    \begin{minipage}{0.6\textwidth}
            \fbox{\includegraphics[width=\linewidth]{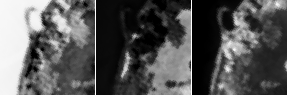}}
    \end{minipage}
    \begin{minipage}{0.39\textwidth}
        	\begin{tikzpicture}
        		\begin{axis}[cycle list name=color list,width=\linewidth,height=0.7\linewidth,line width = 1pt,no markers,grid=major,
                    tick label style={/pgf/number format/fixed,font=\tiny},ymin=0]
        		\foreach \col in {0,...,2} {
        			\addplot table [x expr=\coordindex+1, y index=\col] {maxvolnmf/figures/samson_minvol_1.txt};
        		}
        		\end{axis}
        	\end{tikzpicture}
    \end{minipage}
    \label{maxvolnmf:fig:lambda1minvol}
\end{subfigure}

\vspace{0.7cm}

\begin{subfigure}{\linewidth}
    \caption{$\lambda=10$}
    \begin{minipage}{0.6\textwidth}
            \fbox{\includegraphics[width=\linewidth]{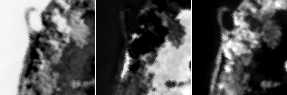}}
    \end{minipage}
    \begin{minipage}{0.39\textwidth}
        \begin{tikzpicture}
            \begin{axis}[cycle list name=color list,width=\linewidth,height=0.7\linewidth,line width = 1pt,no markers,grid=major,
                tick label style={/pgf/number format/fixed,font=\tiny},ymin=0]
            \foreach \col in {0,...,2} {
                \addplot table [x expr=\coordindex+1, y index=\col] {maxvolnmf/figures/samson_minvol_10.txt};
            }
            \end{axis}
        \end{tikzpicture}
    \end{minipage}
\end{subfigure}

\vspace{0.7cm}

\begin{subfigure}{\linewidth}
    \caption{$\lambda=50$}
    \begin{minipage}{0.6\textwidth}
            \fbox{\includegraphics[width=\linewidth]{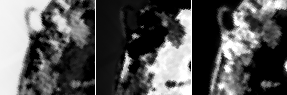}}
    \end{minipage}
    \begin{minipage}{0.39\textwidth}
        \begin{tikzpicture}
            \begin{axis}[cycle list name=color list,width=\linewidth,height=0.7\linewidth,line width = 1pt,no markers,grid=major,
                tick label style={/pgf/number format/fixed,font=\tiny},ymin=0]
            \foreach \col in {0,...,2} {
                \addplot table [x expr=\coordindex+1, y index=\col] {maxvolnmf/figures/samson_minvol_50.txt};
            }
            \end{axis}
        \end{tikzpicture}
    \end{minipage}
\end{subfigure}
\caption[Abundance maps and endmembers for MinVol on Samson]{Abundance maps and normalized endmembers (from the left to the right: {\color{red}water}, {\color{blue}soil} and tree) for MinVol on the Samson dataset with $\delta=1$.}
\label{maxvolnmf:fig:samsonseverallambdaminvol}
\end{figure}

Second, the sparsity of the decomposition is implicit and depends on the quality of the data. In the presence of noise, at some point, increasing the weight $\lambda$ of the volume criterion will not particularly increase the sparsity of $H$ and improve the decomposition. See \Cref{maxvolnmf:fig:samsonseverallambdaminvol}. With $\lambda$ increasing, the corresponding abundance map gets a little bit crispier. Still, the improvement in terms of sparsity is not that significant, and at the price of a worse spectral signature for the water. Now consider some data of better quality, like the Moffett dataset for instance. We can see on \Cref{maxvolnmf:fig:moffettseverallambdaminvol} that the abundance map for NMF is not perfect, but it is already a better decomposition than what NMF could provide for Samson on \Cref{maxvolnmf:fig:samsonseverallambdaminvol}. Adding the MinVol criterion with $\lambda=1$ improves the decomposition and, as a consequence, the sparsity. Still, the water and tree extraction are not right, as there are some detected water within the lands where it should in fact be trees. Increasing $\lambda$ to 10 slightly improves this, but the wrong water artifacts are still here. Then, increasing again $\lambda$ does not improve the unmixing. With $\lambda=50$, one of the columns of $W$ just collapses to zero. If a practitioner has some a priori knowledge on the sparsity of the decomposition, MinVol NMF cannot explicitly control sparsity, though sparsity is often desired in unmixing. 

In this chapter, we will see how MaxVol NMF preserves the spirit of MinVol NMF without the aforementioned weaknesses.

\begin{figure}
\begin{subfigure}{\linewidth}
    \caption{NMF}
    \begin{minipage}{0.6\textwidth}
            \fbox{\includegraphics[width=\linewidth]{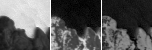}}
    \end{minipage}
    \begin{minipage}{0.39\textwidth}
        \begin{tikzpicture}
            \begin{axis}[cycle list name=color list,width=\linewidth,height=0.7\linewidth,line width = 1pt,no markers,grid=major,
                tick label style={/pgf/number format/fixed,font=\tiny},ymin=0]
            \foreach \col in {0,...,2} {
                \addplot table [x expr=\coordindex+1, y index=\col] {maxvolnmf/figures/moffett_minvol_0.txt};
            }
            \end{axis}
        \end{tikzpicture}
    \end{minipage}
\end{subfigure}

\vspace{0.7cm}

\begin{subfigure}{\linewidth}
    \caption{$\lambda=1$}
    \begin{minipage}{0.6\textwidth}
            \fbox{\includegraphics[width=\linewidth]{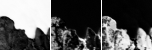}}
    \end{minipage}
    \begin{minipage}{0.39\textwidth}
        \begin{tikzpicture}
            \begin{axis}[cycle list name=color list,width=\linewidth,height=0.7\linewidth,line width = 1pt,no markers,grid=major,
                tick label style={/pgf/number format/fixed,font=\tiny},ymin=0]
            \foreach \col in {0,...,2} {
                \addplot table [x expr=\coordindex+1, y index=\col] {maxvolnmf/figures/moffett_minvol_1.txt};
            }
            \end{axis}
        \end{tikzpicture}
    \end{minipage}
\end{subfigure}

\vspace{0.7cm}

\begin{subfigure}{\linewidth}
    \caption{$\lambda=10$}
    \begin{minipage}{0.6\textwidth}
            \fbox{\includegraphics[width=\linewidth]{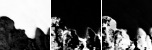}}
    \end{minipage}
    \begin{minipage}{0.39\textwidth}
        \begin{tikzpicture}
            \begin{axis}[cycle list name=color list,width=\linewidth,height=0.7\linewidth,line width = 1pt,no markers,grid=major,
                tick label style={/pgf/number format/fixed,font=\tiny},ymin=0]
            \foreach \col in {0,...,2} {
                \addplot table [x expr=\coordindex+1, y index=\col] {maxvolnmf/figures/moffett_minvol_10.txt};
            }
            \end{axis}
        \end{tikzpicture}
    \end{minipage}
\end{subfigure}

\vspace{0.7cm}

\begin{subfigure}{\linewidth}
    \caption{$\lambda=50$}
    \begin{minipage}{0.6\textwidth}
            \fbox{\includegraphics[width=\linewidth]{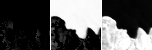}}
    \end{minipage}
    \begin{minipage}{0.39\textwidth}
        \begin{tikzpicture}
            \begin{axis}[cycle list name=color list,width=\linewidth,height=0.7\linewidth,line width = 1pt,no markers,grid=major,
                tick label style={/pgf/number format/fixed,font=\tiny},ymin=0]
            \foreach \col in {0,...,2} {
                \addplot table [x expr=\coordindex+1, y index=\col] {maxvolnmf/figures/moffett_minvol_50.txt};
            }
            \end{axis}
        \end{tikzpicture}
    \end{minipage}
\end{subfigure}
\caption[Abundance maps and endmembers for MinVol on Moffett]{Abundance maps and normalized endmembers (from the left to the right: {\color{red}water}, {\color{blue}tree} and soil, except for $\lambda=50$) for MinVol on the Moffett dataset with $\delta=0.1$.}
\label{maxvolnmf:fig:moffettseverallambdaminvol}
\end{figure}
\vfill
\pagebreak
\section{MaxVol NMF}\label{maxvolnmf:sec:maxvolnmf}

Let us introduce MaxVol NMF through its equivalence with MinVol NMF in the exact case. Consider the full rank SSNMF $X=\overbar{W}\overbar{H}$. For any full column rank matrix $W$ of the same size as $\overbar{W}$, there exists an invertible matrix $Q$ such that $W=\overbar{W}Q$. Then, $$\det(W^\top W)=\det(Q^\top \overbar{W}^\top \overbar{W}Q)=\det(Q)^2\det(\overbar{W}^\top \overbar{W}).$$ Minimizing $\det(W^\top W)$ is equivalent to minimizing $\det(Q)^2\det(\overbar{W}^\top \overbar{W})$ relatively to $Q$. Hence, computing the exact MinVol NMF of $X$ is equivalent to solving
\begin{mini}|s|
    {\scriptstyle Q}{\det(Q)^2}
    {\label{maxvolnmf:eq:exactQminvol}}{}
    \addConstraint{\overbar{W}Q\geq0,Q^{-1}\overbar{H}\in\Delta^{r\times n}.}
\end{mini}

\noindent An obvious MinVol NMF of $X$ is then $(\overbar{W}Q,Q^{-1}\overbar{H})$. Minimizing the quantity $\det(Q)^2$ is equivalent to maximizing the quantity $\det(Q^{-2})$. To sum up, in the exact case, minimizing the volume of $W$ is equivalent to maximizing the volume of $H$. Here is the exact MaxVol NMF formulation:

\begin{maxi}|s|
    {\scriptstyle W,H}{\det(HH^\top)}
    {\label{maxvolnmf:eq:exactmaxvol}}{}
    \addConstraint{X=WH}
    \addConstraint{W\geq0,H\in\Delta^{r\times n}.}
\end{maxi}

\subsection{Identifiability of MaxVol NMF}\label{maxvolnmf:sec:identifiability}

MaxVol NMF is just as identifiable as MinVol NMF. Actually, the proof is almost exactly the same as the one for MinVol NMF.
\begin{theorem}
	\label{maxvolnmf:th:uniquemaxvol}
	Let $X=WH$ be a MaxVol NMF of $X$ of size $r = \rank(X)$, in the sense of~\eqref{maxvolnmf:eq:exactmaxvol}. If $H$ satisfies SSC as in \Cref{preli:def:ssc}, then MaxVol NMF $(W,H)$ of $X$ is essentially unique.
\end{theorem}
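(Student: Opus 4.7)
The plan is to mirror the proof of \Cref{minvol:th:uniqueminvol} almost verbatim, exploiting the exact-case duality between minimizing $\det(W^\top W)$ and maximizing $\det(HH^\top)$ already exhibited around \eqref{maxvolnmf:eq:exactQminvol}. Concretely, given an invertible $Q\in\R^{r\times r}$ such that $(WQ^{-1},QH)$ is another feasible MaxVol NMF of $X$, the goal is to show that $Q$ must be a permutation matrix.

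First I would pin the scaling of $Q$. Since $r=\rank(X)=\rank(H)$, the matrix $H$ admits a right inverse $H^{\dagger}$; combining the simplex constraints $e^\top(QH)=e^\top$ and $e^\top H=e^\top$ and right-multiplying by $H^\dagger$ yields $e^\top Q=e^\top$, i.e., $Q^\top e=e$. The nonnegativity $QH\geq 0$ transcribes, exactly as in \eqref{minvol:eq:QHgeq0}--\eqref{minvol:eq:coneQ-in-coneH}, to $\cone(Q^\top)\subseteq \cone^*(H)$. Invoking SSC1 (so $\C\subseteq\cone(H)$, hence $\cone^*(H)\subseteq \C^*$) together with \Cref{preli:th:dualC} produces the row-wise bound $Q(i,:)e\geq \|Q(i,:)\|_2$, and the same Hadamard-then-AM-GM chain as in \eqref{minvol:eq:detQineq} (noting that $\sum_i Q(i,:)e=e^\top Q e=r$) gives $|\det Q|\leq 1$.

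The only ingredient that changes relative to the MinVol proof is how optimality is used. Here, optimality of $(WQ^{-1},QH)$ gives $\det((QH)(QH)^\top)=|\det Q|^2\det(HH^\top)=\det(HH^\top)$, hence $|\det Q|=1$ directly, rather than through a ratio of two $\det(W^\top W)$ terms. Combined with the upper bound $|\det Q|\leq 1$, every inequality in the Hadamard/AM-GM chain saturates: $\|Q(i,:)\|_2=Q(i,:)e=1$ for all $i$, and the rows of $Q$ are pairwise orthogonal, so $Q^\top$ is orthogonal. Dualizing $\cone(Q^\top)\subseteq\cone^*(H)$ via the self-duality of orthogonal cones gives $\cone(H)\subseteq\cone(Q^\top)$, and SSC2 then forces $Q^\top$ to be a permutation matrix, concluding essential uniqueness.

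The one subtlety I expect to check carefully is the scaling rigidity: in MinVol NMF the simplex constraint on $W$ pinned the column scaling, whereas here the constraint $W\geq 0$ is cone-invariant and contributes none. The argument therefore leans entirely on the simplex constraint $H\in\Delta^{r\times n}$ together with $\rank(H)=r$ to extract $Q^\top e=e$; one should also verify that no sign ambiguity sneaks in, which follows from the observation that any row of $Q$ of the form $-e_k^\top$ would force $H(k,:)=0$ and contradict $\rank(H)=r$. Once these points are secured, the remainder is a line-by-line transcription of the MinVol argument, with the factor $|\det Q|^2$ now appearing on the $H$-side of the objective instead of the $W$-side.
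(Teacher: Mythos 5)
Your proposal is correct and follows essentially the same route as the paper's own proof: extract $e^\top Q = e^\top$ via a right inverse of $H$, use SSC1 and the dual-cone lemma to run the Hadamard/AM--GM chain giving $|\det Q|\leq 1$, use optimality of $\det(HH^\top)$ to force $|\det Q|=1$, and conclude orthogonality of $Q^\top$ and then a permutation via SSC2. Your extra remark on sign ambiguity is fine but not needed, since SSC2 as stated already restricts to permutation matrices.
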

\begin{proof}
	Let $Q\in\R^{r\times r}$ be an invertible matrix such that $(WQ^{-1},QH)$ is another feasible solution of~\eqref{maxvolnmf:eq:exactmaxvol}. There exists a right inverse $H^\dagger$ such that $HH^\dagger=I$ because $\rank(H)=r$. Since $e^\top H = e^\top$ and $e^\top QH= e^\top$ because $(WQ^{-1},QH)$ is feasible, we have
	\begin{equation}
		e^\top Q = e^\top QHH^\dagger=e^\top H^\dagger = e^\top H H^\dagger= e^\top.
	\end{equation}
    For the same reasons as in the proof of \Cref{minvol:th:uniqueminvol}, that is, from \eqref{minvol:eq:QHgeq0} to \eqref{minvol:eq:explicit-coneQ-in-dualC}, we have
	\begin{equation}
		\label{maxvolnmf:eq:detQineq}
		\begin{split}
			|\det(Q)| \leq & \prod_{i=1}^r\|Q(i,:)\|_2 \\
			\leq & \prod_{i=1}^r Q(i,:)e \\
			\leq & \left(\frac{\sum_{i=1}^{r}Q(i,:)e}{r}\right)^r = \left(\frac{e^\top Q e}{r}\right)^r = 1,
		\end{split}
	\end{equation}
	where the first inequality is coming from the Hadamard's inequality, the second from~\eqref{minvol:eq:explicit-coneQ-in-dualC}, and the last one from the arithmetic-geometric mean inequality and that $e^\top Q = e^\top$. \\

	Suppose now that $(WQ^{-1},QH)$ is also an optimal solution to~\eqref{maxvolnmf:eq:exactmaxvol}. Then,
	\begin{align}
		&\det(QHH^\top Q^\top)  = \det(HH^\top) \\
		\Leftrightarrow \quad &|\det(Q)|^{2}\det(HH^\top) = \det(HH^\top) \\
		\Leftrightarrow \quad &|\det(Q)|=1.
	\end{align}
	The remainder of the proof is exactly like in \Cref{minvol:th:uniqueminvol}.
\end{proof}

\subsection{Behavior of MaxVol NMF}\label{maxvolnmf:sec:behaviormaxvol}

In the inexact case, we consider the following MaxVol NMF formulation:

\begin{mini}|s|
{\scriptstyle W,H}
{f(W,H):=\frac{1}{2}\| X - WH \|_F^2 -\lambda\logdet(HH^\top+\delta I)\label{maxvolnmf:eq:nonexactmaxvolmf}}{}{}
\addConstraint{W\geq0,H\in\Delta^{r \times n}.}
\end{mini} 

It should be noted that, unlike MinVol NMF, from an optimization perspective, the $\delta$ term in the $\logdet$ is not needed anymore. Maximizing the $\logdet$ will prevent $H$ from being rank deficient. Still, we keep $\delta$ in our model as it has some physical meaning. This is discussed in \Cref{maxvolnmf:sec:normmaxvolnmf}.

To understand the main difference between MinVol NMF and MaxVol NMF, consider the asymptotic case when $\lambda$ goes to infinity. For MinVol NMF, $W$ will just converge to $0$. For MaxVol NMF, $H$ will converge to a matrix whose rows are mutually orthogonal and such that the $l_2$ norm of each row are as close to each other as possible. Let us justify this intuition by considering the problem

\begin{mini}
    {X\in \mathbb{S}^r}{f_0(X)=\log\det X^{-1}}{\label{maxvolnmf:eq:maxvol}}{}
    \addConstraint{e^\top X e\leq a}
    \addConstraint{X\geq0,}
\end{mini}
where $a>0$ and $\dom f_0=\mathbb{S}^r_{++}$. We want to prove that $X=\frac{a}{r}I$ is the unique minimizer of~\eqref{maxvolnmf:eq:maxvol}. We solve this problem through its dual using the conjugate of $f_0$, like in \cite[Section 5.1.6]{boyd2004convex}.

\begin{definition}
    The conjugate $f^*$ of a function $f:\R^r\rightarrow\R$ is given by
    $$f^*(y)=\sup_{x\in\dom f}\left(y^\top x-f(x)\right).$$
\end{definition}

\noindent Considering the optimization problem with linear inequality and equality constraints
\begin{mini}
    {}{f_0(x)}{\label{maxvolnmf:eq:linconst}}{}
    \addConstraint{Ax\leq b}
    \addConstraint{Cx=d,} 
\end{mini}
the conjugate of $f_0$ can be used to write the dual function for~\eqref{maxvolnmf:eq:linconst} as
\begin{align}
    g(\lambda,\nu)&= \inf_x\left(f_0(x)+\lambda^\top(Ax-b)+\nu^\top(Cx-d)\right)\\
    &= -b^\top\lambda -d^\top\nu +\inf_x\left(f_0(x)+(A^\top\lambda+C^\top\nu)^\top x\right)\\
    &= -b^\top\lambda-d^\top\nu-f_0^*(-A^\top\lambda-C^\top\nu)\label{maxvolnmf:eq:conjindual}.
\end{align}
The domain of $g$ follows from the domain of $f_0^*$:
\[\dom g=\{(\lambda,\nu)|-A^\top\lambda-C^\top\nu\in\dom f_0^*\}.\]

Let us go back to the conjugate function of $f_0$, which is defined as
\[f_0^*(Y)=\sup_{X\succ0}\left(\langle Y,X\rangle+\log\det X\right).\]

We first show that $\langle Y,X\rangle+\log\det X$ is unbounded above unless $Y\prec0$. If $Y\nprec0$, then $Y$ has an eigenvector $v$, with $\|v\|_2=1$, and eigenvalue $\lambda\geq0$. Taking $X=I+tvv^\top$ we find that 
\[\langle Y,X\rangle+\log\det X=\tr Y+t\lambda+\log\det(I+tvv^\top)=\tr Y+t\lambda+\log(1+t),\]
which is unbounded above as $t\rightarrow\infty$.
\noindent Now consider the case $Y\prec0$. We can find the maximizing $X$ by setting the gradient with respect to $X$ equal to zero:
\[\nabla_X(\langle Y,X\rangle+\log\det X)=Y+X^{-1}=0,\]
which leads to $X=-Y^{-1}$. Therefore, we have
\begin{equation}
    f_0^*(Y)=\log\det(-Y)^{-1}-r
    \label{maxvolnmf:eq:conjugatef0}
\end{equation}
with $\dom f_0^*=-\mathbb{S}^r_{++}$.\\

Applying the result in~\eqref{maxvolnmf:eq:conjindual}, the dual function for problem~\eqref{maxvolnmf:eq:maxvol} is given by

\begin{equation}
    g(\lambda,\nu)=\left\{\begin{array}{ll}
        \log\det\bigl(\lambda J-\sum\limits_{i,j}\nu_{i,j}E_{i,j}\bigr) +r-\lambda a & \text{ if }\lambda J-\sum\limits_{i,j}\nu_{i,j}E_{i,j}\succ0,  \\
        \infty&\text{ otherwise,} 
    \end{array}\right.
\end{equation}
with $\lambda\in\R_+$ and $\nu\in\R_+^{r\times r}$. Let $\lambda^*=\frac{r}{a}$, $\nu^*_{i,j}=\frac{r}{a}$ if $i\neq j$, $\nu^*_{i,j}=0$ if $i=j$ and $X^*=\frac{a}{r}I$. We have $f_0(X^*)=g(\lambda^*,\nu^*)$, meaning that there is no duality gap and that $X^*$ is a solution of~\eqref{maxvolnmf:eq:maxvol}. Finally, $X^*$ is the unique solution because $f_0$ is strongly convex. 

Due to this result and to the fact that $e^\top HH^\top e=n$, if $n=dr$ where $d\in\mathbb{N}^*$, then increasing $\lambda$ will make $HH^\top$ converge to a diagonal whose elements are all equal to $d$. In other words, the rows of $H$ will be mutually orthogonal. The simplex constraint on the columns of $H$ and the fact that the rows are mutually orthogonal will impose that $H(i,j)\in\{0,1\}$. The norm of each row is then just the square root of the number of non-zero elements in the corresponding row. From the HU point of view, one pixel will be assigned to only one material. This is equivalent to a hard clustering where every cluster should be of the same size. On a side note, if $n$ is not a multiple of $r$, then the norm of each row will have to be different. The clustering behavior of MaxVol NMF is interesting and offers more control over the sparsity of the decomposition than MinVol NMF. Also, maximizing the volume of $H$ indirectly minimizes the volume of $W$ without the drawback of potentially setting a useful endmember to zero due to its low reflectance. However, the fact that increasing $\lambda$ tends to an even clustering is a clear weakness. See the experiment on \Cref{maxvolnmf:fig:samsonmaxvollambdas} on Samson. Increasing $\lambda$ intensifies the clustering, until a hard clustering is achieved with $\lambda=50$. Increasing $\lambda$ removes some of the false positives for water, but not all of them. This is probably because there are more pixels containing trees than water or stone in this dataset. Correctly assigning the water false positives to tree will unbalance even more the size of the clusters, though MaxVol NMF favors clusters of the same size. Also, the improvement of the abundance map of the water is at the cost of a hard clustering, while a soft clustering would be preferable to properly unmix soil and tree.

\begin{figure}[htbp!]
    \begin{subfigure}{\textwidth}
        \centering
        \caption{$\lambda=0.5$}
        \fbox{\includegraphics[width=0.8\textwidth]{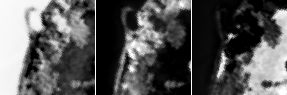}}
    \end{subfigure}

    \vspace{0.5cm}

    \begin{subfigure}{\textwidth}
        \centering
        \caption{$\lambda=5$}
        \fbox{\includegraphics[width=0.8\textwidth]{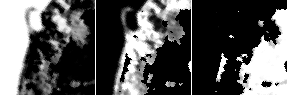}}
    \end{subfigure}

    \vspace{0.5cm}

    \begin{subfigure}{\textwidth}
        \centering
        \caption{$\lambda=10$}
        \fbox{\includegraphics[width=0.8\textwidth]{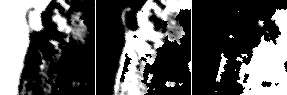}}
    \end{subfigure}

    \vspace{0.5cm}

    \begin{subfigure}{\textwidth}
        \centering
        \caption{$\lambda=50$}
        \fbox{\includegraphics[width=0.8\textwidth]{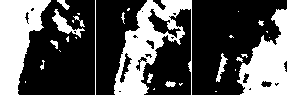}}
    \end{subfigure}
\caption{Abundance maps of MaxVol NMF on Samson, depending on $\lambda$.}
\label{maxvolnmf:fig:samsonmaxvollambdas}
\end{figure}

In \Cref{maxvolnmf:sec:normmaxvolnmf}, we present an improved variant of MaxVol NMF where the volume of the row wise normalized $H$ is maximized instead. This variant is an improvement as it is not biased towards clusters of the same size.

\begin{remark}
    About the results in \Cref{maxvolnmf:fig:samsonmaxvollambdas}:
    \begin{itemize}
        \item $\lambda$ is tuned using \cite{nguyen2024towards}, in the same way we used it in \Cref{minvol:sec:exp}. 
        \item In \Cref{maxvolnmf:sec:solvemaxvol} we show two different algorithms to solve MaxVol NMF. The abundance maps displayed on \Cref{maxvolnmf:fig:samsonmaxvollambdas} are the same regardless of the used algorithm, except for $\lambda=50$ where the adaptive gradient method crashes, probably due to some numerical issues. The ADMM based algorithm still works well with $\lambda=50$.
    \end{itemize}
\end{remark}

\section{Solving MaxVol NMF} \label{maxvolnmf:sec:solvemaxvol}

The most common strategy to solve problems like~\cref{maxvolnmf:eq:nonexactmaxvolmf} is to use an alternated block optimization scheme. Consider blocks of variables, while updating one block, fix the others. When the update is finished, repeat the same process for the next block. Here, we only consider two blocks: $W$ and $H$. The main difficulty in solving~\cref{maxvolnmf:eq:nonexactmaxvolmf} holds in the $-\lambda\logdet$ term. Since $X\rightarrow \logdet(X)$ is concave, it is easy to derive a surrogate for~\cref{minvol:eq:nonexactminvol} relatively to $W$ whose gradient is Lipschitz continuous. The first-order Taylor approximation at the current iterate $W^k$ is enough, as it has been seen in \Cref{minvol:sec:surrogateW}. It is then possible to update $W$ by minimizing the obtained Lipschitz surrogate. This is exactly equivalent to performing a projected gradient step with a step size equal to the inverse of the Lipschitz constant of the gradient of the surrogate, that is $\frac{1}{\|HH^\top+\lambda({W^k}^\top W^k+\delta I)^{-1}\|}$. Since $-\logdet(.)$ is not concave, it prevents us from using for~\cref{maxvolnmf:eq:nonexactmaxvolmf} the same update strategy that has been derived for~\cref{minvol:eq:nonexactminvol}. In this section, we propose several algorithms to solve~\cref{maxvolnmf:eq:nonexactmaxvolmf}. The first algorithm in~\cref{maxvolnmf:sec:adgrad} is adapted from~\cite{malitsky2020adaptive}. Its core idea is to approximate the local Lipschitzness by using the previous iterate and to compute the corresponding Lipschitz gradient descent. The second algorithm is based on the Alternating Direction Method of Multipliers (ADMM).

Let us note that relatively to $W$, another choice could be to consider each of its columns as a block, also known as HALS~\cite{GG12}. As the update of $W$ is not the main concern while solving~\cref{maxvolnmf:eq:nonexactmaxvolmf} with an alternated block scheme, we will not explore this possibility. Note also that HALS could not be used to update $H$. It would alternatively update the rows of $H$, although they depend on each other because of the probability simplex constraint $H\in\Delta^{r\times n}$.

\subsection{Adaptive accelerated gradient descent}
\label{maxvolnmf:sec:adgrad}
Our first proposed algorithm for~\cref{maxvolnmf:eq:nonexactmaxvolmf} relies on~\cite[Alg. 2]{malitsky2020adaptive}. This algorithm uses the previous iterate to approximate the local Lipschitzness and derive an appropriate step size. The previous iterate is also used to induce some extrapolation. The only knowledge that is needed from $f$ is its gradient. It should be noted that~\cite[Alg. 2]{malitsky2020adaptive} is only designed for a one block variable, that is, all variables are updated at the same time. In our case, it would mean that $[\Wt,H]\in\R^{r\times(m+n)}$ should be updated all at once. Most gradient based algorithm for constrained matrix factorization are using a two block alternating strategy, performing several updates on $W$ and then several updates on $H$. By doing so, a gradient based two block alternating algorithm can save computation time by precomputing some matrix operations that remain unchanged during the update of one block. We will follow this common two block strategy and, as it has been said before, all we need are the gradients:
\begin{align}
\begin{split}
    \nabla f(W) &= \frac{\partial f}{\partial W}(W) \\
                &= (WH-X)H^\top,
\end{split}\\
\begin{split}
    \nabla f(H) &= \frac{\partial f}{\partial H}(H) \\
                &= W^\top(WH-X) - 2\lambda(HH^\top+\delta I)^{-1}H.
\end{split}
\end{align}

Our adaptation of~\cite[Alg. 2]{malitsky2020adaptive} with a two block strategy is given in~\cref{maxvolnmf:alg:Adgrad2}.

\begin{algorithm}[htbp!]
\caption{Adgrad2}
\label{maxvolnmf:alg:Adgrad2}
\DontPrintSemicolon
\KwIn{data matrix $X \in \mathbb{R}^{m \times n}$, initial factors $\Wold \in \mathbb{R}_+^{m \times r}$  and $\Hold \in \Delta^{r \times n}$}
$\Gamma_{\Wold}=\|\Hold \Hold^\top\|,\gamma_{\Wold}={\Gamma_{\Wold}}^{-1},\theta_W=\Theta_W=10^9,\Wextraold=\Wold,\overbar{W}=W=[\Wold-10^{-6}\nabla f(\Wold)]_+$\;
$\Gamma_{Ho}=\|\Wold^\top \Wold\|,\gamma_{Ho}={\Gamma_{Ho}}^{-1},\theta_H=\Theta_H=10^9,\Hextraold=\Hold,\overbar{H}=H=[\Hold-10^{-6}\nabla f(\Hold)]_{\Delta^{r \times n}}$\;
\For{$k=1,2,\dots$\nllabel{maxvolnmf:alg:Adgrad2:line:outerloop}}{
    \While{stopping criteria not satisfied\nllabel{maxvolnmf:alg:Adgrad2:line:Winnerloop}}{
        $\gamma_{W}=\min\left(\gamma_{\Wold}\sqrt{1+\frac{\theta_W}{2}},\frac{\|\Wextra-\Wextraold\|_F}{2\|\nabla f(\Wextra)-\nabla f(\Wextraold)\|_F}\right)$\;
        $\Gamma_{W}=\min\left(\Gamma_{\Wold}\sqrt{1+\frac{\Theta_W}{2}},\frac{\|\nabla f(\Wextra)-\nabla f(\Wextraold)\|_F}{2\|\Wextra-\Wextraold\|_F}\right)$\;
        $W=[\Wextra-\gamma_{W}\nabla f(\Wextra)]_+$\;
        $\theta_W=\gamma_{W}/\gamma_{\Wold},\Theta_W=\Gamma_{W}/\Gamma_{\Wold}$\;
        $\Wextraold=\Wextra$\;
        $\Wextra=W+\frac{1-\sqrt{\gamma_{W}\Gamma_{W}}}{1+\sqrt{\gamma_{W}\Gamma_{W}}}(W-\Wold)$\;
        $\Wold=W$\;
        $\gamma_{\Wold}=\gamma_{W},\Gamma_{\Wold}=\Gamma_{W}$\;
    }
    \While{stopping criteria not satisfied\nllabel{maxvolnmf:alg:Adgrad2:line:Hinnerloop}}{
        $\gamma_{H}=\min\left(\gamma_{\Hold}\sqrt{1+\frac{\theta_H}{2}},\frac{\|\Hextra-\Hextraold\|_F}{2\|\nabla f(\Hextra)-\nabla f(\Hextraold)\|_F}\right)$\;
        $\Gamma_{H}=\min\left(\Gamma_{\Hold}\sqrt{1+\frac{\Theta_H}{2}},\frac{\|\nabla f(\Hextra)-\nabla f(\Hextraold)\|_F}{2\|\Hextra-\Hextraold\|_F}\right)$\;
        $H=[\Hextra-\gamma_{H}\nabla f(\Hextra)]_{\Delta^{r \times n}}$\label{maxvolnmf:alg:lineupdateH}\;
        $\theta_H=\gamma_{H}/\gamma_{\Hold},\Theta_H=\Gamma_{H}/\Gamma_{\Hold}$\;
        $\Hextraold=\Hextra$\;
        $\Hextra=H+\frac{1-\sqrt{\gamma_{H}\Gamma_{H}}}{1+\sqrt{\gamma_{H}\Gamma_{H}}}(H-\Hold)$\;
        $\Hold=H$\;
        $\gamma_{\Hold}=\gamma_{H},\Gamma_{\Hold}=\Gamma_{H}$\;
    }
}
\end{algorithm}

\begin{remark}
    The adaptive part is mostly useful for the update of $H$. In order to update $W$ any other algorithm could be used instead of the \textbf{while} loop in~\cref{maxvolnmf:alg:Adgrad2:line:Winnerloop}.
\end{remark}

\subsection{Alternating direction method of multipliers (ADMM) for the MaxvolMF problem}
\label{maxvolnmf:sec:admm}

Let us consider the following ADMM reformulation of~\cref{maxvolnmf:eq:nonexactmaxvolmf}:
\begin{mini}|s|
{\scriptstyle W,H,Y,\Lambda}
{\La(W,H,Y,\Lambda):=\frac{1}{2}\| X - WH \|_F^2 -\lambda\logdet(Y+\delta I)+\langle Y-HH^\top,\Lambda\rangle\label{maxvolnmf:eq:admmmaxvol}}{}{}
\breakObjective{\qquad\qquad\qquad\qquad+\frac{\rho}{2}\|Y-HH^\top\|_F^2}
\addConstraint{W\geq0,H\in\Delta^{r \times n}.}
\end{mini} 
According to~\cite{bertsekas2016nonlinear}, the ADMM algorithm consists of the following updates:

\begin{align}
    W^{k+1} &= \argmin_{W\geq0}\La(W,H^k,Y^k,\Lambda^k) \label{maxvolnmf:eq:admm_W}\\
    H^{k+1} &= \argmin_{H\geq\Delta^{r \times n}}\La(W^{k+1},H,Y^k,\Lambda^k) \label{maxvolnmf:eq:admm_H}\\
    Y^{k+1} &= \argmin_{Y}\La(W^{k+1},H^{k+1},Y,\Lambda^k) \label{maxvolnmf:eq:admm_Y}\\
    \Lambda^{k+1} &= \Lambda^{k} + \rho(Y^{k+1}-H^{k+1}{H^{k+1}}^\top) \label{maxvolnmf:eq:admm_Lambda}
\end{align}

\paragraph{Updating $W$}

Like in~\cref{maxvolnmf:sec:adgrad}, the update for $W$ can be computed through any algorithm for constrained convex problems, as~\cref{maxvolnmf:eq:admm_W} is equivalent to $$W^{k+1} = \argmin_{W\geq0} \frac{1}{2}\|X-WH^{k}\|^2_F,$$ where $W\rightarrow\frac{1}{2}\|X-WH^{k}\|^2_F$ is convex. Here we propose to use TITAN~\cite{hien2023inertial} with a Lipschitz surrogate, like in \Cref{minvol:sec:titanizedminvol}. The resulting update for $W^{k+1}$ is detailed in~\cref{maxvolnmf:alg:admm_titan_W}

\begin{algorithm}[htb!]
\caption{Update of $W$ with TITAN}
\label{maxvolnmf:alg:admm_titan_W}
\DontPrintSemicolon
\KwIn{$\alpha_1,X,H^k,W,\Wold$}
\KwOut{$W$}
$L_W=\|H^k{H^k}^\top\|$\;
\While{stopping criteria not satisfied}{
    $\alpha_0=\alpha_1$\;
    $\alpha_1=\frac{1}{2}(1+\sqrt{1+4\alpha_0^2})$\;
    $\beta=\frac{\alpha_0-1}{\alpha_1}$\;
    $\Wextra=W+\beta(W-\Wold)$\;
    $\Wold=W$\;
    $W=[\Wextra+\frac{1}{L_W}(XH^\top-\Wextra H^k{H^k}^\top)]_+$
}
\end{algorithm}

\paragraph{Updating $H$}

We propose two ways of updating $H$. The first one consists of solving directly~\cref{maxvolnmf:eq:admm_H} with the adaptive accelerated gradient descent algorithm described in~\cref{maxvolnmf:sec:adgrad}. The second one, that we will describe here, consists of deriving a non-Euclidean gradient method. Basically, we find a Bregman surrogate of $H\rightarrow\La(W^{k+1},H,Y^k,\Lambda^k):=\La(H)$ and update $H$ by minimizing this surrogate instead. The main motivation to use such a surrogate is that there does not exist a Lipschitz surrogate of $\La$ relatively to $H$. The gradient of $H\rightarrow\La(W^{k+1},H,Y^k,\Lambda^k)$ is clearly not Lipschitz continuous because the gradient of $\|Y-HH^\top-\delta I\|_F^2$ relatively to $H$ is cubic. Although $H\rightarrow\La(H)$ is not $L$-smooth, using the framework of~\cite{bauschke2017descent}, we can show that it is smooth relatively to the quartic norm kernel proposed in~\cite{dragomir2021quartic}.
\begin{definition}[Bregman distance]
    $$D_h(x,y)=h(x)-h(y)-\langle\nabla h(y),x-y\rangle$$
with $h$ a properly chosen convex function, dubbed a distance kernel.
\end{definition}
Note that $D_h$ is not a proper distance as it is asymmetric.
\begin{definition}[Relative smoothness~\cite{bauschke2017descent}]
    We say that a differentiable function $f:\R^{r\times n}\rightarrow\R$ is $L$-smooth relatively to the distance kernel $h$ if there exists $L>0$ such that for every $X,Y\in\R^{r\times n}$,
    $$f(X)\leq f(Y) + \langle\nabla f(Y),X-Y\rangle+L D_h(X,Y).$$
    If $f$ is twice differentiable, $L$-smoothness relatively to $h$ is equivalent to 
    $$\nabla^2f(X)[U,U]\leq L\nabla^2 h(X)[U,U]\quad\forall X,U\in\R^{r\times n},$$
    where $\nabla^2f(X)[U,U]$ denotes the second derivative of $f$ at $X$ in the direction $U$.
\end{definition}
First, we focus on the relative smoothness of the quartic term. According to~\cite{dragomir2021quartic} we have
\begin{equation}
	\label{maxvolnmf:eq:bregsurroquarticpart}
	\frac{1}{2}\|Y-HH^\top\|_F^2:=g(H)\leq g(H^k)+\langle\nabla g(H^k),H-H^k\rangle + D_h(H,H^k)
\end{equation}
where $\nabla g(H^k)=2(H^k{H^k}^\top-Y)H^k$,
and $h(H)=\frac{\alpha}{4}\|H\|_F^4+\frac{\sigma}{2}\|H\|_F^2$ with $\alpha=6$ and $\sigma=2\|Y\|_2$. Substituting~\eqref{maxvolnmf:eq:bregsurroquarticpart} in~\eqref{maxvolnmf:eq:admmmaxvol},
\begin{multline}
	\label{maxvolnmf:eq:lagrangianquarticsurrogate}
	\La(H)\leq u_{H^k}(H):=\frac{1}{2}\|X-WH\|_F^2-\langle HH^\top,\Lambda\rangle+\rho\langle\nabla g(H^k),H\rangle+\rho h(H)\\-\rho\langle\nabla h(H^k),H \rangle+C_H
\end{multline}
where $C_H$ is a constant relatively to $H$. Compute the second directional derivative of $u_{H^k}$
\begin{align}
\nabla^2 u_{H^k}(H)[U,U]=&\langle(W^\top W-2\Lambda^\top)U,U\rangle+\rho\sigma\|U\|_F^2+\rho\alpha(\|H\|_F^2\|U\|_F^2+2\langle H,U\rangle^2),\nonumber\\
\leq&\rho\alpha(\|H\|_F^2\|U\|_F^2+2\langle H,U\rangle^2)+(\|W^\top W-2\Lambda^\top\|_2+\rho\sigma)\|U\|_F^2,\nonumber\\
\label{maxvolnmf:eq:kernel1smooth}
=&\nabla^2\left(\frac{\tilde{\alpha}}{4}\|H\|_F^4+\frac{\tilde{\sigma}}{2}\|H\|_F^2\right)[U,U],
\end{align}
where $\tilde{\alpha}=\rho\alpha$ and $\tilde{\sigma}=\rho\sigma+\|W^\top W-2\Lambda^\top\|_2$.
From~\eqref{maxvolnmf:eq:kernel1smooth} and~\eqref{maxvolnmf:eq:lagrangianquarticsurrogate}, $H\rightarrow\La(H)$ is 1-smooth relatively to the kernel $\tilde{h}:H\rightarrow\frac{\tilde{\alpha}}{4}\|H\|_F^4+\frac{\tilde{\sigma}}{2}\|H\|_F^2$. More explicitly, 
\begin{equation}
	\La(H)\leq u_{H^k}(H^k)+\langle\nabla u_{H^k}(H^k),H-H^k\rangle+D_{\tilde{h}}(H,H^k).
\end{equation}

The update for $H$ is then obtained by minimizing the aforementioned surrogate 
\begin{align}
	H^{k+1}&=\argmin_{H\in\Delta^{r \times n}}\left\{ \langle\nabla u_{H^k}(H^k),H\rangle +\tilde{h}(H) -\langle\nabla \tilde{h}(H^k),H\rangle \right\}, \nonumber\\
	\label{maxvolnmf:eq:Hminbregsurro}
	&=\argmin_{H\in\Delta^{r \times n}}\left\{t_k(H):=\tilde{h}(H)-\langle Q^k,H\rangle \right\},
\end{align}
where $Q^k=\nabla \tilde{h}(H^k)-\nabla u_{H^k}(H^k)$. This is equivalent to the Bregman proximal iteration map described in~\cite{dragomir2021quartic} with a step size equal to 1.

\begin{corollary}
	\label{maxvolnmf:cor:bregupdateHform}
	The solution of~\eqref{maxvolnmf:eq:Hminbregsurro} is of the form $$H^{k+1}=\frac{1}{\tilde{\alpha}\|H^{k+1}\|_F^2+\tilde{\sigma}}[Q^k-e\nu^\top]_+,$$
	where $\nu\in\R^{n}$.
\end{corollary}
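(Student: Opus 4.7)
The plan is to apply KKT optimality conditions to the strictly convex problem \eqref{maxvolnmf:eq:Hminbregsurro}. First, I would verify that the problem is well posed: $\tilde h$ is the sum of the convex quartic $\frac{\tilde\alpha}{4}\|H\|_F^4$ and the strictly convex quadratic $\frac{\tilde\sigma}{2}\|H\|_F^2$ (recall $\tilde\sigma>0$), while $-\langle Q^k,H\rangle$ is linear, so $t_k$ is strictly convex. The feasible set $\Delta^{r\times n}$ is convex and compact, hence a unique minimizer exists; and since its relative interior is nonempty, Slater's condition holds, making the KKT conditions both necessary and sufficient.

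Next, I would introduce Lagrange multipliers $\nu\in\R^{n}$ for the $n$ column-wise equalities $e^\top H(:,j)=1$ and $\mu\in\R_+^{r\times n}$ for the nonnegativity $H\geq 0$. Since $\nabla\tilde h(H)=(\tilde\alpha\|H\|_F^2+\tilde\sigma)H$, the stationarity condition reads
\[
(\tilde\alpha\|H\|_F^2+\tilde\sigma)H - Q^k + e\nu^\top - \mu = 0,
\]
that is, $(\tilde\alpha\|H\|_F^2+\tilde\sigma)H = Q^k - e\nu^\top + \mu$.

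Complementary slackness $\mu\circ H=0$ with $\mu\geq 0$ and $H\geq 0$ is then applied entrywise. Where $H_{ij}>0$, one has $\mu_{ij}=0$, so $(\tilde\alpha\|H\|_F^2+\tilde\sigma)H_{ij}=Q^k_{ij}-\nu_j$; where $H_{ij}=0$, the constraint $\mu_{ij}\geq 0$ forces $Q^k_{ij}-\nu_j\leq 0$. In both cases $(\tilde\alpha\|H\|_F^2+\tilde\sigma)H_{ij}=[Q^k_{ij}-\nu_j]_+$, yielding the announced closed form. The multiplier $\nu$ is determined implicitly by enforcing $e^\top H=e^\top$; since each column of $[Q^k-e\nu^\top]_+$ depends only on the corresponding $\nu_j$, this reduces to $n$ independent scalar root-finding problems akin to the standard simplex-projection pivot.

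The only subtlety, and the step requiring a brief remark, is that the scalar factor $\tilde\alpha\|H\|_F^2+\tilde\sigma$ depends on the unknown $\|H\|_F^2$, so the identity is a fixed-point relation rather than an explicit formula. This is not a genuine obstacle: writing $H=c\,[Q^k-e\nu^\top]_+$ with $c>0$ and matching the Frobenius norm gives the cubic $\tilde\alpha c^3\|[Q^k-e\nu^\top]_+\|_F^2+\tilde\sigma c=1$, which has a unique positive root, so the relation is consistently and uniquely satisfied once $\nu$ has been fixed to enforce the simplex constraints.
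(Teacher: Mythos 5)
Your proof is correct and follows essentially the same route as the paper: a Lagrangian/KKT argument for the simplex-constrained problem, using stationarity of $\tilde h$ (whose gradient is $(\tilde\alpha\|H\|_F^2+\tilde\sigma)H$) together with complementary slackness and an entrywise case analysis to obtain the $[\,\cdot\,]_+$ form with multiplier $\nu$. Your closing observation that the scaling factor solves a cubic with a unique positive root is a nice addition not in the paper, which instead resolves the implicit dependence on $\|H^{k+1}\|_F^2$ by a fixed-point iteration after the corollary.
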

\begin{proof}
Consider the Lagrangian of~\eqref{maxvolnmf:eq:Hminbregsurro}
$$\La_{t_k}(H,\Lambda,\nu)=t_k(H)-\langle H,\Lambda\rangle+\langle H^\top e-e,\nu\rangle$$
where $\Lambda\in\R^{r\times n}_+$ and $\nu\in\R^n$. According to the KKT optimality conditions:

\begin{empheq}[left=\empheqlbrace]{align}
	H^{k+1} &\in \Delta^{r \times n}, \\
	\langle\Lambda^*,H^{k+1}\rangle &= 0, \label{maxvolnmf:eq:kkt12}\\
	\nabla t_k(H^{k+1})-\Lambda^*+e{\nu^*}^\top &= 0, \label{maxvolnmf:eq:kkt13}  
\end{empheq}

\begin{empheq}[left=\Leftrightarrow\empheqlbrace]{align}
	H^{k+1} &\in \Delta^{r \times n}, \\
	\langle\nabla\tilde{h}(H^{k+1})-Q^k+e{\nu^*}^\top,H^{k+1}\rangle &= 0, \label{maxvolnmf:eq:kkt22}\\
	\nabla\tilde{h}(H^{k+1})-Q^k+e{\nu^*}^\top &\geq 0, \label{maxvolnmf:eq:kkt23}
\end{empheq}
where~\eqref{maxvolnmf:eq:kkt22} is coming from substituting~\eqref{maxvolnmf:eq:kkt13} in~\eqref{maxvolnmf:eq:kkt12}, and~\eqref{maxvolnmf:eq:kkt23} is coming from the fact that $\Lambda^*\geq0$. First, combining~\eqref{maxvolnmf:eq:kkt22} and~\eqref{maxvolnmf:eq:kkt23}, we have
\begin{equation}
	\label{maxvolnmf:eq:kkt_hadamard}
	(\nabla\tilde{h}(H^{k+1})-Q^k+e{\nu^*}^\top) \circ H^{k+1} = 0
\end{equation}
where $\circ$ is the Hadamard product. For all $p$ in $1,\dots,r$, for all $j$ in $1,\dots,n$,
\begin{enumerate}
	\item if $Q^k(p,j)-\nu^*_j<0$, $\nabla\tilde{h}(H^{k+1})(p,j)-(Q^k(p,j)-\nu^*_j)>0$ because $\nabla\tilde{h}(H)=(\tilde{\alpha}\|H\|_F^2+\tilde{\sigma})H\geq0$, then \eqref{maxvolnmf:eq:kkt_hadamard} $\Rightarrow H^{k+1}(p,j)=0$,
	
	\item if $Q^k(p,j)-\nu^*_j=0$, $\nabla\tilde{h}(H^{k+1})(p,j)=(\tilde{\alpha}\|H^{k+1}\|_F^2+\tilde{\sigma})H^{k+1}(p,j)$ so~\eqref{maxvolnmf:eq:kkt_hadamard} $\Rightarrow H^{k+1}(p,j)=0$,
	
	\item if $Q^k(p,j)-\nu^*_j>0$, $\nabla\tilde{h}(H^{k+1})(p,j)=(\tilde{\alpha}\|H^{k+1}\|_F^2+\tilde{\sigma})H^{k+1}>0$ by~\eqref{maxvolnmf:eq:kkt23}, then~\eqref{maxvolnmf:eq:kkt_hadamard} $\Rightarrow \nabla\tilde{h}(H^{k+1})(p,j)-(Q^k(p,j)-\nu^*_j)=0\Leftrightarrow H^{k+1}(p,j)=\frac{Q^k(p,j)-\nu^*_j}{\tilde{\alpha}\|H^{k+1}\|_F^2+\tilde{\sigma}}$.
\end{enumerate}

In the end, $H^{k+1}=\frac{1}{\tilde{\alpha}\|H^{k+1}\|_F^2+\tilde{\sigma}}[Q^k-e{\nu^*}^\top]_+$.
\end{proof}
In particular, $\nu$ in~\Cref{maxvolnmf:cor:bregupdateHform} is such that $e^\top[Q^k-e{\nu}^\top]_+=(\tilde{\alpha}\|H^{k+1}\|_F^2+\tilde{\sigma})e^\top\in\R^n$ since $e^\top H^{k+1}=e^\top$. In other words, $[Q^k-e{\nu}^\top]_+$ projects $Q$ on a scaled probability simplex where the scaling is equal to $\tilde{\alpha}\|H^{k+1}\|_F^2+\tilde{\sigma}$. How do we find $\nu$ since it depends on $H^{k+1}$? We propose to solve this inexactly with a simple fixed point algorithm where $\|H^{k+1}\|_F^2$ is the variable to optimize. The idea is that when $\|H^{k+1}\|_F^2$ is fixed, $\nu$ has a closed form solution. So for a fixed $\|H^{k+1}\|_F^2$ we compute $\nu$, then we update $\|H^{k+1}\|_F^2$ according to the new $\nu$ and repeat this process. The algorithm is described in~\Cref{maxvolnmf:alg:Hminbregsurro}. When $\|H^{k+1}\|_F^2$ is fixed, there are several algorithms that can compute exactly $\nu$. In~\cite{held1974validation}, the proposed algorithm requires to sort the entries of each column of $Q^k$. The complexity of this algorithm is mainly due to this sorting. Once the sorting is completed, $\nu$ is found just by computing $n$ times the $\max$ between $r$ entries, which is linear. There exist faster algorithms like~\cite{condat2016fast} that do not rely on sorting. However, note that $Q^k$ is not changing in~\Cref{maxvolnmf:alg:Hminbregsurro}. Hence, using~\cite{held1974validation} to compute $\nu$ in \cref{maxvolnmf:alg:Hminbregsurro:nuline} only has a linear complexity if $Q^k$ is sorted only once before the \textbf{while} loop. In our code, $\epsilon$ is fixed to $10^{-6}$ and the \textbf{while} loop cannot exceed 100 iterations.

\begin{algorithm}[htb!]
	\caption{Algorithm for~\eqref{maxvolnmf:eq:Hminbregsurro}}
	\label{maxvolnmf:alg:Hminbregsurro}
	\SetKwInOut{Init}{init}
	\DontPrintSemicolon
	\KwIn{$Q^k,\tilde{\alpha},\tilde{\sigma}$}
	\Init{$\|H^{k+1}\|_F^2,\nu$}
	\KwOut{$H^{k+1}$}
	\While{$\frac{\|H^{k+1}\|_F^2-\left\|\frac{1}{\tilde{\alpha}\|H^{k+1}\|_F^2+\tilde{\sigma}}[Q^k-e\nu^\top]_+\right\|_F^2}{\|H^{k+1}\|_F^2}>\epsilon$}{
		compute $\nu$ such that $\frac{1}{\tilde{\alpha}\|H^{k+1}\|_F^2+\tilde{\sigma}}[Q^k-e\nu^\top]_+\in\Delta^{r\times n}$\label{maxvolnmf:alg:Hminbregsurro:nuline}\;
		update $\|H^{k+1}\|_F^2$ to $\left\|\frac{1}{\tilde{\alpha}\|H^{k+1}\|_F^2+\tilde{\sigma}}[Q^k-e\nu^\top]_+\right\|_F^2$\;
	}
	$H^{k+1}=\frac{1}{\tilde{\alpha}\|H^{k+1}\|_F^2+\tilde{\sigma}}[Q^k-e\nu^\top]_+$\;
\end{algorithm}

\paragraph{Updating $Y$}

Recall the ADMM update of $Y^{k+1}$, that is
\begin{equation}
Y^{k+1}=\argmin_{Y\succ-\delta I}-\lambda\logdet(Y+\delta I)+\langle Y,\Lambda\rangle+\frac{\rho}{2}\|Y-HH^\top\|_F^2.
\end{equation}
Consider the change of variable $Z=Y+\delta I$,
\begin{equation}
Y^{k+1}+\delta I=\argmin_{Z\succ0}-\lambda\logdet(Z)+\frac{\rho}{2}\left\|Z-\left(HH^\top+\delta I -\frac{1}{\rho}\Lambda\right)\right\|_F^2. \label{maxvolnmf:eq:admmYafterchangevar}
\end{equation}
According to~\cite[Lemma 2.1]{wang2010solving},~\eqref{maxvolnmf:eq:admmYafterchangevar} has a closed form solution which is $$\Phi_{\frac{\lambda}{\rho}}^+\left(HH^\top+\delta I -\frac{1}{\rho}\Lambda\right)$$ where $\Phi_\gamma^+(x)=\frac{1}{2}(\sqrt{x^2+4\gamma}+x)$ and for a symmetric $A$ with an eigen value decomposition $A=PDP^\top$ and $D=\diag(d)$, $\Phi_\gamma^+(A)=P\diag(\Phi_\gamma^+(d))P^\top$ where $\Phi_\gamma^+(d)$ is applied element-wise. In the end,
$$Y^{k+1}=\Phi_{\frac{\lambda}{\rho}}^+\left(HH^\top+\delta I -\frac{1}{\rho}\Lambda\right)-\delta I.$$


\subsection{Comparison of the two algorithms}

Here, we compare the different proposed algorithms for MaxVol NMF, both on synthetic datasets and real datasets. The results are averaged over 10 runs and are presented on \Cref{maxvolnmf:fig:algos}. For the synthetic dataset, $W$ is drawn following a uniform distribution in $[0,1]$ and $H$ is such that each of its column is drawn following a Dirichlet distribution where the concentration parameters are all equal to $0.2$. The input matrix is then just $X=WH$. A different $X$ is drawn at each run. The compared algorithms are Adgrad2 (\Cref{maxvolnmf:sec:adgrad}), ADMM (\Cref{maxvolnmf:sec:admm}) and ADMM+Adgrad. ADMM+Adgrad has the same formulation as in~\eqref{maxvolnmf:eq:admmmaxvol}, but the update for $H$~\eqref{maxvolnmf:eq:admm_H} is performed using the adaptive gradient descent method instead of minimizing the proposed Bregman surrogate. Regardless of the dataset and of the algorithm, the number of iterations is fixed to 500, the number of inner iterations\footnote{This value represents how many times $H$ is updated in a row before updating $W$, and vice-versa.} is fixed to 20, $\lambda$ and $\delta$ are fixed to $1$, the automatic tuning of $\lambda$ is switched off because it changes the cost function. In \Cref{maxvolnmf:fig:algos}, on both synthetic data and Moffett, ADMM with $\rho=0.01$ has the best convergence speed and the lowest error. Still on synthetic data and Moffett, we can see how the proposed Bregman surrogate provides a nice approximation of the original ADMM formulation~\eqref{maxvolnmf:eq:admmmaxvol}. For equal $\rho$'s, ADMM always converges faster and to a lower error than ADMM+Adgrad. This experimentally justifies our choice for the use of a Bregman surrogate to update $H$ in the ADMM formulation of MaxVol NMF. However, this is at the cost of a higher computation time, due to \Cref{maxvolnmf:alg:Hminbregsurro}, as it can be seen in the reported average times in \Cref{maxvolnmf:tab:averagetimesynth}. One can always increase the tolerance threshold $\epsilon$ in \Cref{maxvolnmf:alg:Hminbregsurro}, but should remain careful. Let us increase $\epsilon$ to $10^{-3}$. The computation time of ADMM with $\rho=0.01$ is greatly improved, as a run on the synthetic dataset only lasts 2.44s in average. However, for $\rho=0.1$ the algorithm diverges, as it can be seen on \Cref{maxvolnmf:fig:admm_bigger_epsi}, and the computation time is increased to 6.90s in average. Finally, ADMM is not always better than Adgrad2, like with Samson on \Cref{maxvolnmf:fig:algos_samson}. Reasons as to why one algorithm would be better than the other are, up to now, unknown.

\pagebreak

\phantom{e}

\vfill

\begin{table}[h]
    \begin{tabular}{c||c|c|c|c|c}
        \multirow{2}*{Alg.} & \multirow{2}*{Adgrad2} & ADMM+Adgrad& ADMM+Adgrad& ADMM & ADMM \\ 
        & & $\rho=0.01$ & $\rho=0.1$ & $\rho=0.01$ & $\rho=0.1$ \\ \hline
        Time (s) & 3.67 & 2.88 & 2.39 & 5.33 & 23.5
    \end{tabular}
    \caption{Average time per run on synthetic datasets}
    \label{maxvolnmf:tab:averagetimesynth}
\end{table}

\vfill

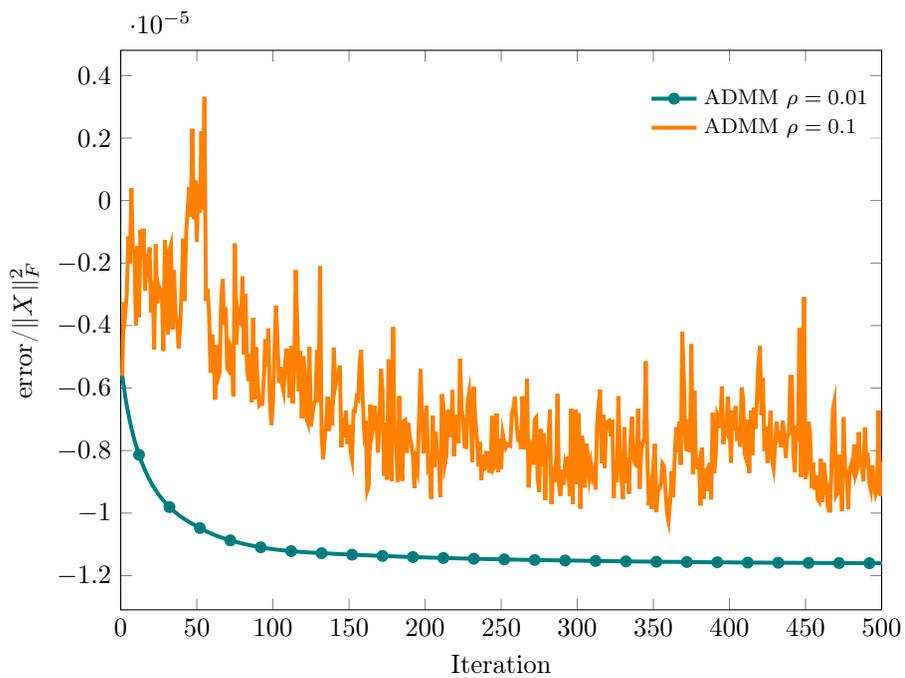
\begin{figure}[h]
    \centering
    \begin{tikzpicture}
        \begin{axis}[
                    width=0.9\linewidth,
                    height=9cm,
                    xmin = 0,
                    xmax = 500,
                    ylabel = error/$\|X\|_F^2$,
                    xlabel = {Iteration},
                    cycle list name=exotic,
                    mark size = 1.5pt,
                    mark repeat = 20,
                    legend cell align={left},
                    legend style={font=\footnotesize,at={(1,0.95)},anchor=north east,draw=none,fill opacity=0.5,text opacity=1}]
        \addplot+[mark phase=12,line width = 1.5pt] table[x expr=\coordindex+1, y index = 3]{maxvolnmf/figures/div_conv_mean_50_500_5.txt};
        \addplot+[mark = none,line width = 1.5pt] table[x expr=\coordindex+1, y index = 4]{maxvolnmf/figures/div_conv_mean_50_500_5.txt};
        \legend{ADMM $\rho=0.01$,ADMM $\rho=0.1$}
        \end{axis}
    \end{tikzpicture}
    \caption{ADMM on synthetic dataset with $\epsilon=10^{-3}$}
    \label{maxvolnmf:fig:admm_bigger_epsi}
\end{figure}

\vfill

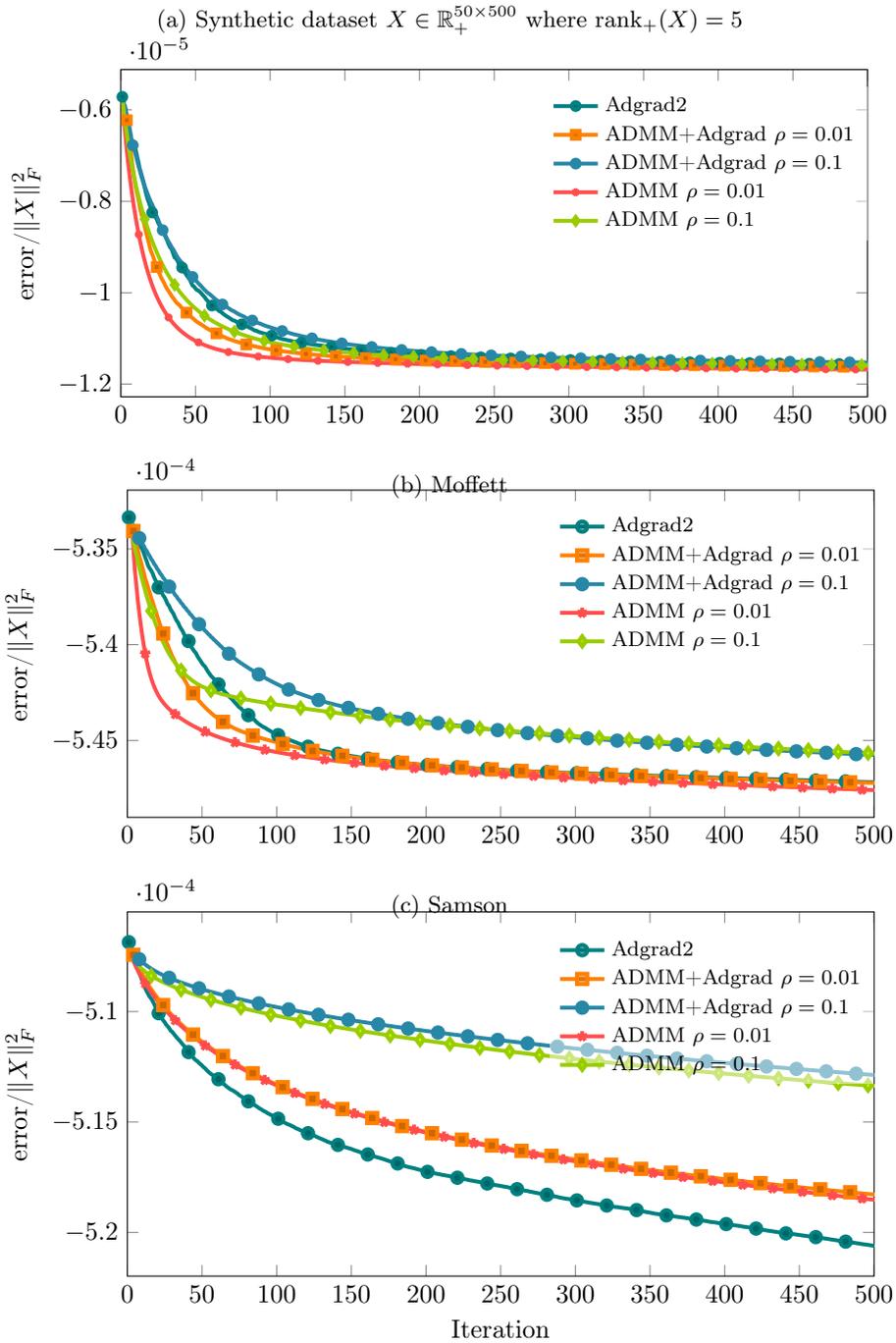
\begin{figure}[htbp!]
    \centering
    \begin{subfigure}{\linewidth}
        \centering
        \caption{Synthetic dataset $X\in\R^{50\times 500}_+$ where $\rank_+(X)=5$}
        \label{maxvolnmf:fig:algos_synth}
        \begin{tikzpicture}
            \begin{axis}[
                        width=0.9\linewidth,
                        height=6cm,
                        xmin = 0,
                        xmax = 500,
                        ylabel = error/$\|X\|_F^2$,
                        cycle list name=exotic,
                        mark size = 1.5pt,
                        mark repeat = 20,
                        legend cell align={left},
                        legend style={font=\footnotesize,at={(1,0.95)},anchor=north east,draw=none,fill opacity=0.5,text opacity=1}]
            \addplot+[mark phase= 0,line width = 1.5pt] table[x expr=\coordindex+1, y index = 0]{maxvolnmf/figures/conv_mean_50_500_5.txt};
            \addplot+[mark phase= 4,line width = 1.5pt] table[x expr=\coordindex+1, y index = 1]{maxvolnmf/figures/conv_mean_50_500_5.txt};
            \addplot+[mark phase= 8,line width = 1.5pt] table[x expr=\coordindex+1, y index = 2]{maxvolnmf/figures/conv_mean_50_500_5.txt};
            \addplot+[mark phase=12,line width = 1.5pt] table[x expr=\coordindex+1, y index = 3]{maxvolnmf/figures/conv_mean_50_500_5.txt};
            \addplot+[mark phase=16,line width = 1.5pt] table[x expr=\coordindex+1, y index = 4]{maxvolnmf/figures/conv_mean_50_500_5.txt};
            \legend{Adgrad2,ADMM+Adgrad $\rho=0.01$,ADMM+Adgrad $\rho=0.1$,ADMM $\rho=0.01$,ADMM $\rho=0.1$}
            \end{axis}
        \end{tikzpicture}
    \end{subfigure}

    \vspace{0.3cm}

    \begin{subfigure}{\linewidth}
        \centering
        \caption{Moffett}
        \label{maxvolnmf:fig:algos_moffett}
        \vspace{-0.6cm}
        \begin{tikzpicture}
            \begin{axis}[
                        width=0.9\linewidth,
                        height=6cm,
                        xmin = 0,
                        xmax = 500,
                        ylabel = error/$\|X\|_F^2$,
                        cycle list name=exotic,
                        mark size = 2pt,
                        mark repeat = 20,
                        legend cell align={left},
                        legend style={font=\footnotesize,at={(1,0.95)},anchor=north east,draw=none,fill opacity=0.5,text opacity=1}]
            \addplot+[mark phase= 0,line width = 1.5pt] table[x expr=\coordindex+1, y index = 0]{maxvolnmf/figures/conv_moffett.txt};
            \addplot+[mark phase= 4,line width = 1.5pt] table[x expr=\coordindex+1, y index = 1]{maxvolnmf/figures/conv_moffett.txt};
            \addplot+[mark phase= 8,line width = 1.5pt] table[x expr=\coordindex+1, y index = 2]{maxvolnmf/figures/conv_moffett.txt};
            \addplot+[mark phase=12,line width = 1.5pt] table[x expr=\coordindex+1, y index = 3]{maxvolnmf/figures/conv_moffett.txt};
            \addplot+[mark phase=16,line width = 1.5pt] table[x expr=\coordindex+1, y index = 4]{maxvolnmf/figures/conv_moffett.txt};
            \legend{Adgrad2,ADMM+Adgrad $\rho=0.01$,ADMM+Adgrad $\rho=0.1$,ADMM $\rho=0.01$,ADMM $\rho=0.1$}
            \end{axis}
        \end{tikzpicture}
    \end{subfigure}

    \vspace{0.3cm}

    \begin{subfigure}{\linewidth}
        \centering
        \caption{Samson}
        \label{maxvolnmf:fig:algos_samson}
        \vspace{-0.6cm}
        \begin{tikzpicture}
            \begin{axis}[
                        width=0.9\linewidth,
                        height=6.5cm,
                        xmin = 0,
                        xmax = 500,
                        ylabel = error/$\|X\|_F^2$,
                        xlabel = {Iteration},
                        cycle list name=exotic,
                        mark size = 2pt,
                        mark repeat = 20,
                        legend cell align={left},
                        legend style={font=\footnotesize,at={(1,0.95)},anchor=north east,draw=none,fill opacity=0.5,text opacity=1}]
            \addplot+[mark phase= 0,line width = 1.5pt] table[x expr=\coordindex+1, y index = 0]{maxvolnmf/figures/conv_samson.txt};
            \addplot+[mark phase= 4,line width = 1.5pt] table[x expr=\coordindex+1, y index = 1]{maxvolnmf/figures/conv_samson.txt};
            \addplot+[mark phase= 8,line width = 1.5pt] table[x expr=\coordindex+1, y index = 2]{maxvolnmf/figures/conv_samson.txt};
            \addplot+[mark phase=12,line width = 1.5pt] table[x expr=\coordindex+1, y index = 3]{maxvolnmf/figures/conv_samson.txt};
            \addplot+[mark phase=16,line width = 1.5pt] table[x expr=\coordindex+1, y index = 4]{maxvolnmf/figures/conv_samson.txt};
            \legend{Adgrad2,ADMM+Adgrad $\rho=0.01$,ADMM+Adgrad $\rho=0.1$,ADMM $\rho=0.01$,ADMM $\rho=0.1$}
            \end{axis}
        \end{tikzpicture}
    \end{subfigure}
    \caption{Comparison of algorithms for MaxVol NMF on various datasets}
    \label{maxvolnmf:fig:algos}
\end{figure}

\pagebreak

\section{Normalized MaxVol NMF}\label{maxvolnmf:sec:normmaxvolnmf}

In \Cref{maxvolnmf:sec:maxvolnmf}, we mentioned that a drawback of MaxVol \eqref{maxvolnmf:eq:nonexactmaxvolmf} is its bias towards an even clustering. Here, we introduce a normalized variant of MaxVol NMF, where the volume of the row wise normalized $H$ is maximized instead of the standard volume:

\begin{mini}|s|
    {\scriptstyle W,H}
    {f(W,H):=\frac{1}{2}\| X - WH \|_F^2 -\lambda\logdet(\widetilde{H}\widetilde{H}^\top+\delta I)\label{maxvolnmf:eq:nonexactnormalizedmaxvolmf}}{}{}
    \addConstraint{W\geq0,H\geq0}
    \addConstraint{\widetilde{H}=S^{-1}H \text{ where } S=\diag(\|H(1,:)\|_2,\dots,\|H(r,:)\|_2).}
\end{mini} 

This model is interesting for several reasons. \\

When $\lambda$ is increasing, $\widetilde{H}\widetilde{H}^\top$ converges to the identity. In other words, increasing $\lambda$ acts in favor of mutually orthogonal rows of $H$. Unlike MaxVol NMF, the norm of the rows of $H$ can be anything since it is $\widetilde{H}\widetilde{H}^\top$ that converges to the identity and not $HH^\top$. In fact, Normalized MaxVol NMF can be viewed as a continuum between NMF and Orthogonal NMF (ONMF)\label{acro:ONMF}. With $\lambda=0$, NMF is retrieved. Increasing $\lambda$ progressively retrieves ONMF. Let us prove this asymptotic behavior of Normalized MaxVol. To do so, we show that the problem

\begin{mini}
    {X\in \mathbb{S}^r}{f_0(X)=\log\det X^{-1}}{\label{maxvolnmf:eq:normmaxvol}}{}
    \addConstraint{\diag(X)=e}
    \addConstraint{0\leq X \leq 1,}
\end{mini}
where $\dom f_0=\mathbb{S}^r_{++}$ has $X=I$ as a unique minimizer. Again, we solve this problem through its dual using the conjugate of $f_0$, which has already been computed in \eqref{maxvolnmf:eq:conjugatef0}. First, \eqref{maxvolnmf:eq:normmaxvol} can be reformulated as 
\begin{mini}
    {X\in \mathbb{S}^r}{f_0(X)=\log\det X^{-1}}{}{}
    \addConstraint{\langle E_{ii},X\rangle=1 \text{ for all } i}
    \addConstraint{\langle -E_{ij},X\rangle\leq0 \text{ for all } i,j}
    \addConstraint{\langle E_{ij},X\rangle\leq1 \text{ for all } i,j.}
\end{mini}
Using again~\eqref{maxvolnmf:eq:conjindual}, we can write the dual of~\eqref{maxvolnmf:eq:normmaxvol} with the conjugate of $f_0$:

\begin{equation}
    g(\lambda,\gamma,\nu)=\left\{\begin{array}{ll}
        \log\det\bigl(\diag(\nu)+\gamma-\lambda\bigr)+r-\langle J,\gamma \rangle - e^\top\nu & \text{ if } \diag(\nu)+\gamma-\lambda \succ0,  \\
        \infty&\text{ otherwise,} 
    \end{array}\right.
\end{equation}
where $\lambda\in\R^{r \times r}_+$, $\gamma\in\R^{r \times r}_+$ and $\nu\in\R^r$. Let $\lambda^*=0,\gamma^*=0,\nu^*=e$ and $X^*=I$. We have $f_0(X^*)=g(\lambda^*,\gamma^*,\nu^*)=0$, meaning that there is no duality gap and that $X^*$ is a solution of \eqref{maxvolnmf:eq:normmaxvol}. Finally, $X^*$ is the unique solution because $f_0$ is strongly convex.\\

It is possible to control the range of the volume criterion via $\delta$. When $\delta>0$, we have $$\logdet(\widetilde{H}\widetilde{H}^\top+\delta I)\in\left[\log(1+r\delta^{-1})+r\log\delta,r\log(1+\delta)\right]$$ where the minimum and maximum are respectfully reached when $\widetilde{H}\widetilde{H}^\top=J$ and $\widetilde{H}\widetilde{H}^\top=I$. The parameter $\delta$ then controls how larger can the volume of $\widetilde{H}$ be, while $\lambda$ still balances the reconstruction error and the volume criterion. By increasing $\delta$, the dynamical range is reduced, as it can be seen on \Cref{maxvolnmf:fig:range_normvol}. With respect to $\lambda$ and the reconstruction error, it is then harder to increase the volume of $\widetilde{H}$. In the context of HU, $\delta$ can be seen as a mixture tolerance parameter, while $\lambda$ is more like a noise level estimation parameter.\\

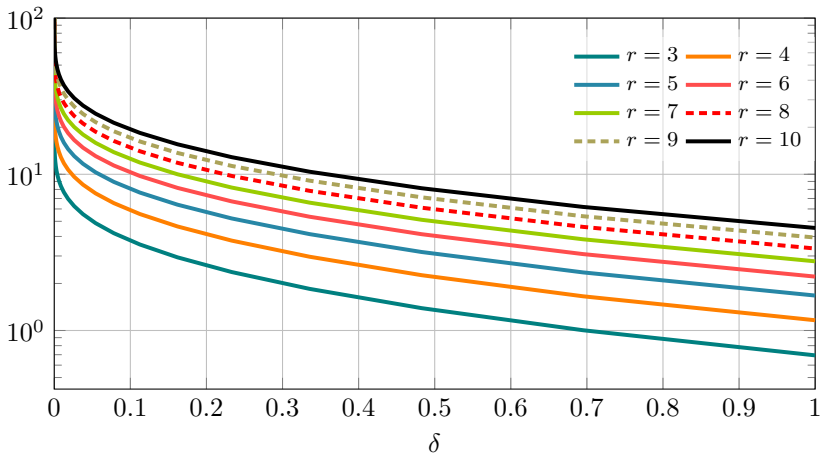
\begin{figure}[htbp!]
    \centering
    \caption[Range of the logdet depending on $r$ and $\delta$]{Value $r\log(1+\delta)-\log(1+r\delta^{-1})-r\log\delta$ depending on $\delta$ for various $r$'s.}
    \label{maxvolnmf:fig:range_normvol}
    \begin{tikzpicture}
        \begin{axis}[
                    width=0.9\linewidth,
                    height=6.5cm,
                    ymax = 1e2,
                    grid=major,
                    ymode=log,
                    xmin = 0,
                    xmax = 1,
                    ylabel = {},
                    xlabel = {$\delta$},
                    cycle list name=exotic,
                    legend columns = 2,
                    legend cell align={left},
                    legend style={font=\footnotesize,at={(1,0.95)},anchor=north east,draw=none,fill opacity=0.5,text opacity=1}]
        \foreach \j in {1,...,8}{
            \pgfmathtruncatemacro{\myresult}{\j+2}%
            \addplot+[mark=none,line width = 1.5pt] table[x index=0, y index = \j]{maxvolnmf/figures/range_normvol.txt};
            \addlegendentryexpanded{$r=\myresult$};
        }
        \end{axis}
    \end{tikzpicture}
\end{figure}

Another advantage of the normalized formulation of MaxVol NMF is the removal of the simplex structure on $H$. Let us remind that this simplex structure is not without loss of generality. In HU for instance, if there are two pure pixels of tree but one of them receives more light than the other, then a perfect unmixing would require a different grass endmember for each one. In other words, the simplex structure might require a larger rank. Also, the projection on the probability simplex is costly.\\

In spite of the benefits the normalized variant brings, we ``lose'' two aspects of the vanilla MaxVol NMF. The most notable one is the identifiability. It remains unknown if Normalized MaxVol NMF is identifiable or not. We also lose the possibility to solve Normalized MaxVol NMF with the same ADMM formulation that we used for MaxVol NMF. We could not find a kernel that would provide us with a Bregman surrogate. Even if we did, the considered Bregman surrogate would then need to be nice enough to be easily solved. This is not a big issue since we can still solve it with the adaptive accelerated gradient descent method, which is described in \Cref{maxvolnmf:sec:solvenormmaxvolnmf}.

\section{Solving Normalized MaxVol NMF}\label{maxvolnmf:sec:solvenormmaxvolnmf}

Here, we propose to solve Normalized MaxVol NMF \eqref{maxvolnmf:eq:nonexactnormalizedmaxvolmf} with the adaptive accelerated gradient descent method, already introduced in \Cref{maxvolnmf:sec:adgrad}. Like it has been said in \Cref{maxvolnmf:sec:adgrad}, we only need to know the gradient. Hence, in this section, we only describe the computation of the gradient. The algorithm is exactly the same as \Cref{maxvolnmf:alg:Adgrad2}, except for the projected gradient step in \cref{maxvolnmf:alg:lineupdateH} that should be replaced by $H=[\Hextra-\gamma_{H}\nabla f(\Hextra)]_+$ because there is no simplex structure in the normalized variant. Let us now compute the gradient of $f$ in \eqref{maxvolnmf:eq:nonexactnormalizedmaxvolmf} relatively to $H$.\\

Knowing that 
\begin{align}
    \frac{\partial \widetilde{H}(k,:)}{\partial H(k,j)}&=\begin{pmatrix}
        -\frac{H(k,1)H(k,j)}{\|H(k,:)\|^3} & \cdots & \frac{\|H(k,:)\|^2-H(k,j)^2}{\|H(k,:)\|^3} & \cdots & -\frac{H(k,n)H(k,j)}{\|H(k,:)\|^3}
    \end{pmatrix}\\
    &=\frac{1}{\|H(k,:)\|^3}\left(\|H(k,:)\|^2 e_j^\top - H(k,j)H(k,:)\right),
\end{align}
and using the chain rule, we have that

\begin{align}
    \MoveEqLeft[3]\frac{\partial \logdet(\widetilde{H}\widetilde{H}^\top+\delta I)}{\partial H(k,j)} = \left\langle\frac{\partial \logdet(\widetilde{H}\widetilde{H}^\top+\delta I)}{\partial \widetilde{H}},\frac{\partial \widetilde{H}}{\partial H(k,j)}\right\rangle \\
    = & \left\langle 2(\widetilde{H}\widetilde{H}^\top+\delta I)^{-1}\widetilde{H}, \frac{1}{\|H(k,:)\|}E_{k,j} - \frac{H(k,j)}{\|H(k,:)\|^3}e_k H(k,:) \right\rangle \\
    = &\begin{multlined}[t]
        \frac{1}{\|H(k,:)\|}\langle 2(\widetilde{H}\widetilde{H}^\top+\delta I)^{-1}\widetilde{H},E_{k,j} \rangle \\ - \frac{1}{\|H(k,:)\|}\langle 2(\widetilde{H}\widetilde{H}^\top+\delta I)^{-1}, e_k \widetilde{H}(k,:)\widetilde{H}^\top \rangle \widetilde{H}(k,j).
    \end{multlined}
\end{align}
In the end, 
\begin{equation}
    \frac{\partial \logdet(\widetilde{H}\widetilde{H}^\top +\delta I)}{\partial H} = 2S^{-1}\left[(\widetilde{H}\widetilde{H}^\top +\delta I)^{-1}-\diag\left((\widetilde{H}\widetilde{H}^\top +\delta I)^{-1}\widetilde{H}\widetilde{H}^\top\right)\right]\widetilde{H}
\end{equation}
and
\begin{equation}
    \frac{\partial f}{\partial H}=W^\top(WH-X) - 2\lambda S^{-1}\left[(\widetilde{H}\widetilde{H}^\top +\delta I)^{-1}-\diag\left((\widetilde{H}\widetilde{H}^\top +\delta I)^{-1}\widetilde{H}\widetilde{H}^\top\right)\right]\widetilde{H}.
\end{equation}

\pagebreak

\section{Performance of Normalized MaxVol NMF on Hyperspectral Unmixing}\label{maxvolnmf:sec:normmaxvolnmfexp}

In this section, we evaluate the performance of Normalized MaxVol NMF on famous hyperspectral datasets. Results can be compared with~\cite{zhu2017hyperspectral} where some ground-truths for a variety of known hyperspectral datasets are proposed. Even if these are called ground-truths, hyperspectral ground-truths do not exist except if the measurements are performed in a controlled environment. Consider the proposed abundance maps for Urban with four endmembers in~\cite{zhu2017hyperspectral}. Clearly, some trees are detected where in fact it should be a mixture of grass and soil. Some rooftops are also detected where it should be soil. Still, the author used as many a priori knowledge as possible to provide these abundance maps and endmembers that are probably close to reality. Our message here is that ground-truths for these hyperspectral datasets should be interpreted with caution. On Moffett and on Samson, our model clearly outperforms MinVol NMF and MaxVol NMF, see \Cref{maxvolnmf:fig:moffett_nmaxvol_1_05,maxvolnmf:fig:samson_nmaxvol}. Water, soil and tree are correctly separated and their spectral signatures are very close to the expected ones in \cite{zhu2017hyperspectral}. 
\begin{figure}[htbp!]
	\hfill\fbox{\includegraphics[width=0.95\linewidth]{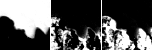}}\hfill

	\hfill\begin{tikzpicture}
        \begin{groupplot}[
            group style={
                group name=my plots,
                group size= 3 by 1,
                x descriptions at=edge bottom,
                y descriptions at=edge left,
                horizontal sep=0.07cm,
                vertical sep=0.cm,
            },
			width=0.433\linewidth,no markers,grid=major,xlabel={},ylabel={},tick label style={/pgf/number format/fixed,font=\tiny},ymin=0,ymax=0.3
        ]
        \nextgroupplot
            \addplot [line width = 1pt,blue] table [x expr=\coordindex+1, y index=0] {maxvolnmf/figures/moffett_nmaxvol_1_05.txt};
        \nextgroupplot
            \addplot [line width = 1pt,red] table [x expr=\coordindex+1, y index=1] {maxvolnmf/figures/moffett_nmaxvol_1_05.txt};
        \nextgroupplot
            \addplot [line width = 1pt] table [x expr=\coordindex+1, y index=2] {maxvolnmf/figures/moffett_nmaxvol_1_05.txt};
        \end{groupplot}
	\end{tikzpicture}\hfill
    \caption[Abundance maps and endmembers by Normalized MaxVol NMF on Moffett]{Abundance maps and endmembers ({\color{blue}water}, {\color{red}tree} and soil) by Normalized MaxVol NMF on Moffett, with $\lambda=1$ and $\delta=0.5$.}
    \label{maxvolnmf:fig:moffett_nmaxvol_1_05}
\end{figure}

\begin{figure}[htbp!]
	\hfill\fbox{\includegraphics[width=0.95\linewidth]{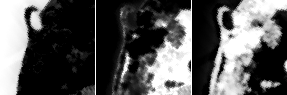}}\hfill

	\hfill\begin{tikzpicture}
        \begin{groupplot}[
            group style={
                group name=my plots,
                group size= 3 by 1,
                x descriptions at=edge bottom,
                y descriptions at=edge left,
                horizontal sep=0.07cm,
                vertical sep=0.cm,
            },
			width=0.433\linewidth,no markers,grid=major,xlabel={},ylabel={},tick label style={/pgf/number format/fixed,font=\tiny},ymin=0,ymax=0.5
        ]
        \nextgroupplot
            \addplot [line width = 1pt,blue] table [x expr=\coordindex+1, y index=0] {maxvolnmf/figures/samson_nmaxvol.txt};
        \nextgroupplot
            \addplot [line width = 1pt] table [x expr=\coordindex+1, y index=1] {maxvolnmf/figures/samson_nmaxvol.txt};
        \nextgroupplot
            \addplot [line width = 1pt,red] table [x expr=\coordindex+1, y index=2] {maxvolnmf/figures/samson_nmaxvol.txt};
        \end{groupplot}
	\end{tikzpicture}\hfill
    \caption[Abundance maps and endmembers by Normalized MaxVol NMF on Samson]{Abundance maps and endmembers ({\color{blue}water}, soil and {\color{red}tree}) by Normalized MaxVol NMF on Samson, with $\lambda=1$ and $\delta=0.5$.}
    \label{maxvolnmf:fig:samson_nmaxvol}
\end{figure}

Now that we very briefly assured that our model works on simple datasets, let us comment on one of its interesting features. Consider the Samson dataset again, but with $r=6$. With MinVol NMF, the excessive endmembers should be brought to zero by tuning $\lambda$ and $\delta$. Otherwise, MinVol NMF would be over-parameterized and would just learn the noise, which would output a non-interpretable matrix factorization. On \Cref{maxvolnmf:fig:samson_r6_smaxvol_05_05}, we can see that increasing the rank above the standard $r=3$ for Samson allowed MaxVol NMF to learn more spectral varieties. We can see three different kinds of tree and two different kinds of soil. We can then add together the rows of $H$ that corresponds to varieties of the same endmembers. The resulting merged abundance maps are available on \Cref{maxvolnmf:fig:grouped_samson}. One can notice how close are the abundance maps on \Cref{maxvolnmf:fig:grouped_samson} and \Cref{maxvolnmf:fig:samson_nmaxvol} to each other. Actually, results with $r=6$ are more satisfying for the water unmixing. On \Cref{maxvolnmf:fig:samson_nmaxvol}, some small artifacts of false-positives can be seen, especially in the bottom right corner of the abundance map corresponding to water. These artifacts are not visible on \Cref{maxvolnmf:fig:grouped_samson}. It would seem that MaxVol NMF allows to increase the number of parameters in order to improve the results in a controlled manner, at least in the context of HU. 

Let us confirm this behavior on the Urban dataset. This dataset is particularly insightful in this case because it is known for having meaningful unmixing results for $r=4,5\text{ and }6$. Results are displayed on \Cref{maxvolnmf:fig:urban_nmaxvol}. With $r=4$, we have roof, grass, a combination of asphalt and soil, and tree. With $r=5$, the distinction is being made between asphalt and dirt. With $r=6$, the distinction is being made between grass and dry grass. Typical ground-truths with $r=6$ rather suggest a distinction between two kinds of roof, instead of grass and dry grass. Here our model propose another interpretation for $r=6$ which still makes sense.

The last experiment is on the Jasper dataset, where ground-truths suggest four endmembers: tree, water, soil and road. Unmixing algorithms often struggle to correctly separate water and road on Jasper. On \Cref{maxvolnmf:fig:jasper4}, we can see that our model achieves not ideal but nonetheless decent results. The issue with our model here is that in order to improve the distinction between water and road, $\lambda$ should be increased. However, increasing $\lambda$ might not be the best option here. There are many areas where tree and soil are heavily mixed, and increasing $\lambda$ will converge to ONMF, which cannot properly unmix these areas. One way to circumvent that is to increase the rank. Results with $r=5$ are displayed on \Cref{maxvolnmf:fig:jasper5}. The additional endmember is in fact a combination of tree and soil. With this trick, water and road are properly identified without compromising the quality of the other endmembers.

\begin{figure}[htbp!]
	\hfill\fbox{\includegraphics[width=0.95\linewidth]{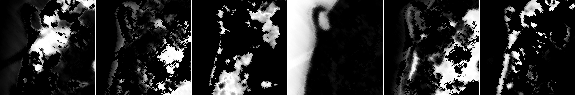}}\hfill

    \hfill\begin{tikzpicture}
        \begin{groupplot}[
            group style={
                group name=my plots,
                group size= 6 by 1,
                x descriptions at=edge bottom,
                y descriptions at=edge left,
                horizontal sep=0.06cm,
                vertical sep=0.cm,
            },
			width=0.275\linewidth,no markers,grid=major,xlabel={},ylabel={},tick label style={/pgf/number format/fixed,font=\tiny},ymin=0,ymax=0.17
        ]
        \nextgroupplot
            \addplot [line width = 1pt] table [x expr=\coordindex+1, y index=0] {maxvolnmf/figures/samson_r6_smaxvol_05_05.txt};
        \nextgroupplot
            \addplot [line width = 1pt,red] table [x expr=\coordindex+1, y index=1] {maxvolnmf/figures/samson_r6_smaxvol_05_05.txt};
        \nextgroupplot
            \addplot [line width = 1pt] table [x expr=\coordindex+1, y index=2] {maxvolnmf/figures/samson_r6_smaxvol_05_05.txt};
        \nextgroupplot
            \addplot [line width = 1pt,blue] table [x expr=\coordindex+1, y index=3] {maxvolnmf/figures/samson_r6_smaxvol_05_05.txt};
        \nextgroupplot
            \addplot [line width = 1pt,red] table [x expr=\coordindex+1, y index=4] {maxvolnmf/figures/samson_r6_smaxvol_05_05.txt};
        \nextgroupplot
            \addplot [line width = 1pt] table [x expr=\coordindex+1, y index=5] {maxvolnmf/figures/samson_r6_smaxvol_05_05.txt};
        \end{groupplot}
	\end{tikzpicture}\hfill
	\caption[Abundance maps and endmembers by Normalized MaxVol NMF on Samson with $r=6$]{Abundance maps and endmembers (tree, {\color{red}soil} and {\color{blue}water}) by Normalized MaxVol NMF with $r=6$ on Samson, with $\lambda=0.5$ and $\delta=0.5$.}
    \label{maxvolnmf:fig:samson_r6_smaxvol_05_05}
\end{figure}

\begin{figure}[htbp!]
	\hfill\fbox{\includegraphics[width=0.95\linewidth]{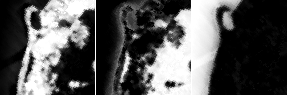}}\hfill

    \hfill\begin{tikzpicture}
        \begin{groupplot}[
            group style={
                group name=my plots,
                group size= 3 by 1,
                x descriptions at=edge bottom,
                y descriptions at=edge left,
                horizontal sep=0.07cm,
                vertical sep=0.cm,
            },
			width=0.433\linewidth,no markers,grid=major,xlabel={},ylabel={},tick label style={/pgf/number format/fixed,font=\tiny},ymin=0,ymax=0.17
        ]
        \nextgroupplot
            \addplot [line width = 1pt] table [x expr=\coordindex+1, y index=0] {maxvolnmf/figures/samson_r6_smaxvol_05_05.txt};
            \addplot [line width = 1pt,dashed] table [x expr=\coordindex+1, y index=2] {maxvolnmf/figures/samson_r6_smaxvol_05_05.txt};
            \addplot [line width = 1pt,dotted] table [x expr=\coordindex+1, y index=5] {maxvolnmf/figures/samson_r6_smaxvol_05_05.txt};
        \nextgroupplot
            \addplot [line width = 1pt,red] table [x expr=\coordindex+1, y index=1] {maxvolnmf/figures/samson_r6_smaxvol_05_05.txt};
            \addplot [line width = 1pt,red,dashed] table [x expr=\coordindex+1, y index=4] {maxvolnmf/figures/samson_r6_smaxvol_05_05.txt};
        \nextgroupplot
            \addplot [line width = 1pt,blue] table [x expr=\coordindex+1, y index=3] {maxvolnmf/figures/samson_r6_smaxvol_05_05.txt};
        \end{groupplot}
	\end{tikzpicture}\hfill
	\caption[Abundance maps grouped by endmember varieties and endmembers by Normalized MaxVol NMF with $r=6$ on Samson]{Abundance maps grouped by endmember varieties and endmembers (tree, {\color{red}soil} and {\color{blue}water}) by Normalized MaxVol NMF with $r=6$ on Samson, with $\lambda=0.5$ and $\delta=0.5$.}
    \label{maxvolnmf:fig:grouped_samson}
\end{figure}

\begin{figure}[htbp!]
    \begin{subfigure}{\linewidth}
        \caption{$r=4$}
        \hfill\fbox{\includegraphics[width=0.95\linewidth]{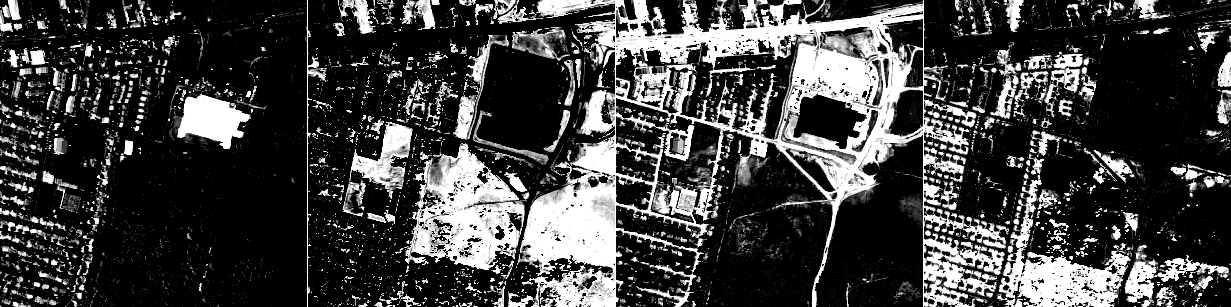}}\hfill
    
        \hfill\begin{tikzpicture}
            \begin{groupplot}[
                group style={
                    group name=my plots,
                    group size= 4 by 1,
                    x descriptions at=edge bottom,
                    y descriptions at=edge left,
                    horizontal sep=0.0cm,
                    vertical sep=0.cm,
                },
                width=0.355\linewidth,no markers,grid=major,xlabel={},ylabel={},tick label style={/pgf/number format/fixed,font=\tiny},ymin=0,ymax=0.2
            ]
            \nextgroupplot
                \addplot [line width = 1pt] table [x expr=\coordindex+1, y index=0] {maxvolnmf/figures/urban4_smaxvol_05_05.txt};
            \nextgroupplot
                \addplot [line width = 1pt,teal] table [x expr=\coordindex+1, y index=1] {maxvolnmf/figures/urban4_smaxvol_05_05.txt};
            \nextgroupplot
                \addplot [line width = 1pt,red] table [x expr=\coordindex+1, y index=2] {maxvolnmf/figures/urban4_smaxvol_05_05.txt};
            \nextgroupplot
                \addplot [line width = 1pt,blue] table [x expr=\coordindex+1, y index=3] {maxvolnmf/figures/urban4_smaxvol_05_05.txt};
            \end{groupplot}
        \end{tikzpicture}\hfill
    \end{subfigure}

    \begin{subfigure}{\linewidth}
        \caption{$r=5$}
        \hfill\fbox{\includegraphics[width=0.95\linewidth]{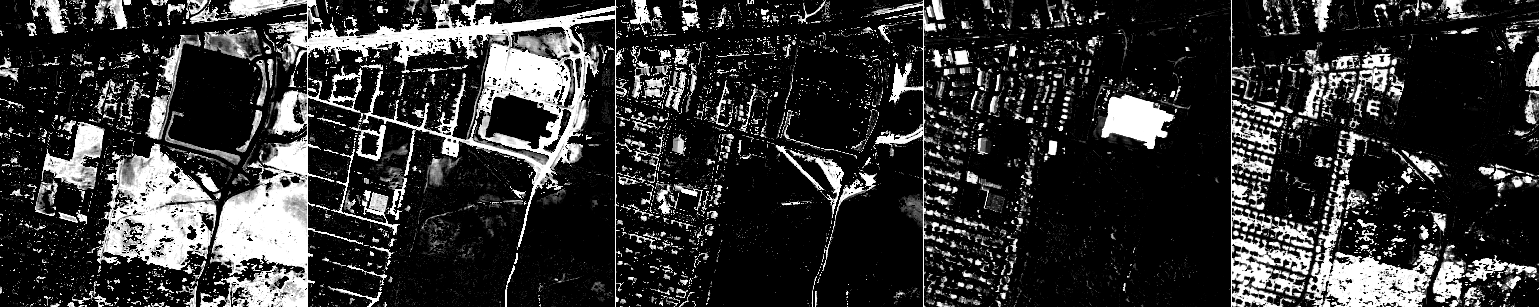}}\hfill
    
        \hfill\begin{tikzpicture}
            \begin{groupplot}[
                group style={
                    group name=my plots,
                    group size= 5 by 1,
                    x descriptions at=edge bottom,
                    y descriptions at=edge left,
                    horizontal sep=0.0cm,
                    vertical sep=0.cm,
                },
                width=0.31\linewidth,no markers,grid=major,xlabel={},ylabel={},tick label style={/pgf/number format/fixed,font=\tiny},ymin=0,ymax=0.2
            ]
            \nextgroupplot
                \addplot [line width = 1pt,teal] table [x expr=\coordindex+1, y index=0] {maxvolnmf/figures/urban5_smaxvol_05_05.txt};
            \nextgroupplot
                \addplot [line width = 1pt,red] table [x expr=\coordindex+1, y index=1] {maxvolnmf/figures/urban5_smaxvol_05_05.txt};
            \nextgroupplot
                \addplot [line width = 1pt,orange] table [x expr=\coordindex+1, y index=2] {maxvolnmf/figures/urban5_smaxvol_05_05.txt};
            \nextgroupplot
                \addplot [line width = 1pt] table [x expr=\coordindex+1, y index=3] {maxvolnmf/figures/urban5_smaxvol_05_05.txt};
            \nextgroupplot
                \addplot [line width = 1pt,blue] table [x expr=\coordindex+1, y index=4] {maxvolnmf/figures/urban5_smaxvol_05_05.txt};
            \end{groupplot}
        \end{tikzpicture}\hfill
    \end{subfigure}

    \begin{subfigure}{\linewidth}
        \caption{$r=6$}
        \hfill\fbox{\includegraphics[width=0.95\linewidth]{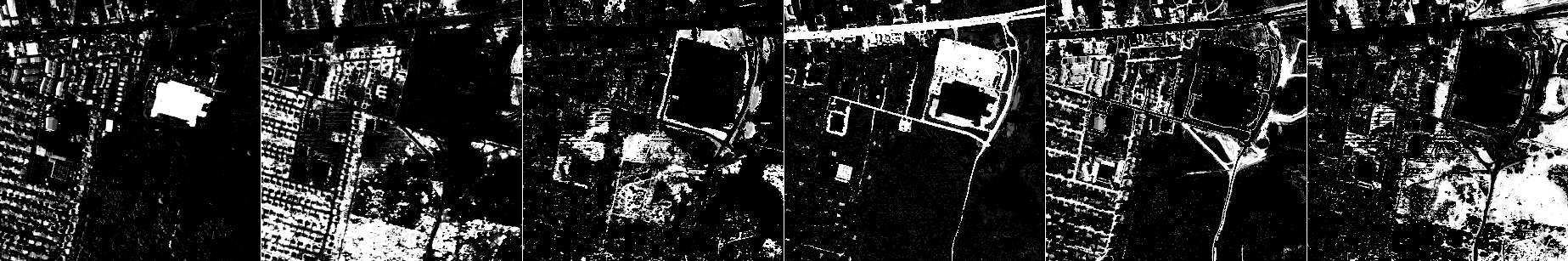}}\hfill
    
        \hfill\begin{tikzpicture}
            \begin{groupplot}[
                group style={
                    group name=my plots,
                    group size= 6 by 1,
                    x descriptions at=edge bottom,
                    y descriptions at=edge left,
                    horizontal sep=0.0cm,
                    vertical sep=0.cm,
                },
                width=0.28\linewidth,no markers,grid=major,xlabel={},ylabel={},tick label style={/pgf/number format/fixed,font=\tiny},ymin=0,ymax=0.2
            ]
            \nextgroupplot
                \addplot [line width = 1pt] table [x expr=\coordindex+1, y index=0] {maxvolnmf/figures/urban6_smaxvol_05_05.txt};
            \nextgroupplot
                \addplot [line width = 1pt,blue] table [x expr=\coordindex+1, y index=1] {maxvolnmf/figures/urban6_smaxvol_05_05.txt};
            \nextgroupplot
                \addplot [line width = 1pt,olive] table [x expr=\coordindex+1, y index=2] {maxvolnmf/figures/urban6_smaxvol_05_05.txt};
            \nextgroupplot
                \addplot [line width = 1pt,red] table [x expr=\coordindex+1, y index=3] {maxvolnmf/figures/urban6_smaxvol_05_05.txt};
            \nextgroupplot
                \addplot [line width = 1pt,orange] table [x expr=\coordindex+1, y index=4] {maxvolnmf/figures/urban6_smaxvol_05_05.txt};
            \nextgroupplot
                \addplot [line width = 1pt,teal] table [x expr=\coordindex+1, y index=5] {maxvolnmf/figures/urban6_smaxvol_05_05.txt};
            \end{groupplot}
        \end{tikzpicture}\hfill
    \end{subfigure}
	\caption[Abundance maps and endmembers by Normalized MaxVol NMF on Urban depending on $r$]{Abundance maps and endmembers (roof, {\color{blue}tree}, {\color{olive}dry grass}, {\color{red}asphalt}, {\color{orange}soil}, {\color{teal}grass}) by Normalized MaxVol NMF on Urban, with $\lambda=0.5$ and $\delta=0.5$, depending on $r$.}
    \label{maxvolnmf:fig:urban_nmaxvol}
\end{figure}



\begin{figure}[htbp!]
	\hfill\fbox{\includegraphics[width=0.95\linewidth]{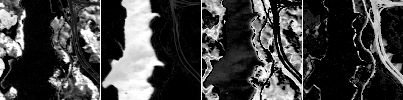}}\hfill

    \hfill\begin{tikzpicture}
        \begin{groupplot}[
            group style={
                group name=my plots,
                group size= 4 by 1,
                x descriptions at=edge bottom,
                y descriptions at=edge left,
                horizontal sep=0.0cm,
                vertical sep=0.cm,
            },
			width=0.355\linewidth,no markers,grid=major,xlabel={},ylabel={},tick label style={/pgf/number format/fixed,font=\tiny},ymin=0,ymax=0.3
        ]
        \nextgroupplot
            \addplot [line width = 1pt] table [x expr=\coordindex+1, y index=0] {maxvolnmf/figures/jasper4.txt};
        \nextgroupplot
            \addplot [line width = 1pt,blue] table [x expr=\coordindex+1, y index=1] {maxvolnmf/figures/jasper4.txt};
        \nextgroupplot
            \addplot [line width = 1pt,red] table [x expr=\coordindex+1, y index=2] {maxvolnmf/figures/jasper4.txt};
        \nextgroupplot
            \addplot [line width = 1pt,teal] table [x expr=\coordindex+1, y index=3] {maxvolnmf/figures/jasper4.txt};
        \end{groupplot}
	\end{tikzpicture}\hfill
	\caption[Abundance maps and endmembers by Normalized MaxVol NMF with $r=4$ on Jasper]{Abundance maps and endmembers (tree, {\color{blue}water}, {\color{red}soil}, {\color{teal}road}) by Normalized MaxVol NMF with $r=4$ on Jasper, with $\lambda=2$ and $\delta=1$.}
    \label{maxvolnmf:fig:jasper4}
\end{figure}

\begin{figure}[htbp!]
	\hfill\fbox{\includegraphics[width=0.95\linewidth]{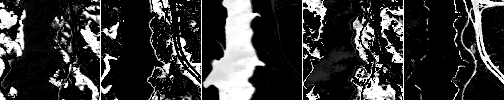}}\hfill

    \hfill\begin{tikzpicture}
        \begin{groupplot}[
            group style={
                group name=my plots,
                group size= 5 by 1,
                x descriptions at=edge bottom,
                y descriptions at=edge left,
                horizontal sep=0.0cm,
                vertical sep=0.cm,
            },
			width=0.31\linewidth,no markers,grid=major,xlabel={},ylabel={},tick label style={/pgf/number format/fixed,font=\tiny},ymin=0,ymax=0.3
        ]
        \nextgroupplot
            \addplot [line width = 1pt] table [x expr=\coordindex+1, y index=0] {maxvolnmf/figures/jasper5.txt};
        \nextgroupplot
            \addplot [line width = 1pt,red] table [x expr=\coordindex+1, y index=1] {maxvolnmf/figures/jasper5.txt};
        \nextgroupplot
            \addplot [line width = 1pt,blue] table [x expr=\coordindex+1, y index=2] {maxvolnmf/figures/jasper5.txt};
        \nextgroupplot
            \addplot [line width = 1pt,orange] table [x expr=\coordindex+1, y index=3] {maxvolnmf/figures/jasper5.txt};
        \nextgroupplot
            \addplot [line width = 1pt,teal] table [x expr=\coordindex+1, y index=4] {maxvolnmf/figures/jasper5.txt};
        \end{groupplot}
	\end{tikzpicture}\hfill
	\caption[Abundance maps and endmembers by Normalized MaxVol NMF with $r=5$ on Jasper]{Abundance maps and endmembers (tree, {\color{red}soil}, {\color{blue}water}, {\color{orange}tree+soil}, {\color{teal}road}) by Normalized MaxVol NMF with $r=5$ on Jasper, with $\lambda=0.5$ and $\delta=0.5$.}
    \label{maxvolnmf:fig:jasper5}
\end{figure}

\noindent\begin{minipage}{\textwidth}
    \begin{remark}
        On the results of our model, only the shape of the spectral signatures should be considered when comparing results, while the amplitude of the spectral signatures are to be considered with a pinch of salt due to the scaling ambiguity between $W$ and $H$. Let us remind that Normalized MaxVol NMF is not simplex structured. In fact, we could take advantage of this scaling ambiguity to improve the condition number when updating a block, but this is not the goal of this chapter.
    \end{remark}
\end{minipage}

\section{Conclusion}\label{maxvolnmf:sec:conclusion}

In this chapter, we introduced MaxVol NMF, an analogue version of MinVol NMF where the volume of $H$ is maximized instead of the volume of $W$ being minimized. Just like MinVol NMF, this new model is identifiable. We also developed two different algorithms to solve MaxVol NMF. We introduced Normalized MaxVol NMF, a variant where the volume of the row wise normalized $H$ is maximized. This model creates a continuum between NMF and ONMF and exhibits better results than MinVol NMF on hyperspectral unmixing. Its identifiability remains an open question. Similarly, a normalized version of MinVol NMF could be interesting. This could be seen as a minimum aperture NMF. One could say that this already exists through MinVol NMF with the constraint $e^\top W=e^\top$, which is partially true. For a fixed aperture, the volume of $W$ is changing depending on where the columns of $W$ are on the probability simplex. In other words, there is a little bias drawing the columns of $W$ towards $e$, which is not the case with a normalized version of MinVol NMF. Normalized MaxVol NMF and Normalized MinVol NMF could be combined to control the spectral variability when increasing the rank. Finally, the performance of Normalized MaxVol NMF should be evaluated on other kinds of data.

\chapter{Highlight of the contributions and discussions}\label{chap:conclu}

\begin{hiddenmusic}{Andrew Prahlow - Final Voyage}{https://www.youtube.com/watch?v=6zlSUvWU6z8} 
\end{hiddenmusic}In the conclusion, we first summarize the contributions of this thesis. We then discuss perspectives that could follow this work. 

\section*{Summary}\addcontentsline{toc}{section}{Summary}

The three main motivations of this thesis were the applications, algorithms and theory related to identifiable volume-based regularized matrix factorization models. Except for separable NMF in \Cref{chap:randspa}, all studied models can be seen as volume-regularized models. With BSSMF, in \Cref{chap:bssmf}, the columns of $W$ lie in a chosen hyperrectangle and the columns of $H$ lie in the probability simplex. With PMF, in \Cref{chap:polytopicmf}, the rows of $W$ and the columns of $H$ lie in respective chosen polytopes. For BSSMF and PMF, the volume regularization is in fact a hard constraint. Such constraint can always be translated to a regularization term, using an indicator function that is added to the cost function. For instance, the constraint $H\in\Delta^{r\times n}$ can be replaced by adding $\iota_{\Delta^{r\times n}}(H)$ to the cost function, where $\iota_{\Delta^{r\times n}}(H)$ outputs $0$ if every column of $H$ lies in $\Delta^r$, and $\infty$ otherwise. With MinVol NMF, in \Cref{chap:minvolnmf}, the volume of the convex hull formed by the columns of $W$ and the origin is minimized. With MaxVol NMF, in \Cref{chap:maxvolnmf}, it is the volume of the convex hull formed by the rows of $H$ and the origin that is maximized. \\

In terms of applications, each model is useful for different reasons: 
\begin{itemize}

    \item BSSMF retrieves datalike features when the data is naturally bounded. In a way, to rephrase the well known ``NMF learns parts of objects'', we can say that ``BSSMF learns meaningful objects''. We also showed how BSSMF is a better basis for recommender systems than NMF.
    
    \item Separable NMF is useful when the looked for features are assumed to be data points themselves, also called the \textit{separability assumption}. This is the case in some blind source unmixing applications when each source is observed purely at least once, like it can be for hyperspectral unmixing for instance.
    
    \item When the separability assumption does not hold anymore, either due to the noise and outliers or due to the absence of pure endmembers among the data points, MinVol NMF is often a good alternative. It has also been used for other applications where NMF already showed its capabilities, like blind source separation problems~\cite{leplat2019blind}, facial feature extraction~\cite{zhou2011minimum} or community detection~\cite{huang2019detecting}, to cite a few. Additionnaly, we showed how the MinVol criterion is promising as a regularizer for the task of matrix completion.
    
    \item MaxVol NMF creates a continuum between NMF and ONMF. It inherits from the same behaviors as MinVol NMF, but with more control on the sparsity of the decomposition. Actually, in the inexact case, MinVol NMF directly regularizes the volume of $W$, which indirectly affects the volume of $H$ and offers little control on the sparsity of the decomposition. On the contrary, MaxVol NMF directly regularizes the volume of $H$, offering more control on the sparsity of the decomposition, and indirectly affects the volume of $W$. In the context of HU, for datasets close enough to the separability assumption due to noise, MaxVol NMF seems to exhibit better results than MinVol NMF. It should be noted that MaxVol NMF is probably less powerful for datasets composed only of mixtures. MaxVol NMF where the rank is overestimated also shows interesting results to take into account spectral variability. 
\end{itemize}

Except for PMF\footnote{The main reason for not developping an algorithm for PMF is that this model is too generic. It is totally possible to use one of the many Frank-Wolfe based algorithms to derive an alternated block scheme for any PMF. However, these algorithms are not very fast. For specific polytopes, it is probably faster to compute the projection on the polytope after a gradient descent step. We empirically noticed this when the polytopes are the probability simplex, where projected gradient descents were faster than alternating with Polyhedral Coordinate Descent method with Away steps~\cite{mazumder2023cyclic} for instance.}, we developed fast algorithms for every studied model:
\begin{itemize}
    \item For BSSMF and MinVol NMF, we derived instances from an inertial block majorization minimization framework for nonsmooth nonconvex optimization, called TITAN~\cite{hien2023inertial}.
    \item For separable NMF, we developed RandSPA. It creates a continuum between SPA and VCA, which are fast greedy algorithms for column subset selection. RandSPA uses the best from both worlds if tuned accordingly, that is, the robustness of SPA and the randomness of VCA.
    \item For MaxVol NMF, the algorithm developed for MinVol NMF could not be used anymore. Hence, we developed two algorithms. One is Adgrad2, based on \cite{malitsky2020adaptive}, and the other is based on ADMM, combined with an appropriate Bregman surrogate adapted from~\cite{dragomir2021quartic}.
\end{itemize}
\pagebreak

\section*{Further research}\addcontentsline{toc}{section}{Further research}

\paragraph*{Applications} Recommender systems were the main motivation for creating BSSMF. Even if we showed that BSSMF performs better than NMF, the question remains on the competitivity of BSSMF against the state-of-the-art algorithms used for recommender systems. Of course, BSSMF would need to be customized, e.g., by adding some wisely chosen regularizers.

Normalized MaxVol NMF has only been used for HU. This model could be useful in other applications, like document clustering and recommender systems. Also, it should be noted that the maximum-volume criterion was originally thought as a regularizer for Bilinear NMF. This combination still needs to be explored. Normalized MaxVol NMF also opened the path to normalized MinVol NMF. The difference with vanilla MinVol NMF is that instead of minimizing the volume formed by the endmembers, the aperture between the endmembers is minimized. We mentioned that this behavior coupled with an overranked normalized MaxVol NMF would be able to control the spectral variability in HU. This still needs to be addressed properly, as well as other potential applications for normalized MinVol NMF.

\paragraph*{Algorithms} We showed that by taking the best run among several, RandSPA outperforms SPA and VCA. However, RandSPA could be further improved if we could learn a good $Q$ matrix. A first idea would be to use an internal loop, instead of running the algorithm several times and saving the best run. The reason is that a good RandSPA run needs $r$ successive good draws of random $Q$'s. If at each selection step we could draw several $Q$'s directly and chose the best one based on a proper criterion, this would probably improve the performance of one run of  RandSPA. The main question is then which criterion would be a good one.

\paragraph*{Theory} All the studied models were either known to be identifiable (separable NMF and MinVol NMF), or proven in this thesis to be identifiable (BSSMF, PMF, $\ell_1$-MinVol NMF, MaxVol NMF).
We also studied/mentioned some other variants of MinVol NMF and MaxVol NMF, namely 
\begin{itemize}
    \item normalized MaxVol NMF~\eqref{maxvolnmf:eq:exactmaxvol}, 
    
    \item normalized MinVol NMF, whose identifiability result should be similar to that of normalized MaxVol NMF, 
    
    \item MinVol NMF with a Frobenius penalty and without simplex structure. This model was defined for the inexact and missing data case in~\eqref{minvol:eq:newminvol}. Even when no data are missing, that is when $\P_\Omega$ is the identity~\eqref{minvol:eq:fullexactfrominvol}, better conditions than \Cref{preli:th:uniqNMFSSC} are unknown. Experiments in \Cref{minvol:sec:exp} strongly suggest that there exist milder conditions. 
    
\end{itemize} 
Due to their nonnegative nature, these models are obviously identifiable under \Cref{preli:th:uniqNMFSSC} that requires both factors to satisfy the SSC. However, it is an open question to come up with milder conditions than \Cref{preli:th:uniqNMFSSC} to retain identifiability, like \Cref{minvol:th:uniqueminvol} (that requires only one factor to be SSC).

In general, the identifiability of many CLRMFs with missing data remains unknown and quite challenging. Without success, we tried during this thesis to find reasonable conditions under which separable NMF with missing data would be identifiable. 
Of course, it would be possible to use a two-step approach: first complete the data using the low-rank assumptions (the completion is unique if sufficiently many entries are observed in random locations, see, e.g., \cite{candes2010matrix}), then apply an identifiable NMF algorithms on the completed data. However, in general, single-step approaches perform significantly better in practice.








\printbibliography{}

\end{document}